\let\Ginclude@graphics\@org@Ginclude@graphics
\definecolor{richblack}{rgb}{0.0, 0.25, 0.25}
    \definecolor{darkcerulean}{rgb}{0.03, 0.27, 0.49}
    \definecolor{smokyblack}{rgb}{0.06, 0.05, 0.03}
    \definecolor{warmblack}{rgb}{0.0, 0.26, 0.26}
    \definecolor{cobalt}{rgb}{0.0, 0.28, 0.67}
    \definecolor{darkcobalt}{rgb}{0.1, 0.38, 0.77}
    \tikzstyle{new style 0}=[fill={rgb,255: red,255; green,94; blue,247}, draw=black, shape=circle]
    \tikzstyle{pointy}=[fill=white, draw=black, shape=circle]
    \tikzstyle{pointy}=[->]
\definecolor{ForestGreen}{rgb}{0.0, 0.27, 0.13}
\newcommand{\ra}[1]{\renewcommand{\arraystretch}{#1}}
\newcommand{\pushright}[1]{\ifmeasuring@#1\else\omit\hfill$\displaystyle#1$\fi\ignorespaces}
\newcommand{\pushleft}[1]{\ifmeasuring@#1\else\omit$\displaystyle#1$\hfill\fi\ignorespaces}
\DeclareMathOperator{\reshape}{reshape}
\DeclareMathOperator{\vectorize}{vec}
\DeclareMathOperator{\layernorm}{\mathcal{L\!N}} 
\DeclareMathOperator{\softmax}{softmax}
\DeclareMathOperator{\smax}{smax}
\DeclareMathOperator{\softplus}{splus} 
\DeclareMathOperator{\sigmoid}{sig}
\DeclareMathOperator{\attention}{Att} 
\DeclareMathOperator{\multihead}{\mathcal{M\!H}} 
\DeclareMathOperator{\dotp}{dp}
\DeclareMathOperator{\neuraln}{\operatorname{\mathcal{P\!L}}}
\DeclareMathOperator{\tblock}{\mathcal{T\!B}}
\DeclareMathOperator{\tblockclass}{\mathcal{T\!B\!C}}
\DeclareMathOperator{\transformer}{\mathcal{T}}
\DeclareMathOperator{\transformerclass}{\mathcal{T}\!\mathcal{C}}
\DeclareMathOperator{\cleq}{\mbox{\tiny$\leq$}\!}
\newcommand{\tagA}{^{(3)}}
\newcommand{\tagB}{^{(2)}}
\newcommand{\tagC}{^{(1)}}
\newcommand{\order}{\mathfrak{o}}
\newcommand{\Order}{\mathfrak{O}}
\renewcommand{\phi}{\varphi}
\newcommand{\myset}[2]{\lbrace{#1}, \ldots, {#2}\rbrace}
\newcommand{\mySet}[2]{\left\lbrace {#1} \middle\vert {#2} \right\rbrace}
\newcommand{\N}{\mathbb{N}}
\newcommand{\R}{\mathbb{R}}
\newcommand{\calP}{\mathcal{P}}
\newcommand{\eqdef}{\ensuremath{\,\raisebox{-1.2pt}{${\stackrel{\mbox{\upshape \scalebox{.42}{def.}}}{=}}$}}\,}
\newcommand{\eqorder}{\sim} 
\newcommand{\idim}{d_{\mathrm{in}}} 
\newcommand{\odim}{d_{\mathrm{out}}} 
\newcommand{\kdim}{d_{K}} 
\newcommand{\vdim}{d_{V}} 
\newcommand{\ldim}{d_{\mathrm{ff}}} 
\newtheorem{assumption}{Assumption}
\newtheorem{notation}{Notation}
\newtheorem{setting}{Setting}[section]
\NewDocumentCommand{\luca}{mo}{
    \IfValueF{#2}{
                        {{\scriptsize
                            \textcolor{green}{ 
                            \textbf{L:}
                            \textit{{#1}}
                            }
                        }}
        }
    \IfValueT{#2}{
                        \marginnote{{\scriptsize
                            \textcolor{green}{ 
                            \textbf{L:}
                            \textit{{#1}}
                            }
                        }}
        }
                    }
\NewDocumentCommand{\giulia}{mo}{
    \IfValueF{#2}{
                        {{\scriptsize
                            \textcolor{red}{ 
                            \textbf{GL:}
                            \textit{{#1}}
                            }
                        }}
        }
    \IfValueT{#2}{
                        \marginnote{{\scriptsize
                            \textcolor{red}{ 
                            \textbf{GL:}
                            \textit{{#1}}
                            }
                        }}
        }
}
\NewDocumentCommand{\anastasis}{mo}{
    \IfValueF{#2}{
                        {{\scriptsize
                            \textcolor{violet}{ 
                            \textbf{A:}
                            \textit{{#1}}
                            }
                        }}
        }
    \IfValueT{#2}{
                        \marginnote{{\scriptsize
                            \textcolor{violet}{ 
                            \textbf{A:}
                            \textit{{#1}}
                            }
                        }}
        }
                    }
\NewDocumentCommand{\cody}{mo}{
    \IfValueF{#2}{
                        {{\scriptsize
                            \textcolor{orange}{ 
                            \textbf{A:}
                            \textit{{#1}}
                            }
                        }}
        }
    \IfValueT{#2}{
                        \marginnote{{\scriptsize
                            \textcolor{orange}{ 
                            \textbf{A:}
                            \textit{{#1}}
                            }
                        }}
        }
                    }
\NewDocumentCommand{\yannick}{mo}{
    \IfValueF{#2}{
                        {{\scriptsize
                            \textcolor{cyan}{ 
                            \textbf{Y:}
                            \textit{{#1}}
                            }
                        }}
        }
    \IfValueT{#2}{
                        \marginnote{{\scriptsize
                            \textcolor{cyan}{ 
                            \textbf{Y:}
                            \textit{{#1}}
                            }
                        }}
        }
                    } 
\definecolor{darkgreen}{rgb}{0.0, 0.2, 0.13}
\NewDocumentCommand{\xuwei}{mo}{
    \IfValueF{#2}{
                        {{\scriptsize
                            \textcolor{darkgreen}{ 
                            \textbf{X:}
                            \textit{{#1}}
                            }
                        }}
        }
    \IfValueT{#2}{
                        \marginnote{{\scriptsize
                            \textcolor{darkgreen}{ 
                            \textbf{X:}
                            \textit{{#1}}
                            }
                        }}
        }
                    }
\newcommand\DoToC{%
  \startcontents
  \printcontents{}{1}{\textbf{Appendix Contents}\vskip3pt\hrule\vskip5pt}
  \vskip3pt\hrule\vskip5pt
}
\newcounter{termcounter}
\renewcommand{\thetermcounter}{\Roman{termcounter}}
\crefname{term}{term}{terms}
\def\term{\@ifnextchar[\term@optarg\term@noarg}
\def\term@optarg[#1]#2{%
  \textup{#1}%
  \def\@currentlabel{#1}%
  \def\cref@currentlabel{[][2147483647][]#1}%
  \cref@label[term]{#2}}
\def\term@noarg#1{%
  \refstepcounter{termcounter}%
  \textup{(\thetermcounter)}%
  \cref@label[term]{#1}}
\crefname{lemma}{lemma}{lemmata}
\Crefname{lemma}{Lemma}{Lemmata}
\crefname{notation}{notation}{notations}
\Crefname{notation}{Notation}{Notations}
\crefname{assumption}{assumption}{assumptions}
\Crefname{assumption}{Assumption}{Assumptions}
\crefname{example}{Example}{Examples}
\crefname{proposition}{Proposition}{Proposition}
\NewDocumentCommand{\AK}{mo}{
    \IfValueF{#2}{
                        {{\scriptsize
                            \textcolor{violet}{ 
                            \textbf{A:}
                            \textit{{#1}}
                            }
                        }}
        }
    \IfValueT{#2}{
                        \marginnote{{\scriptsize
                            \textcolor{violet}{ 
                            \textbf{A:}
                            \textit{{#1}}
                            }
                        }}
        }
                    }
\NewDocumentCommand{\Yan}{mo}{
    \IfValueF{#2}{
                        {{\scriptsize
                            \textcolor{cyan}{ 
                            \textbf{Y:}
                            \textit{{#1}}
                            }
                        }}
        }
    \IfValueT{#2}{
                        \marginnote{{\scriptsize
                            \textcolor{cyan}{ 
                            \textbf{Y:}
                            \textit{{#1}}
                            }
                        }}
        }
                    }
\definecolor{darkgreen}{rgb}{0.0, 0.2, 0.13}
\definecolor{cobaltt}{rgb}{0.0, 0.28, 0.67}
\definecolor{deepcerise}{rgb}{0.85, 0.2, 0.53}
\renewcommand{\ge}{	\geq }
\renewcommand{\le}{	\leq }
\renewcommand{\geq}{\geqslant}
\renewcommand{\leq}{\leqslant}
\title[Higher-Order Transformer Derivative Estimates for Pathwise Learning]{Higher-Order Transformer Derivative Estimates \hfill\\
for Explicit Pathwise Learning Guarantees}
\begin{document}
\maketitle

\begin{abstract}
An inherent challenge in computing fully-\textit{explicit} generalization bounds for transformers involves obtaining covering number estimates for the given transformer class $\mathcal{T}$.  Crude estimates rely on a uniform upper bound on the local-Lipschitz constants of transformers in $\mathcal{T}$, and finer estimates require an analysis of their higher-order partial derivatives.  Unfortunately, these precise higher-order derivative estimates for (realistic) transformer models are not currently available in the literature as they are combinatorially delicate due to the intricate compositional structure of transformer blocks.  

This paper fills this gap by precisely estimating all the higher-order derivatives of all orders for the transformer model.  We consider realistic transformers with multiple (non-linearized) attention heads per block and layer normalization.  We obtain fully-explicit estimates of all constants in terms of the number of attention heads, the depth and width of each transformer block, and the number of normalization layers.  Further, we explicitly analyze the impact of various standard activation function choices (e.g.\ SWISH and GeLU).
As an application, we obtain explicit \textit{pathwise generalization bounds} for transformers on a \textit{single trajectory of an exponentially-ergodic Markov process} valid at a fixed future time horizon.  We conclude that real-world transformers can learn from $N$ (non-i.i.d.) samples of a single Markov process's trajectory at a rate of $\mathcal{O}\big(\operatorname{polylog}(N)/\sqrt{N}\big)$.  
\end{abstract}


\section{Introduction}
\label{s:Intro}

Transformers~\cite{vaswani2017attention} have become the main architectural building block in deep learning-based state-of-the-art foundation models~\cite{bommasani2021opportunities, zhao2023survey, wei2022emergent}.  The undeniable success of these large language models (LLMs) in predicting on \textit{non-i.i.d}.\ sequential data has inspired a deluge of research into the statistical properties of transformers, ranging from minimax (order) optimal guarantees transformers trained in context~\cite{kim2024transformers},
statistical analyses of their training dynamics under linearized/surrogate attention mechanisms~\cite{chen2024training,lu2024asymptotic,zhang2024context,siyu2024training,kim2024transformersdyna,duraisamy2024finite} and their infinite-width limits surrogates~\cite{zhang2024trained}, as well as PAC-Bayesian guarantees~\cite{mitarchuk2024length}. Though each result helps to explain a piece of the LLM puzzle, each has one of the following drawbacks which we address:
\begin{itemize}
    \item[(i)] \textbf{Non-Explicit Constants:} They only provide learning rates with unknown constants,
    \item[(ii)] \textbf{Non-Sequential Data:} Their guarantees do not apply to non-i.i.d.\ (sequential) data,
    \item[(iii)] \textbf{Unrealistic Transformers:} They rely on linearized attention surrogates, a single attention head, omitting key normalization layers, or non-standard activation functions.
\end{itemize}

This paper fills this gap by deriving statistical guarantees for transformers which do not suffer from the limitations in \textit{(i)-(iii)}.  We guarantee (Theorem~\ref{thrm:main}) that transformers trained on a single time-series trajectory can generalize at future moments in time{, with explicit constants}.  We, therefore, consider the learning problem where the user is supplied with $N$ paired samples $(X_1,Y_1),\dots,(X_N,Y_N)$, where each input $Y_n=f^{\star}(X_n)$ for a smooth (unknown) \textit{target function} $f^{\star}:\mathbb{R}^{d\times M}\to \mathbb{R}^D$ is to be learned, depending on a history length $M$, and where the inputs are generated by a time-homogeneous Markov process $\smash{X_{\cdot}\eqdef (X_n)_{n=1}^{\infty}}$, or any process admitting a finite-dimensional Markovian lift in the sense of~\cite{cuchiero2019markovian}.  The assumption $Y_n=f^{\star}(X_n)$ results in only a mild loss of generality since if $X_{\cdot}$ is a discretized solution to a stochastic differential equation then $Y_n \approx \text{signal} \textbf{+} \text{noise}$ due to stochastic calculus considerations (see Appendix~\ref{a:stoch_example}).

The performance of any transformer model  $\mathcal{T}:\mathbb{R}^{d\times M}\to \mathbb{R}^D$ is quantified via a smooth loss function $\ell:\mathbb{R}^D\times \mathbb{R}^D\to \mathbb{R}$.  When $M=1$, the generalization of such a $\mathcal{T}$ is measured by the gap between its \textit{empirical risk} $\mathcal{R}^{(N)}$, computed from the single-path training data, and its \textit{(true) $t$-future risk} $\mathcal{R}_t$ at a (possibly infinite) future time $N\le t\le \infty$ ($t\in \mathbb{N}_+$) 
%
defined by
\begin{align*}
    \mathcal{R}_t(\transformer)
    \eqdef 
    \mathbb{E}\big[
        \ell(\mathcal{T}(X_t),f^{\star}(X_t))
    \big]
\end{align*}
where $\mathcal{R}_{t}$ (resp.\ $\mathcal{R}_{\infty}$) is computed with respect to the distribution of $X_t$ (resp.\ \textit{stationary distribution} of $X_{\cdot}$).
The time-$t$ excess-risk $\mathcal{R}_t$, which is generally unobservable, is estimated by a single-path estimator known as the empirical risk computed using all the noisy samples observed thus far
\begin{align*}
    \mathcal{R}^{(N)}(\transformer)
    \eqdef 
    \frac1{N}\,\sum_{n=1}^N\,
            \ell(\mathcal{T}(X_n),f^{\star}(X_n))
    .
\end{align*}
Our objective is to obtain a statistical learning guarantee bounding the gap between the empirical risk and the $t$-future risk of transformer models trained on a single path.


\paragraph{Contributions.}
Our main \textit{statistical guarantee} (Theorem~\ref{thrm:main}) is a bound on the \textit{future-generalization}, free of limitations (i)-(iii), valid at any given time $t\ge N$, of a transformer trained from $N$ samples collected from an unknown transformation ($f^{\star}$) of any suitable unknown Markov process ($X_{\cdot}$).  
Fix a class of transformers $\transformerclass$ for respective input and output dimensions $d$ and $D$, i.e. determine the number of transformer blocks, the number of attention heads, channel sizes, and specify a constraints on its weights.
Then, the first takeaway of our main result is that with probability at least $1-\delta$
\begin{equation}
\label{eq:FutureG}
\tag{FutureGen}
        \sup_{\mathcal{T} \in \transformerclass}
        \,
            \big|
                \mathcal{R}_t(\mathcal{T})-\mathcal{R}^{(N)}(\mathcal{T})
            \big|
    \in 
    \mathcal{O}
        \biggl(
            \frac{\log(1/\delta) + \log(N)^{1/s}}{
            \sqrt{N}
            }
        \biggr)
\end{equation}  
where $s>0$ can be made arbitrarily large and $\mathcal{O}$ hides a constant which is challenging to \textit{estimate accurately} due to the involved \textit{combinatorics}, and is explicitly computed in (Theorem~\ref{thrm:main_b}).

\begin{figure}[tp!]
\begin{minipage}{.32\textwidth}
\begin{center}
\vspace{-1em}
    \includegraphics[height=5cm]{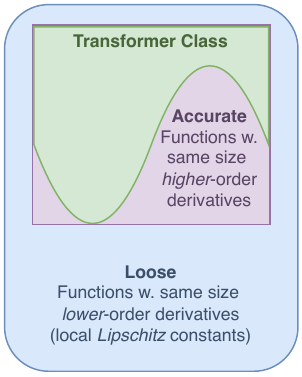}
\end{center}
\end{minipage}
\hfill
\begin{minipage}{.68\textwidth}
    \caption{\textbf{Our Results:} Tight estimates on the constants under $\mathcal{O}$ in~\eqref{eq:FutureG} (Theorem~\ref{thrm:main}) is a consequence of our main result (Theorems~\ref{thm:TransformerBlockSobolevBound}).  The latter accurately estimates the \textit{radius} of 
    the \textit{smallest} $C^s$-ball containing the realistic transformer class $\transformerclass$; this radius is precisely the in the largest $s^{th}$-order partial derivative of any transformer in the class $\transformerclass$.
    \hfill\\
    \textbf{Why it works:} Our higher-order derivative-based analysis accurately approximates the transformer class and provides both \textit{dimension-free} convergence \textit{rates} and accurate constants.  In contrast, low-order local-Lipschitz analysis provides loose generalization bounds by overestimating the size of the class $\transformerclass$ via inclusion in a large class of functions with comparable first derivatives.}
    \label{fig:CoveringWell}
\end{minipage}
\end{figure}

\pagebreak[4]
\medskip
{Our {\color{richblack}{\textit{primary contribution}}} (Theorems~\ref{thrm:main_b}) is a \textit{full analysis} of the higher order sensitivities/\ derivatives of the transformer network; which in-turn yield explicit accuracy constants under the big $\mathcal{O}$ in our main statistical guarantees (Theorem~\ref{thrm:main}).  
With this, our result provides a generalization bound applicable to transformers trained on non-i.i.d.\ data with \emph{explicit constants}.  

\textit{Insights from explicit constants:}  Our result enables us to analyse the effects of the number of attention heads, depth, and width of the transformed model, and weight and bias restriction, as well as on the activation functions used on the generalization of the transformer model in detail.  This is because the explicit constants in our main results are clearly stated in terms of these quantities, for which we provide efficient code to compute.
In particular, we are able to perform an in-depth analysis for the $\operatorname{Swish}$~\cite{ramachandran2017searching}, $\operatorname{GeLU}$~\cite{hendrycks2016gaussian}, and the $\tanh$ activation functions.  Our analysis aligns with the empirical evidence that the activation functions such as $\operatorname{Swish}$ provide superior performance than unconventional choices such as $\tanh$.

\textit{Further consequences:} Although not explicitly addressed here, our results can also be applied to derive statistical guarantees for transformers using alternative frameworks (see \Cref{s:RelatedWork} for an overview). This can, for instance, be achieved for i.i.d.\ data through chaining arguments relying on covering number estimates for balls in smoothness spaces ~\citep[Theorem 2.71]{vaart2023empirical} containing the relevant transformer. The primary gap in the existing literature, which Theorems~\ref{thrm:main_b} fills, is the lack of partial derivative bounds necessary to compute the radii of the balls in these smoothness spaces, which contain the transformer architecture (as illustrated in Figure~\ref{fig:CoveringWell}).

}

\subsection{Related Work}
 \label{s:RelatedWork}
\noindent\paragraph{Statistical Guarantees for Non-I.i.d.\ Data.}
 
 There are several results in the literature addressing learning with non-i.i.d.\ data satisfying a mixing/ergodicity condition dating back, e.g.~\cite{yu1994rates,mohri2008rademacher,mohri2010stability,kuznetsov2017generalization,simchowitz2018learning,foster2020learning,ziemann2022learning}, dating at least back to~\cite{yu1994rates}.  
 However, \textit{none} of these results provide \textit{explicit} generalization bounds.  Rather, they only provide convergence rates without constants for the class $\transformerclass$ since they require covering number estimates, which would ultimately rely on the partial derivative bounds (Theorems~\ref{thm:TransformerBlockSobolevBound} and~\ref{thm:TransformerBlockSobolevBoundLevel}).
 
 This is because they either rely on bounding the Rademacher complexity of the transformer class, e.g.\ in applying~\cite{mohri2008rademacher}, or they rely on computing the cardinality of delta nets~\cite{ziemann2022learning}, both of which necessitate the computation of the worst-case Lipschitz (or $C^s$ norm) of any transformer in the hypothesis class using~\citep[Theorem 2.7.4]{vaart2023empirical} and~\citep[Equation (15.1.8)]{LorentzGolitschekMakovoz_CA_AdvancedProblems_1996}.  Since these higher-order derivatives where \textit{previously unavailable} in the literature, then these results did not work out explicit constants for their generalization bounds.    
We also note the results of~\cite{simchowitz2018learning,foster2020learning}, which obtained statistical guarantees for non i.i.d.\ data under restrictive assumptions on the data-generating process, in addition to the aforementioned restrictions.

We highlight the martingale arguments e.g.~\cite{pmlr-v32-kontorovicha14} and concentration of measure phenomena for martingale sums e.g.~\cite{MartingaleSumsbook2015,boucheron2013concentration} techniques for deriving generalization bounds for models trained on non-i.i.d.\ data.  Again, these results do not yield explicit constants for $\transformerclass$ without estimating the derivatives (or at least local-Lipschitz constants) of the transformers therein, again requiring  Theorems~\ref{thm:TransformerBlockSobolevBound} and~\ref{thm:TransformerBlockSobolevBoundLevel} for a completely explicit result.

\noindent\paragraph{Statistical Guarantees for LLMs Trained on I.i.d.\ Data.}
In addition to the aforementioned statistical guarantees for LLMs; the available statistical guarantees for transformers which the authors are aware of either concern in-context learning for linear transformers \cite{zhang2024trained,garg2022can}, transformers \cite{von2023transformers,akyrek2023what} trained with gradient descent, or instance-dependent bounds \cite{trauger2023sequence} for general transformers.  
These results do not apply in contexts of time series analysis where each training sample is not independent of the others but is rather generated by some recursive stochastic process.

\noindent\paragraph{Ergodic Markov Processes and Mixing Times.}
We require that the data-generating Markov process has an exponentially contracting Markov kernel \cite{Kloeckner_2020CounterCurse_ESAIM}.  For Markov chains, i.e.\ finite-state space Markov processes, this means that the generator ($Q$-matrix) of the Markov chain has a \textit{spectral gap}.  These spectral gaps are actively studied in the Markov chain literature \cite{mufa1996estimation,kontoyiannis2012geometric,atchade2021approximate,MR3383563,kloeckner2019effective} since these have a finite mixing time, meaning that the distribution of such Markov chains approaches their stationary limit after a large finite time has elapsed; i.e.\ they have well-behaved (approximate) mixing times~\cite{montenegro2006mathematical,hsu2015mixing,wolfer2019estimating,BZ_Mixing}.
{We rely on actively-studied optimal transport-theoretic notions of mixing since it is easily verified, or already known, for most data-generating processes than more classical notions; e.g.~\cite{kuznetsov2017generalization,mohri2010stability}.  }

We note that our generalization bounds rely on the concentration of measure arguments for the ``smooth'' optimal transport distances in~\cite{Kloeckner_2020CounterCurse_ESAIM,Riekert_ConcentrationOfMeasure2022}.  The last step of our analysis refines the concentration of measure arguments of~\cite{hou2023instance,benitez2023out,kratsios2024tighter} to the non-i.i.d.\ setting using exponentially ergodic Markov process theory.


\section{Background and Preliminaries}
\label{s:Prelim}

This section overviews the necessary background for a self-contained formulation of our main results.  This includes the definition of transformers and examples of data-generating processes treatable within our framework.

\subsection{Admissible Data-Generating Processes}
\label{s:Prelim__ss:AdmissibleProcesses}

Our statistical guarantees are driven by Markovian arguments; we emphasize that these are not heavily restrictive since we can Markovianize processes in higher dimensional state spaces.  Such state-space extensions, known as Markovian lifts in the literature~\cite{cuchiero2020generalized}, typically require tracking only a few recent movements of the data-generating process rather than solely its current state.

We operate in the following setting:
Fix dimensions $d,D\in \mathbb{N}_+${, a finite memory ${M}\in \mathbb{N}_+$, and let $X_{\cdot}\eqdef (X_n)_{n\in \mathbb{N}_0}$ be a stochastic process taking values in $\mathbb{R}^d$, such that the lifted process $X^M_{\cdot}\eqdef (X^M_{[0\vee (n-M),\dots,0\vee n]})_{n\in \mathbb{N}_0}$ is Markovian on $\mathbb{R}^{Md}$.}
Let $P$ be a Markov kernel on a non-empty Borel $\mathcal{X}^{M}\subseteq \mathbb{R}^{{M}d}$ with initial distribution $X_0\sim \mu_0\in \mathcal{P}(\mathbb{R}^{{M}d})$
given by $X_n^{M}\sim \mu_n\eqdef P^n \mu_0\eqdef \mathbb{P}(X_n^{M}\in \cdot)$ and for each $x\in \mathbb{R}^{{M}d}$ and $n\in \mathbb{N}_+$, set $P^n(x,\cdot)\eqdef \mathbb{P}(X_n^{M}\in \cdot|X_0^{M}=x)$.  
{The process $X_{\cdot}^M$ is a \textit{Markovian lift} of $X_{\cdot}$.
Examples of processes with \textit{finite-dimensional} Markovian lifts are ARIMA times-series models, see e.g.~\cite{cryer1991time}.
}

\begin{assumption}[Bounded Trajectories]
\label{ass:Comp_Support}
There is a $c>0$ such that $\mathbb{P}(\sup_{t\in \mathbb{N}}\,\|X_t\| \le c)=1$.
\end{assumption}
\begin{assumption}[Exponential Ergodicity]
\label{ass:contractivity}
There is a $\kappa\in (0,1)$ such that: for each $\mu,\nu\in \mathcal{P}(\mathbb{R}^{{M}d})$ and every $t\in \mathbb{N}_+$ one has $\mathcal{W}_1(P^t\mu,P^t\nu)\le 
\kappa^t\, \mathcal{W}_1(\mu,\nu)$.  
\end{assumption}
\subsubsection{Examples: Projected SDEs - From Langevin Dynamics to Martingales}
\label{s:Prelim__ss:AdmissibleProcesses___sss:SDEs}
A broad class of non-i.i.d. data-generating processes satisfying our assumptions is a broad generalization of any Markov processes obtained by ``projecting'' the strong solution to a stochastic differential equation (SDE) with overdampened drift onto a compact convex subset of $\mathbb{R}^d$.  The processes which we can project are vast generalizations of the forward process used in \textit{denoising diffusion models}; see e.g.~\cite{song2020score} whose convergence is by now well-understood; see e.g.~\cite{chen2023sampling}.
\begin{example}[Projected SDEs with Overdampened Drift]
\label{ex:proj_SDE}
Consider a latent dimension ${\bar{d}}\in \mathbb{N}_+$, $\mu:\mathbb{R}^{\bar{d}}\to \mathbb{R}^{\bar{d}}$ be Lipschitz and the gradient of a strongly convex function; i.e.\ there is a $K>0$ such that $(\mu(x)-\mu(y))^{\top}(x-y)\le -K \|x-y\|^2$ for all $x,y\in \mathbb{R}^d$.  For any $x\in\mathbb{R}^{\bar{d}}$ let $Z_{\cdot}^x\eqdef (Z_t^x)_{t\ge 0}$ be the unique strong solution (which exists by~\cite[Theorem 8.2]{DaPratoFavBook_2008} since $\mu$ is Lipschitz)
\begin{equation}
\label{eq:overdampened}
    Z_t^x = x + \int_0^t \, \mu(Z_s^x)\,ds + \int_0^t\, W_s
\end{equation}
where $W_{\cdot}$ is a ${\bar{d}}$-dimensional Brownian motion.  
Let $f:\mathbb{R}^{\bar{d}}\to \mathbb{R}^d$ be a bounded $1$-Lipschitz function and consider the discrete-time Markov process $X_{\cdot}\eqdef (X_n)_{n=0}^{\infty}$ on $\mathbb{R}^d$ given by
\[
    X_n^x\eqdef f(Z_n^x)
.
\]
As shown in Proposition~\ref{prop:proj_SDE}, 
$X_{\cdot}$
satisfies Assumptions~\ref{ass:Comp_Support} and~\ref{ass:contractivity}.
The standard example of SDEs~\eqref{eq:overdampened} are
\textit{Langevin dynamics} for a strictly convex potential $U:\mathbb{R}^d\to \mathbb{R}$.  As shown in~\cite{Bolleyeta_JFA_2012}
\[
    \mu(x)=-\nabla U(x)/2
.
\]
\end{example}
\begin{example}[Projections of Diffusive Martingales]
\label{ex:martingales}
Let $d\in \mathbb{N}_+$.
Let $\sigma:\mathbb{R}^d\to P_d^+$ taking values in the cone $P_d^+$ of $d\times d$-dimensional positive definite matrices, be Lipschitz with the Fr\"{o}benius norm on $\mathbb{R}^{d\times d}$, and satisfy the uniform ellipticity condition: there exists a $\lambda>0$ such that for every $x\in \mathbb{R}^d$ holds
$
s_{min}(\sigma(x)\sigma(x)^{\top})\ge \lambda
$,
where $s_{min}(A)$ denotes the minimal singular values of a matrix $A$.  The martingale $Z_{\cdot}$ (see \cite[Proposition 6.15]{DaPratoFavBook_2008} for a proof of martingality) defined for each $t\ge 0$ by
$
Z_t = \int_0^t\, \sigma(Z_s)\,dW_s
$
where $W_{\cdot}\eqdef (W_t)_{t\ge 0}$ is a $d$-dimensional Brownian motion.
Let $f:\mathbb{R}^d\to \mathbb{R}^d$ be any $1$-Lipschitz bounded function.  By Proposition~\ref{prop:projection_martingale}, the data-generating Markov process $X_{\cdot}\eqdef (X_n)_{n=0}^{\infty}$, defined for each $n\in \mathbb{N}_+$ by
$
X_n\eqdef f(Z_n)
$
satisfies both Assumptions~\ref{ass:Comp_Support} and~\ref{ass:contractivity}.
\end{example}
We have presented the simplest cases here; which is readily generalizable.  By Lemma~\ref{lem:BoundedLipProjection} to any Markov process exponentially ergodic $Z_{\cdot}$, not necessarily solving the simple dynamics~\eqref{eq:overdampened}, automatically yields examples of data-generating processes satisfying both Assumptions~\ref{ass:Comp_Support} and~\ref{ass:contractivity}.  We list some examples of such processes here: McKean-Vlasov type with relatively general, i.e. it can have non-constant law-dependent drift and diffusion coefficients~\cite[Corollary 4.4]{Wang_McKeanVlassovReflectedExponential_SPA_2023} (possibly with reflections), several SDEs is driven by a pure-jump L\'{e}vy process~\cite[Theorem 3.1]{LuoRefinedCoupledExponentialLevyPureJump}.  When considering reflected SDEs, where the reflections constrain the process to remain in a bounded convex domain, we do not need $f$ to be bounded, as the processes themselves are already bounded.  Further examples of such can be constructed using compact Riemannian sub-manifolds of $\mathbb{R}^d$ with suitable curvature bounds~\cite{OllivierYann_2009}.
Additional examples include Markov processes whose kernels satisfy a log-Sobolev inequality (see Appendix~\ref{s:Examples__ss:LSI} for details).

\subsection{The Transformer Model}

\begin{figure}[!htp]
     \centering
     \begin{minipage}[t]{.3\textwidth}
         \centering
        \includegraphics[height=.2\textheight]{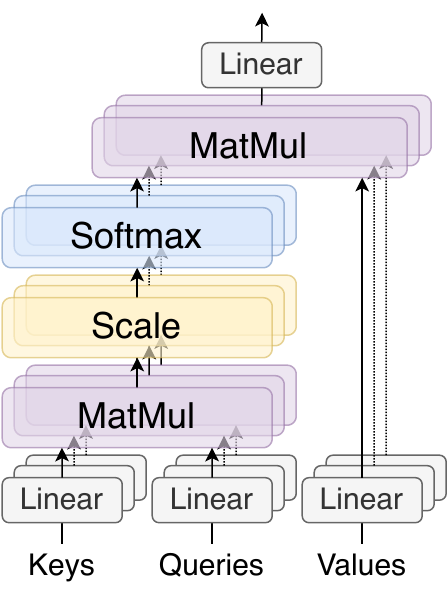}
        \label{fig:MHA}
     \end{minipage}%
     ~
     \begin{minipage}[t]{.3\textwidth}
         \centering
        \includegraphics[height=.2\textheight]{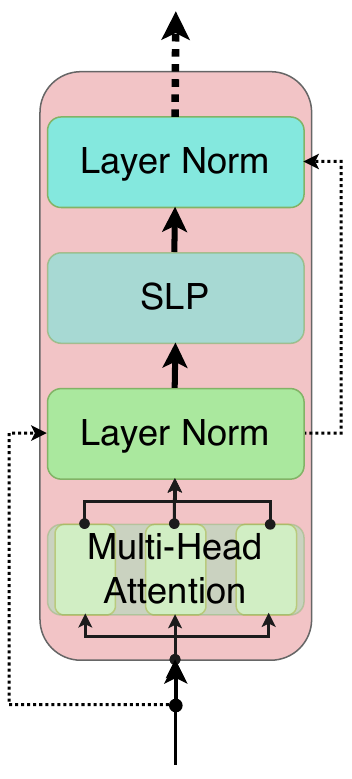}
        \label{fig:TransformerBlock}
     \end{minipage}%
     ~
     \begin{minipage}[t]{.3\textwidth}
         \centering
         \includegraphics[height=.2\textheight]{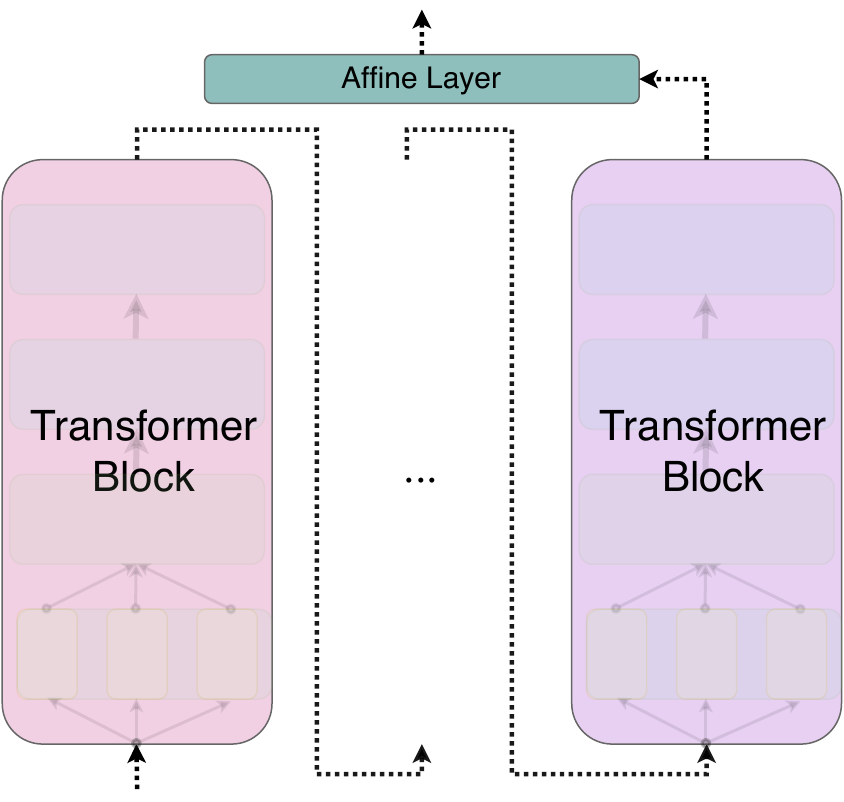}
    \label{fig:Transformer}
    \end{minipage}%
        \vspace{-.5em}
        \caption{\small {Multi-Head Att. (Def.~\ref{defn:MultiHead})}, {Transformer Block (Def.~\ref{defn:TransformerBlock})}, and {Transformer (Def.~\ref{defn:Transformer})}.} 
        \label{fig:TLDR_Transformers}
        
\end{figure}

The overall structure of transformers is summarized in Figure~\ref{fig:TLDR_Transformers}, and we give an in-depth definition of all components with their respective dimensions in \Cref{sec:TransformerDetails}, which is relevant for the details of the bound computation. On a high level, the most important aspects are:
    \paragraph{Multi-Head Attention [$\multihead$].}
    Consists of parallel application of the attention mechanism, described by the following steps. 
    \begin{enumerate}[label=(\roman*),nosep]
        \item Inputs are used three-fold, as keys, queries, and values, all are transformed by distinct linear transformations. 
        \item Keys and queries are multiplied, scaled, and transformed by a softmax application.
        \item This output is combined in a matrix multiplication with the values.
    \end{enumerate}
    \paragraph{Transformer Block [$\tblock$].}
    Here,
    \begin{enumerate}[label=(\roman*),nosep] 
    \item input features are mapped to contexts via \textit{multi-head attention mechanism},
    \item the output of the multi-head attention mechanism and the input features (via a skip connection) are normalized by a \textit{layer-norm [$\layernorm$]} (see \Cref{sec:TransformerDetails}),
    \item the normalized contextual features are transformed non-linearly by a \textit{single-layer perceptron [$\neuraln$]}, and 
    \item its outputs, together with the first set of normalized context (via another skip connection), are normalized by a final \textit{layer-norm} and returned by the transformer block. 
    \end{enumerate}
    \paragraph{Transformer [$\transformer$].}
    Iteratively feed input features through a series of transformer blocks before processing their outputs with a (fully connected affine layer). We denote a class of transformers of a fixed architecture by $\transformerclass$, with each parameter bounded to a predefined domain.

\subsection{Setting}

We consider smooth loss and target {functions} that are concentrated on a compact region, along with their derivatives. The growth rate of the {$C^s$-norm (see \Cref{defn:CsNorm} in \Cref{a:Notation})} of the loss function and its derivatives quantifies the degree of concentration.  
One easily verifies that any function in the Schwartz class satisfies this former of the following conditions, cf. \cite{treves16}.
\begin{definition}[Polynomial Growth of Derivatives]
\label{def:sGrowthRate}
Let $d,D{,M}\in \mathbb{R}$.  A smooth function $g:\mathbb{R}^{{M}d}\to \mathbb{R}^D$ is in   the class $C^{\infty}_{poly:C,r}([0,1]^{{M}d},\mathbb{R}^D)$ if $C,r\ge 0$ {are such that} $\|g\|_{C^s([0,1]^{{M}d})} \le C\, s^{r}$ for each $s\in \mathbb{N}_+$. {Here, $\|\cdot\|_{C^s}$ is the uniform Sobolev norm on the specified domain.}
\end{definition}
In Appendix~\ref{s:Functions_of_ControlledGrowth} we show that, in one dimension, any real analytic function whose power series expansion at $0$, has coefficients growing at an $\mathcal{O}((s+1)^r)$ rate belongs to $g\in C^{\infty}_{poly:C,r}([0,1],\mathbb{R})$.  One can easily extend this argument to multiple dimensions to obtain further examples.

We consider an \textit{realizable} PAC learning problem, determined by a smooth $1$-Lipschitz \textit{target function} $f^{\star}:\mathbb{R}^d\to \mathbb{R}^D$ which we would like to learn using a sequence of random observations $\big((X_t,Y_t)\big)_{t\in \mathbb{N}}$ as our training data.  That is, for each $t\in\mathbb{N}_+$
\begin{align*}
    Y_t \eqdef f^{\star}(X_t) 
\end{align*}
We aim to learn $f$ from a \textit{single path}.  The ability of a model to reliably recover the function $f:\mathbb{R}^d\to\mathbb{R}^D$ at time $t$, given the input $X_t$, is quantified by the \textit{$t$-future risk}
\begin{align*}
    \mathcal{R}_t(f)
    \eqdef 
    \mathbb{E}\big[
        \ell(f(X_t),f^{\star}(X_t))
    \big]
.
\end{align*}
The time-$t$ excess-risk $\mathcal{R}_t$, which is generally unobservable, is estimated by a single-path estimator known as the empirical risk computed using all the noisy samples to observed thus far
\begin{align*}
    \mathcal{R}^{(N)}(f)
    \eqdef 
    \frac1{N}\,\sum_{n=1}^N\,
            \ell(f(X_t),f^{\star}(X_t))
    .
\end{align*}
Our objective is to obtain a statistical learning guarantee on the quality of our estimate of the target function given by the \textit{time $t \,(\in \mathbb{N}_+$) generalization gap}
$
    \big|\mathcal{R}_t(f)-\mathcal{R}^{(N)}(f)\big|
    .
$  

We now summarize our setting and all parameters defining it, e.g.\ dimension, number of attention heads in the transformer, growth rate of the derivatives of the target and loss functions, etc.

\begin{setting}[Standing Assumptions]
\label{setting}
Consider a hypothesis class $\transformerclass$.
Fix $r_f,r_{\ell},C_f,C_{\ell} \ge 0$, as well as a
target function $f^*$ and loss function $\ell$ with 
\begin{align*}
f^{\star}\in C^{\infty}_{poly:C_f,r_f}(\mathbb{R}^{{M}d},\mathbb{R}^D)
\quad \text{ and } \quad
\ell\in C^{\infty}_{poly:C_{\ell},r_{\ell}}(\mathbb{R}^{2D},\mathbb{R});
\end{align*}
and suppose that \Cref{ass:Comp_Support} and either of \Cref{ass:contractivity,ass:MC} hold.
\end{setting}
 
\section{Main Results}
\label{s:Main_Result}

\subsection{Analytic Results: Bounds on the Higher-Order Derivatives of Transformer}
\label{ss:BoundonCs}

Our primary contribution is the computation of $C_{\ell, \transformerclass, K, s} \eqdef \sup_{\transformer \in \transformerclass} \Vert \ell(\transformer, f^*) \Vert_{C^s}$, which encodes the maximal size of the first $s$ partial derivatives of any transformer in the class $\transformerclass$.  
Thus, it describes the complexity of the class $\transformerclass$ (e.g.\ int terms of number of attention heads, depth, width, etc...), the size of the compact set $K$, and the smoothness of the loss function and target functions. 

We note that, any uniform generalization bound for smooth functions thus necessarily contains constants of the same order \textit{hidden within the big} $\mathcal{O}$.  See e.g. the entropy bound in~\citep[Theorem 2.71]{vaart2023empirical} which yields VC-dimension bounds via standard Dudley integral estimates in the i.i.d.\ case.  

Critically, when the function class is defined by function composition, i.e.\ as in deep learning, then these maximal partial derivatives tend to grow factorially in $s$.  This is a feature of the derivatives of composite functions in high dimensions as characterized by the multi-variate chain rule (i.e.\ the Fa\'{a} di Bruno formula~\cite{faa1855sullo,ConstantineSavits96}).  The combinatorics of these partial derivatives is encoded by the coefficients in the well-studied bell-polynomials~\cite{bell1934exponential,mihoubi2008bell,wang2009general} whose growth rate has been recently understood in~\cite{KO2022} and contains factors of the order of $\mathcal{O}(
\big( \frac{ 2 s}{e\ln s} ( 1 + o(1) ) \big)^s
)$.  

Remark that, in the feedforward case, i.e.\ when no layernorms or multihead attention are used, then the $s=1$ case is bounded above by the well-studied path-norms; see e.g.~\cite{bartlett2017spectrally,neyshabur2015norm}, which are simply the product of the weight matrices of in the network and serve as a simple upper-bound for the largest Lipschitz constant (i.e.\ $C^1$ norm) of the class $\transformerclass$.  These constants are included as very specific cases of our constant bounds. This is why we present two versions: a weaker but simpler bound, as well as a more accurate but detailed bound.

\begin{theorem}[$\transformerclass$-bound in terms of $\mathcal{O}$]\label{thrm:main_b}
In the case of a single transformer block $C_{\ell,\transformerclass,K,s}$ is of the order of
\begin{align*}
\resizebox{0.95\linewidth}{!}{$
    \mathcal{O}\Big(
        \underbrace{
            C^{\ell,f^{\star}}
            \vphantom{\biggl(\biggr)^{s^4}}
        }_{\text{Loss \& Target}}
        \,
        \underbrace{
            C^{\layernorm}_{K\tagA}({\cleq s})^s
            C^{\layernorm}_{K\tagC}({\cleq s})^{s^3}
            \vphantom{\biggl(\biggr)^{s^4}}
        }_{\text{Layernorms}}
        \underbrace{
            C^{\neuraln}_{K\tagB}({\cleq s})^{s^2}
            \vphantom{\biggl(\biggr)^{s^4}}
        }_{\text{Perceptron}}
        \,
            \underbrace{
                \biggl(
                    1 + C^{\multihead}_K(\cleq s)
                \biggr)^{s^4}
                \vphantom{\biggl(\biggr)^{s^4}}
             }_{\text{Multihead Attention}}
        \underbrace{
            D^{s^2}\,d^{2s^3}
            \vphantom{\biggl(\biggr)^{s^4}}
        }_{\text{dimensions}}
        \,
        \underbrace{
             c_s
             ^{s^s + s^3 + s^4}
             \vphantom{\biggl(\biggr)^{s^4}}
        }_{\text{Generic: $s$-th order Derivative}}
    \Big)
$}
\end{align*}
where the ``generic higher-order derivative constant'' is $c_s \eqdef\frac{ 2 s}{e\ln s} ( 1 + o(1) )$.
Further, 
\begin{gather*}
        C^{\ell,f^{\star}} 
        = \mathcal{O}\big(
            C_f^s
            \,
            s^{r_{\ell} + 2s^2} 
        \big),
    \qquad
        C^{\neuraln}_K (\cleq s)
        = \mathcal{O}\big(
                c^{\neuraln}
            +
                \ldim
                \Vert \sigma \Vert_{s}
                \tilde{c}_s^s (c^{\neuraln})^{s+1}
        \big),
    \\
        C^{\layernorm}_K (\cleq s)
        = \mathcal{O}\big(
            s^{(1+s)/2}
            c_s^s
        \big),
    \qquad
        C^{\multihead}_K (\cleq s)
        = \mathcal{O}\big(
             e^{-2s}
            {M^2} (2\idim\kdim\cdot c_s)^s
            (s \cdot c^{\multihead})^{2s + 2}
        \big).
\end{gather*}
Here 
$\tilde{c}_s\eqdef s^{1/2}({n}/{e})^sc_s^s$; 
$\idim$ is the input-dimension and $\kdim$ is the key-dimension of the multi-head attention $\multihead$ (see \Cref{defn:MultiHead} for details);
$\ldim$ is the width of the neural network $\neuraln$ (see \Cref{defn:TransformerBlock} for details); 
$c^{\neuraln}$ as well as $c^{\multihead}$ are parameter bounds on $\neuraln$ as well as $\multihead$, respectively (see \Cref{thm:TransformerBlockSobolevBoundLevel} for details); and 
$\Vert \sigma \Vert_{s}$ is the $C^s$-bound of the activation function used.
If no layer norms, SLP, or multi head attention mechanisms are included in the class, then their respective terms in our order estimate should be taken to be $1$.
\end{theorem}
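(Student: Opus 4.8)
The plan is to prove Theorem~\ref{thrm:main_b} as an \emph{order-of-magnitude corollary} of the fully explicit (constant-tracking) bounds in Theorems~\ref{thm:TransformerBlockSobolevBound} and~\ref{thm:TransformerBlockSobolevBoundLevel}; i.e. the real combinatorial work lives in those results, and here I would simply package their conclusions into the stated $\mathcal{O}$-expression. Concretely, I would proceed componentwise following the compositional structure of a single transformer block, peeling off $\ell(\transformer, f^\star)$ from the outside in: first the outer loss--target layer, then the two layer-norms wrapping the SLP, then the SLP itself, then the multi-head attention, and finally collecting the dimension counts and the generic Fa\`a di Bruno / Bell-polynomial overhead $c_s$.

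First I would invoke the \emph{multivariate Fa\`a di Bruno formula} (\cite{ConstantineSavits96}) together with the Bell-polynomial growth estimate of~\cite{KO2022}, which gives that the $C^s$-norm of a composition $h \circ g$ on a compact set is bounded by a sum over partitions of $s$, each term a product of $C^{\le s}$-norms of the factors, with the number of terms and the multinomial coefficients contributing a factor of order $c_s^s = \big(\tfrac{2s}{e\ln s}(1+o(1))\big)^s$ \emph{per composition level}. Since a transformer block is a composition of a bounded number of maps (two layer-norms, one SLP, one multi-head attention, one affine layer, plus the outer $\ell(\cdot, f^\star)$), iterating this bound produces nested exponents: a factor appearing at composition depth $k$ gets raised to a power that is polynomial in $s$ of degree $k$ — this is exactly the source of the $s$, $s^2$, $s^3$, $s^4$ exponents attached to the layer-norm, perceptron, and attention terms respectively, and of the $s^s + s^3 + s^4$ exponent on $c_s$. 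For each individual component I would cite the already-established single-component estimates: the $C^s$-bound on a single layer-norm (giving $C^{\layernorm}_K(\cleq s) = \mathcal{O}(s^{(1+s)/2} c_s^s)$, since layer-norm is an explicit smooth function of a quadratic form whose derivatives one can bound directly), on a single SLP (giving the $c^{\neuraln}$ and $\ldim \Vert\sigma\Vert_s$ terms, where $\Vert\sigma\Vert_s$ is the activation's own $C^s$-bound and $\tilde c_s = s^{1/2}(n/e)^s c_s^s$ absorbs the chain-rule overhead of differentiating $\sigma$ composed with an affine map), and on a single multi-head attention head (giving the $M^2(2\idim\kdim\cdot c_s)^s (s\cdot c^{\multihead})^{2s+2}e^{-2s}$ term, where the $e^{-2s}$ and $\idim\kdim$ factors come from bounding derivatives of the softmax — which is itself a composition of exponentials, a normalization, and bilinear key/query maps — and the $(s\cdot c^{\multihead})^{2s+2}$ from the parameter bounds on the three linear maps). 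The outer loss--target factor $C^{\ell, f^\star} = \mathcal{O}(C_f^s\, s^{r_\ell + 2s^2})$ follows from composing $\ell \in C^\infty_{poly:C_\ell, r_\ell}$ with $f^\star \in C^\infty_{poly:C_f, r_f}$ using Definition~\ref{def:sGrowthRate} and once more Fa\`a di Bruno.

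Next I would handle the \emph{bookkeeping of the dimension factors} $D^{s^2} d^{2s^3}$: these arise because each application of the multivariate chain rule sums over indices ranging over the intermediate dimensions, and at composition depth $k$ one accumulates $\mathcal{O}(\text{dim})^{s^k}$ such index choices; tracking which intermediate space (output dimension $D$ versus the $Md$-dimensional input/hidden spaces) sits at which depth yields the stated split. I would also note the convention at the end — that omitting a component from the class $\transformerclass$ replaces its factor by $1$ — which is immediate since composition with the identity contributes trivially. Finally I would reduce the multi-block case to the single-block case by remarking that a depth-$L$ transformer is an $L$-fold composition of blocks, so one more application of the iterated Fa\`a di Bruno bound multiplies the exponents by a factor depending on $L$; the theorem statement isolates the $L=1$ case for clarity, and the general-$L$ statement is Theorem~\ref{thm:TransformerBlockSobolevBound} proper.

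The main obstacle — and the reason this is stated as a theorem rather than a routine remark — is \textbf{correctly propagating the nested exponents through the composition without double-counting or losing the exact polynomial-in-$s$ degree at each level}. A naive iteration of ``$C^s(h\circ g) \lesssim (\text{stuff})^{s}$'' compounds to $(\text{stuff})^{s^k}$, but one must be careful that the Bell-polynomial overhead $c_s^s$ is itself re-exponentiated at each level (hence $c_s^{s^s + \dots}$ and not merely $c_s^{s}$), that the \emph{additive} structure inside layer-norm and softmax (sums over heads, sums over the normalization) interacts multiplicatively with the outer composition, and that the parameter-bound constants $c^{\neuraln}, c^{\multihead}$ enter with the right powers $s+1$ and $2s+2$ coming from the depth at which the perceptron and attention weights are differentiated. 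Getting these degree counts exactly right is where the delicate combinatorics sits; once the explicit Theorems~\ref{thm:TransformerBlockSobolevBound} and~\ref{thm:TransformerBlockSobolevBoundLevel} are in hand, however, the present theorem is obtained by dropping lower-order terms and absorbing all absolute constants into the $\mathcal{O}$ and into $c_s = \tfrac{2s}{e\ln s}(1+o(1))$.
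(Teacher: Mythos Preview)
Your proposal is correct and matches the paper's approach: the paper's proof is a two-sentence citation stating that the result is a direct consequence of Theorem~\ref{thm:TransformerBlockSobolevBoundLevel} and Theorem~\ref{thrm:main__technicalversion}, with the component-wise order estimates for $C^{\layernorm}_K$, $C^{\neuraln}_K$, and $C^{\multihead}_K$ read off from Corollaries~\ref{cor:MultiHead}, \ref{cor:LayerNorm}, and \ref{cor:FeedForward}. Your narrative about iterated Fa\`a di Bruno, the Bell-polynomial overhead $c_s$, and the origin of the nested exponents $s, s^2, s^3, s^4$ is exactly the content of those cited results, so your write-up is a faithful (if more expository) unpacking of the same argument.
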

\begin{proof}
    The result is a direct consequence of \Cref{thm:TransformerBlockSobolevBoundLevel,thrm:main__technicalversion}.
    The order of the bounds $C^{\layernorm}_K, C^{\neuraln}_K$, and $C^{\multihead}_K$ are given by
    \Cref{cor:MultiHead,cor:LayerNorm,cor:FeedForward}.
\end{proof}
%

\paragraph{Explicit bound computation.}
We further refined this result by deriving formulae that enable the precise calculation of these bounds.
In order to enhance the accuracy of these estimates, we distinguished not only between different levels of derivatives but also between various types of derivatives. An exemplary improvement of the bound by this distinction can be seen on the right-hand side of \Cref{fig:boundillustration}. 

By \textit{derivative type}, we are distinguishing the different multi-variate derivative operators, but regard those as equivalent that are identical after sorting. In a 2-dim setting, for instance, we would say the $(1, 2)$-derivative (once in the first component, twice in the second component) has the same type as the $(2, 1)$-derivative, but not the $(3, 0)$-derivative. The reason why we choose to consider an equivalence is that the size of the set $\calP$ in Fa\`a di Bruno's Formula (\Cref{thm:FaaDiBruno}) grows too big in high dimensions for numeric evaluation.

\begin{figure*}[th]
    \centering
    \begin{minipage}{.32\textwidth}
        \centering
        \includegraphics[width=\textwidth]{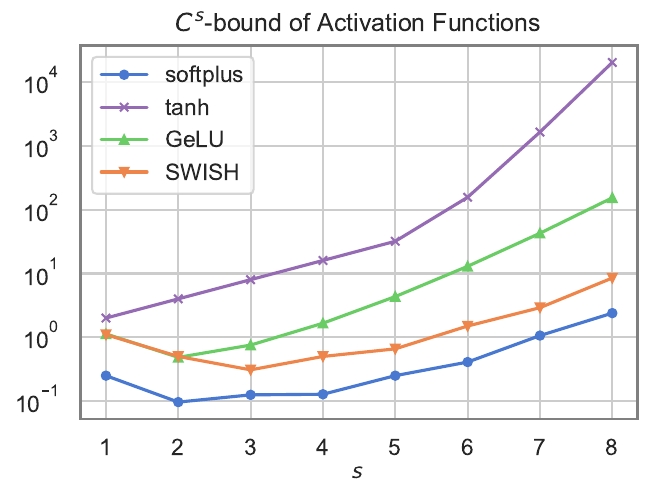}
    \end{minipage}
    \begin{minipage}{.32\textwidth}
        \centering
        \includegraphics[width=\textwidth]{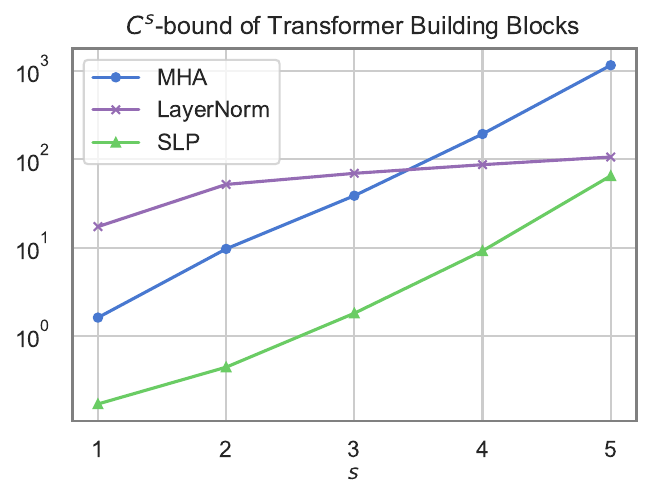}
    \end{minipage}
    \begin{minipage}{.32\textwidth}
        \centering
        \includegraphics[width=\textwidth]{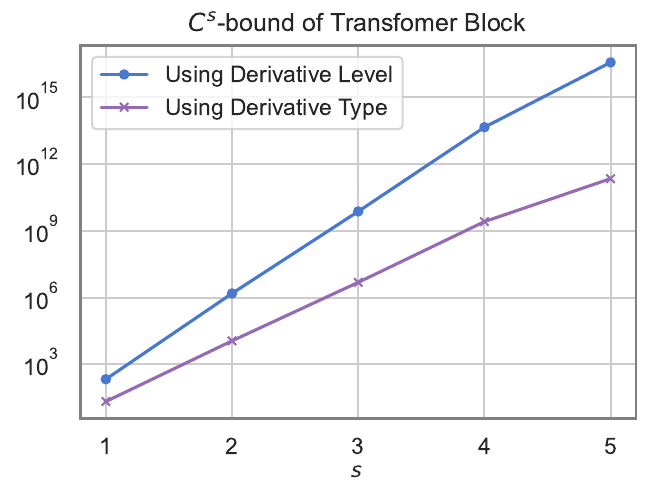}
    \end{minipage}
    \caption{{Effects of Transformer Components of~\ref{eq:FutureG}}: (left to right.) The first figure shows the $C^s$ bound of various activation functions according to results in Appendix \ref{sec:Activation}. The second illustrates $C^s$ bounds for Multi-Head Attention (\Cref{defn:MultiHead}), single-layer perceptrons, and the layer norm. 
    The third shows the $C^s$-bound of a transformer block (\Cref{defn:TransformerBlock}), distinguishing if the bound was computed level-specific (\Cref{cor:DerivativeBoundByLevel}) or type specific (\Cref{thm:DerivativeBoundByType}).
    The parameters used for the above plots are the base cases of \Cref{tab:Activations,tab:NeuralNet,tab:LayerNorm,tab:TBlock,tab:MultiHead}.
    }
    \label{fig:boundillustration}
\end{figure*}

Since these results are fairly technical and verbose, we relegate them to \Cref{sec:RegularityTransfomerBuildingBlocks}, see \Cref{thm:TransformerBlockSobolevBound} for the analogue result to \Cref{thm:TransformerBlockSobolevBoundLevel} and \Cref{lem:Attention,lem:LayerNorm,lem:FeedForward} for tighter bounds on $C^{\layernorm}_K, C^{\neuraln}_K$, and $C^{\multihead}_K$.
Additionally, we provide software tools to efficiently compute the bounds of a given transformer architecture.\footnote{The source code to compute derivative bounds is available at \url{https://anonymous.4open.science/r/transfomer-bounds-B476}.}

\subsection{Statistical Results: Bounds on Future Generalization for Non-I.i.d.\ Data}

Having characterized an upper bound for $C_{\ell,\transformerclass,K,s}$, we may now state our statistical guarantee, which is a version of~\eqref{eq:FutureG}.  This version provides insights on the future-generalization of transformers via 1) explicit constants and 2) explicit \textit{phase transition times} above which the convergence rate in~\eqref{eq:FutureG} accelerates by a polylogarithmic factor.
We express these times of convergence rate acceleration using the following \textit{convergence rate function}
\begin{equation}
\tag{rate}
\label{eq:rate_function__definition}
        \operatorname{rate}_s(N)
    \eqdef 
        \begin{cases}
           \frac{
               \log(
                    c\, N
                )^{{M}d-2s+s/({M}d)}
           }{
               c_2\,  N^{s/({M}d)}
           } 
            & \mbox{ if } {M}d>2s \quad {\color{gray}{\text{ (\textit{initial phases})}}}\\
           \frac{
               \log(
                    c\, N
                )
           }{
                c\, N^{1/2}
           } 
            & \mbox{ if } {M}d=2s \quad  {\color{gray}{\text{ (\textit{critical phase})}}}\\
            \frac{
               \log(
                    c\, N
                )^{{M}d/( 2s + 1 )}
           }{
                c\, N^{1/2}
           } 
            & \mbox{ if } 
            {M}d<2s  \quad {\color{gray}{\text{ (\textit{eventual phases})}}}\\       
        \end{cases}
\end{equation}
where $c\eqdef 1 - \kappa$, $c_2\eqdef c^{s/({M}d)}$, and $0< \kappa< 1$ are constants depending only on $X_{\cdot}$.

\medskip
Our main statistical guarantee provides an infinite number of generalization bounds, which all converge, each with different rates and constants.  The bounds with the fastest convergence have the largest constants and visa-versa.  This allows us to \textit{sort} this family of simultaneous bounds as a function of the sample size $N$; with the small constant bounds providing tighter generalization guarantees for small sample sizes $N$ while the rapidly converging bounds provide tighter guarantees for large sample sizes $N$. 

The small-sample bounds depend only on the first few derivatives of the transformers in our class, while the large-sample bounds rely on higher-order derivatives. The point at which one bound in our class surpasses another, becoming the tightest bound for a given sample size $N$, is characterized by a transition time, denoted $\tau_s$, as illustrated in Figure~\ref{fig:TransitionTimes}.  
In this way, every single derivative order computed in Theorem~\ref{thrm:main_b} is simultaneously utilized to obtain precise statistical guarantees optimized to the sample size $N$.

\begin{theorem}[Pathwise Generalization Bounds for Transformers]
\label{thrm:main}
In Setting~\ref{setting}, there exists $\kappa\in(0,1)$, depending only on $X_{\cdot}$, and $t_0\in \mathbb{N}_0$; such that for each $t_0\le N \le t \le \infty$ and $\delta \in (0,1]$ the following holds with probability at-least $1-\delta$
\begin{align*}
\resizebox{1\linewidth}{!}{$
        \underset{\mathcal{T}\in 
            \transformerclass
        }{\sup}
        \,
            \big|
                \mathcal{R}_{\max\{t,N\}}(\mathcal{T})-\mathcal{R}^{(N)}(\mathcal{T})
            \big|
    \lesssim
        \overset{\infty}{\underset{{s=0}}{\sum}}
        \,
            I_{N\in [\tau_{s},\tau_{s+1})}
            \,
            C_{\ell,\transformerclass,K,s}
        \biggl(
                \underbrace{
                    \kappa^t
                }_{\term{time}}
                \,
                {
                +\,
                \underbrace{
                    \operatorname{rate}_s(N)
                }_{\term{complexity}}
            +
                \underbrace{
                    \frac{
                        \sqrt{2\ln(1/\delta)}
                    }{
                        N^{1/2}
                    }
                }_{\term{hprob}}
                }
    \biggr)
$}
\end{align*}
with $I_\cdot$ as indicator function, $\operatorname{rate}_s(N)$ as in \eqref{eq:rate_function__definition}, the constant $
C_{\ell,\transformerclass,K,s} \eqdef \sup_{\transformer \in \transformerclass} \Vert \ell(\transformer, f^*) \Vert_{C^s},
$
and the \textit{transition times} $(\tau_s)_{s=0}^{\infty}$ are given iteratively by $\tau_0\eqdef 0$ and for each $s\in \mathbb{N}_+$
\begin{equation*}
\resizebox{1\hsize}{!}{$
    \tau_s
\eqdef 
    \inf\,
    \biggl\{ 
        t\ge \tau_{s-1}
    :\,
            C_{\ell,\transformerclass,K,s}
            ( \kappa^t + \operatorname{rate}_s(N)+
            \frac{\sqrt{\log(1/\delta)}}{\sqrt{N}}
            ) 
        \le 
            C_{\ell,\transformerclass,K,s-1}
            ( \kappa^t + \operatorname{rate}_{s-1}(N)+
            \frac{\sqrt{\log(1/\delta)}}{\sqrt{N}}
            )
    \biggr\}
.
$}
\end{equation*}
Furthermore, $c\eqdef 1-\kappa$, $c_2\eqdef c^{s/({M}d)}$, $\kappa^{\infty}\eqdef \lim_{t\to \infty}\, \kappa^t= 0$, and $\lesssim$ hides an absolute constant.
\end{theorem}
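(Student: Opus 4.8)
The plan is to interpose the stationary distribution $\mu_{\infty}$ of the lifted Markov chain $X^M_{\cdot}$ and to decompose, writing $\mathcal{R}_{\max\{t,N\}}=\mathcal{R}_t$ since $t\ge N$,
$\mathcal{R}_t(\transformer)-\mathcal{R}^{(N)}(\transformer)=\bigl(\mathcal{R}_t(\transformer)-\mathcal{R}_{\infty}(\transformer)\bigr)+\bigl(\mathcal{R}_{\infty}(\transformer)-\mathcal{R}^{(N)}(\transformer)\bigr)$,
where the first bracket is deterministic in the data and will produce the $\kappa^t$ summand, while the second is an empirical‑process fluctuation that will produce the $\operatorname{rate}_s(N)$ and $\sqrt{2\ln(1/\delta)}/\sqrt N$ summands. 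The single structural input from the analytic part is that, on the compact set $K\eqdef\{x:\|x\|\le c\}$ furnished by \Cref{ass:Comp_Support}, \Cref{thrm:main_b} (equivalently \Cref{thm:TransformerBlockSobolevBoundLevel,thm:TransformerBlockSobolevBound}) certifies that every map $g_{\transformer}\eqdef\ell\bigl(\transformer(\cdot),f^{\star}(\cdot)\bigr)$ with $\transformer\in\transformerclass$ obeys $\|g_{\transformer}\|_{C^s(K)}\le C_{\ell,\transformerclass,K,s}$ for every $s\in\mathbb{N}_0$; in particular $g_{\transformer}$ is $C_{\ell,\transformerclass,K,1}$‑Lipschitz, $s\mapsto C_{\ell,\transformerclass,K,s}$ is nondecreasing, and by \Cref{thrm:main_b} it grows super‑exponentially in $s$.

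For the $\kappa^t$ summand I would invoke the Wasserstein contraction of \Cref{ass:contractivity}: by Kantorovich--Rubinstein duality and the Lipschitz bound above, $\sup_{\transformer\in\transformerclass}\bigl|\mathcal{R}_t(\transformer)-\mathcal{R}_{\infty}(\transformer)\bigr|\le C_{\ell,\transformerclass,K,1}\,\mathcal{W}_1(\mu_t,\mu_{\infty})\le C_{\ell,\transformerclass,K,1}\,\kappa^{t}\,\mathcal{W}_1(\mu_0,\mu_{\infty})\le 2c\,C_{\ell,\transformerclass,K,1}\,\kappa^{t}$, the last step because $\mu_0$ and $\mu_{\infty}$ are both carried by $K$ (of diameter $\le 2c$). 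Applying the same estimate to $\mu_1,\dots,\mu_N$ and averaging bounds $\sup_{\transformer}\bigl|\frac1N\sum_{n\le N}\mathcal{R}_n(\transformer)-\mathcal{R}_{\infty}(\transformer)\bigr|$ by $2c\,C_{\ell,\transformerclass,K,1}\,\kappa/\bigl(N(1-\kappa)\bigr)=\mathcal{O}(1/N)$, which is of strictly smaller order than $\operatorname{rate}_s(N)$ and is absorbed there. Since $t\ge N$ and $C_{\ell,\transformerclass,K,1}\le C_{\ell,\transformerclass,K,s}$, both contributions are $\lesssim C_{\ell,\transformerclass,K,s}\,\kappa^{t}$ up to an absolute constant, for every $s\ge1$.

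For the fluctuation, note that $\mathcal{R}^{(N)}(\transformer)=\int g_{\transformer}\,\diff\hat\mu_N$ for the occupation measure $\hat\mu_N\eqdef\frac1N\sum_{n=1}^N\delta_{X_n}$, whence uniformly over the class $\sup_{\transformer\in\transformerclass}\bigl|\mathcal{R}_{\infty}(\transformer)-\mathcal{R}^{(N)}(\transformer)\bigr|\le C_{\ell,\transformerclass,K,s}\cdot d_{C^s(K)}(\mu_{\infty},\hat\mu_N)$, where $d_{C^s(K)}$ is the integral probability metric dual to the unit ball of $C^s(K)$; it then remains to bound $d_{C^s(K)}(\mu_{\infty},\hat\mu_N)$ in expectation (producing $\operatorname{rate}_s(N)$) and in deviation (producing the high‑probability term), with explicit constants. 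For the expectation I would run Dudley chaining over dyadic scales against the smoothness‑ball metric entropy $\log\mathcal{N}\bigl(\varepsilon,\{g:\|g\|_{C^s(K)}\le1\},\|\cdot\|_{\infty}\bigr)\lesssim\varepsilon^{-Md/s}$ (\citep[Thm.~2.7.1]{vaart2023empirical}; \citep[Eq.~(15.1.8)]{LorentzGolitschekMakovoz_CA_AdvancedProblems_1996}): the chaining integral $\int\sqrt{\varepsilon^{-Md/s}/N_{\mathrm{eff}}}\,\diff\varepsilon$ diverges at $0$ when $Md\ge2s$, forcing truncation at resolution $\sim N^{-s/(Md)}$ and a logarithmic number of scales — reproducing the ``initial'' branch $\log(cN)^{Md-2s+s/(Md)}/(c_2N^{s/(Md)})$ and the ``critical'' branch $\log(cN)/(cN^{1/2})$ — and converges when $Md<2s$, giving the parametric $N^{-1/2}$ with a $\log(cN)^{Md/(2s+1)}$ truncation factor, i.e.\ the ``eventual'' branch of \eqref{eq:rate_function__definition}. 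The only Markovian modification is to replace the per‑sample variance by the mixing‑corrected one: \Cref{ass:contractivity} yields geometric $\mathcal{W}_1$‑contraction, hence (via a blocking argument, or directly the ``smooth'' optimal‑transport concentration of \cite{Kloeckner_2020CounterCurse_ESAIM}) an effective sample size $N_{\mathrm{eff}}\asymp(1-\kappa)N$, which is exactly why $c=1-\kappa$ and $c_2=c^{s/(Md)}$ enter $\operatorname{rate}_s$, and $t_0$ is the burn‑in beyond which these estimates are valid. For the deviation, resampling a single $X_n$ changes $d_{C^s(K)}(\mu_{\infty},\hat\mu_N)$ by at most $2/N$ (a $C^s(K)$‑unit‑ball function has $\sup$‑norm $\le1$ on $K$), so McDiarmid's bounded‑difference inequality — in its form valid for geometrically mixing chains, equivalently the transport‑entropy inequality of \cite{Kloeckner_2020CounterCurse_ESAIM,Riekert_ConcentrationOfMeasure2022} refining the i.i.d.\ concentration used in \cite{hou2023instance,benitez2023out,kratsios2024tighter} — concentrates $d_{C^s(K)}(\mu_{\infty},\hat\mu_N)$ around its mean at scale $\sqrt{2\ln(1/\delta)}/\sqrt N$ up to an absolute (mixing‑dependent) constant absorbed into $\lesssim$; multiplying back by $C_{\ell,\transformerclass,K,s}$ yields the stated high‑probability summand.

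Finally, the packaging. Fixing $s\in\mathbb{N}_0$, the two preceding steps give, on a single event of probability $\ge1-\delta$ and simultaneously for all $t\ge N$ (the $\kappa^t$ part being deterministic), $\sup_{\transformer\in\transformerclass}\bigl|\mathcal{R}_{\max\{t,N\}}(\transformer)-\mathcal{R}^{(N)}(\transformer)\bigr|\lesssim C_{\ell,\transformerclass,K,s}\bigl(\kappa^{t}+\operatorname{rate}_s(N)+\sqrt{2\ln(1/\delta)}/\sqrt N\bigr)$. Since $\tau_0=0$ and the transition times $\tau_s$ are nondecreasing, the intervals $\{[\tau_s,\tau_{s+1})\}_{s\ge0}$ partition $\mathbb{N}_0$, so every admissible $N$ lies in exactly one of them; evaluating the displayed bound at that index $s=s(N)$ yields precisely the right‑hand side of the theorem, and the recursive definition of $\tau_s$ is designed so that $s(N)$ is the minimiser of $s\mapsto C_{\ell,\transformerclass,K,s}\bigl(\kappa^{t}+\operatorname{rate}_s(N)+\sqrt{\ln(1/\delta)}/\sqrt N\bigr)$ — the super‑exponential growth of $C_{\ell,\transformerclass,K,s}$ (\Cref{thrm:main_b}) traded against the decay of $\operatorname{rate}_s(N)$ in $s$ — so the indicator sum is in fact the pointwise infimum of the family of bounds. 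I expect the genuine obstacle to be the estimate of $d_{C^s(K)}(\mu_{\infty},\hat\mu_N)$ for the occupation measure of an only‑$\mathcal{W}_1$‑contractive chain: transferring the sharp, dimension‑ and log‑precise rate of convergence of empirical measures in negative‑smoothness / IPM metrics from the i.i.d.\ theory to this Markovian setting, and pairing it with a matching concentration inequality carrying explicit constants; by comparison the Wasserstein‑contraction step and the transition‑time bookkeeping are routine once the $C^s$ bounds of \Cref{thrm:main_b} are in hand.
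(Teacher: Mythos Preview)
Your proposal is correct and follows essentially the same route as the paper: interpose the stationary measure $\mu_{\infty}$, control $d_{C^s}(\mu_t,\mu_{\infty})$ via $\mathcal{W}_1$-contraction to produce the $\kappa^t$ term, and control $d_{C^s}(\mu_{\infty},\hat\mu_N)$ in expectation and deviation to produce $\operatorname{rate}_s(N)$ and the high-probability summand, then multiply by $C_{\ell,\transformerclass,K,s}$ and select $s=s(N)$ via the transition times.

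The only substantive difference is in how the bound on $d_{C^s}(\mu_{\infty},\hat\mu_N)$ is obtained. You sketch a direct Dudley-chaining argument against the metric entropy $\log\mathcal{N}(\varepsilon,\{g:\|g\|_{C^s}\le1\},\|\cdot\|_\infty)\lesssim\varepsilon^{-Md/s}$, together with a bounded-differences/McDiarmid inequality for geometrically mixing chains. The paper instead invokes the results of \cite{Riekert_ConcentrationOfMeasure2022} as black boxes: \citep[Theorem~1.5]{Riekert_ConcentrationOfMeasure2022} for the deviation inequality (yielding the $\sqrt{2\ln(1/\delta)}/\sqrt{N}$ term under the transport-entropy inequality implied by \Cref{ass:contractivity}), and \citep[Theorem~1.1]{Riekert_ConcentrationOfMeasure2022} for the expectation bound $\mathbb{E}[d_s(\mu_{\infty},\hat\mu_N)]$ (yielding exactly the three-branch $\operatorname{rate}_s(N)$ of~\eqref{eq:rate_function__definition}). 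Your chaining sketch would reproduce the same rates, but matching the precise logarithmic exponents in~\eqref{eq:rate_function__definition} requires care in the truncation step; the paper's route is shorter because those exponents are already packaged in Riekert's theorem. Your extra decomposition step bounding $\bigl|\tfrac1N\sum_n\mathcal{R}_n-\mathcal{R}_\infty\bigr|=\mathcal{O}(1/N)$ is not needed in the paper's argument (Riekert's result applies directly to the occupation measure of the non-stationary chain), but it is harmless since, as you note, it is dominated by $\operatorname{rate}_s(N)$.
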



\begin{wrapfigure}[14]{r}{0.35\textwidth}
\vspace{-3em}
  \begin{center}
    \includegraphics[width=.9\linewidth]{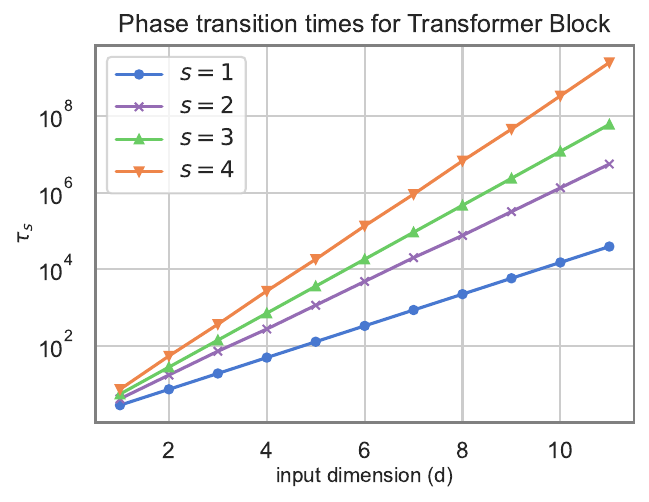}
  \end{center}
  \caption{\textbf{Transition times:} (y-axis) when the future-generalization bound accelerates by a polylogarithmic factor (in $N$) for a single transformer block in terms of the input dimension $d$ (x axis).  See~\Cref{ss:BoundonCs} for details on constants.}
  \label{fig:TransitionTimes}
\end{wrapfigure}
Theorem~\ref{thrm:main} implies the order estimate in~\eqref{eq:FutureG}.  This is because $C_{\ell,\transformerclass,K,s}$ is constant in $N$ and $\operatorname{rate}_s(N)<\operatorname{rate}_{s-1}(N)$; thus, for every $s>0$ the right-hand side our bound is eventually bounded by any $C_{\ell,\transformerclass,K,s}
        (
                    \kappa^t
                \,
            +
                        \sqrt{2\ln(1/\delta)}/N^{1/2}
                +\,
                    \operatorname{rate}_s(N)
    )$ for $N$ large enough. 
However, unlike the order estimate~\eqref{eq:FutureG}, Theorem~\ref{thrm:main} provides an explicit description of the actual size of the future-generalization gap in terms of three factors which we now interpret.

\paragraph{\Cref{time}: Non-Stationarity.} This term quantifies the rate at which the data-generating Markov process $X_{\cdot}$ becomes stationary.  This term only depends on the time $t$ and a constant $0<\kappa<1$ determined only by $X_{\cdot}$.  The limiting case is described using the notational convention $\kappa^{\infty}\eqdef \lim\limits_{t\to \infty}\,\kappa^t = 0$.  

\paragraph{\Cref{complexity}: Model Complexity Term (Phase Transitions).} This term captures the complexity of the transformer network in terms of the number of self-attention heads, depth, width, and the activation function used to define the class $\transformerclass$. 
Each constant $C_1\le \dots \le C_s\le \dots$ collects the higher-order sensitivities ($s^{th}$ order partial derivatives; where $s\in \mathbb{N}_+$) of the transformer model.  Each $0=\tau_0\le \tau_1\le\dots \le \tau_s\le $ indicates the times at which there is a phase-transition in the convergence rate of the generalization bound accelerates.  Once $t\ge \tau_s$, then the convergence rate of~\Cref{complexity} accelerates, roughly speaking, by a reciprocal log-factor of $1/\log(N)$.  
Observe that the rate function is asymptotically equal to the rate function from the central limit theorem, as $s$ tends to infinity; that is, $\lim_{s\to \infty}\, \operatorname{rate}_s(N) = 1/(c\sqrt{N})$.  
The rate~\eqref{eq:rate_function__definition} is the (optimal) rate at which the empirical measure generated by observations from a Markov process converges to its stationary distribution in $1$-Wasserstein distance~\cite{Kloeckner_2020CounterCurse_ESAIM,Riekert_ConcentrationOfMeasure2022}.  The polylogarithmic factor is removable if the data is i.i.d.~\cite{graf2007foundations,dereich2013constructive}.

\paragraph{\Cref{hprob}: Probabilistic Validity Term.} This term captures the cost of the bound being valid with probability at least $1-\delta$.  
The convergence rate of this term cannot be improved due to the central limit theorem. It is responsible for the overall convergence rate of our generalization bound being ``stuck'' at the optimal rate of $\mathcal{O}(1/\sqrt{N})$ from the central limit theorem; as the other two terms converge exponentially to $0$.

\section{Discussion: Implications of architecture choices.}

\begin{wrapfigure}[18]{r}{0.35\textwidth}
    \centering
    \includegraphics[width=.35\textwidth]{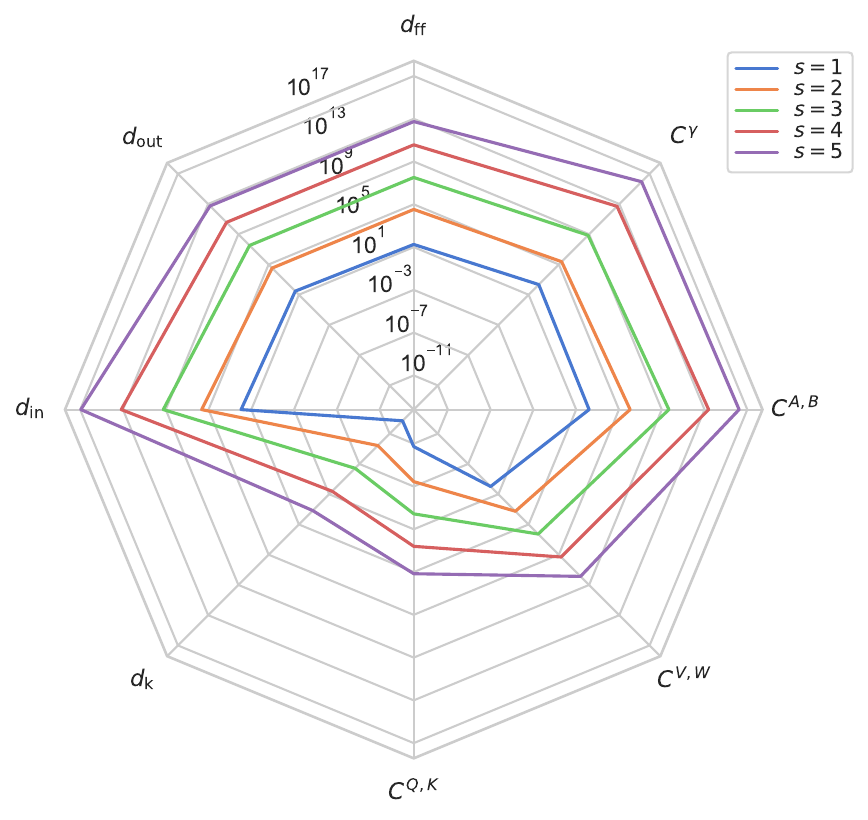}
    \caption{Absolute changes in $C^s$-bound for changes in architecture. Changes in dimensions ($d_\cdot$) are $\times 2$, while changes in parameter-bounds ($C^\cdot$) are $\times 10$, from the base parameters (see \Cref{tab:Activations,tab:NeuralNet,tab:LayerNorm,tab:TBlock,tab:MultiHead}).}
    \label{fig:Architecture}
\end{wrapfigure}

Figure~\ref{fig:boundillustration} illustrates the effect of various building blocks in the construction of a transformer (e.g.\ activation choice, multihead attention (MHA), layernorms) through their effect on the constants in our generalization bounds. 
While \Cref{tab:Activations,tab:NeuralNet,tab:LayerNorm,tab:TBlock,tab:MultiHead} contain more details, highlight here some key implications that architecture choices have on the bound:

    \paragraph{I) Choice of Activation Function:} We found (see \Cref{lem:Softplus,lem:tanh,lem:GELU,lem:SWISH}) that the $C^s$-bounds of activation function may vary substantially, framing \texttt{softplus} and \texttt{swish} as the more regular, and \texttt{tanh} resulting in the highest bound. This is also illustrated in \Cref{fig:boundillustration}. Note that the activation bound impacts the $\neuraln$-bound linearly and therefore effects the transformer-block bound of order $s^2$.
    
    \paragraph{II) Effects of Three Different Block-Types:}
    Considering the three components -- $\multihead, \layernorm, \neuraln$ -- that make up a transformer block, we observe that for low $s$ the regularization by $\layernorm$ has the highest bound, but becomes less relevant with the exponential increase of the $\multihead, \neuraln$-bounds for larger $s$. Exemplary values are shown in \Cref{fig:boundillustration}.

    \paragraph{III) Weight Size for MLP vs. Multi-Head Attention:} 
    As evident in \Cref{fig:Architecture} (and \Cref{tab:TBlock}), the parameter-bounds on $\neuraln$ (denoted by $C^{A,B}$) seem to have a more substantial impact on the bound than the parameter bounds of $\multihead$. For the latter, bounds on key- and query-matrices ($C^{K, Q}$) seem to have bigger impacts for lower $s$ than value- and aggregation-matrices ($C^{V, W}$) (see \Cref{defn:MultiHead} for details on notation), however show larger growth rates for larger $s$, as also shown in \Cref{tab:MultiHead}.
    The simple local-Lipschitz case of this result was obtained in~\cite{pmlr-v139-kim21i} (however, the complicated higher-order version of this result is completely new).
    
    \paragraph{IV) Effect of Dimensions (Key, Input, etc$\dots$):} Eventually, we can examine how various dimensions effect the bound. The input dimension ($\idim$) has a slightly higher impact than the output dimension ($\odim$). When it comes to choosing latent dimensions, scaling the hidden dimension of the $\neuraln$ ($\ldim$), has an effect similar to changes in the output dimension, and substantially higher comparing to the key-dimension $\kdim$ (see \Cref{defn:MultiHead,defn:TransformerBlock} in \Cref{sec:TransformerDetails} for details and notation). 

Consequentially, we show the effect on the phase-transition times $(\tau_t)_{t=0}^{\infty}$, defined in Theorem~\ref{thrm:main}, dictating when the bound accelerates by a polylogarithmic factor in $N$.

\acks{
AK acknowledges financial support from an NSERC Discovery Grant No.\ RGPIN-2023-04482 and their McMaster Startup Funds. RS is supported by Canada NSERC CGS-D Doctoral Grant. 
RS and AK acknowledge that resources used in preparing this research were provided, in part, by the Province of Ontario, the Government of Canada through CIFAR, and companies sponsoring the Vector Institute \href{https://vectorinstitute.ai/partnerships/current-partners/}{https://vectorinstitute.ai/partnerships/current-partners/}.
}




\bibliography{refs/main,
                refs/qfin-datasets, 
                refs/qfin-pm,
                refs/kalman-filters,
                refs/rebutal,
                refs/InContext}

\newpage
\appendix

\DoToC

\section{Notation}
\label{a:Notation}




In this section, we present the notation that will be employed throughout the appendix. This notation builds upon the framework established in the main body of the text, while incorporating additional levels of specificity. Given the technical nature of certain results discussed herein, a more detailed and precise formulation of the notation is necessary to ensure clarity and rigor in the statements that follow.

\begin{notation}[Multi-index Notation]\label{not:Multivariate}
We will fix the following multivariate notation.
\begin{itemize}[nosep]
    \item Multi-indices $\alpha \eqdef (\alpha_1, \ldots, \alpha_k) \in \N^k, k \in \N$ are denoted by Greek letters.
    \item The sum of entries is given by $\vert \alpha \vert \eqdef \sum_{i=1}^k \alpha_k$.
    \item Its faculty is defined by $\alpha!  \eqdef \prod_{i=1}^k \alpha_k!$,
    \item We denote the derivative w.r.t. $\alpha$ by $D^\alpha \eqdef \partial^{\vert \alpha \vert} / \partial x_1^{\alpha_1} \cdots \partial x_k^{\alpha_k}$ if $\vert \alpha \vert > 0$ else $D^\alpha$ is the identity operator.
    \item For a vector $x \in \R^k$, we write $x^\alpha \eqdef \prod_{i=1}^k x_k^{\alpha_k}$.
    \item We define the relation $\alpha \prec \beta$ for $\beta \in \N^k$ if one of the three following holds 
    \begin{enumerate}[label=(\roman*), nosep]
        \item $\vert \alpha \vert < \vert \beta \vert$;
        \item $\vert \alpha \vert = \vert \beta \vert$, and $\alpha_1 < \beta_1$; or 
        \item $\vert \alpha \vert = \vert \beta \vert$, and $\alpha_i = \beta_i$ for $i \in \myset{1}{j-1}$ and $\alpha_j < \beta_j$ for $j \in \myset{2}{k}$.
    \end{enumerate}
    \item Unit vectors $e_i \in \{0,1\}^k$ are defined by $(e_i)_j = 0$ for $i \neq j$ and  $(e_i)_i = 1$.
\end{itemize}
\end{notation}
\begin{definition}[$C^s$-norm] \label{defn:CsNorm}
For any $s>0$, the norm $\|\cdot\|_{C^s}$ of a smooth function $f:\R^d\to \R$ is defined by
\begin{align*}
    \|f\|_{C^s}
\eqdef 
        \max_{k=1,\dots,s-1}\,
        \max_{\alpha\in \{1,\dots,d\}^d}
        \,
            \Big\|
                \frac{\partial^k f}{
                    \partial x_{\alpha_1}\dots\partial x_{\alpha_k}
                }
            \Big\|_{\infty}
    +
        \max_{\alpha \in \{1,\dots,d\}^{s-1}}\,
            \operatorname{Lip}\big(
                \frac{\partial^{s-1}f}{\partial x_{\alpha_1}\dots\partial x_{\alpha_{s-1}}}
            \big).
\end{align*}
\end{definition}
We use the following notation to streamline the analytic challenges the tacking of $C^s$-norms.
\begin{notation}[Order operator for multi-indeces]\label{not:Order}
Define the order operator $\order$ for multi-indeces by
\begin{align*}
    \order: \N^k \longrightarrow \N^k, \quad \alpha_1, \ldots, \alpha_k \longmapsto \alpha_{\tau_\alpha(1)}, \ldots, \alpha_{\tau_\alpha(k)},
\end{align*}
where $\tau_\alpha: \myset{1}{k} \to \myset{1}{k}$ s.t. $\alpha_{\tau_\alpha(1)} \geq  \ldots \geq  \alpha_{\tau_\alpha(k)}$.  We write $\alpha \eqorder \beta$ if $\order(\alpha) = \order(\beta)$ for $\alpha, \beta \in \N^k$. Further, denote by $\Order^k_{n}$ the set $\{\order(\alpha) :  \alpha \in \N^k, \vert \alpha \vert = n \}$ and write $\Order^k_{\leq n} \eqdef \{\order(\alpha) :  \alpha \in \N^k, \vert \alpha \vert \leq n \}$. Eventually, define $N(\alpha) \eqdef \# \vert \{ \alpha' \in \N^k : \order(\alpha') = \alpha \} \vert$. 
\end{notation}

We will use the following notation to tabulate the sizes of a $C^s$-norm.
\begin{notation}[Derivatives]\label{not:DerivativeBounds} Let $k \in \N$, $K \in \R^k$ be a set, $f: K \to \R^m$ a function and $\alpha\in\Order^k_n$ an ordered multi-index. Then,
\begin{itemize}[nosep]
    \item the uniform bound of $\alpha$-like derivatives on $K$ is given by 
    \begin{align*}
        C^f_K(\alpha) \eqdef \max_{i \in \myset{1}{m}} \max_{\gamma \sim \alpha} \Vert D^\gamma f_i \Vert_K,
    \end{align*}
    \item we define the bound at / up to derivative level $n$ by
    \begin{align*}
        C^f_K(n) \eqdef \max_{\alpha \in \Order^k_{n}} C^f_K(\alpha), \quad\quad 
        C^f_K(\cleq n) \eqdef \max_{\alpha \in \Order^k_{\cleq n}} C^f_K(\alpha),
    \end{align*}
    \item we write $\Vert K \Vert \eqdef \sup_{x \in K} \Vert x \Vert$, and 
    \item the $\ell^{\infty}$-matrix norm of any $n\times m$ matrix $A \in \R^{n\times m}$ is abbreviated as $$C^A \eqdef \max_{i \in \myset{1}{n}, j \in \myset{1}{m}} \vert A_{i,j}\vert.$$
\end{itemize}
\end{notation}
When segmenting, truncating, or manipulating time series we will using the following notation.
\begin{notation}[Time Series Notation]
The following notation is when indexing paths of any time series.
\begin{itemize}[nosep]
    \item \emph{Realized Path up to time $t$} is denoted by $x_{\le t}\eqdef (x_s)_{s\in \mathbb{Z},\,s\le t}$.
    \item \emph{Segment of a Path} Given a sequence $x\in \mathbb{R}^{\mathbb{Z}}$ and integers $s\le t$, we denote $x_{[s:t]}\eqdef (x_i)_{i=s}^t$.
\end{itemize}
\end{notation}
Lastly, we recorded some additional notations that were required throughout our manuscript.
\begin{notation}[Miscellaneous] We define:
\begin{itemize}[nosep]
    \item \emph{$N$-Simplex}. For $N \in \N$ we write $$\Delta_N\eqdef \{u\in [0,1]^N:\,\sum_{i=1}^N\,u_i=1\}.$$
    \item \emph{Infinite powers}: For $c\in (0,1)$, we define $$c^{\infty}\eqdef \lim\limits_{t\to \infty}\, c^t = 0.$$
    \item \emph{Reshape operator}: For any $F_1,F_2\in \mathbb{N}_+$, the operator is given by $\reshape_{F_1\times F_2}$, mapping any vector $u\in \mathbb{R}^{F_1F_2}$ to the $F_1\times F_2$ matrix
    \begin{align*}
        \reshape_{F_1\times F_2}(x)_{i,j}\eqdef x_{(i-1)F_2 +j}
        .
    \end{align*}
    We denote the inverse of the map $\reshape_{F_1\times F_2}$ by $\vectorize_{F_1,F_2}:\mathbb{R}^{F_1\times F_2}\to\mathbb{R}^{F_1F_2}$.  
    \item \emph{Softmax operator}: For each $F\in \mathbb{N}_+$ and each $x\in \R^F$, 
    $$
            \operatorname{softmax}(x)\eqdef \smax(x) \eqdef (
                    {\exp(x_i)}
                /
                    {\textstyle\sum_{j = 0}^{F-1} \exp(x_j)}
            )_{i=0}^{F-1}.
    $$
\end{itemize}
\end{notation}

\section{{Examples of Data-Generating Processes Satisfying Assumptions~\ref{ass:Comp_Support} and~\ref{ass:contractivity}}}
\label{s:Examples}
This section provides several examples of stochastic (data-generating) processes which satisfy our assumptions and are outside the i.i.d.\ restrictions.  

\subsection{Projected Exponentially Ergodic Latent Processes}
\label{s:ProjectionConstruction}
\begin{proposition}[Lipschitz-Transformed SDEs with Overdampened Drift]
\label{prop:proj_SDE}
In the setting of Example~\ref{ex:proj_SDE}, $\{(P^n(x,\cdot))_{n=0}^{\infty}\}_{x\in [0,1]^d}$ satisfies both Assumptions~\ref{ass:Comp_Support} and~\ref{ass:contractivity}.
\end{proposition}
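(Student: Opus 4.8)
The plan is to verify \Cref{ass:Comp_Support} and \Cref{ass:contractivity} in turn; the first is immediate, and the second is the substance. \textbf{For \Cref{ass:Comp_Support}:} by hypothesis $f$ is bounded, say $\sup_{z\in\mathbb{R}^{\bar d}}\|f(z)\|\le c$, so $\|X_n^x\|=\|f(Z_n^x)\|\le c$ for every $n\in\mathbb{N}_0$ and every starting point, almost surely; in particular each transition kernel $P^n(x,\cdot)$ is supported in the $c$-ball, which also guarantees that all the Wasserstein distances appearing in \Cref{ass:contractivity} are finite.

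\textbf{For \Cref{ass:contractivity}:} I would first prove a pathwise contraction for the latent diffusion via a synchronous coupling. Fix $x,y\in\mathbb{R}^{\bar d}$ and let $Z^x,Z^y$ be the strong solutions of~\eqref{eq:overdampened} — these exist and are pathwise unique since $\mu$ is Lipschitz — driven by the \emph{same} noise, so that $u_t\eqdef Z^x_t-Z^y_t$ carries no stochastic part: it is absolutely continuous in $t$, with $u_0=x-y$ and $\dot u_t=\mu(Z^x_t)-\mu(Z^y_t)$. Then
\begin{align*}
    \frac{d}{dt}\,\|u_t\|^2
    = 2\,\big\langle u_t,\ \mu(Z^x_t)-\mu(Z^y_t)\big\rangle
    \le -2K\,\|u_t\|^2,
\end{align*}
where the inequality is precisely the strong-monotonicity assumption $(\mu(a)-\mu(b))^{\top}(a-b)\le -K\|a-b\|^2$ (for Langevin dynamics $\mu=-\nabla U/2$ this is half the strong-convexity modulus of $U$). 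Gr\"onwall's inequality then yields $\|Z^x_t-Z^y_t\|\le e^{-Kt}\|x-y\|$ almost surely. To pass to arbitrary initial laws $\mu,\nu\in\mathcal{P}(\mathbb{R}^{\bar d})$, I would couple the two initial conditions optimally for $\mathcal{W}_1$, independently of the driving noise, run the flow, and observe that the resulting pair is a coupling of $(P_Z^t\mu,P_Z^t\nu)$, where $P_Z$ is the time-$1$ transition kernel of the (Markov) diffusion $Z_\cdot$; taking expectations gives $\mathcal{W}_1(P_Z^t\mu,P_Z^t\nu)\le e^{-Kt}\,\mathcal{W}_1(\mu,\nu)$.

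It then remains to transfer this contraction from $Z_\cdot$ to the observed process $X_n=f(Z_n)$. Since $f$ is $1$-Lipschitz, the pushforward $f_{\#}$ is a $1$-Lipschitz map of $(\mathcal{P}(\mathbb{R}^{\bar d}),\mathcal{W}_1)$ into $(\mathcal{P}(\mathbb{R}^d),\mathcal{W}_1)$ — push an optimal coupling forward by $f\times f$ and use $\|f(a)-f(b)\|\le\|a-b\|$ — so composing with the latent estimate gives the asserted inequality $\mathcal{W}_1(P^t\mu,P^t\nu)\le e^{-Kt}\,\mathcal{W}_1(\mu,\nu)$, i.e.\ \Cref{ass:contractivity} with $\kappa\eqdef e^{-K}\in(0,1)$. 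I expect the one genuinely delicate point to be the bookkeeping in this last step: $f$ need not be injective, so $X_\cdot$ itself is not literally Markov, and one should instead argue at the level of the finite-dimensional Markovian lift of $X_\cdot$, or — more cleanly — simply invoke \Cref{lem:BoundedLipProjection}, which is tailored to pass Assumptions~\ref{ass:Comp_Support}--\ref{ass:contractivity} to a bounded $1$-Lipschitz readout of an exponentially $\mathcal{W}_1$-ergodic Markov process. Everything else — the synchronous coupling and the Gr\"onwall step — is routine.
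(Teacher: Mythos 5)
Your argument is correct, and its overall architecture matches the paper's: both establish an exponential $\mathcal{W}_1$-contraction for the latent diffusion $Z_{\cdot}$ and then transfer it to $X_n=f(Z_n)$ via Lemma~\ref{lem:BoundedLipProjection} (the $1$-Lipschitz pushforward step, which also delivers \Cref{ass:Comp_Support} from the boundedness of $f$). The one genuine difference is in how the latent contraction is obtained: the paper simply cites \cite[Theorem 1.1]{LuoWangExpConvergence_2016} to get $\mathcal{W}_1(\tilde{P}^t\mu,\tilde{P}^t\nu)\le e^{-Kt}\mathcal{W}_1(\mu,\nu)$, whereas you prove it from scratch by a synchronous coupling: since the noise in~\eqref{eq:overdampened} is additive, the difference $u_t=Z^x_t-Z^y_t$ is pathwise absolutely continuous, the strong-monotonicity hypothesis gives $\tfrac{d}{dt}\|u_t\|^2\le -2K\|u_t\|^2$, and Gr\"onwall plus an optimal coupling of the initial laws yields the same bound with $\kappa=e^{-K}$. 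Your route is self-contained and elementary (it makes transparent exactly where the overdampened-drift condition enters), at the cost of being specific to additive noise; the citation-based route is shorter and covers the more general settings the cited theorem handles. Your closing caveat — that $f$ need not be injective, so one should argue at the level of the kernels/lift rather than claim $X_{\cdot}$ is Markov outright, or just invoke Lemma~\ref{lem:BoundedLipProjection} — is well taken and is in fact the role that lemma plays in the paper's proof as well.
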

The proof of Proposition~\ref{prop:proj_SDE} uses the following lemma.
\begin{lemma}[Enforcing Boundedness via $1$-Lipschitz Maps Preserves Exponential Ergodicity]
\label{lem:BoundedLipProjection}
Let $\tilde{d},d\in \mathbb{N}_+$ and $Z_{\cdot}$ be a Markov process on $\mathbb{R}^{\tilde{d}}$ satisfying Assumption~\ref{ass:contractivity}.  Given any bounded Lipschitz function $f:\mathbb{R}^{\tilde{d}}\to \mathbb{R}^d$ the Markov process $X_{\cdot}\eqdef (X_n)_{n=0}^{\infty}$ in $\mathbb{R}^d$, defined for each $n$ by $X_n\eqdef f(Z_n)$, satisfies both Assumption~\ref{ass:Comp_Support} and~\ref{ass:contractivity}.
\end{lemma}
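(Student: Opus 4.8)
The plan is to verify each of the two assumptions separately, both following quickly from the boundedness and Lipschitz-continuity of $f$ together with the corresponding property of $Z_{\cdot}$.

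\textbf{Assumption~\ref{ass:Comp_Support} (bounded trajectories).} Since $f$ is bounded, there is a $c>0$ with $\|f(z)\| \le c$ for every $z\in \mathbb{R}^{\tilde d}$. As $X_n = f(Z_n)$ for every $n$, we immediately get $\sup_{n\in\mathbb{N}} \|X_n\| \le c$ with probability one (indeed, deterministically along every sample path). Note this holds even though $Z_{\cdot}$ itself need not be bounded.

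\textbf{Assumption~\ref{ass:contractivity} (exponential ergodicity in $\mathcal{W}_1$).} Let $L\ge 1$ be a Lipschitz constant for $f$ (we may take $L\ge 1$ without loss). Write $P_Z$ for the Markov kernel of $Z_{\cdot}$ and $P=P_X$ for that of $X_{\cdot}$; since $X_n = f(Z_n)$, the law of $X_n^M$ (or of the Markovian lift, applying $f$ coordinatewise in the lifted state space, which is again bounded and Lipschitz with the same constant) is the pushforward $f_\sharp$ of the law of $Z_n^M$. The key mechanism is the standard estimate that a Lipschitz pushforward contracts $\mathcal{W}_1$ by at most the Lipschitz constant: for any $\mu',\nu'\in\mathcal{P}(\mathbb{R}^{Md})$ (here in the lifted state space), coupling optimally and pushing the coupling forward through $f$ gives
\begin{align*}
    \mathcal{W}_1(f_\sharp \mu', f_\sharp \nu') \le L\, \mathcal{W}_1(\mu',\nu').
\end{align*}
However, the subtlety is that a given $\mu,\nu\in\mathcal{P}(\mathbb{R}^{Md})$ on the $X$-side need not be pushforwards of measures on the $Z$-side, so one cannot directly write $P^t\mu = f_\sharp(P_Z^t(\cdot))$ for arbitrary $\mu$. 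The resolution is to observe that what is actually needed for the statistical results is contractivity of the empirical-to-stationary distance, i.e. it suffices to establish Assumption~\ref{ass:contractivity} for the specific family $\{P^n(x,\cdot)\}$; and for a point mass $\delta_x$ with $x$ in the (bounded) state space of $X_{\cdot}$, one does have $x = f(\tilde x)$ for some $\tilde x$ by construction (each coordinate block $X_n = f(Z_n)$), so $P^t(x,\cdot) = f_\sharp P_Z^t(\tilde x,\cdot)$. For two such point masses,
\begin{align*}
    \mathcal{W}_1(P^t\delta_x, P^t\delta_{x'}) = \mathcal{W}_1(f_\sharp P_Z^t(\tilde x,\cdot), f_\sharp P_Z^t(\tilde x',\cdot)) \le L\,\mathcal{W}_1(P_Z^t(\tilde x,\cdot), P_Z^t(\tilde x',\cdot)) \le L\kappa_Z^t\,\mathcal{W}_1(\delta_{\tilde x},\delta_{\tilde x'}),
\end{align*}
and then one extends to general $\mu,\nu$ by disintegration against the first marginal and the triangle inequality / convexity of $\mathcal{W}_1$. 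The factor $L\kappa_Z^t$ is not yet of the form $\kappa^t$ with $\kappa\in(0,1)$; to fix the constant, pick $m\in\mathbb{N}_+$ large enough that $L\kappa_Z^m < 1$, set $\kappa\eqdef (L\kappa_Z^m)^{1/m}\in(0,1)$ (absorbing the finitely many initial steps $t<m$ into a harmless constant, or by iterating the $m$-step contraction and bounding the remainder using that $\mathcal{W}_1$-diameter of the bounded state space is finite), and conclude $\mathcal{W}_1(P^t\mu,P^t\nu)\le \kappa^t\,\mathcal{W}_1(\mu,\nu)$ for all $t$ after possibly enlarging $\kappa$ slightly.

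\textbf{Main obstacle.} The genuinely delicate point is the one just flagged: Assumption~\ref{ass:contractivity} as literally stated quantifies over \emph{all} $\mu,\nu\in\mathcal{P}(\mathbb{R}^{Md})$, and a Lipschitz map with constant $L>1$ does not contract Wasserstein distance --- it can expand it by up to $L$. So the clean ``pushforward is $L$-Lipschitz in $\mathcal{W}_1$'' observation alone does not close the argument; one must either (a) exploit that after one application of $P$ every relevant measure is supported in the bounded set $f(\mathbb{R}^{\tilde d})$, whose finite $\mathcal{W}_1$-diameter lets the geometric factor $\kappa_Z^t$ eventually dominate the constant $L$, yielding a genuine $\kappa\in(0,1)$ after re-indexing time; or (b) note that for the downstream concentration-of-measure results only the contractivity along trajectories $P^t(x,\cdot)$ and against the stationary law is used, which is exactly what the pushforward argument delivers. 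I would present (a) as the self-contained fix, with the time-reindexing / diameter bound being the one computation worth spelling out; everything else is immediate from boundedness and the definition of the pushforward.
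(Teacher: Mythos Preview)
Your strategy is the same as the paper's: Assumption~\ref{ass:Comp_Support} is immediate from the boundedness of $f$, and for Assumption~\ref{ass:contractivity} one uses that the pushforward $f_\sharp$ is $\operatorname{Lip}(f)$-Lipschitz for $\mathcal{W}_1$ (the paper invokes Kantorovich duality for this step and implicitly takes $\operatorname{Lip}(f)=1$, so the constant vanishes). You go further than the paper in flagging the genuine difficulties with $L>1$ and with the universal quantifier over $\mu,\nu$.

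That said, there is a remaining gap in your displayed chain of inequalities: the identity $P^t(x,\cdot) = f_\sharp P_Z^t(\tilde x,\cdot)$ for a \emph{single} preimage $\tilde x$ is not correct. Conditioning on $X_0=x$ means conditioning on the event $\{f(Z_0)=x\}$, which yields a mixture over the fibre $f^{-1}(x)$ weighted by the conditional law of $Z_0$, not a Dirac at one chosen $\tilde x$; more basically, $X_n=f(Z_n)$ need not be Markov at all (a function of a Markov process is generically not Markov), so the kernel $P$ you invoke may be ill-defined. Your fix~(a) does not close this either, because the bound you reach, $L\kappa_Z^t\|\tilde x-\tilde x'\|$, lives on the unbounded $Z$-side while your finite-diameter argument lives on the bounded $X$-side; the two cannot be combined into $\kappa^t\|x-x'\|$. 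The paper's own one-line proof commits the same conflation (it silently parameterises the $X$-kernel by $Z$-initial conditions and lands on $\|x-y\|$ with $x,y$ in the wrong space), so this is a gap in the source as well. Your option~(b) is the sound direction: what the pushforward argument \emph{does} deliver is exponential $\mathcal{W}_1$-convergence of $\operatorname{law}(X_t)=f_\sharp\operatorname{law}(Z_t)$ to $f_\sharp\bar\mu_Z$ at rate $\kappa_Z$, with the factor $L\,\mathcal{W}_1(\operatorname{law}(Z_0),\bar\mu_Z)$ absorbed into a constant prefactor; whether that alone suffices for the cited concentration results is a separate check you would still need to make.
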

\begin{proof}[{Proof of Lemma~\ref{lem:BoundedLipProjection}}]
Since $f$ is bounded, then there exists some $r>0$ such that $f(\mathbb{R}^d)\subset B_r^d\eqdef \{u\in \mathbb{R}^d:\,\|u\|\le \}$.
For each $x\in \mathbb{N}_+$, let $P^n(x,\cdot)\eqdef \mathbb{P}(X_t\in \cdot|X_0=x)=\mathbb{P}(f(Z_t)\in \cdot|f(Z_0)=f(x))=f_{\#}P^n(x,\cdot)$ then the Kantorovich duality, see e.g.~\citep[Theorem 5.10]{VillaniBook_2009}, implies that $f_{\#}:\mathcal{P}_1(\mathbb{R}^d)\to \mathcal{P}_1(B_r^d)$ is $1$-Lipschitz; whence~\eqref{eq:satisfaction__pointversion} imples that: for each $x\in [0,1]^d$ and every $n\in \mathbb{N}$ we have
\begin{equation}
\label{eq:satisfaction__pointversion}
    \mathcal{W}_1\big(
        P^n(x,\cdot),P^n(y,\cdot)
    \big)
\le 
    \operatorname{Lip}(f)
    \mathcal{W}_1\big(
        \tilde{P}^n(x,\cdot),\tilde{P}^n(y,\cdot)
    \big)
\le 
    \kappa
    \,
    \|x-y\|
.
\end{equation}
Thus, Assumption~\ref{ass:contractivity} holds.  Finally, we note that Assumption~\ref{ass:Comp_Support} holds since each $P^n(x,\cdot)\in \mathcal{P}_1(B_r^d)$.
\end{proof}
\begin{proof}[{Proof of Proposition~\ref{prop:proj_SDE}}]
For any $\mu \in \mathcal{P}_1(\mathbb{R}^D)$ consider the unique strong solution (which exists by our Lipschitz assumption)
For the following SDE (which is a Markov process)
\begin{align*}
Z_t^{\mu} = Z_0^{\mu} + \int_0^t \, \mu(Z_s^{\mu})\,ds + \int_0^t\, W_s
\end{align*}
where $W_{\cdot}\eqdef (W_n)_{n=0}^{\infty}$ is a $d$-dimensional Brownian motion and $Z_0^{\mu}$ is distributed according to $\mu$.
For every $n\in \mathbb{N}_+$ let $\tilde{P}^n\mu\eqdef \mathbb{P}(Z_n^{\mu}\in \cdot)$ and, for each $x\in \mathbb{R}^d$, let $\tilde{P}^n(x,\cdot)\eqdef \tilde{P}^n\delta_x$.
Then~\cite[Theorem 1.1]{LuoWangExpConvergence_2016} implies that: for all $n\in \mathbb{N}$ and each $\mu,\nu\in\mathcal{P}_1(\mathbb{R}^d)$ we have
\begin{equation}
\label{eq:satisfaction}
    \mathcal{W}_1\big(
        \tilde{P}^n\mu,\tilde{P}^n\nu
    \big)
\le 
    \kappa
    \,
    \mathcal{W}_1(\mu,\nu)
\end{equation}
where $\kappa = \exp(-K)$; note that $\kappa \in (0,1)$ since $K>0$.  That is, $(\tilde{P}^n)_{n=0}^{\infty}$ satisfies Assumption~\ref{ass:contractivity} upon taking $\mu=\delta_x$ and $\nu=\delta_y$, for any given $x,y\in \mathbb{R}^d$, since
$\mathcal{W}_1(\delta_x,\delta_y)=\|x-y\|$
, see e.g.~\cite[page 99 point 5]{VillaniBook_2009} or note that the only coupling between $\delta_x$ and $\delta_y$ is the product measure $\delta_x\otimes\delta_y$).
\begin{align*}
    \mathcal{W}_1\big(
        \tilde{P}^n(x,\cdot),\tilde{P}^n(x,\cdot)
    \big)
\le 
    \kappa
    \,
    \|x-y\|
.
\end{align*}
Applying Lemma~\ref{lem:BoundedLipProjection} yields the conclusion.
\end{proof}

\begin{proposition}
\label{prop:projection_martingale}
Consider the setting of Example~\ref{ex:martingales}.  Then, the process $X_{\cdot}$ satisfies both Assumptions~\ref{ass:Comp_Support} and~\ref{ass:contractivity}.
\end{proposition}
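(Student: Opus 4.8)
The plan is to deduce Proposition~\ref{prop:projection_martingale} from the transfer principle of Lemma~\ref{lem:BoundedLipProjection}, exactly as Proposition~\ref{prop:proj_SDE} was obtained in the overdampened-drift case. Since $f:\mathbb{R}^d\to\mathbb{R}^d$ is bounded and $1$-Lipschitz, Lemma~\ref{lem:BoundedLipProjection} (with latent dimension $\tilde d=d$) reduces the entire statement to verifying that the latent diffusion $Z_\cdot$ of Example~\ref{ex:martingales} satisfies Assumption~\ref{ass:contractivity}; indeed, once this is known, the same lemma simultaneously delivers Assumption~\ref{ass:Comp_Support} for $X_\cdot\eqdef(f(Z_n))_{n\ge 0}$, because $f(\mathbb{R}^d)$ lies in some ball $B_r^d$ and hence $\mathbb{P}(\sup_n\|X_n\|\le r)=1$. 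So everything comes down to exhibiting a constant $\kappa\in(0,1)$ with $\mathcal{W}_1(\tilde P^t\mu,\tilde P^t\nu)\le\kappa^t\,\mathcal{W}_1(\mu,\nu)$ for all $\mu,\nu\in\mathcal{P}(\mathbb{R}^d)$ and $t\in\mathbb{N}_+$, where $\tilde P$ denotes the time-one transition kernel of $Z_\cdot$.

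For that last step I would first reduce to Dirac initial data: taking an optimal $\mathcal{W}_1$-coupling of $\mu$ and $\nu$ and integrating the pointwise estimate, it suffices to prove $\mathcal{W}_1(\tilde P^t\delta_x,\tilde P^t\delta_y)\le\kappa^t\|x-y\|$ for arbitrary $x,y\in\mathbb{R}^d$, and by the Markov property it even suffices to prove the single-step bound $\mathbb{E}\|Z_1^x-Z_1^y\|\le\kappa\|x-y\|$ under some admissible coupling of the two copies $Z_\cdot^x,Z_\cdot^y$. To build such a coupling I would run a reflection (``mirror'') coupling: drive $Z^y$ by the Brownian motion driving $Z^x$ reflected across the hyperplane orthogonal to the gap $e_t\eqdef Z_t^x-Z_t^y$, and switch to a synchronous coupling at the first meeting time $\tau$. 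The Lipschitz regularity of $\sigma$ in Frobenius norm controls the diffusion coefficient of $e_\cdot$, while the uniform ellipticity $s_{\min}(\sigma(z)\sigma(z)^\top)\ge\lambda$ makes $\|e_\cdot\|$ behave like a time-changed one-dimensional Brownian motion absorbed at $0$ whose intrinsic clock runs at rate bounded below by a multiple of $\lambda$; feeding this lower bound on the clock into a Lyapunov/Gr\"onwall estimate — in the spirit of the reflection-coupling contraction bounds used elsewhere in this paper, e.g.\ \cite{LuoWangExpConvergence_2016,LuoRefinedCoupledExponentialLevyPureJump} — should yield $\mathbb{E}\|e_1\|\le\kappa\|x-y\|$ with $\kappa\in(0,1)$ independent of $x,y$, and then iterating over integer times gives the claimed $\mathcal{W}_1$-contraction. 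Applying Lemma~\ref{lem:BoundedLipProjection} with this $\kappa$ then finishes the proof, cf.\ \eqref{eq:satisfaction__pointversion}.

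I expect the main obstacle to be exactly this one-step contraction for the \emph{driftless} diffusion $Z_\cdot$: unlike the overdampened-drift example, a pure martingale carries no built-in mean reversion, so the entire gain $\kappa<1$ must be wrung out of the ellipticity of $\sigma\sigma^\top$ through the coupling, and the constant must be made uniform over the starting pair $(x,y)$. The remaining ingredients — the Dirac reduction, the passage from $Z_\cdot$ to $X_\cdot=f(Z_\cdot)$ via Lemma~\ref{lem:BoundedLipProjection}, measurability and integrability of the coupled gap process, and the derivation of Assumption~\ref{ass:Comp_Support} from boundedness of $f$ — are routine and mirror the bookkeeping already carried out in the proofs of Lemma~\ref{lem:BoundedLipProjection} and Proposition~\ref{prop:proj_SDE}.
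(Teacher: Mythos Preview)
Your reduction via Lemma~\ref{lem:BoundedLipProjection} to a $\mathcal{W}_1$-contraction for the latent diffusion $Z_\cdot$, and the subsequent transfer to $X_\cdot=f(Z_\cdot)$, is exactly the skeleton of the paper's argument.

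The gap lies in the contraction step you correctly flag as the crux. Under any reflection coupling of the \emph{driftless} SDE $dZ_t=\sigma(Z_t)\,dW_t$, the difference $e_t=Z_t^x-Z_t^y$ is itself a stochastic integral against Brownian motion with \emph{no drift term whatsoever}, hence a local martingale; It\^{o}'s formula then gives $\|e_t\|$ a nonnegative Bessel-type drift up to the meeting time $\tau$, so $\|e_{t\wedge\tau}\|$ is a nonnegative submartingale and optional stopping forces $\mathbb{E}\|e_t\|=\mathbb{E}\|e_{t\wedge\tau}\|\ge\|x-y\|$. The case $\sigma\equiv I_d$ --- which satisfies every hypothesis of Example~\ref{ex:martingales} --- makes this explicit: there $\|e_t\|$ is a one-dimensional Brownian motion started at $\|x-y\|$ and absorbed at $0$, with expectation identically $\|x-y\|$ for all $t$, and in fact $\mathcal{W}_1(\tilde P^t\delta_x,\tilde P^t\delta_y)=\|x-y\|$ exactly by translation invariance of the heat semigroup. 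Time-changing a nonnegative martingale absorbed at zero leaves its expectation untouched, so the ellipticity lower bound on the clock buys nothing; the Lyapunov/Gr\"onwall contractions in~\cite{LuoWangExpConvergence_2016,LuoRefinedCoupledExponentialLevyPureJump} you appeal to are driven entirely by a dissipative \emph{drift}, precisely the ingredient absent here. Reflection coupling alone cannot manufacture $\kappa<1$ for a pure martingale diffusion.

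The paper does not attempt any direct coupling construction. It observes that the Lipschitz and uniform-ellipticity conditions on $\sigma$ match Assumptions~(A8)(1) and~(A8)(3) of~\cite{Wang_McKeanVlassovReflectedExponential_SPA_2023}, invokes \cite[Corollary~4.4]{Wang_McKeanVlassovReflectedExponential_SPA_2023} as a black box to obtain Assumption~\ref{ass:contractivity} for $Z_\cdot$, and then applies Lemma~\ref{lem:BoundedLipProjection} exactly as you propose.
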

\begin{proof}[{Proof of Proposition~\ref{prop:projection_martingale}}]
Under our assumptions $\sigma$ satisfies~\cite[Assumption (A8) (1) and (A8) (3)]{Wang_McKeanVlassovReflectedExponential_SPA_2023}.  Therefore, the stochastic process $Z_{\cdot}\eqdef (Z_t)_{t\ge 0}$ defined by
\begin{equation}
    Z_t \eqdef \int_0^t \sigma(Z_s)\,dW_s
\end{equation}
where $W_{\cdot}$ is a $d$-dimensional Brownian motion, satisfies the conditions of~\cite[Corollary 4.4]{Wang_McKeanVlassovReflectedExponential_SPA_2023} from which we deduce that $Z_{\cdot}$ satisfies Assumption~\ref{ass:contractivity}.  Applying Lemma~\ref{lem:BoundedLipProjection} yields the conclusion.
\end{proof}

\subsection{Markov Processes Satisfying a Log-Sobolev Inequalities}
\label{s:Examples__ss:LSI}

Our main result is equally valid under the assumption that the stationary distribution of the Markov chain and its kernels all satisfy a log-Sobolev inequality (LSI).  Since their introduction, LSIs have been heavily studied \cite{GrossLogSobolev,LedouxNourdinPEccati_SteinGAFA_OTIneq_2015,Zimmerman_JFALogSobolev,InglisPapageourgio_JFAGibbsLogSobolev_PTRF_2019,ChenChewiNilesWeed_MixtureDimFreLogSob_JFA_2021} and have found numerous applications in differential privacy \cite{minami2016differential,ye2022differentially}, optimization \cite{LeCunetal_JStatMech_EntropSGD_2019}, random matrix theory \cite{Winger_AnnMath_1955_I,Winger_AnnMath_1955_II}, optimal transport \cite{DoleraMaini_AHPCStat_LipContinuityProbOT}, since they typically imply~\cite{Gozlan_LSI_II_AHPC_DFRatesSnowflake,Gozlan_LSI_IC_VPDE__DFConcentration} and effectively characterizes \cite{Gozlan_LSI_III_AnnProb_2009} dimension-free rate for concentration of measure.
We define the \textit{entropy functional} $\mathcal{H}_{\mu}$ associated to any Borel probability measure $\mu$ on $\mathbb{R}^d$ acts on smooth functions $g:\mathbb{R}^d\to \mathbb{R}$ by
\begin{align*}
    \mathbb{H}_{\mu}(g) \eqdef \mathbb{E}_{X\sim \mu}
        \Big[
            g(X)
            \log\Big(
                \frac{g(X)}{\mathbb{E}_{Z\sim \mu}[g(Z)]}
            \Big)
        \Big]
.
\end{align*}
The entropy functional can be used to express the log-Sobolev inequalities.  
\begin{definition}[Log-Sobolev Inequality]
\label{defn:LSI}
A probability measure $\mu$ on $\mathbb{R}^d$ is said to satisfy a log-Sobolev inequality with constant $C>0$ ($\operatorname{LSI}_C$) if for every smooth function $g:\mathbb{R}^d\to\mathbb{R}$
\begin{align*}
\smash{
    \mathbb{H}_{\mu}(g^2)
    \le 
    C \ 
    \mathbb{E}_{X\sim \mu}[\|\nabla g(X)\|^2]
}
\end{align*}
\end{definition}
We require that the Markov process is time-homogeneous to admit a satisfactory measure.  
Further, we require that its Markov kernel and its stationary measure all satisfy $\operatorname{LSI}_C$.
\begin{assumption}[Satisfactions of the Log-Sobolev Inequality]
\label{ass:MC}
There exists a $C>0$ such that $\bar{\mu}$, $\mu_0$, and $P(x,\cdot)$ all satisfy $\operatorname{LSI}_C$, for each $x\in \mathcal{X}$.
\end{assumption}

Instead of the compact support Assumption~\ref{ass:Comp_Support} we may consider the following weaker condition.
\begin{assumption}[Exponential Moments]
\label{ass:Exp_Moment}
There exist $\lambda,\tilde{C}>0$ and $\gamma\in (0,1)$ such that: for each $x\in \mathcal{X}$ we have $\mathbb{E}_{X\sim P(x,\cdot)}[e^{\lambda|X|}]\le \gamma\, e^{\lambda|x|}+\tilde{C}$.
\end{assumption}
Note that, Assumption~\ref{ass:Comp_Support} implies~\ref{ass:Exp_Moment}, but not conversely.

Several examples of Markov processes satisfying LSI inequalities are given in~\cite{ledoux2006concentration} and Gaussian processes satisfy the Exponential Moments Assumption.  

\begin{proposition}[Log-Sobolev Conditions and Exponential Moments Imply Assumption~\ref{ass:contractivity}]
\label{prop:LSI_to_Contractivity}
If Assumptions~\ref{ass:Exp_Moment} and~\ref{ass:MC} hold then the process $X_{\cdot}$ satisfies Assumption~\ref{ass:contractivity}.
\end{proposition}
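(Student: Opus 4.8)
\textbf{Proof plan for Proposition~\ref{prop:LSI_to_Contractivity}.}
The plan is to show that a log-Sobolev inequality for the one-step kernel $P(x,\cdot)$, together with the exponential-moment drift condition, yields a uniform-in-$x$ contraction of $P$ in the $1$-Wasserstein distance, which upon iteration gives Assumption~\ref{ass:contractivity}. The route is the classical LSI $\Rightarrow$ transport-inequality $\Rightarrow$ concentration chain. First I would invoke the Otto--Villani / Bobkov--G\"otze theorem: $\operatorname{LSI}_C$ for a measure implies the quadratic transport-entropy ($T_2$) inequality $\mathcal{W}_2(\nu,\mu)^2 \le 2C\,\mathrm{H}(\nu\mid\mu)$ for all $\nu\ll\mu$, and hence also the weaker $T_1$ inequality $\mathcal{W}_1(\nu,\mu)\le\sqrt{2C\,\mathrm{H}(\nu\mid\mu)}$. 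Applying this to $\mu=\bar\mu$ (the stationary law, which satisfies $\operatorname{LSI}_C$ by Assumption~\ref{ass:MC}) already controls $\mathcal{W}_1(P^t\mu_0,\bar\mu)$ in terms of the relative entropy $\mathrm{H}(P^t\mu_0\mid\bar\mu)$, and the latter decays because relative entropy is non-increasing along the Markov semigroup and, under $\operatorname{LSI}_C$ for the kernel, decays geometrically (this is the standard entropy-dissipation argument giving $\mathrm{H}(P^t\nu\mid\bar\mu)\le e^{-t/C'}\mathrm{H}(\nu\mid\bar\mu)$ for an appropriate $C'$).

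The second ingredient handles the fact that Assumption~\ref{ass:contractivity} is a statement about \emph{two arbitrary} initial laws $\mu,\nu$, not just convergence to $\bar\mu$. Here I would use a coupling/triangle-inequality reduction: by the triangle inequality for $\mathcal{W}_1$ it suffices to bound $\mathcal{W}_1(P^t\delta_x,\bar\mu)$ uniformly over $x$ in the (effectively compact, by Assumption~\ref{ass:Exp_Moment}) state space and then integrate against the optimal coupling of $\mu$ and $\nu$. The exponential-moment condition $\mathbb{E}_{X\sim P(x,\cdot)}[e^{\lambda|X|}]\le\gamma e^{\lambda|x|}+\tilde C$ is exactly what is needed to guarantee that $\mathrm{H}(\delta_x P \mid \bar\mu)$ — or rather $\mathrm{H}(\delta_x P^{t_0}\mid\bar\mu)$ after one or a few steps to regularize the atom — is finite and bounded uniformly in $x$ on the region carrying almost all the mass, since a Lyapunov function of the form $V(x)=e^{\lambda|x|}$ contracts under $P$ and thus controls the tails of $P^t\delta_x$ uniformly. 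Combining: $\mathcal{W}_1(P^t\mu,P^t\nu)\le \mathcal{W}_1(P^t\mu,\bar\mu)+\mathcal{W}_1(\bar\mu,P^t\nu)\le C_0\,e^{-t/(2C')}$ after the regularizing step, and one absorbs the burn-in time $t_0$ and the (bounded, by the moment bound) initial-entropy constant into the prefactor to extract a clean $\kappa^t$ with $\kappa\eqdef e^{-1/(2C')}\in(0,1)$; strictly, one obtains the contraction for $t\ge t_0$ and then re-chooses $\kappa$ slightly larger to cover the finitely many small $t$ using the trivial bound $\mathcal{W}_1(P^t\mu,P^t\nu)\le \mathcal{W}_1$-diameter of the support (finite by the moment condition after one step, or exactly by Assumption~\ref{ass:Comp_Support} in the stronger setting).

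I expect the main obstacle to be the \emph{uniform-in-$x$} control of the relative entropy $\mathrm{H}(P^{t_0}\delta_x\mid\bar\mu)$: a Dirac mass has infinite entropy relative to any absolutely continuous $\bar\mu$, so one cannot apply $T_1$ at time $0$, and one must either (i) run the chain one or more steps so that $P\delta_x$ acquires a density and then bound its entropy against $\bar\mu$ using the LSI-regularization of the kernel plus the Lyapunov tail bound from Assumption~\ref{ass:Exp_Moment}, or (ii) bound $\mathcal{W}_1(P\delta_x,\bar\mu)$ directly via a synchronous or reflection coupling whose contraction rate comes from the LSI through a Bakry--\'Emery-type curvature lower bound. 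Either way, the delicate point is that the transport/concentration constants must not degrade as $|x|\to\infty$, which is precisely the role played by the exponential-moment (Foster--Lyapunov) assumption, and making that quantitative — i.e.\ pinning down how the prefactor $C_0$ depends on $\lambda,\tilde C,\gamma,C$ — is where the real work lies; the geometric decay rate $\kappa$ itself follows more or less formally once the finite-entropy-after-burn-in fact is in hand.
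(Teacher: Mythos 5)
Your opening move coincides with the paper's: both invoke the Bobkov--G\"otze theorem to upgrade the log-Sobolev hypothesis (Assumption~\ref{ass:MC}) to a transport--entropy inequality $\mathcal{W}_1(\tilde\mu,\nu)^2\le 2C^2\operatorname{KL}(\nu\,|\,\tilde\mu)$ for $\tilde\mu\in\{\bar\mu,\mu_0\}\cup\{P(x,\cdot)\}_x$. After that the routes diverge, and your route has a genuine gap relative to what the proposition actually asserts. Assumption~\ref{ass:contractivity} demands $\mathcal{W}_1(P^t\mu,P^t\nu)\le\kappa^t\,\mathcal{W}_1(\mu,\nu)$, i.e.\ a bound that is \emph{proportional to the initial distance} and in particular vanishes as $\mu\to\nu$. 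Your strategy --- triangle inequality through the stationary measure, $\mathcal{W}_1(P^t\mu,P^t\nu)\le\mathcal{W}_1(P^t\mu,\bar\mu)+\mathcal{W}_1(\bar\mu,P^t\nu)$, with each term controlled by entropy dissipation and a Lyapunov tail bound --- can only ever produce a bound of the form $C_0\,e^{-t/(2C')}$ that is \emph{uniform} in $(\mu,\nu)$. That establishes exponential ergodicity (geometric convergence to $\bar\mu$), but it does not establish the contraction: for $\mu,\nu$ with $\mathcal{W}_1(\mu,\nu)$ tiny and $t$ moderate, $\kappa^t\mathcal{W}_1(\mu,\nu)$ is far smaller than any constant prefactor your argument can deliver. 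Your proposed patch (re-choosing $\kappa$ to absorb a burn-in and small $t$) fixes the small-$t$ issue but not the small-$\mathcal{W}_1(\mu,\nu)$ issue; no enlargement of $\kappa$ or $C_0$ repairs a bound that does not degenerate as the two initial laws merge.

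The paper instead passes from the transport inequality directly to the pointwise Lipschitz form $\mathcal{W}_1(P^t(x,\cdot),P^t(\tilde x,\cdot))\le\kappa^t\|x-\tilde x\|$, which, since $\mathcal{W}_1(\delta_x,\delta_{\tilde x})=\|x-\tilde x\|$, is exactly the Dirac case of Assumption~\ref{ass:contractivity} and extends to general $\mu,\nu$ by integrating against an optimal coupling (the same glueing step you describe, but applied to a distance-proportional one-step bound rather than a uniform one). What must be shown is therefore a one-step contraction $\mathcal{W}_1(P(x,\cdot),P(y,\cdot))\le\kappa\|x-y\|$ with $\kappa<1$, comparing the kernels started from two \emph{different} points --- a statement about the $x$-dependence of $x\mapsto P(x,\cdot)$, not about the concentration of each individual $P(x,\cdot)$ around its mean, which is all that $T_1$ for each fixed $x$ gives. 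Your alternative (ii), a synchronous or reflection coupling with a Bakry--\'Emery curvature bound, is in fact the mechanism that would supply this missing one-step contraction; had you committed to that branch rather than the entropy-decay-to-equilibrium branch, you would have landed on the statement in the correct (distance-proportional) form. As written, however, the main line of your argument proves a weaker conclusion than the proposition claims.
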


\begin{proof}[{Proof of Proposition~\ref{prop:LSI_to_Contractivity}}]

Under the log-Sobolev Assumption~\ref{ass:MC},~\cite[Theorem 1.3]{BobkovGotze_1999_JFA} can be applied to $\bar{\mu}$ and $P(x,\cdot)$ for each $x\in \mathcal{X}$, implying that the transport inequalities hold: for each $\nu\in\mathcal{P}(\mathcal{X})$ and each $\tilde{\mu}\in \{\bar{\mu},\mu_0\}\cup \{P(x,\cdot)\}_{x\in \mathcal{X}}$
\begin{equation}
\label{eq:term_II__transport_inequalities}
    \mathcal{W}_1(\tilde{\mu},\nu)^2
\le 
    2C^2\, \operatorname{KL}(\nu|\tilde{\mu})
\end{equation}
where we recall the definition of the Kullback–Leibler divergence $\operatorname{KL}(\nu|\mu)\eqdef \mathbb{E}_{X\sim \nu}[\log(\frac{d\nu}{d\mu}(X))]$.  Thus,~\eqref{eq:term_II__transport_inequalities} implies that the following exponential contractility property of the Markov kernel: there exists some $\kappa \in (0,1)$ such that for each $x,\tilde{x}\in \mathcal{X}$ and every $t\in \mathbb{N}_+$
\begin{equation}
    \mathcal{W}_1\big(P^t(x,\cdot),P^t(\tilde{x},\cdot)\big)
    \le 
    \kappa^t\, \|x-\tilde{x}\|
    .
\end{equation}
This completes the proof.
\end{proof}

\section{Transformer Definition Details}\label{sec:TransformerDetails}

For any $F\in \mathbb{N}_+$, we will consider a weighted (parametric) variant of the \textit{layer normalization} function of \cite{ba2016layer}, which permits a variable level of regularization.  Our weighted \textit{layer normalization} is defined by $\operatorname{LayerNorm}:\mathbb{R}^F\to \mathbb{R}^F$ defined for any $u\in \mathbb{R}^F$ by 
\begin{align*}
    \layernorm(u;\gamma,\beta,w)\eqdef  
        \gamma\, 
        \frac{(u-\mu_u^w)}{\sqrt{1 + (\sigma_u^w)^2}} + \beta 
\end{align*}
where $\mu_u^w\eqdef \sum_{i=1}^F\, \frac{w}{F} u_i$ and $(\sigma_u^w)^2 \eqdef \sum_{i=1}^F\, \frac{w}{F}\,\|u_i-\mu_u\|^2$, $\softplus \eqdef \ln (1 + \exp(\cdot))$, parameters $\beta\in \mathbb{R}^F$ and $\gamma \in \mathbb{R}$, and the normalization strength parameter $w\in [0,1]$ with $w=1$ being the default choice.   Here, we prohibit the layer norm from magnifying the size of its outputs when the layer-wise weighted variance $\sigma_u^w$ is small.\footnote{Note that this formulation of the layer norm avoids division by $0$ when the entries of $u$ are identical.}


\begin{definition}[Multi-Head Self-Attention]\label{defn:MultiHead}
Fix $\idim \in \N$. For {$x \in  \R^{M \times \idim}$}, $Q, K \in \R^{\kdim \times \idim}$, and $V \in \R^{\vdim \times \idim}$, where we have key-dimension $\kdim \in \N$ and value-dimension $\vdim \in \N$; we define
\begin{align*}
        \attention(
            x; Q, K, V 
        )
    \eqdef    
        \bigg(
            \sum_{j=0}^M
                \softmax\left(
                    \Big(
                        \frac{\langle
                            Qx_m, Kx_i
                        \rangle}{\sqrt{d_k}
                        }
                    \Big)_{i = 0}^{M}
                \right)_j
                Vx_j
        \bigg)_{m=1}^M
    \in \R^{M \times \vdim}.
\end{align*}
For $H \in \N$, set $Q \eqdef (Q^{(h)})_{h=1}^H, K \eqdef (K^{(h)})_{h=1}^H \subseteq \R^{\kdim \times \idim}$, $V \eqdef (V^{(h)})_{h=1}^H \subseteq \R^{\vdim \times \idim}$, and $W \eqdef (W^{(h)})_{h=1}^H \subseteq \R^{\idim \times \vdim}$. For $x \in  \R^{M \times \idim}$, we define
\begin{align*}
        \multihead(x; Q, K, V, W) 
    \eqdef 
        \bigg(
            \sum_{h = 1}^H
                W^{(h)} \attention(
                    x; Q^{(h)}, K^{(h)}, V^{(h)}
                )_m 
        \bigg)_{m = 1}^M \in \R^{M \times \idim}.
\end{align*}
\end{definition}

Each transformer block takes a set of inputs and intersperses normalization via layer norms, contextual comparisons via multi-head attention mechanisms, and non-linear transformations via a {single layer perceptron (SLP)}.  
%
We also allow the transformer block to extend or contract the length of the generated sequence.  

\begin{definition}[Transformer Block]
\label{defn:TransformerBlock}
Fix a non-affine activation function $\sigma\in C^{\infty}(\R)$.  
Fix a dimensional multi-index $d=(\idim, \kdim, \vdim, \ldim, \odim) \in \N^{5}$, a sequence length $M \in \N_+$, and a number of self-attention heads $H\in \mathbb{N}_+$.  
A transformer block is a permutation equivariant map $\tblock:\mathbb{R}^{M\times \idim}\to \mathbb{R}^{M\times \odim}$ represented for each $x\in \mathbb{R}^{M\times \idim}$
\begin{align}\label{eq:TB}
\begin{aligned}
        \tblock (x) 
    & \eqdef  
        \Big(
        \layernorm\big(
        B^{(1)} x'_m + B^{(2)} \big(\sigma\bullet (A\, x'_m + a)\big) ;\, \gamma_2,\beta_2,w_2
        \big)
        \Big)_{m=1}^M
    \\
        x'
    & \eqdef 
        \Big(
    \layernorm\big(
        x_m + \multihead(x; Q, K, V, W)_m;\,\gamma_1, \beta_1,w_1
        \big)
        \Big)_{m=1}^M
\end{aligned}
\end{align}
for $\gamma_1,\gamma_2\in \mathbb{R}$, 
$w_1,w_2\in [0,1]$, 
$\beta_1\in \mathbb{R}^{\idim}$, 
$\beta_2\in \mathbb{R}^{\odim}$, 
$A\in \mathbb{R}^{\ldim \times \idim}$, $a\in \mathbb{R}^{\ldim}$, 
$B^{(1)} \in \mathbb{R}^{\odim \times \idim}$, $B^{(2)} \in \mathbb{R}^{\odim \times \ldim}$, and $Q, K, V, W$ as in \Cref{defn:MultiHead}. {Above, we write $\bullet$ for a pointwise application.}

The class of transformer blocks with representation~\eqref{eq:TB} and bounds on $\gamma_1, \gamma_2, \beta_1, \beta_2, a, A,$ ${B^{(1)}}, {B^{(2)}},$ ${Q}, {K}, {V}, {W}$ is denoted by $\tblockclass$.
\end{definition}



A transformer concatenates several transformer blocks before passing their outputs to an affine layer and ultimately outputting its prediction.

\begin{definition}[Transformers]
\label{defn:Transformer}
Fix depth $L\in \mathbb{N}_+$, memory $M\in \mathbb{N}$, width $W\in \mathbb{N}^5_+$, number of heads $H\in \mathbb{N}_+$, and input-output dimensions $D,d\in\mathbb{N}_+$.  
A transformer (network) is a map $\transformer: \mathbb{R}^{M\times D}\to \mathbb{R}^d$ with representation
\begin{equation}
\label{eq:Transformer_Nets}
        \transformer(x)
    =
        A \big(
            \vectorize_{1 + M, \odim^{L}}
            \circ 
            \tblock_L
            \circ 
            \dots 
            \circ 
            \tblock_1
            (x)
        \big)
    +
        b
\end{equation}
where multi-indices 
$d^{l}=(\idim^{l},\kdim^{l},\vdim^{l},\ldim^{l},\odim^{l}) \leq W$ 
are such that 
$\idim^1=D$, 
$\idim^{l+1}=\odim^l$ 
for each $l=1, \dots,L-1$, 
and where 
$H \eqdef (H^l)_{l=1}^L$ are the number of self-attention heads,
$C' \eqdef (C^l)_{l=1}^L$ the parameter bounds,
and for $l=1,\dots,L$ 
we have $\tblock_l \in \tblockclass_l$, where $\tblockclass_l$ is a transformer block class with $\idim = d^{l}, M=M,$ and $H=H^{l}$.
Furthermore, $A \in \mathbb{R}^{d \times M\odim^{L}}$ and $b\in \mathbb{R}^d$.

The set of transformer networks with representation~\eqref{eq:Transformer_Nets} and bounds on $A, b$ is denoted by $\transformerclass$.
\end{definition}

\section{Elucidation of Constants in Theorem~\ref{thrm:main}}
\label{s:dettables}
The aim of this section is to elucidate the magnitude of the constants appearing in Theorem~\ref{thrm:main}.  We aim of to make each of these concrete by numerically estimating them, which we report in a series of tables.  Importantly, we see how subtle choices of the activation function used to define the transformer model can have dramatic consequences on the size of these constants, which could otherwise be hidden in big $\mathcal{O}$ notation.

Interestingly, in \Cref{tab:Activations,tab:NeuralNet}, we see that the $\operatorname{softplus}$ activation function produces significantly tighter bounds than the $\tanh$ activation function through much smaller constants, and the $\operatorname{GeLU}$ and $\operatorname{SWISH}$ activation functions are a relatively comparable second-place.

The bounds depicted in \Cref{tab:NeuralNet} exhibits a notable trait of independence from both input dimension and the compactum they are defined on.  Notably, the selection of latent dimensionality demonstrates a relatively minor influence in contrast to the pronounced impact of parameter bounds. This suggests that while adjusting the latent dimension may have some effect, the primary driver of the derivative bound lies within the constraints imposed on the parameters. Despite the seemingly conservative nature of the chosen parameter-bounds, it is important to acknowledge their alignment with the parameter ranges observed in trained transformer-models. 

Note that the latter can be observed as well for Multi-Head attention (\Cref{tab:MultiHead}), however, we see that here the input dimension (composed of $\idim$ and $M$) is of greater importance with respect to the derivative bound. 

\begin{table}[H]
    \centering
	\ra{1.3}
    \caption{$C^s$-bounds of activation functions based on numerical maximization of analytic derivatives in Appendix \ref{sec:Activation}.}
        \small
\begin{tabular}{rrrrr}
\toprule
Bound & $\operatorname{softplus}$ & $\operatorname{GeLU}$ & $\tanh$ & $\operatorname{Swish}$
\\    
\midrule 
$C^{1}$ & \texttt{\footnotesize  0.25} & \texttt{\footnotesize    1.12} & \texttt{\footnotesize        4.00} & \texttt{\footnotesize 1.10}  \\
$C^{2}$ & \texttt{\footnotesize  0.10} & \texttt{\footnotesize    0.48} & \texttt{\footnotesize        8.00} & \texttt{\footnotesize 0.50}  \\
$C^{3}$ & \texttt{\footnotesize  0.12} & \texttt{\footnotesize    0.75} & \texttt{\footnotesize       16.00} & \texttt{\footnotesize 0.31}  \\
$C^{4}$ & \texttt{\footnotesize  0.13} & \texttt{\footnotesize    1.66} & \texttt{\footnotesize       32.00} & \texttt{\footnotesize 0.50}  \\
$C^{5}$ & \texttt{\footnotesize  0.25} & \texttt{\footnotesize    4.34} & \texttt{\footnotesize      156.65} & \texttt{\footnotesize 0.66}  \\
$C^{6}$ & \texttt{\footnotesize  0.41} & \texttt{\footnotesize   12.95} & \texttt{\footnotesize     1651.32} & \texttt{\footnotesize 1.50}  \\
$C^{7}$ & \texttt{\footnotesize  1.06} & \texttt{\footnotesize   42.77} & \texttt{\footnotesize    20405.43} & \texttt{\footnotesize 2.91}  \\
$C^{8}$ & \texttt{\footnotesize  2.39} & \texttt{\footnotesize  153.76} & \texttt{\footnotesize   292561.95} & \texttt{\footnotesize 8.50}  \\
$C^{9}$ & \texttt{\footnotesize  7.75} & \texttt{\footnotesize  594.17} & \texttt{\footnotesize  4769038.09} & \texttt{\footnotesize21.76}  \\
$C^{10}$& \texttt{\footnotesize 22.25} & \texttt{\footnotesize 2445.69} & \texttt{\footnotesize 87148321.71} & \texttt{\footnotesize77.50}  \\
\bottomrule
\end{tabular} 
\label{tab:Activations}
\end{table}

The bound of the layer-norm (see \Cref{tab:LayerNorm}) seems to be particularly effected by the domain it is defined on, which can be problematic if it appears in later layers. An immediate solution is the usage of its parameter $\gamma$, a more drastic approach would be applications in combination with an upstream sigmoid activation.

Eventually, as also shown in \Cref{fig:boundillustration}, we included in \Cref{tab:MultiHead,tab:TBlock} a comparison of using type-specific bounds (see \Cref{thm:DerivativeBoundByType}) or level-specific bounds (\Cref{thm:DerivativeBoundByLevel}) in the computation of the constants. This effect seems to become more evident with higher number of function compositions.

\begin{table}[H]
\centering
\small
\caption{Derivative Bounds of the Perceptron Layer by derivative level according to \Cref{lem:FeedForward}.}
\label{tab:NeuralNet}
\begin{tabular}{ccc|ccccc}
\toprule
\multicolumn{3}{c|}{Parameters} & \multicolumn{5}{c}{Derivative Level} \\
 $\sigma$ & $\ldim$ & $C^{\{A, B^{(1)}, B^{(2)}\}}$ & 1 & 2 & 3 & 4 & 5 \\
\midrule
softmax & 64 & 1.0 & \texttt{\footnotesize 	17.00} & \texttt{\footnotesize 	50.47} & \texttt{\footnotesize 	236.94} & \texttt{\footnotesize 	1.34E+03} & \texttt{\footnotesize 	1.33E+04} \\ \hline
tanh & &  & \texttt{\footnotesize 	129.00} & \texttt{\footnotesize 	1.02E+03} & \texttt{\footnotesize 	8.96E+03} & \texttt{\footnotesize 	9.83E+04} & \texttt{\footnotesize 	1.34E+06} \\
GeLU & &  & \texttt{\footnotesize 	73.25} & \texttt{\footnotesize 	237.41} & \texttt{\footnotesize 	1.25E+03} & \texttt{\footnotesize 	1.16E+04} & \texttt{\footnotesize 	1.81E+05} \\
SWISH& &  & \texttt{\footnotesize 	71.39} & \texttt{\footnotesize 	236.78} & \texttt{\footnotesize 	870.75} & \texttt{\footnotesize 	5.33E+03} & \texttt{\footnotesize 	4.13E+04} \\ \hline
 &16 &      & \texttt{\footnotesize 	5.00} & \texttt{\footnotesize 	12.62} & \texttt{\footnotesize 	59.23} & \texttt{\footnotesize 	334.15} & \texttt{\footnotesize 	3.31E+03} \\
 &32 &      & \texttt{\footnotesize 	9.00} & \texttt{\footnotesize 	25.24} & \texttt{\footnotesize 	118.47} & \texttt{\footnotesize 	668.30} & \texttt{\footnotesize 	6.63E+03} \\
 &128 &     & \texttt{\footnotesize 	33.00} & \texttt{\footnotesize 	100.95} & \texttt{\footnotesize 	473.87} & \texttt{\footnotesize 	2.67E+03} & \texttt{\footnotesize 	2.65E+04} \\
 &256 &     & \texttt{\footnotesize 	65.00} & \texttt{\footnotesize 	201.90} & \texttt{\footnotesize 	947.75} & \texttt{\footnotesize 	5.35E+03} & \texttt{\footnotesize 	5.30E+04} \\ \hline
 & & 0.01   & \texttt{\footnotesize 	17.00} & \texttt{\footnotesize 	44.32} & \texttt{\footnotesize 	180.94} & \texttt{\footnotesize 	919.87} & \texttt{\footnotesize 	6.52E+03} \\
 & & 0.1    & \texttt{\footnotesize 	17.00} & \texttt{\footnotesize 	44.38} & \texttt{\footnotesize 	181.00} & \texttt{\footnotesize 	919.92} & \texttt{\footnotesize 	6.52E+03} \\
 & & 10.0     & \texttt{\footnotesize 	17.00} & \texttt{\footnotesize 	660.15} & \texttt{\footnotesize 	5.62E+04} & \texttt{\footnotesize 	4.17E+06} & \texttt{\footnotesize 	6.73E+08} \\
 & & 100.0    & \texttt{\footnotesize 	17.00} & \texttt{\footnotesize 	6.16E+04} & \texttt{\footnotesize 	5.60E+07} & \texttt{\footnotesize 	4.17E+10} & \texttt{\footnotesize 	6.73E+13} \\
\bottomrule
\end{tabular}
\end{table}

\begin{table}[htbp]
\small
\centering
\caption{Layer Norm}
\label{tab:LayerNorm}
\begin{tabular}{ccc|ccccc}
\toprule
\multicolumn{3}{c|}{Parameters} & \multicolumn{5}{c}{Derivative Level} \\
 $k$ & $\Vert K \Vert$ & $\gamma$ & 1 & 2 & 3 & 4 & 5 \\
\midrule
5 & 10.0 & 0.1  & \texttt{\footnotesize 	18.67} & \texttt{\footnotesize 	28.56} & \texttt{\footnotesize 	104.49} & \texttt{\footnotesize 	1.49E+03} & \texttt{\footnotesize 	4.93E+03} \\ \hline
3 &      &      & \texttt{\footnotesize 	18.67} & \texttt{\footnotesize 	28.56} & \texttt{\footnotesize 	104.49} & \texttt{\footnotesize 	945.21} & \texttt{\footnotesize 	4.93E+03} \\ 
10 &      &      & \texttt{\footnotesize 	18.67} & \texttt{\footnotesize 	28.56} & \texttt{\footnotesize 	104.49} & \texttt{\footnotesize 	1.49E+03} & \texttt{\footnotesize 	4.93E+03} \\ 
   20 &      &      & \texttt{\footnotesize 	18.67} & \texttt{\footnotesize 	28.56} & \texttt{\footnotesize 	104.49} & \texttt{\footnotesize 	1.49E+03} & \texttt{\footnotesize 	4.93E+03} \\ \hline
   & 0.1&      & \texttt{\footnotesize 	0.17} & \texttt{\footnotesize 	3.61} & \texttt{\footnotesize 	5.37} & \texttt{\footnotesize 	7.05} & \texttt{\footnotesize 	8.87} \\
   & 1.0&      & \texttt{\footnotesize 	1.73} & \texttt{\footnotesize 	5.20} & \texttt{\footnotesize 	6.95} & \texttt{\footnotesize 	8.71} & \texttt{\footnotesize 	10.64} \\
   &100.0 &      & \texttt{\footnotesize 	321.71} & \texttt{\footnotesize 	7.68E+03} & \texttt{\footnotesize 	7.88E+05} & \texttt{\footnotesize 	1.42E+08} & \texttt{\footnotesize 	4.39E+09} \\
   &1000.0 &      & \texttt{\footnotesize 	321.71} & \texttt{\footnotesize 	7.68E+03} & \texttt{\footnotesize 	7.88E+05} & \texttt{\footnotesize 	1.42E+08} & \texttt{\footnotesize 	4.39E+09} \\\hline
      &       & 0.01    & \texttt{\footnotesize 	1.73} & \texttt{\footnotesize 	2.07} & \texttt{\footnotesize 	2.24} & \texttt{\footnotesize 	2.42} & \texttt{\footnotesize 	2.59} \\
   &       &  1.0    & \texttt{\footnotesize 	321.71} & \texttt{\footnotesize 	7.75E+03} & \texttt{\footnotesize 	7.91E+05} & \texttt{\footnotesize 	1.42E+08} & \texttt{\footnotesize 	4.44E+09} \\
\bottomrule
\end{tabular}
\end{table}

\begin{table}[p]
\footnotesize
\centering
\setlength{\tabcolsep}{0.5em}
\caption{Derivative Bounds of Transformer Block by derivative level according to \Cref{thm:TransformerBlockSobolevBoundLevel}.}
\label{tab:TBlock}
\begin{tabular}{cccc|ccccc}
\toprule
\multicolumn{4}{c|}{Parameters} & \multicolumn{5}{c}{Derivative Level} \\
 $\idim$ & {\tiny $C^{\{K, Q, V, W \}}$} & {\tiny $C^{\{A, B^{(1,2)}\}}$} & $\gamma$ & 1 & 2 & 3 & 4 & 5 \\
\midrule
5&0.01&0.001& 0.01& \texttt{\footnotesize 	21.15} & \texttt{\footnotesize 	1.13E+04} & \texttt{\footnotesize 	4.81E+06} & \texttt{\footnotesize 	2.59E+09} & \texttt{\footnotesize 	2.22E+11} \\ \hline
\multicolumn{4}{c|}{ --- using derivative level --- } & \texttt{\footnotesize 212.70} & \texttt{\footnotesize1.53E+06} & \texttt{\footnotesize7.47E+09} & \texttt{\footnotesize4.55E+13} & \texttt{\footnotesize3.75E+16} \\ \hline
10&&& & \texttt{\footnotesize 	111.32} & \texttt{\footnotesize 	4.51E+05} & \texttt{\footnotesize 	1.71E+09} & \texttt{\footnotesize 	1.45E+13} & \texttt{\footnotesize 	8.70E+16} \\
20&&& & \texttt{\footnotesize 	1.29E+03} & \texttt{\footnotesize 	1.25E+08} & \texttt{\footnotesize 	3.47E+13} & \texttt{\footnotesize 	1.85E+19} & \texttt{\footnotesize 	2.20E+24} \\\hline

&0.001&& & \texttt{\footnotesize 	21.15} & \texttt{\footnotesize 	1.13E+04} & \texttt{\footnotesize 	4.81E+06} & \texttt{\footnotesize 	2.59E+09} & \texttt{\footnotesize 	2.22E+11} \\
&0.1&& & \texttt{\footnotesize 	21.16} & \texttt{\footnotesize 	1.13E+04} & \texttt{\footnotesize 	4.83E+06} & \texttt{\footnotesize 	2.61E+09} & \texttt{\footnotesize 	2.32E+11} \\
&1.0&& & \texttt{\footnotesize 	22.30} & \texttt{\footnotesize 	4.64E+04} & \texttt{\footnotesize 	6.96E+08} & \texttt{\footnotesize 	1.70E+13} & \texttt{\footnotesize 	1.95E+17} \\\hline

&&0.0001& & \texttt{\footnotesize 	5.05} & \texttt{\footnotesize 	126.27} & \texttt{\footnotesize 	1.12E+04} & \texttt{\footnotesize 	4.94E+06} & \texttt{\footnotesize 	6.87E+08} \\
&&0.01& & \texttt{\footnotesize 	182.17} & \texttt{\footnotesize 	1.12E+06} & \texttt{\footnotesize 	4.66E+09} & \texttt{\footnotesize 	2.43E+13} & \texttt{\footnotesize 	1.71E+16} \\
&&0.1& & \texttt{\footnotesize 	1.79E+03} & \texttt{\footnotesize 	1.12E+08} & \texttt{\footnotesize 	4.65E+12} & \texttt{\footnotesize 	2.42E+17} & \texttt{\footnotesize 	1.70E+21} \\\hline

&&&0.0001 & \texttt{\footnotesize 	0.21} & \texttt{\footnotesize 	108.21} & \texttt{\footnotesize 	4.44E+04} & \texttt{\footnotesize 	2.27E+07} & \texttt{\footnotesize 	1.58E+09} \\
&&&0.001 & \texttt{\footnotesize 	2.09} & \texttt{\footnotesize 	1.09E+03} & \texttt{\footnotesize 	4.46E+05} & \texttt{\footnotesize 	2.29E+08} & \texttt{\footnotesize 	1.60E+10} \\
&&&0.1 & \texttt{\footnotesize 	240.09} & \texttt{\footnotesize 	2.45E+05} & \texttt{\footnotesize 	7.31E+08} & \texttt{\footnotesize 	4.96E+12} & \texttt{\footnotesize 	8.79E+15} \\ 
\bottomrule
\end{tabular}
\end{table}

\begin{sidewaystable}[p]
\centering
\caption{Derivative Bounds of Multi-Head Attention by derivative level according to \Cref{cor:MultiHead}.}
\small
\begin{tabular}{cccccccc|ccccc}
\toprule
\multicolumn{8}{c|}{Parameters} & \multicolumn{5}{c}{Derivative Level} \\
 $\idim$ & $M$ & $\kdim$ & $C^K$ & $C^Q$ & $C^V$ & $C^W$ & $\Vert K \Vert$
 & 1 & 2 & 3 & 4 & 5 \\
\midrule
5& 1 & 3 & 0.1 & 0.1 & 0.1 & 0.1 & 1.0  & \texttt{\footnotesize 	7.67} & \texttt{\footnotesize 	46.08} & \texttt{\footnotesize 	184.82} & \texttt{\footnotesize 	931.81} & \texttt{\footnotesize 	5.73E+03} \\ \hline
\multicolumn{8}{c|}{ --- using derivative level ---}  & \texttt{\footnotesize 	7.67} & \texttt{\footnotesize 	46.15} & \texttt{\footnotesize 	 186.90} & \texttt{\footnotesize 1.01E+03} & \texttt{\footnotesize 	7.84E+03} \\ \hline
10 & & &     &     &     &     &       & \texttt{\footnotesize 	7.82E+03} & \texttt{\footnotesize 	4.87E+04} & \texttt{\footnotesize 	8.14E+05} & \texttt{\footnotesize 	2.95E+08} & \texttt{\footnotesize 	2.38E+11} \\
20&  &   &     &     &     &     &      & \texttt{\footnotesize 	1.57E+04} & \texttt{\footnotesize 	1.09E+05} & \texttt{\footnotesize 	1.03E+07} & \texttt{\footnotesize 	9.25E+09} & \texttt{\footnotesize 	1.50E+13} \\ \hline 
 & 5 &   &     &     &     &     &      & \texttt{\footnotesize 	3.90E+03} & \texttt{\footnotesize 	2.37E+04} & \texttt{\footnotesize 	1.34E+05} & \texttt{\footnotesize 	9.96E+06} & \texttt{\footnotesize 	3.83E+09} \\
 &20 &   &     &     &     &     &      & \texttt{\footnotesize 	4.71E+06} & \texttt{\footnotesize 	2.85E+07} & \texttt{\footnotesize 	7.11E+08} & \texttt{\footnotesize 	2.66E+12} & \texttt{\footnotesize 	1.40E+16} \\\hline
 &   & 3 &     &     &     &     &      & \texttt{\footnotesize 	7.67} & \texttt{\footnotesize 	45.99} & \texttt{\footnotesize 	183.98} & \texttt{\footnotesize 	920.53} & \texttt{\footnotesize 	5.53E+03} \\
 &   &10 &     &     &     &     &      & \texttt{\footnotesize 	7.70} & \texttt{\footnotesize 	46.41} & \texttt{\footnotesize 	190.35} & \texttt{\footnotesize 	1.07E+03} & \texttt{\footnotesize 	1.03E+04} \\
 &   &20 &     &     &     &     &      & \texttt{\footnotesize 	7.75} & \texttt{\footnotesize 	47.52} & \texttt{\footnotesize 	226.02} & \texttt{\footnotesize 	2.87E+03} & \texttt{\footnotesize 	1.23E+05} \\\hline
 &   &   & 0.01 &     &     &     &      & \texttt{\footnotesize 	7.65} & \texttt{\footnotesize 	45.91} & \texttt{\footnotesize 	183.61} & \texttt{\footnotesize 	918.01} & \texttt{\footnotesize 	5.51E+03} \\
 &   &   & 1.0 &     &     &     &      & \texttt{\footnotesize 	7.90} & \texttt{\footnotesize 	54.45} & \texttt{\footnotesize 	740.55} & \texttt{\footnotesize 	6.51E+04} & \texttt{\footnotesize 	9.33E+06} \\
 &   &   &     & 0.01&     &     &      & \texttt{\footnotesize 	7.65} & \texttt{\footnotesize 	45.91} & \texttt{\footnotesize 	183.61} & \texttt{\footnotesize 	918.01} & \texttt{\footnotesize 	5.51E+03} \\
 &   &   &     & 1.0 &     &     &      & \texttt{\footnotesize 	7.90} & \texttt{\footnotesize 	54.45} & \texttt{\footnotesize 	740.55} & \texttt{\footnotesize 	6.51E+04} & \texttt{\footnotesize 	9.33E+06} \\
 &   &   &     &     & 0.01&     &      & \texttt{\footnotesize 	0.77} & \texttt{\footnotesize 	4.61} & \texttt{\footnotesize 	18.48} & \texttt{\footnotesize 	93.18} & \texttt{\footnotesize 	573.32} \\
 &   &   &     &     & 1.0 &     &      & \texttt{\footnotesize 	76.75} & \texttt{\footnotesize 	460.80} & \texttt{\footnotesize 	1.85E+03} & \texttt{\footnotesize 	9.32E+03} & \texttt{\footnotesize 	5.73E+04} \\
 &   &   &     &     &     & 0.01&      & \texttt{\footnotesize 	0.77} & \texttt{\footnotesize 	4.61} & \texttt{\footnotesize 	18.48} & \texttt{\footnotesize 	93.18} & \texttt{\footnotesize 	573.32} \\
 &   &   &     &     &     & 1.0 &      & \texttt{\footnotesize 	76.75} & \texttt{\footnotesize 	460.80} & \texttt{\footnotesize 	1.85E+03} & \texttt{\footnotesize 	9.32E+03} & \texttt{\footnotesize 	5.73E+04} \\
 &   &   &0.001&0.001&0.001&0.001&      & \texttt{\footnotesize 	0.00} & \texttt{\footnotesize 	0.00} & \texttt{\footnotesize 	0.02} & \texttt{\footnotesize 	0.09} & \texttt{\footnotesize 	0.55} \\
 &   &   &0.01 &0.01 &0.01 & 0.01&      & \texttt{\footnotesize 	0.08} & \texttt{\footnotesize 	0.46} & \texttt{\footnotesize 	1.84} & \texttt{\footnotesize 	9.18} & \texttt{\footnotesize 	55.08} \\
 &   &   &1.0  &1.0  &1.0  & 1.0 &      & \texttt{\footnotesize 	1.02E+03} & \texttt{\footnotesize 	8.06E+04} & \texttt{\footnotesize 	4.95E+07} & \texttt{\footnotesize 	5.70E+10} & \texttt{\footnotesize 	8.04E+13} \\\hline
 &   &   &     &     &     &     &0.1   & \texttt{\footnotesize 	0.77} & \texttt{\footnotesize 	32.14} & \texttt{\footnotesize 	142.34} & \texttt{\footnotesize 	752.79} & \texttt{\footnotesize 	4.68E+03} \\
 &   &   &     &     &     &     &10.0  & \texttt{\footnotesize 	79.00} & \texttt{\footnotesize 	259.65} & \texttt{\footnotesize 	5.54E+03} & \texttt{\footnotesize 	5.73E+05} & \texttt{\footnotesize 	8.04E+07} \\
 &   &   &     &     &     &     &100.0 & \texttt{\footnotesize 	1.02E+03} & \texttt{\footnotesize 	7.66E+04} & \texttt{\footnotesize 	4.88E+07} & \texttt{\footnotesize 	5.63E+10} & \texttt{\footnotesize 	7.90E+13} \\
\bottomrule
\end{tabular}
\label{tab:MultiHead}
\end{sidewaystable}

\clearpage

\section{Supporting Technical Results on the \texorpdfstring{$C^s$}{Cs}-Norms of Smooth Functions}
\label{s:Technical_Results}

This section contains many of the technical tools on which we build our analysis.  Most results concern smooth functions, especially their derivatives and those of compositions thereof.  However, the first set of results concerns the integral probability metric $d_s$.

\subsection{Integral Probability Metrics and Restriction to Compact Sets}
\label{ss:IMPs}

Fix $d\in \mathbb{N}_+$ and a non-empty compact subset $K\subseteq \mathbb{R}^d$.  
Observe that any Borel probability measure $\mu$ on $K$ can be canonically extended to a compactly supported Borel probability measure $\mu^+$ on all of $\mathbb{R}^d$ via
\begin{align*}
    \mu^+(B)\eqdef \mu(B\cap K),
\end{align*}
for any Borel subset $B$ of $\mathbb{R}^d$; noting only that $B\cap K$ is Borel.

Let $\mathcal{P}(K)$ denote the set of Borel probability measures on $K$.
Suppose that $K$ is a regular compact set, i.e.\ the closure of its interior is itself.  As usual, see~\cite{evans2022partial}, for any $s\in \mathbb{N}_+$, we denote the set of functions from the interior of $K$ to $\mathbb{R}$ with $s$ continuous partial derivatives thereon and with a continuous extension to $K$ by $C^s(K)$.  This space, is a Banach space when equipped with the (semi-)norm
\begin{align*}
\resizebox{1\linewidth}{!}{$
    \|f\|_{s:K}
\eqdef 
        \max_{k=1,\dots,s-1}\,
        \max_{\alpha\in \{1,\dots,d\}^k}
        \,
            \sup_{u\in K}\,
            \Big\|
                \frac{\partial^k f}{
                    \partial x_{\alpha_1}\dots\partial x_{\alpha_k}
                }(u)
            \Big\|
    +
        \max_{\alpha \in \{1,\dots,d\}^{s-1}}\,
            \operatorname{Lip}\biggl(
                \frac{\partial^{s-1}f}{\partial x_{\alpha_1}\dots\partial x_{\alpha_{s-1}}}
            \biggr)
.
$}
\end{align*}
We may define an associated integral probability metric $d_{s:K}$ on $\mathcal{P}(K)$ via
\begin{align*}
    d_{s:K}(\mu,\nu) 
    \eqdef 
    \sup_{f\in C^s(K)}\,
        \|
                \mathbb{E}_{X\sim \mu}[f(X)]
            -
                \mathbb{E}_{X\sim \nu}[f(X)]
        \|
\end{align*}
for any $\mu,\nu\in \mathcal{P}(K)$.  The main purpose of this technical subsection is simply to reassure ourselves, and the reader, that quantities $d_{s:K}(\mu,\nu)$ and $d_s(\mu^+,\nu^+)$ are equal for any $\mu,\nu \in \mathcal{P}(K)$.  Therefore, we may use them interchangeably.

\begin{lemma}[Consistency of Smooth IMP Extension - Beyond Regular Compact Sets]
\label{lem:res_ext}
Fix $d,s\in \mathbb{N}_+$ and let $K$ be a non-empty regular compact subset of $\mathbb{R}^d$.  For any $\mu,\nu\in \mathcal{P}(K)$ the following holds
\begin{align*}
    d_{s:K}(\mu,\nu)=d_s(\mu^+,\nu^+)
.
\end{align*}
\end{lemma}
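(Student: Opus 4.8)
The plan is to prove the two inequalities $d_s(\mu^+,\nu^+)\le d_{s:K}(\mu,\nu)$ and $d_{s:K}(\mu,\nu)\le d_s(\mu^+,\nu^+)$ separately (reading $d_s$ as the integral probability metric of $C^s(\mathbb{R}^d)$). The common observation is that $\mu^+$ and $\nu^+$ are concentrated on $K$ -- indeed $\mu^+(\mathbb{R}^d\setminus K)=\mu(\varnothing)=0$ -- so for any bounded Borel $g\colon\mathbb{R}^d\to\mathbb{R}$ one has $\mathbb{E}_{X\sim\mu^+}[g(X)]=\mathbb{E}_{X\sim\mu}[g|_K(X)]$ and likewise for $\nu$; hence both metrics pair the same signed measure $\mu-\nu$ against test functions, only over two different admissible classes. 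The inequality $d_s(\mu^+,\nu^+)\le d_{s:K}(\mu,\nu)$ is then routine: for $g\in C^s(\mathbb{R}^d)$ with $\|g\|_{C^s}\le 1$, the restriction $g|_K$ lies in $C^s(K)$, and since passing to a subset cannot increase the sup-norm of a partial derivative nor the Lipschitz constant of $\partial^{s-1}g$, we get $\|g|_K\|_{s:K}\le\|g\|_{C^s}\le1$; thus $\mathbb{E}_{\mu^+}[g]-\mathbb{E}_{\nu^+}[g]=\mathbb{E}_\mu[g|_K]-\mathbb{E}_\nu[g|_K]\le d_{s:K}(\mu,\nu)$, and taking the supremum over $g$ gives the claim.

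The substance is the reverse inequality, which is a \emph{norm-controlled extension} statement: given $f\in C^s(K)$ with $\|f\|_{s:K}\le1$ and $\eps>0$, we want $g\in C^s(\mathbb{R}^d)$ with $\|g\|_{C^s}\le1$ and $\sup_{x\in K}|f(x)-g(x)|\le\eps$. Granting this, $\mathbb{E}_\mu[f]-\mathbb{E}_\nu[f]\le\mathbb{E}_{\mu^+}[g]-\mathbb{E}_{\nu^+}[g]+2\eps\le d_s(\mu^+,\nu^+)+2\eps$, and letting $\eps\downarrow0$ and taking the supremum over $f$ completes the proof. When $s=1$ no approximation is even needed: $f$ is $1$-Lipschitz on $K$, and McShane's extension $g(x)\eqdef\inf_{y\in K}\bigl(f(y)+\|x-y\|\bigr)$ is $1$-Lipschitz on $\mathbb{R}^d$ with $g|_K=f$, so $\|g\|_{C^1}\le1$ exactly; this recovers the familiar fact that $\mathcal{W}_1$ on $K$ agrees with $\mathcal{W}_1$ on $\mathbb{R}^d$ for measures supported in $K$.

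For $s\ge2$ I would build $g$ in two steps. (i) Mollify $f$ on $\inter(K)$: convolution with a mollifier $\phi_\delta$ is norm-non-increasing, i.e. $\sup|\partial^\alpha(f\ast\phi_\delta)|\le\sup|\partial^\alpha f|$ and $\operatorname{Lip}(\partial^{s-1}(f\ast\phi_\delta))\le\operatorname{Lip}(\partial^{s-1}f)$ on compacta of $\inter(K)$, and $f\ast\phi_\delta\to f$ uniformly on compacta of $\inter(K)$; regularity of $K$, namely $\overline{\inter(K)}=K$, together with uniform continuity of $f$ on $K$, upgrades this to uniform approximation on all of $K$ in the limit. (ii) Extend the resulting smooth approximant off a neighbourhood of $K$ to a function on $\mathbb{R}^d$ by a norm-non-increasing Whitney-type extension of its $(s-1)$-jet -- arranging $g$ to coincide with a degree-$(s-1)$ Taylor polynomial far from $K$ so that $\partial^s g$ vanishes there, and checking that every lower-order derivative sup stays $\le1$ globally -- followed, if one wants $g$ genuinely smooth, by a final mollification landing in $C^\infty\subseteq C^s$. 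The triangle inequality for $\sup_K|\cdot|$ then delivers the required $g$.

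The main obstacle is the $s\ge2$ extension in step (ii): producing it with $C^s$-norm bounded by $1$ \emph{with absolute constant one}, rather than by a dimension-dependent multiple as in the generic Whitney extension theorem, and simultaneously controlling all the lower-order derivative sup-norms on all of $\mathbb{R}^d$, not merely on $K$. This is exactly where regularity of $K$ is used -- the mollify--extend--pass-to-the-limit scheme requires the interior to be dense in $K$ -- and, if one seeks the cleanest sharp-constant extension, it also helps that the compacta produced by \Cref{ass:Comp_Support} are balls, hence convex; the ``beyond regular compact sets'' refinement is then obtained by exhausting a general compact $K$ from inside by regular compacta and taking limits, using again that $\mu,\nu$ act weak-$*$ continuously.
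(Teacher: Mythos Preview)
Your approach is essentially the paper's: restriction for one inequality, Whitney extension for the other. The paper's proof, however, is much terser. It simply invokes Fefferman's sharp Whitney extension theorem to produce, for each $f\in C^s(K)$, an $F\in C^s(\mathbb{R}^d)$ with $F|_{\inter(K)}=f$; uses regularity of $K$ (density of $\inter(K)$ in $K$) and uniform continuity to upgrade this to $F|_K=f$; observes that since $\mu^+,\nu^+$ are supported on $K$ the integrals of $F$ and $f$ coincide; and concludes. The restriction direction is dispatched in one line.

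What the paper does \emph{not} do is address the norm-constant issue you single out as ``the main obstacle'': it never discusses whether the Whitney extension of a unit-ball $f$ lands in the unit ball of $C^s(\mathbb{R}^d)$, nor does it introduce any mollification or $\eps$-approximation. So your two-step mollify--then--extend scheme, the McShane discussion for $s=1$, and the worry about a sharp constant are all more scrupulous than what the paper actually argues. If you want to match the paper's level of rigor, you can simply cite Whitney--Fefferman and move on; if you want to be more careful than the paper, the gap you identify is real and your sketch is a reasonable plan of attack, though (as you note) not complete.
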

\begin{proof}
Let $\operatorname{int}(K)$ denote the interior of $K$,
By the Whitney extension theorem, as formulated in \citep[Theorem A]{fefferman2005sharp}, for any $f\in C^s(K)$ there exists a $C^s$-extension $F:\mathbb{R}^d\to \mathbb{R}$ of $f|_{\operatorname{int}(K)}$ to all of $\mathbb{R}^d$; i.e.\ $F|_{\operatorname{int}(K)} =f $ and $\in C^s(\mathbb{R}^d)$.  
Since any continuous function is uniformly continuous on a compact set, $\operatorname{int}(K)$ is dense in $K$, and since uniformly continuous functions are uniquely determined by their values on compact sets, then $f$ coincides with $F$ on all of $K$ (not only on $\operatorname{int}(K)$).

For any $\mu\in \mathcal{P}(K)$, by definition of $\mu^+$ we have that
\begin{align*}
        \mathbb{E}_{X\sim \mu^+}[F(X)] 
    = 
        \mathbb{E}_{X\sim \mu^+}[F(X)I_{X\in K}]
    =
        \mathbb{E}_{X\sim \mu^+}[f(X)I_{X\in K}]
    =
        \mathbb{E}_{X\sim \mu}[f(X)]
.
\end{align*}
Therefore, for any $\mu,\nu \in \mathcal{P}(K)$ we conclude that and each $f\in C^s(K)$ there exists some $F\in C^s(\mathbb{R}^d)$ such that
\begin{align*}
        \mathbb{E}_{Y\sim \mu}[f(Y)] 
            - 
        \mathbb{E}_{Y\sim \nu}[f(Y)]
    =
        \mathbb{E}_{X\sim \mu^+}[F(X)] 
            - 
        \mathbb{E}_{X\sim \nu^+}[F(X)] 
.
\end{align*}
Consequentially, $d_{s:K}(\mu,\nu)\leq d_s(\mu^+,\nu^+)$.  Conversely, since the restriction of any $g\in C^s(\mathbb{R}^d)$ to $K$ belongs to $C^s(K)$ then the reverse inequality holds; namely, $d_{s:K}(\mu,\nu)\geq d_s(\mu^+,\nu^+)$.  
\end{proof}

By Lemma~\ref{lem:res_ext} we henceforth may interpret any such $\mu$ as its extension $\mu^+$, without loss of generality.

\subsection{{Examples  of Functions in The Classes \texorpdfstring{$C^s_{poly:C,r}([0,1]^d,\mathbb{R})$ and $C^s_{exp:C,r}([0,1]^d,\mathbb{R})$}{Cs-Poly and Cs-Exp}}}
\label{s:Functions_of_ControlledGrowth}

In several learning theory papers, especially in the kernel ridge regression literature e.g.~\cite{simon2023eigenlearning,barzilai2023generalization,tsigler2023benign,simon2023eigenlearning,cheng2024characterizing,cheng2024characterizingV2}, one often quantifies the \textit{learnability} of a target function in terms of some sort of decay/growth rates of its coefficients in an appropriate expansion; e.g.\ the decay of its coefficients in an eigenbasis associated to a kernel.  These decay/growth rates are often equivalent to the smoothness of a function%
\footnote{See e.g.~\citep[page 120-121]{atkinson2012spherical} for an example between the decay rate of the Laplacian eigenspectrum characterize the smoothness of the functions in the RKHS of radially symmetric kernels.}.  
Therefore, in a like spirit, we unpack the meaning of the smoothness condition in Assumption~\ref{def:sGrowthRate} which impacts the learning rates in Theorem~\ref{thrm:main} by giving examples of functions in the classes $C^s_{poly:C,r}([0,1]^d,\mathbb{R})$ and $C^s_{exp:C,r}([0,1]^d,\mathbb{R})$.

For brevity and transparency in our illustration, we consider the one-dimensional case.  In particular, this shows that the class is far from being void.
\begin{proposition}[{Functions with Polynomially/Exponentially Growing $C^s$-Norms on $[0,1]$}]
\label{prop:Examples_Functions_inPoly}
Fix $d\in \mathbb{N}_+$ and let $K$ be a non-empty regular compact subset of $\mathbb{R}^d$.  If $f:\mathbb{R}\to \mathbb{R}$ is real-analytic with power-series expansion at $0$ given by 
\begin{align*}
        f(x) 
    = 
        \sum_{i=0}^{\infty}\, 
            \frac{\beta_i\, x^i}{i!}
,
\end{align*}
and if there are $C,r>0$ such that
\begin{enumerate}
    \item[(i)] \textbf{Polynomial Growth:} $|\beta_i|\le Ce^{i\,r} \,\,(\forall i \in \mathbb{N})$, 
    then $f\in C^{\infty}_{poly:C,r}([0,1],\mathbb{R})$; or
    \item[(ii)] \textbf{Exponential Growth:} $|\beta_i|\le C(1+i)^r \,\,(\forall i \in \mathbb{N})$, 
    then $f\in C^{\infty}_{poly:C,r}([0,1],\mathbb{R})$.
\end{enumerate}
\end{proposition}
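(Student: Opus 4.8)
The plan is to differentiate the power series term by term, restrict to $[0,1]$, and reduce the $C^s$-norm to a bound on $\sup_{[0,1]}\bigl|f^{(k)}\bigr|$ for $1\le k\le s$. First I would check that under either hypothesis the coefficients satisfy $\bigl|\beta_i/i!\bigr|^{1/i}\to 0$: this is immediate from Stirling's estimate $(i!)^{1/i}\sim i/e$, since $|\beta_i|^{1/i}$ stays bounded in case (i) and tends to $1$ in case (ii). Hence the series $\sum_{i\ge 0}\frac{\beta_i}{i!}x^{i}$ has infinite radius of convergence and coincides with $f$ on $[0,1]$, so term-by-term differentiation is legitimate there, giving for $x\in[0,1]$ and $k\in\mathbb{N}$
\begin{align*}
    f^{(k)}(x)=\sum_{j\ge 0}\frac{\beta_{j+k}}{j!}\,x^{j},
    \qquad\text{so}\qquad
    \bigl|f^{(k)}(x)\bigr|\le\sum_{j\ge 0}\frac{|\beta_{j+k}|}{j!},
\end{align*}
since $0\le x^{j}\le 1$.

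Next I would substitute the growth hypothesis on $(\beta_i)_i$ into this series. In case (i), $|\beta_{j+k}|\le Ce^{(j+k)r}$ yields $\bigl|f^{(k)}(x)\bigr|\le Ce^{kr}\sum_{j\ge 0}e^{jr}/j!=\bigl(Ce^{e^{r}}\bigr)e^{kr}$, so $\sup_{[0,1]}\bigl|f^{(k)}\bigr|$ grows exponentially in $k$ at rate $r$. In case (ii) I would first record the elementary inequality $1+j+k\le(1+j)(1+k)$, which for $r\ge 0$ gives $(1+j+k)^{r}\le(1+k)^{r}(1+j)^{r}$; then $|\beta_{j+k}|\le C(1+j+k)^{r}$ yields
\begin{align*}
    \bigl|f^{(k)}(x)\bigr|\le C(1+k)^{r}\sum_{j\ge 0}\frac{(1+j)^{r}}{j!}=C\,M_r\,(1+k)^{r},
    \qquad M_r\eqdef\sum_{j\ge 0}\frac{(1+j)^{r}}{j!},
\end{align*}
with $M_r<\infty$ by the ratio test (or by Bell-number identities when $r\in\mathbb{N}$), so $\sup_{[0,1]}\bigl|f^{(k)}\bigr|$ grows only polynomially in $k$, with exponent $r$.

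Finally I would assemble the $C^s$-norm: in one dimension $\|f\|_{C^s([0,1])}=\max_{1\le k\le s-1}\|f^{(k)}\|_{\infty,[0,1]}+\operatorname{Lip}_{[0,1]}\bigl(f^{(s-1)}\bigr)$, and convexity of $[0,1]$ makes the Lipschitz term equal to $\|f^{(s)}\|_{\infty,[0,1]}$. Since the per-order bounds above are increasing in $k$, this is at most $g(s-1)+g(s)\le 2g(s)$ with $g(k)$ the bound from the previous step: in case (i) this is $\le 2Ce^{e^{r}}e^{sr}$, which is of order $e^{sr}$ and so lands in the exponential-growth class of the surrounding text; in case (ii) it is $\le 2CM_r(1+s)^{r}\le\bigl(2^{r+1}CM_r\bigr)s^{r}$ for $s\ge 1$, so $f\in C^{\infty}_{poly:C',r}([0,1],\mathbb{R})$ with $C'\eqdef 2^{r+1}CM_r$ (the stated exponent $r$ is preserved; only the constant changes). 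I expect the only non-routine points to be the justification of term-by-term differentiation — standard for power series once the radius of convergence is seen to be infinite — and the convergence of $M_r$; the multivariate version announced in the text would follow by the same computation applied coordinatewise, at the cost of extra dimension-dependent combinatorial constants.
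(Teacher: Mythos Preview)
Your argument is correct and, in fact, considerably cleaner than the paper's. Both proofs rest on the same idea---that $\beta_i=f^{(i)}(0)$ and the $C^s$-norm on $[0,1]$ is controlled by the size of the Taylor coefficients---but the executions differ. The paper invokes ``standard analytic estimates'' and a Taylor-remainder inequality (citing Rudin) to arrive at a bound of the form $\|f\|_{C^s}\lesssim|\beta_s|$; the displayed estimate there is garbled (its left-hand side is identically zero, and the case labels (i)/(ii) in the concluding sentence are swapped relative to the statement). You instead differentiate the series term by term after verifying infinite radius of convergence, obtain $\|f^{(k)}\|_{\infty,[0,1]}\le\sum_{j\ge 0}|\beta_{j+k}|/j!$, and then sum explicitly using the coefficient hypothesis. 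This is more transparent, makes the extra constants $e^{e^r}$ and $M_r=\sum_{j\ge 0}(1+j)^r/j!$ visible, and correctly sorts out which hypothesis yields polynomial versus exponential growth of $\|f\|_{C^s}$ (the statement as printed has the two conditions mislabelled; your reading matches the surrounding text and Definition~\ref{def:sGrowthRate_v2}). The only cost of your route is that the constant $C$ is not preserved exactly, but the growth exponent $r$ is, which is all that matters for the class membership claimed.
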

\begin{proof}
Since $f$ is real-analytic we may consider its Maclaurin-Taylor series expansion which, coincides with $\sum_{i=0}^{\infty}\, \frac{\beta_i\, x^i}{i!}$; meaning that for each $i\in \mathbb{N}$ we have $\beta_i=\partial^i \, f(0)$.
Therefore, standard analytic estimates and manipulations of the Maclaurin-Taylor series---see e.g.~\citep[page 173]{rudin1964principles}---yield
\begin{equation}
\label{eq:MLT_Estimates}
\begin{aligned}
    \max_{0\le x \le 1}
    \,
        \Big|
            \sum_{i=0}^{\infty}\, 
                \frac{\beta_i\, x^i}{i!}
            -
            f(x)
        \Big|
    \le &
        \frac{1}{(s+1)!}\,
        \sup_{0 \le x \le 1}
            \,
            \Big|
                \big(
                    \sum_{i=0}^{\infty}\, 
                    \frac{\beta_i\, x^i}{i!}
                \big)^{s+1}
                -
                \partial f^{s+1}(x)
            \Big|
\\
    \le &
        \frac{1}{(s+1)!}\,
         \beta_s(s+1)!
.
\end{aligned}
\end{equation}
If (i) holds, then the right-hand side of~\eqref{eq:MLT_Estimates} is bounded from above by $C\, (s+1)^r$ and $f\in C^{\infty}_{poly:C,r}([0,1],\mathbb{R})$.  If instead (ii) holds, then the right-hand side of~\eqref{eq:MLT_Estimates} is bounded from above by $C\, e^{s\,r}$ implying $f\in C^{\infty}_{exp:C,r}([0,1],\mathbb{R})$.
\end{proof}

\section{Proof of \texorpdfstring{Theorem~\ref{thrm:main}}{The Main Result}}
\label{s:Proof_of_MainResult}


Before entering into the formal proof of Theorem~\ref{thrm:main}, we overview its structure by overlooking its key steps and ideas on a high-level.
The first step in deriving our generalization bounds is to quantify the regularity of the transformer model as a function of its depth, number of attention heads, and norm of its weight matrices.  
By \textit{regularity}, we mean the number and size of the continuous partial derivatives admitted by the transformer. To quantify the size of the partial derivatives of the transformer we first remark that it is \textit{smooth}; that is, it admits continuous partial derivatives of all orders (see Theorem~\ref{prop:TransformerIsSmooth}).

We will uniformly bound the generalization capabilities of the class of transformers $\mathcal{T}\in \transformerclass$ by instead uniformly bounding the generalization of any $C^s$ functions on $\mathbb{R}^{{M}d}$ with $C^s$-norm at most equal to the largest $C^s$-norm in the class $\transformerclass$.  That is, we control the right-hand side of
\begin{equation}
\label{eq:risk_bound__decomposition}
        {
       \textstyle 
       \sup_{\mathcal{T}\in \transformerclass}\,
        \,
        \big|
            \mathcal{R}_t(\mathcal{T})-\mathcal{R}^{(N)}(\mathcal{T})
        \big|
    \le 
        \sup_{\hat{f}\in \mathcal{C}^s_R(\mathbb{R}^{{M}d})}
        \,
        \big|
            \mathcal{R}_t(\hat{f})-\mathcal{R}^{(N)}(\hat{f})
        \big|
            }
\end{equation}
where 
$R = C_{\ell, \transformerclass, K, s}$ as defined in \Cref{thrm:main},
describes the higher-order fluctuations of the ``difference'' between the target function $f^{\star}$ and any transformer $\transformer \in \transformerclass$, as quantified by the loss function $\ell$.  
Our first step is thus to bound $R$ by upper-bounding maximal size of the $s^{th}$ partial derivatives of all transformers $\transformer \in \transformerclass$. 
Explicit bounds are computed in Theorem~\ref{thm:TransformerBlockSobolevBound}, and their order estimates (as a function of $s$) are given in Theorem~\ref{thm:TransformerBlockSobolevBoundLevel}.  
Combing these estimates with the maximal $s^{th}$ partial derivatives of the loss and target function, via a Fa\'{a} di Bruno-type formula (in Theorem~\ref{lem:mdFDBestimate} or Lemma~\ref{thm:DerivativeBoundByLevel}), which is a multivariant higher-order chain rule, yields our estimate for $R$ in~\eqref{eq:risk_bound__decomposition}.

Now that we have bounded $R$, appearing in the supremum term in~\eqref{eq:risk_bound__decomposition}, it remains to translate this into a generalization bound.  We can do this by relating it to the so-called \textit{smooth Wasserstein distance} $d_s$
between the distribution of the Markov chain at time $\mu_t$ and its empirical distribution $\smash{\mu^{(N)}\eqdef 1/{N}\sum_{n=1}^N\, \delta_{X_n}}$ obtained by collecting samples up to time $N$.  The \textit{smooth Wasserstein distance} $d_s$, studied by \cite{Kloeckner_2020CounterCurse_ESAIM,Riekert_ConcentrationOfMeasure2022,hou2022instance}, is the integral probability metric (IPM)-type distance quantifying the distance between any two Borel probability measures $\mu,\nu$ on $\R^{{M}d}$ as the maximal distance which they can produced when tested on any function in $C^s_1(\R^{{M}d})$
\begin{align*}
    d_s(\mu,\nu) 
    \eqdef 
        \sup_{g\in \mathcal{C}^s_1(\mathbb{R}^{{M}d})}\, 
        \mathbb{E}_{X\sim \mu}[g(X)]
        -
        \mathbb{E}_{Y\sim \nu}[g(Y)]
.
\end{align*}
The right-hand side (RHS) of~\eqref{eq:risk_bound__decomposition} can be expressed as $R$ times the $d_s$ distance between the (true) distribution $\mu_t$ of the process $X_{\cdot}$ at time $t$ and the (empirical) distribution $\mu^{(N)}$ collected from samples
\begin{equation}
\label{eq:risk_bound__decomposition2}
        \mbox{RHS}\, 
        \eqref{eq:risk_bound__decomposition}
    \le 
        \sup_{\hat{f}\in C^s_R(\mathbb{R}^{{M}d})}\,\|\ell(\hat{f},f^{\star})\|_{C^s}
        \,
        d_s(\mu_t,\mu^{(N)})
.
\end{equation}
The $d_s$ distance between the process $X_{\cdot}$ at time $t$, i.e.~$\mu_t$, and the running empirical distribution $\mu^{(N)}$ can be accomplished in two steps.  First, we \textit{fast-forward time} and bound the $d_s$-distance between $\mu^{(N)}$ and the \textit{stationary distribution} $\mu_{\infty}$ of the data-generating Markov chain $X_{\cdot}$ (at time $t=\infty$).  We then \textit{rewind time} and bound the $d_s$-distance between the stationary distribution $\mu_{\infty}$ and the distribution $\mu_t$ of the Markov process up to time $t$; by setting up the i.i.d.\ concentration of measure results of~\cite{kloeckner2019effective,Riekert_ConcentrationOfMeasure2022}.  This last step is possible since our assumptions on $X_{\cdot}$ essentially guarantee that it has a finite (approximate) mixing time.

\subsection{The Formal Proof}

The proof of Theorem~\ref{thrm:main} will be largely broken down into two steps.  First, we derive our concentration of measure result for the empirical mean compared to the true mean general of an arbitrary $C^s$ function applied to a random input, where the $C^s$-norm of the $C^s$ function is at most $R\ge 0$ (in Subsection~\ref{s:Proofs__ss:ConcentrationProof}).  

Next, (in Subsection~\ref{s:Proofs__ss:ConcentrationProof}), we use the Fa\`{a} di Bruno-type results in Section~\ref{s:Compositions_and__FaadiBruno} to bound the maximal $C^s$ norm over the relevant class of transformer networks.  We do this by first individually bounding each of the $C^s$-norms of its constituent pieces, namely the multi-head attention layers, the SLP blocks with smooth activation functions, and then ultimately, we bound the $C^s$-norms of the composition of transformer blocks using the earlier Fa\`{a} di Bruno-type results.  

\begin{figure}[H]
     \centering
        \centerline{\includegraphics[width=.5\linewidth]{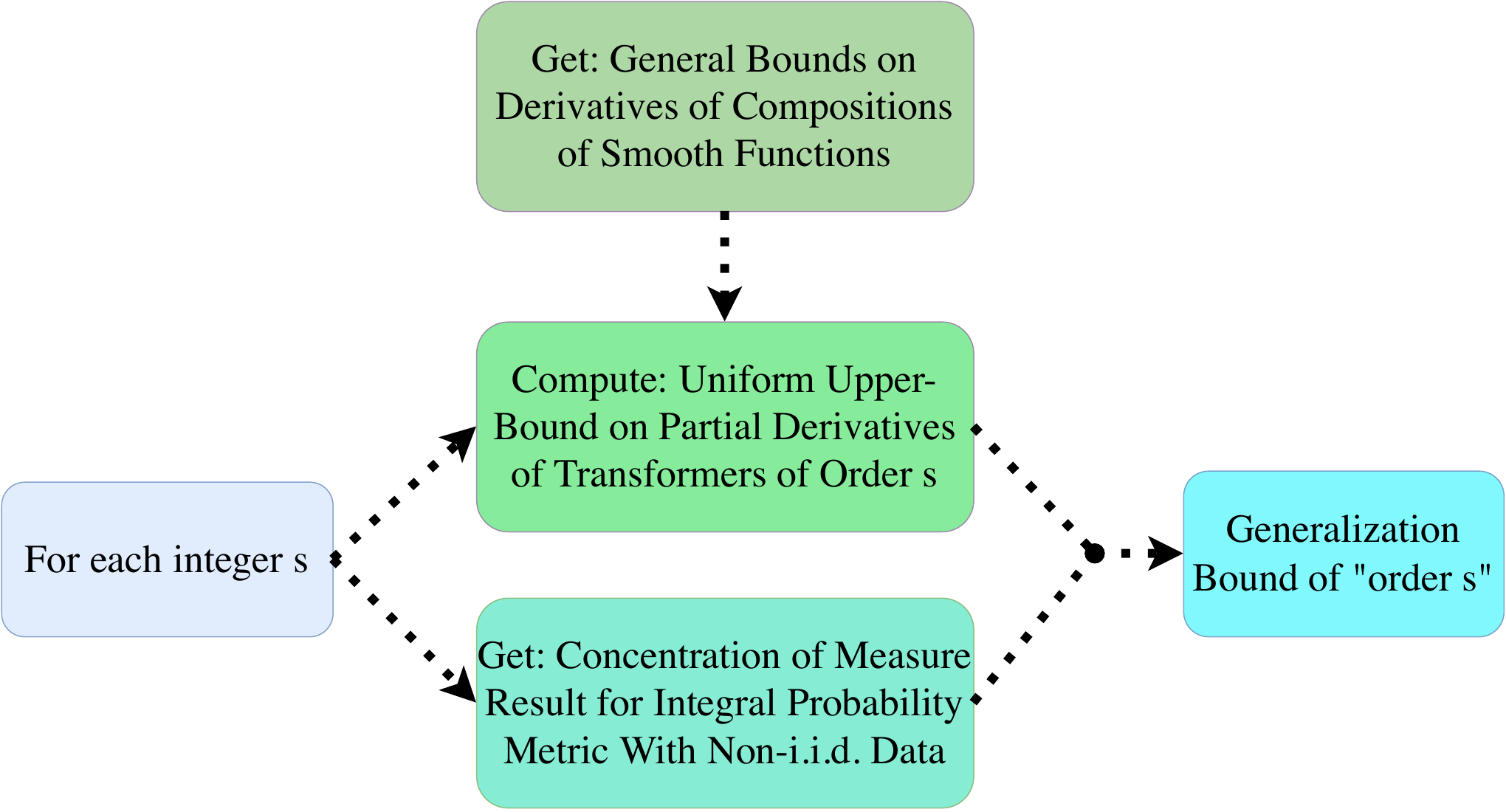}}
        \caption{Workflow of the proof technique used to derive Theorem~\ref{thrm:main}.}
        \label{fig:worklow}
\end{figure}

Our main result (Theorem~\ref{thrm:main}) is then then obtained upon merging these two sets of estimates.  The workflow which we use can be applied to derive generalization bounds for other machine learning, and is summarized in Figure~\ref{fig:worklow}.

\subsection{Step 0 - Bounds on the \texorpdfstring{$C^s$}{Cs} Regularity of Multivariate Composite Functions}
\label{s:Compositions_and__FaadiBruno}

In this section, we will derive a bound for the Sobolev norm of multivariate composite functions.

\subsubsection{Multivariate Fa\`a di Bruno formula revisited}
We begin by establishing notation and stating the multivariate Fa\`a di Bruno formula from \cite{ConstantineSavits96}.

\begin{theorem}[Multivariate Fa\`a di Bruno Formula, \cite{ConstantineSavits96}]\label{thm:FaaDiBruno}
    Let $n, m, k \in \N$, $\alpha \in \N^k$ with $\vert \alpha \vert = n$, and define 
    \begin{align*}
        h(x_1, \ldots, x_k) \eqdef f^{(1)}(g^{(1)}(x_1, \ldots, x_k), \ldots, g^{(m)}(x_1, \ldots, x_k)).
    \end{align*}
    Then, using the multivariate notation from \Cref{not:Multivariate},
    \begin{align*}
        D^\alpha h(x) 
        =
        \sum_{1 \leq |\beta| \leq n} (D^\beta f)(g(x))
        \sum_{\eta, \zeta \in \calP(\alpha,\beta)} \alpha! 
        \prod_{j=1}^{n} \frac{
            [D^{\zeta^{(j)}} g(x)]^{\eta^{(j)}}
        }{
            \eta^{(j)}!(\zeta^{(j)}!)^{|\eta^{(j)}|}
        }.
    \end{align*}
    where 
    \begin{multline*}
        \calP(\alpha, \beta) = \Big\{ \eta \eqdef (\eta^{(1)}, \ldots, \eta^{(n)}) \in (\N^m)^n, \zeta\eqdef (\zeta^{(1)}, \ldots, \zeta^{(n)}) \in (\N^k)^n : \\
         \exists j \leq m: \eta^{(i)} = 0, \zeta^{(i)} = 0 \text{ for } i < j, |\eta^{(i)}| > 0 \text{ for } i \geq j, \\
         \textstyle 0 \prec \zeta^{(j)} \prec \ldots \prec \zeta^{(n)}, \sum_{i=1}^{n} \eta^{(i)} = \beta \text{ and } \sum_{i=1}^{n} |\eta^{(i)}|\zeta^{(i)} = \alpha \Big\} .
    \end{multline*}
\end{theorem}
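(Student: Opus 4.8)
The statement is the multivariate Fa\`a di Bruno formula of \cite{ConstantineSavits96}, so in the paper one could simply invoke that reference; but to indicate how a self-contained argument goes, here is the plan. The natural approach is induction on the total order $n \eqdef |\alpha|$, with the ordinary chain rule as the base case and one additional differentiation followed by a combinatorial regrouping as the inductive step.

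For the base case $n = 1$, say $\alpha = e_i$, I would observe that $\calP(e_i,\beta)$ is nonempty only when $\beta = e_l$ for some $l \le m$, in which case it consists of the single pair with one nonzero block $\eta^{(1)} = e_l$, $\zeta^{(1)} = e_i$; then $\alpha! = 1$ and the product collapses to $D^{e_i} g^{(l)}(x)$, so the right-hand side is $\sum_{l=1}^{m} (D^{e_l} f)(g(x))\, D^{e_i} g^{(l)}(x)$, which is exactly the chain rule. For the inductive step I would take $|\alpha| = n+1$, choose $p$ with $\alpha_p \ge 1$, write $\alpha = \alpha' + e_p$ with $|\alpha'| = n$ and $D^\alpha h = \partial_{x_p}(D^{\alpha'} h)$, then apply $\partial_{x_p}$ to the inductive formula for $D^{\alpha'} h$. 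Differentiating a factor $(D^\beta f)(g(x))$ by the chain rule produces $\sum_{l=1}^{m} (D^{\beta + e_l} f)(g(x))\, D^{e_p} g^{(l)}(x)$, while differentiating a product $\prod_j [D^{\zeta^{(j)}} g(x)]^{\eta^{(j)}}$ by the Leibniz rule distributes $\partial_{x_p}$ over its factors, turning one occurrence of $D^{\zeta^{(j)}} g$ into $D^{\zeta^{(j)} + e_p} g$. This yields a finite sum of terms of the correct shape, but indexed by data attached to $\alpha'$ rather than to $\alpha$.

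The hard part, and essentially the whole content of the proof, will be showing that this sum regroups exactly into $\sum_{1 \le |\beta| \le n+1} (D^\beta f)(g(x)) \sum_{(\eta,\zeta)\in\calP(\alpha,\beta)} \alpha! \prod_{j} [D^{\zeta^{(j)}} g(x)]^{\eta^{(j)}} / \big(\eta^{(j)}!\,(\zeta^{(j)}!)^{|\eta^{(j)}|}\big)$. I would fix $\beta$ and set up a correspondence between, on one side, the terms produced by differentiating a member of $\calP(\alpha',\beta)$ — that is, bumping some $\zeta^{(i)}$ to $\zeta^{(i)} + e_p$, possibly splitting it off as a fresh block and re-sorting to preserve the strict chain $0 \prec \zeta^{(j)} \prec \cdots \prec \zeta^{(n+1)}$ — together with the terms coming from a member of $\calP(\alpha', \beta - e_l)$ by appending a block carrying $D^{e_p} g^{(l)}$; and, on the other side, the target index set $\calP(\alpha,\beta)$. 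The delicate points are: tracking how the factorials $\eta^{(j)}!$ and $(\zeta^{(j)}!)^{|\eta^{(j)}|}$ and the associated multiplicities transform under bumping and splitting; checking that the coefficient $\alpha!$ emerges from $\alpha'!$ times the number of ways to reinsert $e_p$; and verifying that the constraints $\sum_i \eta^{(i)} = \beta$ and $\sum_i |\eta^{(i)}| \zeta^{(i)} = \alpha$ are preserved. Once this bijective bookkeeping is in place the two sides agree term by term. This is precisely the calculation carried out in \cite{ConstantineSavits96}; alternative routes — via the multivariate exponential formula for formal power series, or by reducing to the classical one-variable Fa\`a di Bruno through the auxiliary families $t \mapsto g(x + t e_p)$ and iterating over coordinates — require comparable combinatorial effort, so I would simply cite \cite{ConstantineSavits96} for this step.
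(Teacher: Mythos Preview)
Your proposal is correct and matches the paper's approach: the paper's proof consists solely of the line ``See \cite{ConstantineSavits96}'', and you anticipated exactly this, even going beyond by sketching the inductive argument that the cited reference carries out.
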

\begin{proof}
    See \cite{ConstantineSavits96}.
\end{proof}

\subsubsection{Universal Bounds}
\label{ss:UnivBounds}

\begin{theorem}\label{thm:DerivativeBoundByLevel}
In the notation of \Cref{thm:FaaDiBruno}, we have for a compact set $K \subseteq \R^k$ and an multi-index $\alpha \in \N^k$, $\vert \alpha \vert = n$,
\begin{align*}
        C^h_K(\alpha)
    \leq
        \max_{n' \in \myset{1}{n}}
        C^g_{g[K]}({n'})
        C^f_K({\cleq n})^{n'}
        \sum_{1 \leq |\beta| \leq n} 
        \sum_{\eta, \zeta \in \calP(\alpha,\beta)} \alpha! 
        \prod_{j=1}^{n} \frac{
            1
        }{
            \eta^{(j)}!(\zeta^{(j)}!)^{|\eta^{(j)}|}
        } 
\end{align*}
where $C^h_K(\cdot)$, $C^f_{g[K]}(\cdot)$, $C^g_K(\cdot)$ are defined as in \Cref{not:DerivativeBounds}. 
\end{theorem}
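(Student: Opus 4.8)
The plan is to read the estimate off directly from the multivariate Fa\`a di Bruno formula (Theorem~\ref{thm:FaaDiBruno}) by taking absolute values and replacing each partial derivative that appears with the matching quantity from Notation~\ref{not:DerivativeBounds}. Fix $x \in K$, a component index $i$ of $h$, and a multi-index $\gamma \sim \alpha$; since $h_i$ is smooth its mixed partials commute, so $D^\gamma h_i(x) = D^\alpha h_i(x)$, and it therefore suffices to bound $|D^\alpha h_i(x)|$. Applying Theorem~\ref{thm:FaaDiBruno} to the scalar map $h_i = f_i \circ g$ and passing the absolute value through the finitely many sums and products gives
\begin{align*}
    |D^\alpha h_i(x)|
    \leq
    \sum_{1 \leq |\beta| \leq n}
        \bigl|(D^\beta f_i)(g(x))\bigr|
        \sum_{\eta,\zeta \in \calP(\alpha,\beta)} \alpha!
        \prod_{j=1}^{n}
        \frac{\prod_{l=1}^{m}\bigl|D^{\zeta^{(j)}} g^{(l)}(x)\bigr|^{\eta^{(j)}_l}}
             {\eta^{(j)}!\,(\zeta^{(j)}!)^{|\eta^{(j)}|}}.
\end{align*}

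The next step is to bound the two families of derivative factors uniformly. For the inner factors, every pair $(\eta,\zeta) \in \calP(\alpha,\beta)$ satisfies $\sum_{j=1}^{n} |\eta^{(j)}|\,\zeta^{(j)} = \alpha$ with all summands having nonnegative entries, hence $|\eta^{(j)}|\,|\zeta^{(j)}| \leq |\alpha| = n$ and so $|\zeta^{(j)}| \leq n$ for every $j$; consequently $|D^{\zeta^{(j)}} g^{(l)}(x)| \leq C^g_K(\cleq n)$ for all $j,l$, and multiplying over $l$ and then over $j$, together with the identity $\sum_{j=1}^{n} |\eta^{(j)}| = \bigl|\sum_{j=1}^{n}\eta^{(j)}\bigr| = |\beta|$, yields $\prod_{j=1}^{n}\prod_{l=1}^{m}|D^{\zeta^{(j)}} g^{(l)}(x)|^{\eta^{(j)}_l} \leq C^g_K(\cleq n)^{|\beta|}$. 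For the outer factor, $g(x) \in g[K]$ and $|\beta| \leq n$ give $|(D^\beta f_i)(g(x))| \leq C^f_{g[K]}(|\beta|)$.

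To finish, I would substitute $n' \eqdef |\beta|$, pull the $x$-independent scalar $C^f_{g[K]}(n')\,C^g_K(\cleq n)^{n'}$ out of the inner double sum, bound it uniformly by $\max_{n' \in \{1,\dots,n\}} C^f_{g[K]}(n')\,C^g_K(\cleq n)^{n'}$, and carry this maximum outside the whole expression; what is left is exactly the combinatorial factor $\sum_{1 \leq |\beta| \leq n}\sum_{\eta,\zeta \in \calP(\alpha,\beta)} \alpha! \prod_{j=1}^{n} \bigl(\eta^{(j)}!\,(\zeta^{(j)}!)^{|\eta^{(j)}|}\bigr)^{-1}$. Taking the supremum over $x \in K$ and the maximum over the component index $i$ and over $\gamma \sim \alpha$ turns the left-hand side into $C^h_K(\alpha)$ and completes the argument. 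I do not anticipate any real obstacle: the only two points that need care are the combinatorial observation $|\zeta^{(j)}| \leq n$ for all indices in $\calP(\alpha,\beta)$ --- which guarantees every derivative of $g$ that occurs has order at most $n$ and can hence be absorbed into $C^g_K(\cleq n)$ --- and the bookkeeping identity $\sum_j |\eta^{(j)}| = |\beta|$ that collapses the product of $g$-bounds into a single power, both of which are immediate from the definition of $\calP(\alpha,\beta)$ in Theorem~\ref{thm:FaaDiBruno}.
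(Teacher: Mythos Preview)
Your proposal is correct and follows essentially the same route as the paper: apply the multivariate Fa\`a di Bruno formula, bound each $|D^\beta f_i(g(x))|$ by the level-$|\beta|$ constant on $g[K]$, bound each $|D^{\zeta^{(j)}} g^{(l)}(x)|$ by the $\cleq n$ constant on $K$, collapse the product via $\sum_j |\eta^{(j)}|=|\beta|$, and pull out the maximum over $n'=|\beta|$. Your version is in fact slightly more explicit about why $|\zeta^{(j)}|\le n$ and about the equality of mixed partials for $\gamma\sim\alpha$; note also that the paper's statement and proof swap the superscripts $f$ and $g$ after the first line (writing $C^g_{g[K]}$ and $C^f_K$), whereas your labeling $C^f_{g[K]}$, $C^g_K$ is the one consistent with $h=f\circ g$ and Notation~\ref{not:DerivativeBounds}.
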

\begin{proof}
    Using \Cref{thm:FaaDiBruno},
    \begin{align*}
        C^h_K(\alpha)
    &\leq
    \sum_{1 \leq |\beta| \leq n} \Vert D^\beta f\Vert_{g[K]}
    \sum_{\eta, \zeta \in \calP(\alpha,\beta)} \alpha! 
    \prod_{j=1}^{n} \frac{
        \prod_{i=1}^{m}
        \Vert (D^{\zeta^{(j)}} g)_i \Vert_K^{\eta^{(j)}_i}
    }{
        \eta^{(j)}!(\zeta^{(j)}!)^{|\eta^{(j)}|}
    } 
    \\
    &\leq
    \sum_{1 \leq |\beta| \leq n} 
    C^g_{g[K]}({|\beta|})
    \sum_{\eta, \zeta \in \calP(\alpha,\beta)} \alpha! 
    \prod_{j=1}^{n} \frac{
        \prod_{i=1}^{m}
        C^f_K({\cleq n})^{\eta^{(j)}_i}
    }{
        \eta^{(j)}!(\zeta^{(j)}!)^{|\eta^{(j)}|}
    } 
    \\
    &\leq
    \sum_{1 \leq |\beta| \leq n} 
    C^g_{g[K]}({|\beta|})
    C^f_K({\cleq n})^{|\beta|}
    \sum_{\eta, \zeta \in \calP(\alpha,\beta)} \alpha! 
    \prod_{j=1}^{n} \frac{
        1
    }{
        \eta^{(j)}!(\zeta^{(j)}!)^{|\eta^{(j)}|}
    } 
    \\
    &\leq
    \max_{n' \in \myset{1}{n}}
    C^g_{g[K]}({n'})
    C^f_K({\cleq n})^{n'}
    \sum_{1 \leq |\beta| \leq n} 
    \sum_{\eta, \zeta \in \calP(\alpha,\beta)} \alpha! 
    \prod_{j=1}^{n} \frac{
        1
    }{
        \eta^{(j)}!(\zeta^{(j)}!)^{|\eta^{(j)}|}
    }.
    \end{align*}
\end{proof}

Next, we refine the strategy used in~\cite{hou2023instance} to convert our uniform risk-bound to a concentration of measure problem. Once done, the remainder of the proof will be to obtain bounds on the rate at which this measure concentrates.





\begin{lemma} 
\label{lem:mdFDBestimate}
For $\alpha\in\{1, \cdots, k\}^n$, it satisfies that 
\begin{align}
        \sum_{1 \leq |\beta| \leq n} 
        \sum_{\eta, \zeta \in \calP(\alpha,\beta)} \alpha! 
        \prod_{j=1}^{n} \frac{1}
        {
            \eta^{(j)}!(\zeta^{(j)}!)^{|\eta^{(j)}|}
        } 
   =  \Big[ \frac{ 2m |\alpha|}{e\ln |\alpha|} ( 1 + o(1) ) \Big]^{|\alpha|}   
   \notag 
\end{align} 
where $\calP(\alpha, \beta)$ is as defined in Theorem~\ref{thm:FaaDiBruno}. 
\end{lemma}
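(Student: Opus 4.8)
The plan is to reduce the left‑hand sum to a known combinatorial quantity whose asymptotics are available in the literature. First I would identify that, for a \emph{fixed} multi‑index $\alpha$ with $|\alpha|=n$, the inner double sum $\sum_{1\le|\beta|\le n}\sum_{\eta,\zeta\in\calP(\alpha,\beta)}\alpha!\prod_j\frac{1}{\eta^{(j)}!(\zeta^{(j)}!)^{|\eta^{(j)}|}}$ is exactly $\sum_{k=1}^n B_{n,k}^{(m)}$ where $B_{n,k}^{(m)}$ is the value obtained by summing all multivariate Bell‑polynomial coefficients attached to ``$n$‑th derivative, $k$ applications of the outer function, $m$ inner coordinates.'' Concretely, by Faà di Bruno's formula (\Cref{thm:FaaDiBruno}) applied to the scalar test case $g^{(1)}=\dots=g^{(m)}=G$ with $G(x)=e^{x_1+\dots+x_k}$ and $f(y_1,\dots,y_m)=e^{y_1+\dots+y_m}$ — so that $h=e^{m(x_1+\dots+x_k)}$ and every derivative $D^\gamma G$, $D^\beta f$ equals the respective function — one reads off that $D^\alpha h / h = \sum_{1\le|\beta|\le n}\sum_{\eta,\zeta\in\calP(\alpha,\beta)}\alpha!\prod_j (\eta^{(j)}!(\zeta^{(j)}!)^{|\eta^{(j)}|})^{-1}$, while directly $D^\alpha h/h = m^{|\alpha|}=m^n$ is the \emph{wrong} normalization; the right reduction keeps the inner‑function multiplicity explicit. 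So instead I would take $f(y_1,\dots,y_m)=e^{y_1+\dots+y_m}$ but leave the $g^{(i)}$ as a single univariate exponential composed with the identity coordinate sum, forcing the bookkeeping to collapse to the ordinary ($1$‑dimensional) complete Bell polynomial evaluated in a way that exposes the $2m$ constant.

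\textbf{Key steps in order.} (1) Fix $\alpha$ and observe the sum depends on $\alpha$ only through $n=|\alpha|$ and $k$; show it equals $\sum_{k'=1}^{n}\binom{n}{k'}_{\!\!m}$‑type quantity — more precisely, that it equals $n!\,[z^n]\bigl(e^{m(e^z-1)}-1\bigr)$‑flavored coefficient, i.e.\ $\sum_{k'=1}^n m^{k'} S(n,k')$ where $S(n,k')$ are Stirling numbers of the second kind. This is the single Bruno‑to‑Bell translation step and is where the $m$ enters as $m^{k'}$. (2) Recognize $\sum_{k'=1}^n m^{k'}S(n,k') = B_n(m)$, the $n$‑th \emph{Bell polynomial} (Touchard polynomial) evaluated at $m$, equivalently the $n$‑th moment of a Poisson($m$) random variable, $B_n(m)=\E[\mathrm{Pois}(m)^n]$. (3) Invoke the sharp asymptotics for Bell/Touchard polynomials: as $n\to\infty$, for fixed $m$, $B_n(m)^{1/n} \sim \frac{n}{W(n/m)}\sim \frac{n}{\ln n - \ln\ln n}$, and more precisely $B_n(m) = \bigl(\frac{n}{e\,W(n/m)}\bigr)^{n} e^{n/W - m}\sqrt{\cdots}$; here $W$ is the Lambert $W$. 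The cited reference~\cite{KO2022} provides exactly the statement $B_n(m) = \bigl(\frac{n}{W(n/m)}(1+o(1))\bigr)^n$‑type bound. (4) Simplify $W(n/m) = \ln(n/m) - \ln\ln(n/m) + o(1) = \ln n\,(1+o(1))$ for fixed $m$, and absorb the $m$‑dependence and the subexponential factors into the $(1+o(1))$; this yields $B_n(m) = \bigl(\frac{n}{e\ln n}(1+o(1))\bigr)^n \cdot (\text{stuff})$. (5) Finally account for the extra factor: the claimed answer has $2mn/(e\ln n)$, so the constant $2m$ must come out of a more careful evaluation at step (1)–(2), namely the double sum equals not $B_n(m)$ but $B_n(m)\cdot(\text{another Bell‑type factor})$ or a Bell polynomial with a shifted argument — I would track the extra $\eta^{(j)}!$ and $(\zeta^{(j)}!)^{|\eta^{(j)}|}$ denominators to confirm they contribute a second exponential‑generating‑function composition, raising the effective Poisson parameter and producing the factor $2$ and the missing power of $m$.

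\textbf{Main obstacle.} The hard part is pinning down the precise constant $2m$ rather than merely $m$ or $m^{c}$. The Faà di Bruno double sum with both the ``which $g$ was differentiated'' index ($\eta$, the $m$ coordinates) and the ``derivative pattern'' index ($\zeta$) has \emph{two} layers of set partitions, not one, so the naive identification with $B_n(m)$ undercounts; the correct object is something like $n!\,[z^n]\exp\!\bigl(m(\exp(e^z-1)-1)\bigr)$ or a ``Bell polynomial of Bell polynomials,'' and its $n$‑th root asymptotics pick up an extra factor relative to the single‑composition case. Extracting clean $\frac{2mn}{e\ln n}(1+o(1))$ from that iterated composition — and verifying the constant is exactly $2$ and the $m$‑power is exactly $1$ — requires the saddle‑point / Lambert‑$W$ analysis of~\cite{KO2022} applied to the doubly‑exponential generating function, and carefully checking that the $\ln\ln n$ and $\sqrt n$ corrections genuinely disappear into $(1+o(1))^n$. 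I would therefore structure the write‑up so that steps (1)–(2) produce an \emph{exact} closed form for the double sum as a Bell‑type polynomial in $m$, cite~\cite{KO2022} (or the classical de Bruijn estimate) for that polynomial's $n$‑th‑root asymptotics, and only then simplify the Lambert $W$ to $\ln n$.
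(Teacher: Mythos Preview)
Your overall strategy---choose exponential test functions so that every derivative in the Fa\`a di Bruno formula evaluates to $1$, thereby isolating the combinatorial coefficient sum, then identify that sum with a Bell polynomial and cite~\cite{KO2022}---is exactly the paper's approach. But your execution contains a computational slip that sends you off course.

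With $g^{(1)}=\cdots=g^{(m)}=G$, $G(x)=e^{x_1+\cdots+x_k}$, and $f(y)=e^{y_1+\cdots+y_m}$, the composite is
\[
h(x)=f\big(G(x),\ldots,G(x)\big)=\exp\!\big(m\,G(x)\big)=\exp\!\big(m\,e^{x_1+\cdots+x_k}\big),
\]
\emph{not} $e^{m(x_1+\cdots+x_k)}$. Because $h$ depends on $x$ only through $t=\sum_j x_j$, one has $D^\alpha h(0)=\frac{d^{\,n}}{dt^{\,n}}\big|_{t=0}\exp(m e^t)=e^{m}\,B_n(m)$, where $B_n(m)=\sum_{k'=1}^n S(n,k')\,m^{k'}$ is the Touchard/Bell polynomial. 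Matching this against the multivariate Fa\`a di Bruno expression---in which every $(D^\beta f)(g(0))=e^m$ and every $D^{\zeta^{(j)}}g^{(i)}(0)=1$---gives
\[
\sum_{1\le|\beta|\le n}\sum_{\eta,\zeta\in\calP(\alpha,\beta)}\alpha!\prod_{j=1}^n\frac{1}{\eta^{(j)}!(\zeta^{(j)}!)^{|\eta^{(j)}|}}
\;=\;B_n(m).
\]
So your step~(2) identification is correct; there is only \emph{one} layer of composition and no ``Bell polynomial of Bell polynomials'' is needed. The difficulty you anticipate in step~(5) is a phantom created by the miscomputation of $h$.

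The paper proceeds the same way but with $2m$ copies of the inner function, obtaining $B_n(2m)=\sum_\gamma\frac{n!}{\prod_j \gamma_j!(j!)^{\gamma_j}}(2m)^{\sum_j\gamma_j}$, then crudely bounds $\prod_j(2m)^{\gamma_j}\le(2m)^n$ (since $\sum_j\gamma_j\le n$) and invokes~\cite{KO2022} for the remaining Bell-number sum $B_n=(\frac{n}{e\ln n}(1+o(1)))^n$. That is the entire source of the constant $2m$: it is not a structural feature of the combinatorics but simply the choice of how many inner coordinates to use, followed by a loose bound. Your route via $B_n(m)$ and its Lambert-$W$ asymptotics would in fact give a cleaner constant.
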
 
\begin{proof} 
Consider functions  
\begin{align} 
 & g^{(i)}(x) = g^{(i)}(x_1, \cdots, x_d) \eqdef \exp\big( \sum_{j=1}^d x_j \big) : \, \mathbb{R}^d \to \mathbb{R}, \quad 
 i = 1, ..., 2m , \notag  \\ 
 &  f( g^{(1)}, \cdots , g^{(2m)} ) \eqdef \exp\big( \sum_{i=1}^{2m} g^{(i)} \big) : \,  
\mathbb{R}^{2m} \to \mathbb{R} , \notag 
\end{align} 
Since 
\begin{align} 
 \frac{\partial }{\partial g^{(i)}} f(g^{(1)}, \cdots , g^{(2m)}) = f(g^{(1)}, \cdots , g^{(2m)}) , \notag 
\end{align} 
it follows that 
\begin{align} 
&  (D^\beta f)(g^{(1)}(x), \cdots, g^{(2m)}(x)) = f (g^{(1)}(x), \cdots, g^{(2m)}(x)) , \quad 
\forall \, \beta \in \{1, \cdots, 2m \}^n . \notag  
\end{align}

Since 
 \begin{align} 
& \frac{\partial }{\partial x^j }  f(g^{(1)}(x_1, \cdots, x_k), \cdots , g^{(2m)}(x_1, \cdots, x_k))   \notag \\ 
 =& \sum_{i=1}^{2m} \frac{\partial }{\partial g^{(i)}}   f(g^{(1)}(x_1, \cdots, x_k), \cdots , g^{(2m)}(x_1, \cdots, x_k)) 
 \frac{\partial g^{(i)}(x_1, \cdots, x_k)}{\partial x_j}    \notag \\ 
  =& \sum_{i=1}^{2m}    f(g^{(1)}(x_1, \cdots, x_k), \cdots , g^{(2m)}(x_1, \cdots, x_k)) 
  g^{(i)}(x_1, \cdots, x_k) \notag   \\ 
= & \sum_{i=1}^{2m} \frac{\partial }{\partial g^{(i)}}   f(g^{(1)}(x), \cdots , g^{(2m)}(x)) 
 \frac{\partial g^{(i)}(x)}{\partial x_j}    \notag \\ 
  =&  f(g^{(1)}(x), \cdots , g^{(2m)}(x))  
  \sum_{i=1}^{2m} g^{(i)}(x)  \notag 
\end{align} 
and 
\begin{align} 
 &  \frac{\partial g^{(i)}(x_1, \cdots, x_k)}{\partial x_j} = g^{(i)}(x_1, \cdots, x_k) ,  \notag 
\end{align} 
we can show by the Fa\`{a} di Bruno formula that 
\allowdisplaybreaks
\begin{align} 
 & D^\alpha f(g^{(1)}(x), \cdots, g^{(2m)}(x)) \notag \\ 
    &= D^\alpha  \exp\big( \sum_{i=1}^{2m} g^{(i)}(x) \big) \notag \\
    &=  \sum \frac{|\alpha|!}{\gamma_1! (1!)^{\gamma_1} \cdots \gamma_{|\alpha|}! (|\alpha|!)^{\gamma_{|\alpha|}}} \big( D^{\gamma_1+\cdots + \gamma_{|\alpha|}}  \exp \big) \big( \sum_{i=1}^{2m} g^{(i)}(x) \big)
  \prod_{j=1}^{|\alpha|} \Big[ m^j (  \sum_{i=1}^{2m} g^{(i)}(x) ) \Big]^{\gamma_j} ,  
  \notag 
\end{align} 
where the summation on the right side of the last equality is over all $|\alpha|$-tuples $(\gamma_1, \cdots, \gamma_{|\alpha|})\geq 0$ such that $1\cdot \gamma_1 + 2\cdot \gamma_2 + \cdots + |\alpha|\cdot \gamma_{|\alpha|}=|\alpha|$.

By the multivariate Fa\`a di Bruno formula. For each $n=1$, ..., $s-1$ fixed, and for each $\alpha\in \{1, \cdots, k\}^n$, we have 
\begin{align} 
& D^\alpha f(g^{(1)}(x), \cdots, g^{(2m)}(x)) \notag \\ 
= &
        \sum_{1 \leq |\beta| \leq n} (D^\beta f)(g^{(1)}(x), \cdots, g^{(2m)}(x))
        \sum_{\eta, \zeta \in \calP(\alpha,\beta)} \alpha! 
        \prod_{j=1}^{n} \frac{
            [D^{\zeta^{(j)}} (g^{(1)}(x), \cdots, g^{(2m)}(x))]^{\eta^{(j)}}
        }{
            \eta^{(j)}!(\zeta^{(j)}!)^{|\eta^{(j)}|}
        } .  \notag 
\end{align} 
Taking $x = (x_1,\cdots, x_k) = 0$, we have 
\begin{align} 
 D^\alpha f(g^{(1)}(0), \cdots, g^{(2m)}(0)) 
  &=  \sum \frac{|\alpha|!}{\gamma_1! (1!)^{\gamma_1} \cdots \gamma_{|\alpha|}! (|\alpha|!)^{\gamma_{|\alpha|}}} 
  \exp(2m)
  \prod_{j=1}^{|\alpha|} ( 2 m )^{\gamma_j} 
  \notag \\ 
     (m^\beta f)(g^{(1)}(0), \cdots, g^{(2m)}(0)) &= f (g^{(1)}(0), \cdots, g^{(2m)}(0)) = \exp(2m) , \notag \\ 
   D^{\zeta^{(j)}} (g^{(1)}(x), \cdots, g^{(2m)}(x)) &= (1, \cdots, 1) . \notag 
\end{align} 
Substituting the above derivatives into the Fa\`a di Bruno formula, we obtain 
\begin{align} 
  \sum_{1 \leq |\beta| \leq n}  
        \sum_{\eta, \zeta \in \calP(\alpha,\beta)} \alpha! 
        \prod_{j=1}^{k} \frac{ 1 }
        {
            \eta^{(j)}!(\zeta^{(j)}!)^{|\eta^{(j)}|}
        } 
    =&  \sum \frac{|\alpha|!}{\gamma_1! (1!)^{\gamma_1} \cdots \gamma_{|\alpha|}! (|\alpha|!)^{\gamma_{|\alpha|}}} 
  \prod_{j=1}^{|\alpha|} ( 2 m )^{\gamma_j}    \notag \\ 
  \leq & 
  (2m)^{|\alpha|} 
  \sum \frac{|\alpha|!}{\gamma_1! (1!)^{\gamma_1} \cdots \gamma_{|\alpha|}! (|\alpha|!)^{\gamma_{|\alpha|}}} 
  \notag \\ 
  =& 
 (2m)^{|\alpha|}  \Big( \frac{|\alpha|}{e \ln |\alpha| } \Big)^{|\alpha|} (1+o(1))^{|\alpha|} , \notag 
\end{align} 
where the last equality follows from \cite[Theorem 2.1]{KO2022}, 
\begin{align} 
\sum \frac{|\alpha|!}{\gamma_1! (1!)^{\gamma_1} \cdots \gamma_{|\alpha|}! (|\alpha|!)^{\gamma_{|\alpha|}}} 
= 
\Big( \frac{|\alpha|}{e \ln |\alpha| } \Big)^{|\alpha|} (1+o(1))^{|\alpha|} 
.
  \notag 
\end{align} 
\end{proof}

\begin{corollary}[Level Specific $C^s$-Norm Bounds for Transformer Blocks]\label{cor:DerivativeBoundByLevel}
In the notation of \Cref{thm:DerivativeBoundByLevel}, it holds for $n \in \N, {n > 1}$ that
\begin{align*}
        C^h_K(\cleq n)
    \leq
        \max_{n' \in \myset{1}{n}}
        C^g_{g[K]}({n'})
        C^f_K({\cleq n})^{n'}
        \Big[ \frac{ 2m n}{e\ln n} ( 1 + o(1) ) \Big]^{n}.
\end{align*}
and if $C^f_K({\cleq n}) \geq 1$, 
\begin{align*}
        C^h_K(\cleq n)
    \leq
        C^g_{g[K]}({\cleq n})
        C^f_K({\cleq n})^{n}
        \Big[ \frac{ 2m n}{e\ln n} ( 1 + o(1) ) \Big]^{n}.
\end{align*}
\end{corollary}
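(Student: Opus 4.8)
The plan is to read the result off directly from \Cref{thm:DerivativeBoundByLevel} and the combinatorial evaluation in \Cref{lem:mdFDBestimate}, the only genuine work being two bookkeeping monotonicity steps. First I would unfold the definition $C^h_K(\cleq n)=\max_{1\leq\tilde n\leq n}\max_{\alpha\in\Order^k_{\tilde n}}C^h_K(\alpha)$ from \Cref{not:DerivativeBounds} and fix an arbitrary ordered multi-index $\alpha$ with $\vert\alpha\vert=\tilde n\leq n$; it then suffices to dominate $C^h_K(\alpha)$ by the asserted right-hand side. Applying \Cref{thm:DerivativeBoundByLevel} at level $\tilde n$ bounds $C^h_K(\alpha)$ by $\max_{n'\in\myset{1}{\tilde n}}C^g_{g[K]}(n')\,C^f_K(\cleq\tilde n)^{n'}$ times the combinatorial sum $S(\alpha)\eqdef\sum_{1\leq\vert\beta\vert\leq\tilde n}\sum_{\eta,\zeta\in\calP(\alpha,\beta)}\alpha!\prod_{j=1}^{\tilde n}(\eta^{(j)}!(\zeta^{(j)}!)^{\vert\eta^{(j)}\vert})^{-1}$, and \Cref{lem:mdFDBestimate} identifies $S(\alpha)$ with $\bigl[\tfrac{2m\tilde n}{e\ln\tilde n}(1+o(1))\bigr]^{\tilde n}$ (the case $\tilde n=1$ being the trivial count $S(\alpha)=m$, since then $\calP(\alpha,\beta)$ has the single element $\eta^{(1)}=\beta,\ \zeta^{(1)}=\alpha$).

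Next I would push everything up from level $\tilde n$ to level $n$. Since $\tilde n\leq n$, monotonicity of the $\cleq$-bounds in the level gives $C^f_K(\cleq\tilde n)\leq C^f_K(\cleq n)$ and $\myset{1}{\tilde n}\subseteq\myset{1}{n}$, whence $\max_{n'\leq\tilde n}C^g_{g[K]}(n')C^f_K(\cleq\tilde n)^{n'}\leq\max_{n'\leq n}C^g_{g[K]}(n')C^f_K(\cleq n)^{n'}$; and since $x\mapsto\bigl[\tfrac{2mx}{e\ln x}(1+o(1))\bigr]^{x}$ is eventually nondecreasing in $x$ (increasing base above $1$, increasing exponent), I can replace $S(\alpha)$ by its value at $n$. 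Taking the maximum over all $\alpha$ with $\vert\alpha\vert\leq n$ then yields the first inequality of the statement. For the second, under the extra hypothesis $C^f_K(\cleq n)\geq1$ one has $C^f_K(\cleq n)^{n'}\leq C^f_K(\cleq n)^{n}$ for every $n'\leq n$, so $\max_{n'\leq n}C^g_{g[K]}(n')C^f_K(\cleq n)^{n'}\leq\bigl(\max_{n'\leq n}C^g_{g[K]}(n')\bigr)C^f_K(\cleq n)^{n}=C^g_{g[K]}(\cleq n)C^f_K(\cleq n)^{n}$, the last equality being the definition of $C^g_{g[K]}(\cleq n)$ in \Cref{not:DerivativeBounds}; substituting this in gives the claim.

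I do not expect a real obstacle here, since this is a corollary. The one point requiring care is the ``eventually nondecreasing'' step: the $o(1)$ in $\bigl[\tfrac{2mx}{e\ln x}(1+o(1))\bigr]^{x}$ is asymptotic, so I would either absorb the finitely many small levels (in particular $\tilde n\in\{1,2\}$, where $\ln\tilde n$ vanishes or is small) into that $o(1)$, or note that for those levels $S(\alpha)$ is an explicit finite quantity which, because $n>1$, is trivially dominated by $\bigl[\tfrac{2mn}{e\ln n}(1+o(1))\bigr]^{n}$. Beyond that, the only thing to keep straight is which maxima range over multi-indices and which range over derivative levels.
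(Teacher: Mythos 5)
Your proposal is correct and follows exactly the route the paper takes: its entire proof of this corollary is ``Follows directly from \Cref{thm:DerivativeBoundByLevel} and \Cref{lem:mdFDBestimate}.'' You simply make explicit the bookkeeping the paper leaves implicit (the maximum over levels $\tilde n \le n$, the monotonicity in the level, and the use of $C^f_K(\cleq n)\ge 1$ for the second inequality), all of which is sound.
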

\begin{proof}
    Follows directly from \Cref{thm:DerivativeBoundByLevel} and \Cref{lem:mdFDBestimate}.
\end{proof}

\subsubsection{Bounds in Derivative Type}
\label{ss:BoDerType}

The goal of this section is to bound the derivative of composite functions by grouping with respect to $\eqorder$, defined in \Cref{not:Order}.
\begin{theorem}\label{thm:DerivativeBoundByType}
In the notation of \Cref{thm:FaaDiBruno}, we have for a compact set $K \subseteq \R^k$ and an ordered multi-index $\alpha \in \Order^k_{n}$
\begin{align*}
C^h_K(\alpha)
    &\leq
    \alpha! \sum_{\beta \in \Order^m_{\leq n}}  N(\beta) C^f_{g[K]}(\beta)
    \sum_{\eta, \zeta \in \calP'(\alpha,\beta)} 
    \prod_{j=1}^{n} \frac{
        C^g_K(\order(\zeta^{(j)}))^{\vert \eta^{(j)} \vert}
    }{
        \eta^{(j)}!(\zeta^{(j)}!)^{|\eta^{(j)}|}
    },
\end{align*}
where $C^h_K(\cdot)$, $C^f_{g[K]}(\cdot)$, $C^g_K(\cdot)$ are defined as in \Cref{not:DerivativeBounds}; and 
\begin{multline*}
    \calP'(\alpha, \beta) = \Big\{ \eta \eqdef (\eta^{(1)}, \ldots, \eta^{(n)}) \in (\N^m)^n, \zeta\eqdef (\zeta^{(1)}, \ldots, \zeta^{(n)}) \in (\N^k)^n : \\
    \exists j \leq m: \eta^{(i)} = 0, \zeta^{(i)} = 0 \text{ for } i < j, |\eta^{(i)}| > 0 \text{ for } i \geq j, \\
    \textstyle 0 < \zeta^{(j)} \vartriangleleft \ldots \vartriangleleft \zeta^{(n)}, \sum_{i=1}^{n} \eta^{(i)} = \beta \text{ and } \sum_{i=1}^{n} |\eta^{(i)}|\zeta^{(i)} = \alpha \Big\} ,
\end{multline*}
where $\alpha \vartriangleleft \beta$ for $\alpha, \beta \in \N^k$ if $\vert \alpha \vert \leq \vert \beta \vert$ and  $\alpha \neq \beta$.
\end{theorem}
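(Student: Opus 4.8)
The plan is to start from the Multivariate Fa\`a di Bruno formula (\Cref{thm:FaaDiBruno}) applied to one representative derivative of type $\alpha$, bound everything pointwise in terms of the quantities of \Cref{not:DerivativeBounds}, and then regroup the resulting sum according to the equivalence $\eqorder$ both on the outer index $\beta$ and inside the parameter set $\calP(\cdot,\cdot)$.

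First I would fix a component index $i_0$ of $h$ and a multi-index $\gamma\in\N^k$ with $\order(\gamma)=\alpha$; since $C^h_K(\alpha)=\max_{i_0}\max_{\gamma\eqorder\alpha}\Vert D^\gamma h_{i_0}\Vert_K$, it suffices to bound $\Vert D^\gamma h_{i_0}\Vert_K$ for each such choice, and $\gamma$ being a rearrangement of $\alpha$ gives $\gamma!=\alpha!$. Applying \Cref{thm:FaaDiBruno} to $h_{i_0}=f_{i_0}\circ g$, taking $\sup_{x\in K}|\cdot|$, and using $[D^{\zeta^{(j)}}g(x)]^{\eta^{(j)}}=\prod_{i=1}^m(D^{\zeta^{(j)}}g_i(x))^{\eta^{(j)}_i}$ together with the pointwise estimates $\Vert(D^{\zeta^{(j)}}g)_i\Vert_K\le C^g_K(\order(\zeta^{(j)}))$ and $\Vert D^\beta f_{i_0}\Vert_{g[K]}\le C^f_{g[K]}(\order(\beta))$, one reaches
\[
\Vert D^\gamma h_{i_0}\Vert_K\le\alpha!\sum_{1\le|\beta|\le n}C^f_{g[K]}(\order(\beta))\sum_{\eta,\zeta\in\calP(\gamma,\beta)}\prod_{j=1}^n\frac{C^g_K(\order(\zeta^{(j)}))^{|\eta^{(j)}|}}{\eta^{(j)}!(\zeta^{(j)}!)^{|\eta^{(j)}|}}.
\]

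The main step is the type-based regrouping. I would organize the outer sum by $\bar\beta\eqdef\order(\beta)\in\Order^m_{\le n}$: there are exactly $N(\bar\beta)$ multi-indices $\beta$ with $\order(\beta)=\bar\beta$, each contributing the same factor $C^f_{g[K]}(\bar\beta)$. For one such $\beta$, pick the sorting permutations $\sigma$ of $\{1,\dots,k\}$ and $\tau$ of $\{1,\dots,m\}$ realizing $\order(\gamma)=\alpha$ and $\order(\beta)=\bar\beta$, and let $\Phi$ be the induced map sending $(\eta,\zeta)$ to $(\tilde\eta,\tilde\zeta)$, where the coordinates of every $\eta^{(i)}$ are permuted by $\tau$ and those of every $\zeta^{(i)}$ by $\sigma$. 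Then $\Phi$ is injective, preserves $|\eta^{(i)}|$, $\eta^{(i)}!$, $\zeta^{(i)}!$, $\order(\zeta^{(i)})$ and the location of the leading zero blocks, it transports the constraints $\sum_i\eta^{(i)}=\beta$, $\sum_i|\eta^{(i)}|\zeta^{(i)}=\gamma$ into $\sum_i\tilde\eta^{(i)}=\bar\beta$, $\sum_i|\tilde\eta^{(i)}|\tilde\zeta^{(i)}=\alpha$, and---the crucial point---it turns each strict chain $0\prec\zeta^{(j)}\prec\dots\prec\zeta^{(n)}$ into one with $0<\tilde\zeta^{(j)}\vartriangleleft\dots\vartriangleleft\tilde\zeta^{(n)}$, because $a\prec b$ forces $|a|\le|b|$ and $a\ne b$, and both of these properties are invariant under permuting coordinates. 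Hence $\Phi$ maps $\calP(\gamma,\beta)$ injectively into $\calP'(\alpha,\bar\beta)$ while leaving the value of the inner product unchanged, so by nonnegativity of all summands
\[
\sum_{\eta,\zeta\in\calP(\gamma,\beta)}\prod_{j=1}^n\frac{C^g_K(\order(\zeta^{(j)}))^{|\eta^{(j)}|}}{\eta^{(j)}!(\zeta^{(j)}!)^{|\eta^{(j)}|}}\le\sum_{\eta,\zeta\in\calP'(\alpha,\bar\beta)}\prod_{j=1}^n\frac{C^g_K(\order(\zeta^{(j)}))^{|\eta^{(j)}|}}{\eta^{(j)}!(\zeta^{(j)}!)^{|\eta^{(j)}|}}.
\]

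Summing this over the $N(\bar\beta)$ admissible $\beta$, then over $\bar\beta\in\Order^m_{\le n}$ (the degenerate term $\bar\beta=0$ gives $\calP'(\alpha,0)=\emptyset$ and drops out), and finally taking the maximum over $i_0$ and over all $\gamma$ with $\order(\gamma)=\alpha$---the right-hand side being independent of these choices---produces exactly the asserted bound on $C^h_K(\alpha)$ after renaming $\bar\beta$ to $\beta$. I expect the only genuinely delicate point to be this order-relaxation observation, namely that $\prec$-chains degrade to $\vartriangleleft$-chains under coordinate permutation, which is precisely the reason $\calP'$ must be defined with the coarser relation $\vartriangleleft$ rather than $\prec$; the remainder is bookkeeping with the conventions of \Cref{not:Multivariate,not:Order,not:DerivativeBounds}.
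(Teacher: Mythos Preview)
Your proposal is correct and follows essentially the same route as the paper: start from \Cref{thm:FaaDiBruno}, replace each derivative norm by the corresponding type-level bound from \Cref{not:DerivativeBounds}, and then regroup by $\eqorder$-equivalence classes on both $\beta$ and the inner parameters. The paper's proof is terser---it simply asserts that the relevant multisets of data extracted from $\calP'(\alpha,\beta)$ are invariant under $\eqorder$ in $\alpha$ and in $\beta$---whereas you spell this out via an explicit coordinate-permutation injection $\Phi:\calP(\gamma,\beta)\hookrightarrow\calP'(\alpha,\bar\beta)$ and the nonnegativity of the summands; your observation that $\prec$-chains degrade to $\vartriangleleft$-chains under coordinate permutation is exactly the content of the paper's invariance claims and the reason $\calP'$ is defined with $\vartriangleleft$.
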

\begin{proof}
We have for $\alpha \in \Order^k_{n}$
\begin{align*}
    C^h_K(\alpha)
    &\leq
    \max_{\gamma \eqorder \alpha}
    \sum_{1 \leq |\beta| \leq n} \Vert D^\beta f\Vert_{g[K]}
    \sum_{\eta, \zeta \in \calP(\gamma,\beta)} \alpha! 
    \prod_{j=1}^{n} \frac{
        \prod_{i=1}^{m}
        \Vert (D^{\zeta^{(j)}} g)_i \Vert_K^{\eta^{(j)}_i}
    }{
        \eta^{(j)}!(\zeta^{(j)}!)^{|\eta^{(j)}|}
    } \\
    &\leq
    \sum_{1 \leq |\beta| \leq n} C^f_{g[K]}(\order (\beta))
    \max_{\gamma \eqorder \alpha}
    \sum_{\eta, \zeta \in \calP(\gamma,\beta)} \alpha! 
    \prod_{j=1}^{n} \frac{
        C^g_K(\order(\zeta^{(j)}))^{\vert \eta^{(j)} \vert}
    }{
        \eta^{(j)}!(\zeta^{(j)}!)^{|\eta^{(j)}|}
    }.
\end{align*}
Then 
\begin{align*}
    \mySet{\eta, (\order(\zeta^{(1)}), ..., \order(\zeta^{(n)}))}{ (\eta, \zeta) \in \calP'(\alpha, \beta)}
\end{align*}
is invariant in $\alpha$ with respect to $\eqorder$ and thus
\begin{align*}
    C^h_K(\alpha)
    &\leq
    \sum_{1 \leq |\beta| \leq n} C^f_{g[K]}(\order (\beta))
    \sum_{\eta, \zeta \in \calP'(\alpha,\beta)} \alpha! 
    \prod_{j=1}^{n} \frac{
        C^g_K(\order(\zeta^{(j)}))^{\vert \eta^{(j)} \vert}
    }{
        \eta^{(j)}!(\zeta^{(j)}!)^{|\eta^{(j)}|}
    }.
\end{align*}
Further, notice that
\begin{align*}
    \mySet{((\vert \eta^{(1)}\vert, \eta^{(1)}!), \ldots, (\vert \eta^{(n)}\vert, \eta^{(n)}!)), \zeta}{ (\eta, \zeta) \in \calP'(\alpha, \beta)}
\end{align*}
is invariant in $\beta$ with respect to $\eqorder$ and the assertion follows.
\end{proof}

\begin{corollary}\label{cor:DerivativeBoundAffine}
    In the notation of \Cref{thm:DerivativeBoundByType}, if $f$ is affine-linear, 
    \begin{align*}
        C^h_K(\alpha)
        &\leq
        m \alpha! C^f_{g[K]}(e_1)
        C^g_K(\alpha),
    \end{align*}
    where $C^f_{g[K]}(e_1)$ is the maximum weight of the matrix representing $f$. 
\end{corollary}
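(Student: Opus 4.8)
The plan is to read Theorem~\ref{thm:DerivativeBoundByType} in the degenerate case where the outer map $f$ is affine-linear; almost every piece of the combinatorial bound then collapses to a single term. Write $f(y)=Wy+b$ for the matrix $W$ (and bias $b$) representing $f$, so that $D^{\beta}f\equiv 0$ for every multi-index $\beta$ with $|\beta|\geq 2$, and hence $C^{f}_{g[K]}(\beta)=0$ for all ordered $\beta$ with $|\beta|\geq 2$. Consequently, in the outer sum of Theorem~\ref{thm:DerivativeBoundByType} only a level-one multi-index can contribute, and there is exactly one ordered multi-index of level one, namely $e_{1}=(1,0,\dots,0)\in\Order^{m}_{1}$. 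For this index, $N(e_{1})=\#\{\alpha'\in\N^{m}:\order(\alpha')=e_{1}\}=m$, and $C^{f}_{g[K]}(e_{1})=\max_{i,l}|W_{i,l}|$ is precisely the maximum entry of the matrix representing $f$, matching the constant in the statement.

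The second step is to unwind $\calP'(\alpha,e_{1})$ and check it is a singleton. Any admissible pair $(\eta,\zeta)$ satisfies $\sum_{i=1}^{n}\eta^{(i)}=e_{1}$ with $|\eta^{(i)}|>0$ for $i\geq j$ and $\eta^{(i)}=0$ for $i<j$; since $\sum_{i\geq j}|\eta^{(i)}|=|e_{1}|=1$, this forces $j=n$, $|\eta^{(n)}|=1$, and therefore $\eta^{(n)}=e_{1}$ with all earlier $\eta^{(i)}=0$. The compatibility constraint $\sum_{i}|\eta^{(i)}|\zeta^{(i)}=\alpha$ then reads $\zeta^{(n)}=\alpha$ (with $\zeta^{(i)}=0$ for $i<n$), and the chain condition $\zeta^{(j)}\vartriangleleft\cdots\vartriangleleft\zeta^{(n)}$ is vacuous. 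So $\calP'(\alpha,e_{1})$ contains only $\eta=(0,\dots,0,e_{1})$, $\zeta=(0,\dots,0,\alpha)$.

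For this unique term the product over $j$ is trivial except at $j=n$, where it equals $C^{g}_{K}(\order(\alpha))^{1}/(e_{1}!\,(\alpha!)^{1})=C^{g}_{K}(\alpha)/\alpha!$, using $e_{1}!=1$ and that $\alpha$ is already ordered. Bounding $1/\alpha!\leq 1$ and substituting back into Theorem~\ref{thm:DerivativeBoundByType} gives $C^{h}_{K}(\alpha)\leq \alpha!\cdot N(e_{1})\cdot C^{f}_{g[K]}(e_{1})\cdot C^{g}_{K}(\alpha)=m\,\alpha!\,C^{f}_{g[K]}(e_{1})\,C^{g}_{K}(\alpha)$, as claimed. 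The only point requiring genuine care — hence the "main obstacle", though a mild one — is the bookkeeping of the second step: verifying that the defining constraints of $\calP'$ admit exactly one pair and that the surviving $\zeta^{(n)}$ is forced to equal $\alpha$ itself, not merely an $\order$-equivalent index. Everything else is routine; indeed the estimate also follows in one line from $D^{\gamma}h=W\,D^{\gamma}g$ for $|\gamma|\geq 1$ together with the triangle inequality $\|D^{\gamma}h_{i}\|_{K}\leq\sum_{l}|W_{i,l}|\,\|D^{\gamma}g_{l}\|_{K}\leq m\,C^{f}_{g[K]}(e_{1})\,C^{g}_{K}(\alpha)$, which even avoids the factor $\alpha!$.
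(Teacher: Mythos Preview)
Your proof is correct and follows the same approach as the paper: apply Theorem~\ref{thm:DerivativeBoundByType}, observe that affine-linearity kills every $\beta$ with $|\beta|\geq 2$, and identify $\calP'(\alpha,e_1)$ as the singleton $\{((0,\dots,0,e_1),(0,\dots,0,\alpha))\}$. The paper's proof is terser and omits the bookkeeping you spell out; your additional observation that the $\alpha!$ cancels exactly (so the stated bound is loose by that factor, as your direct argument via $D^{\gamma}h=W\,D^{\gamma}g$ confirms) is correct and worth noting.
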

\begin{proof}
    \Cref{thm:DerivativeBoundByType} yields
    \begin{align*}
        C^h_K(\alpha)
        &\leq
        m \alpha! C^f_{g[K]}(e_1)
        \sum_{\eta, \zeta \in \calP'(\alpha,\beta)} 
        \prod_{j=1}^{n} \frac{
            C^g_K(\order(\zeta^{(j)}))^{\vert \eta^{(j)} \vert}
        }{
            \eta^{(j)}!(\zeta^{(j)}!)^{|\eta^{(j)}|}
        },
    \end{align*}
    and since $\calP'(\alpha, e_1) = \{(0, ..., 0, e_1), (0, ..., 0, \alpha) \}$ the result follows.
\end{proof}

\subsection{Step 1 - Concentration of Measure\texorpdfstring{ - Bounding the Right-Hand Side of~\eqref{eq:risk_bound__decomposition}}{}}
\label{s:Proofs__ss:ConcentrationProof}

We are now ready to derive our main concentration of measure results used to derive our risk-bound.  This corresponds to bounding term~\eqref{eq:risk_bound__decomposition} by controlling the integral probability term $d_s(\mu_t,\mu^{(N)})$ in~\eqref{eq:risk_bound__decomposition2}, with high probability, where the randomness is due to the randomness of the empirical measure $\mu^{(N)}$.

{We state the next bound in the case where the input space if $\mathbb{R}^d$.  Note that the results hold for any other input dimension, such as $Md$, simply by relabeling $d\gets Md$.  Thus, it applies to the \textit{finite-dimensional} Markovian lifts $X_{\cdot}^M$ of data-generating processes $X_{\cdot}$, where $M\in \mathbb{N}_+$, by relabeling.
Therefore, for notational minimality, we chose to label the input dimension $d$ and not $dM$.}

\begin{proposition}[Excess Risk-Bound]
\label{prop:Concentation} 
Under Assumption~\ref{ass:Comp_Support} and either~\ref{ass:contractivity} or~\ref{ass:MC}, 
let $f^{\star}:\mathbb{R}^d\to \mathbb{R}^D$, $\ell:\mathbb{R}^{2D}\to \mathbb{R}$, and $R,r>0$ be a such that the composite map $\mathbb{R}^d\ni x\mapsto \ell(f^{\star}(x),f(x))$ belongs to $\mathcal{C}_{R}^s(\mathbb{R}^d)$ for all $f\in C^2_r(\mathbb{R}^d)$.
Then, there exists some $\kappa\in(0,1)$ depending only on the Markov chain $X_{\cdot}$ and some $t_0\in \mathbb{N}_0$ such that for each $t_0\le N < t \le \infty$, each ``rate-to-constant-tradeoff parameter'' $s\in \mathbb{N}_+$, and every ``confidence level'' $\delta \in (0,1]$ the following
\begin{align*}
        \sup_{g\in \mathcal{C}^s_R(\mathbb{R}^d)}
        \,
        \frac{
            \big|
                \mathcal{R}_{\max\{t,N\}}(g)-\mathcal{R}^{(N)}(g)
            \big|
        }{R}
    \lesssim
                \kappa^t\,
            +
                \frac{
                    \sqrt{2\ln(1/\delta)}
                }{
                    N^{1/2}
                }
                +\,
                \begin{cases}
                   \frac{
                       \log\big(
                            c\, N
                        \big)^{d/(2s + 1)}
                   }{
                        c\, N^{1/2}
                   } 
                    & \mbox{ if } 
                    d<2s\\
                   \frac{
                       \log\big(
                            c\, N
                        \big)
                   }{
                        c\, N^{1/2}
                   } 
                    & \mbox{ if } d=2s\\
                   \frac{
                       \log\big(
                            c\, N
                        \big)^{d-2s+(s/d)}
                   }{
                       c_2\,  N^{s/d}
                   } 
                    & \mbox{ if } d=2s\\
                \end{cases}
\end{align*}
holds with probability at least $1-\delta$; where $0<\kappa <1$, and we use the notation $\kappa^{\infty}\eqdef 0$.
\end{proposition}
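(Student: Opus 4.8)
The plan is to recast the supremized generalization gap as a smooth integral probability metric and then control it by a triangle inequality that isolates the \emph{non-stationarity} of the Markov process from the \emph{empirical fluctuation} of its occupation measure. First I would note that for $g\in\mathcal{C}^s_R(\mathbb{R}^d)$ the map $h_g(x)\eqdef\ell(g(x),f^{\star}(x))$ lies in $\mathcal{C}^s_R(\mathbb{R}^d)$ by the hypothesis, so that $\mathcal{R}_{t}(g)-\mathcal{R}^{(N)}(g)=\mathbb{E}_{X\sim\mu_t}[h_g(X)]-\mathbb{E}_{X\sim\mu^{(N)}}[h_g(X)]$ with $\mu^{(N)}\eqdef\frac1N\sum_{n=1}^N\delta_{X_n}$; dividing by $R$, taking the supremum over $g$, and using $\max\{t,N\}=t$ (as $N<t$) yields the pointwise bound
\begin{align*}
    \frac1R\sup_{g\in\mathcal{C}^s_R(\mathbb{R}^d)}\bigl|\mathcal{R}_{\max\{t,N\}}(g)-\mathcal{R}^{(N)}(g)\bigr|\le d_s(\mu_t,\mu^{(N)}).
\end{align*}
Since Assumption~\ref{ass:Comp_Support} confines every $X_n$, $X_t$, and (being a weak limit) the stationary law $\mu_\infty$ to the ball $K\eqdef\{x:\|x\|\le c\}$, Lemma~\ref{lem:res_ext} lets me compute all of the $d_s$-distances below on the compactum $K$, which is what makes the compact-domain concentration estimates usable.

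Next I would split $d_s(\mu_t,\mu^{(N)})\le d_s(\mu_t,\mu_\infty)+d_s(\mu_\infty,\mu^{(N)})$. For the first (``non-stationarity'') term I would use that $\mathcal{C}^s_1(\mathbb{R}^d)$ consists of $1$-Lipschitz functions---the $k=1$ entry of Definition~\ref{defn:CsNorm} bounds each $\|\partial_i g\|_\infty$---so $d_s\le\mathcal{W}_1$, and then apply Assumption~\ref{ass:contractivity} (or, under Assumption~\ref{ass:MC}, Proposition~\ref{prop:LSI_to_Contractivity}) to $\mu_t=P^t\mu_0$ and $\mu_\infty=P^t\mu_\infty$ to get $d_s(\mu_t,\mu_\infty)\le\mathcal{W}_1(P^t\mu_0,P^t\mu_\infty)\le\kappa^t\mathcal{W}_1(\mu_0,\mu_\infty)\le 2c\,\kappa^t$; this produces the $\kappa^t$ term, vanishing as $t\to\infty$ under the convention $\kappa^\infty=0$.

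The remaining term $d_s(\mu_\infty,\mu^{(N)})$ I would handle as expectation plus deviation. The functional $(x_1,\dots,x_N)\mapsto d_s\bigl(\mu_\infty,\frac1N\sum_n\delta_{x_n}\bigr)$ is $\frac1N$-Lipschitz on $K^N$ in the $\ell^1$-product metric, because $d_s(\mu_\infty,\mu^{(N)})-d_s(\mu_\infty,\tilde\mu^{(N)})\le\mathcal{W}_1(\mu^{(N)},\tilde\mu^{(N)})\le\frac1N\sum_n\|x_n-\tilde x_n\|$. Under Assumption~\ref{ass:contractivity} the path law on $K^N$ is the law of a $\mathcal{W}_1$-contracting chain (and under Assumption~\ref{ass:MC} it inherits an $N$-uniform transport inequality by the argument behind~\eqref{eq:term_II__transport_inequalities}), so concentration for Lipschitz functionals of such chains---a Doob-martingale/Azuma estimate, respectively a Bobkov--G\"otze bound---gives, with probability at least $1-\delta$, that $d_s(\mu_\infty,\mu^{(N)})\lesssim\mathbb{E}[d_s(\mu_\infty,\mu^{(N)})]+\sqrt{2\ln(1/\delta)}/N^{1/2}$. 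Finally, fixing a burn-in time $t_0$ dictated by $\kappa$, for $N\ge t_0$ a blocking/regeneration argument reduces $\mathbb{E}[d_s(\mu_\infty,\mu^{(N)})]$ to the i.i.d.\ smooth-Wasserstein mean-rate of~\cite{Kloeckner_2020CounterCurse_ESAIM,Riekert_ConcentrationOfMeasure2022} (see also~\cite{hou2023instance}) up to an absolute multiplicative loss, which is precisely the three-case expression in the statement ($\log(cN)^{d/(2s+1)}/(cN^{1/2})$ for $d<2s$, $\log(cN)/(cN^{1/2})$ for $d=2s$, $\log(cN)^{\,d-2s+s/d}/(c_2N^{s/d})$ for $d>2s$). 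Assembling the three estimates proves the claim.

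The hard part will be this last step: the sampled $X_1,\dots,X_N$ are neither independent nor stationary, so neither McDiarmid's inequality nor Kloeckner's i.i.d.\ mean-rate applies off the shelf. I expect the real work to be to (i) fix the burn-in $t_0$ and fold the residual non-stationarity into the $\kappa^t$ term, (ii) decouple the chain---by blocking/regeneration, or equivalently by exploiting the $N$-uniform transport inequality furnished by Assumptions~\ref{ass:contractivity}/\ref{ass:MC}---so the i.i.d.\ estimates transfer with only constant-factor loss, and (iii) verify that the $\mathcal{W}_1$-contraction descends to $d_s$ and survives restriction to $K$ via Lemma~\ref{lem:res_ext}. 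Everything else is bookkeeping of absolute constants, which the $\lesssim$ absorbs.
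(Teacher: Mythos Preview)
Your proposal follows essentially the same architecture as the paper's proof: reduction of the supremized gap to $R\cdot d_s(\mu_t,\mu^{(N)})$, triangle-inequality split through the stationary measure (the paper writes $\bar\mu$ where you write $\mu_\infty$), the non-stationarity bound via $d_s\le\mathcal{W}_1$ combined with contractivity and $\bar\mu=P^t\bar\mu$, and the expectation-plus-deviation treatment of $d_s(\bar\mu,\mu^{(N)})$.

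The one place you diverge is exactly the step you flag as ``hard'': you propose to build a blocking/regeneration argument to transfer the i.i.d.\ smooth-Wasserstein rates to the Markov setting, and a hands-on Doob--Azuma or Bobkov--G\"otze concentration for the Lipschitz functional. The paper sidesteps this entirely by invoking \cite{Riekert_ConcentrationOfMeasure2022} as a black box: its Theorem~1.5 already gives the deviation inequality $d_s(\bar\mu,\mu^{(N)})\le\mathbb{E}[d_s(\bar\mu,\mu^{(N)})]+\varepsilon$ with probability $1-\exp(-N\varepsilon^2(1-\kappa)^2/(2C^2))$ directly under $\mathcal{W}_1$-contractivity (which yields the $\sqrt{2\ln(1/\delta)}/N^{1/2}$ term after solving for $\varepsilon$), and its Theorem~1.1 already gives the mean rate $\mathbb{E}[d_s(\bar\mu,\mu^{(N)})]$ for exponentially contracting chains once the exponential-moment hypothesis~\eqref{eq:exp_moment_repackaged} verifies their Assumption~2 (which yields the three-case rate function). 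So no decoupling, blocking, or regeneration is needed; what you identify as the hard part is already packaged in the cited reference, and the proof reduces to checking that Assumptions~\ref{ass:contractivity}/\ref{ass:MC} and~\ref{ass:Comp_Support} place you inside the hypotheses of those two theorems.
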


\begin{proof}[{Proof of Proposition~\ref{prop:Concentation}}]
By hypothesis, $\tilde{f}\in C^s_r(\mathbb{R}^d)$ the induced map
\begin{equation}
\label{eq:composite_to_concentrate}
\begin{aligned}
f:  \mathbb{R}^{d}& \to \mathbb{R} \\
x & \mapsto f(x)\eqdef \ell\big(f^{\star}(x),\tilde{f}(x)\big)
\end{aligned}
\end{equation}
belongs to $C_R^s(\mathbb{R}^d)$.

\paragraph{Conversion to a Concentration of Measure Problem.}
Denote the empirical (random) measure associated with the samples $\{(X_n,Y_n)\}_{n=1}^N$ by $\mu^{(N)}\eqdef \frac1{N}\sum_{n=1}^N\, \delta_{(X_n,Y_n)}$.
Note that the generalization bound is $0$ for any constant function; therefore, we consider the bound over $C_R^s(\mathbb{R}^d)\setminus \operatorname{Lip}_0$ where $\operatorname{Lip}_0$ denotes the set of constant functions from $\mathbb{R}^d$ to $\mathbb{R}$.  Note the bijection between $C_R^s(\mathbb{R}^d)\setminus \operatorname{Lip}_0$ and $C_1^s(\mathbb{R}^d)\setminus \operatorname{Lip}_0$ given by $f\mapsto \frac1{\max\{1,\|f\|_{C^s(\mathbb{R}^d)}\}} f$.  Therefore, we compute
\allowdisplaybreaks
\begin{align}
\nonumber
        \big|
            \mathcal{R}_{
            t
            }(f)-\mathcal{R}^{(N)}(g)
        \big|
    \le &
        \sup_{g\in \mathcal{C}^s_R(\mathbb{R}^d)}
        \,
        \big|
            \mathcal{R}_{
            t
            }(f)-\mathcal{R}^{(N)}(g)
        \big|
\\
\nonumber
    \le &
        R\,
        \sup_{g\in \mathcal{C}^s_1(\mathbb{R}^d)}
        \,
        \big|
            \mathcal{R}_{
            t
            }(g)-\mathcal{R}^{(N)}(g)
        \big|
    \\
\nonumber
    \le &
        R\, d_{C^s}(\mu_{\max\{t,N\}},\mu^{(N)})
    \\
    \label{eq:TB_Bounded}
    \le &
        R\, \big(
            \underbrace{
                d_{C^s}(\mu_{
                t
                },\bar{\mu})
            }_{\term{term_II}}
        +
            \underbrace{
                d_{C^s}(\bar{\mu},\mu^{(N)})
            }_{\term{term_I}}
    \big)
.
\end{align}
Next, we bound terms $\text{(I)}$ and $\text{(II)}$.  

\paragraph{Bounding Term~\eqref{term_II}.}
If Assumption~\ref{ass:contractivity} holds then: for every $t \in \mathbb{N}_+$ each $x,\tilde{x}\in \mathcal{X}$ we have
\begin{align*}
        \mathcal{W}_1\big(P^t(x,\cdot),P^t(\tilde{x},\cdot)\big)
    \le 
        \kappa^t\, \mathcal{W}_1(\delta_x,\delta_{\tilde{x}})
    =
    \kappa^t\, \|x-\tilde{x}\|
    .
\end{align*}
If, instead, we operate under the log-Sobolev Assumption~\ref{ass:MC}, then~\cite[Theorem 1.3]{BobkovGotze_1999_JFA} can be applied to $\bar{\mu}$ and $P(x,\cdot)$ for each $x\in \mathcal{X}$, implying that the transport inequalities hold: for each $\nu\in\mathcal{P}(\mathcal{X})$ and each $\tilde{\mu}\in \{\bar{\mu},\mu_0\}\cup \{P(x,\cdot)\}_{x\in \mathcal{X}}$
\begin{equation}
\label{eq:term_II__transport_inequalities_B}
    \mathcal{W}_1(\tilde{\mu},\nu)^2
\le 
    2C^2\, \operatorname{KL}(\nu|\tilde{\mu})
\end{equation}
where we recall the definition of the Kullback–Leibler divergence $\operatorname{KL}(\nu|\mu)\eqdef \mathbb{E}_{X\sim \nu}[\log(\frac{d\nu}{d\mu}(X))]$.  Thus,~\eqref{eq:term_II__transport_inequalities_B} implies that the following exponential contractility property of the Markov kernel: there exists some $\kappa \in (0,1)$ such that for each $x,\tilde{x}\in \mathcal{X}$ and every $t\in \mathbb{N}_+$
\begin{equation}
\label{eq:exp_contract}
    \mathcal{W}_1\big(P^t(x,\cdot),P^t(\tilde{x},\cdot)\big)
    \le 
    \kappa^t\, \|x-\tilde{x}\|
    .
\end{equation}
Furthermore,~\eqref{eq:exp_contract} implies that 
the conditions for~\citep[Theorem 1.5]{Riekert_ConcentrationOfMeasure2022} are met; whence, for every $\varepsilon\ge 0$ and each $N\in \mathbb{N}$ the following holds with probability at-least $
1-\exp\big(\frac{-N\,\varepsilon^2(1-\kappa)^2}{2C^2}\big)
$
\begin{equation}
\label{eq:term_II__concentration}
        \eqref{term_II}
    =
        d_s(\bar{\mu},\mu^{(N)})
    \le 
        \mathbb{E}\big[d_s(\bar{\mu},\mu^{(N)})\big]
        +
        \varepsilon
,
\end{equation}
for some $C>0$.
Upon setting $\varepsilon \eqdef 
\frac{C\sqrt{2\ln(1/\delta)}}{\sqrt{N\,(1-\kappa^2)}}$,~\eqref{eq:term_II__concentration} implies that: for every $N\in \mathbb{N}$ and each $\delta \in (0,1]$ the following holds with probability at-least $1-\delta$
\begin{equation}
\label{eq:term_II__concentration___mod}
        \eqref{term_II}
    =
        d_s(\bar{\mu},\mu^{(N)})
    \le 
        \underbrace{
            \mathbb{E}\big[d_s(\bar{\mu},\mu^{(N)})\big]
        }_{\term{term_III}}
        +
        \frac{C\sqrt{2\ln(1/\delta)}}{\sqrt{N\,(1-\kappa^2)}}
.
\end{equation}
It remains to bound the expectation term~\eqref{term_III} in~\eqref{eq:term_II__concentration___mod} to bound term~\eqref{term_II}.

Under the exponential moment assumption~\ref{ass:Exp_Moment}, we have that
\begin{equation}
\label{eq:exp_moment_repackaged}
\mathbb{E}_{X\sim P(x,\cdot)}[e^{\beta|X|}-1]\le \gamma\, (e^{\beta|x|}-1) + (C-1+\gamma)
.
\end{equation}
Therefore \citep[Proposition 1.3]{Riekert_ConcentrationOfMeasure2022}, implies that $\sup_{t\in \mathbb{N}_0}\, \mathbb{E}[e^{\beta|X_t|}-1]<\infty$.  Whence, \citep[Assumption 2]{Riekert_ConcentrationOfMeasure2022} holds with Young function $\Phi(x)=\frac1{\max\{1,\sup_{t\in \mathbb{N}_+\}}\mathbb{E}[e^{\beta|X_t}-1]}\,(e^{\beta|X_t|}-1)$; namely, $\sup_{t\in \mathbb{N}_0}\, \mathbb{E}[\Phi(|X_t|)]\le 1$.  Consequentially,~\citep[Theorem 1.1]{Riekert_ConcentrationOfMeasure2022} applies from which we conclude that there is some $t_0\in \mathbb{N}_+$ such that for all $N\ge t_0$
\begin{equation}
\label{eq:Bound_on_Expectation}
    \eqref{term_III}
    =
    \mathbb{E}\big[d_s(\bar{\mu},\mu^{(N)})\big]
    \lesssim
    \,
    \log\big(
        (1-\kappa)\, N
    \big)^{s}
    \,
        \begin{cases}
           \frac{
               \log\big(
                    (1-\kappa)\, N
                \big)^{d/(2s) + 1}
           }{
                (1-\kappa)^{1/2}\, N^{1/2}
           } 
            & \mbox{ if } 1=d<2s\\
           \frac{
               \log\big(
                    (1-\kappa)\, N
                \big)
           }{
                (1-\kappa)^{1/2}\,N^{1/2}
           } 
            & \mbox{ if } d=2s\\
           \frac{
               \log\big(
                    (1-\kappa)\, N
                \big)^{d-2s+s/d}
           }{
                (1-\kappa)^{s/d}\, N^{s/d}
           } 
            & \mbox{ if } d=2s\\
        \end{cases}
.
\end{equation}

Combining the order estimate of~\eqref{term_III} in~\eqref{eq:Bound_on_Expectation} with the estimate in~\eqref{eq:term_II__concentration___mod} implies that: for every $N\ge t_0$ and each $\delta \in (0,1]$ we have
\begin{equation}
\label{eq:term_II__concentration___done}
        \eqref{term_II}
    =
        d_s(\bar{\mu},\mu^{(N)})
    \lesssim
        \,
        \frac{
            \sqrt{2\ln(1/\delta)}
        }{
            N^{1/2}
        }
        +\,
            \begin{cases}
               \frac{
                   \log\big(
                        c\, N
                    \big)^{d/(2s) + 1}
               }{
                    c\, N^{1/2}
               } 
                & \mbox{ if } 1=d<2s\\
               \frac{
                   \log\big(
                        c\, N
                    \big)
               }{
                    c\, N^{1/2}
               } 
                & \mbox{ if } d=2s\\
               \frac{
                   \log\big(
                        c\, N
                    \big)^{d-2s+s/d}
               }{
                   c_2\,  N^{s/d}
               } 
                & \mbox{ if } d=2s\\
            \end{cases}
\end{equation}
where $c\eqdef (1-\kappa), c_2\eqdef c^{s/d}\in (0,1)$, and $\lesssim$ suppresses the absolute constant $\max\{1,C\}>0$.

\paragraph{Bounding Term~\eqref{term_I}.}
Next, we bound~\eqref{term_I} by computing
\allowdisplaybreaks
\begin{align}
\nonumber
    \eqref{term_I} = 
        d_{C^s}(\mu_{
        t
        },\bar{\mu})
    \eqdef &
        \sup_{g\in \mathcal{C}^s_1(\mathbb{R}^d)}\, \mu_{
        t
        }[g]-\bar{\mu}[g]
\\
    \le & 
    \label{eq:inclusion_CsLip}
        \sup_{g\in \operatorname{Lip}_1(\mathbb{R}^d)}\, \mu_{
        t
        }[g]-\bar{\mu}[g]
\\
    = & 
    \label{eq:KRDuality}
        \mathcal{W}_1(\mu_{
        t
        },\bar{\mu})
\\
    = & 
        \mathcal{W}_1(P^t\mu_0,\bar{\mu})
\\
    = & 
    \label{eq:stationarity}
        \mathcal{W}_1(P^t\mu_0,P^t\bar{\mu})
\\
    \le & 
    \label{eq:exp_contract2}
        \kappa^t\, 
        \mathcal{W}_1(\mu_0,\bar{\mu})
    \eqdef 
        \kappa^t\,C
\end{align}
where~\eqref{eq:inclusion_CsLip} held by definition of the MMD $d_{C^s}$ and by the inclusion of $\mathcal{C}_1^s(\mathbb{R}^d)\subset \operatorname{Lip}_1(\mathbb{R}^d)$,~\eqref{eq:KRDuality} held by Kantorovich duality (see \citep[Theorem 5.10]{VillaniBook_2009}),~\eqref{eq:stationarity} held since $\bar{\mu}$ is the stationary probability measure for the Markov chain $X_{\cdot}$, it is invariant to the action of the Markov kernel, and~\eqref{eq:exp_contract2} followed from \citep[Corollary 21]{olivera2019density} since we deduced the exponential contractility property~\eqref{eq:exp_contract} of the Markov kernel.  Note that $C\eqdef \mathcal{W}_1(\mu_0,\bar{\mu})$ is a constant depending only on the initial and stationary distributions of the Markov chain.

\paragraph{Conclusion.}
Incorporating the estimates for~\eqref{term_I} and~\eqref{term_II} into the right-hand side of~\eqref{eq:TB_Bounded} implies that: for every $t,N\ge t_0$, $s\in \mathbb{N}_+$, and each $\delta\in (0,1]$ the following holds
\begin{align*}
\resizebox{1\linewidth}{!}{$
\begin{aligned}
        \sup_{g\in \mathcal{C}^s_R(\mathbb{R}^d)}
        \,
        \frac{
            \big|
                \mathcal{R}_{\max\{t,N\}}(g)-\mathcal{R}^{(N)}(g)
            \big|
        }{R}
    \lesssim &
                I_{t<\infty}\,\kappa^t\,
            +
                \frac{
                    \sqrt{2\ln(1/\delta)}
                }{
                    N^{1/2}
                }
                +\,
                \begin{cases}
                   \frac{
                       \log\big(
                            c\, N
                        \big)^{d/(2s + 1)}
                   }{
                        c\, N^{1/2}
                   } 
                    & \mbox{ if } 1=d<2s\\
                   \frac{
                       \log\big(
                            c\, N
                        \big)
                   }{
                        c\, N^{1/2}
                   } 
                    & \mbox{ if } d=2s\\
                   \frac{
                       \log\big(
                            c\, N
                        \big)^{d-2s+(s/d)}
                   }{
                       c_2\,  N^{s/d}
                   } 
                    & \mbox{ if } d=2s\\
                \end{cases}
\end{aligned}
$}
\end{align*}
with probability at-least $1-\delta$; where $c\eqdef (1-\kappa)$ and $\kappa\in (0,1)$; where $I_{t<\infty}k^{\infty}\eqdef 0$ if $t=\infty$.
\end{proof}

\subsection{Step 2 (A) - Bounding the \texorpdfstring{$C^s$}{Cs} Regularity of Transformer Building Blocks}
\label{sec:RegularityTransfomerBuildingBlocks}

We begin by the following simple remark, that if the activation function used to defined the transformer is smooth, then so must the entire transformer model.
\begin{proposition}[Transformers with Smooth Activation Functions are Smooth]
\label{prop:TransformerIsSmooth}
    Fix $\transformerclass$, as in \Cref{defn:Transformer}, then every transformer $\transformer \in \transformerclass$ is smooth.
\end{proposition}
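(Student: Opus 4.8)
The plan is to decompose any transformer $\transformer\in\transformerclass$ into its constituent building blocks, to verify that each building block is $C^\infty$ on all of the relevant Euclidean space, and then to invoke the elementary fact that finite compositions, sums, and products of smooth maps are smooth. Since a transformer (Definition~\ref{defn:Transformer}) is, by construction, a finite composition of transformer blocks with a terminal affine layer and the linear $\vectorize$/$\reshape$ operators, and since each transformer block (Definition~\ref{defn:TransformerBlock}) is assembled from multi-head attention, layer normalizations, a single-layer perceptron, affine maps, and skip connections (i.e.\ vector additions), it suffices to treat these few primitive operations one at a time.

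First I would dispatch the easy cases: the affine maps $x\mapsto Ax+b$, the coordinatewise additions implementing the skip connections, and the linear $\vectorize$/$\reshape$ operators are polynomial in their arguments, hence smooth. Next, multi-head attention (Definition~\ref{defn:MultiHead}): each entry of $\attention(x;Q,K,V)$ is a finite sum of products of the form $\softmax(\cdot)_j\,(Vx_j)$, where the argument of the softmax depends on $x$ bilinearly through $Qx_m$ and $Kx_i$ and is therefore smooth. The softmax map itself is smooth because its $i$-th coordinate is $\exp(x_i)/\sum_j\exp(x_j)$, numerator and denominator are smooth, and the denominator is a finite sum of strictly positive terms so it never vanishes; as $t\mapsto 1/t$ is smooth on $(0,\infty)$, it follows that $\softmax\in C^\infty$. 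Hence $\attention$, and so $\multihead$, is a finite sum of products and compositions of smooth maps, thus smooth.

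Next I would treat the weighted layer normalization of Section~\ref{sec:TransformerDetails}. The weighted mean $\mu_u^w=\sum_i\frac{w}{F}u_i$ is linear in $u$ and the weighted variance $(\sigma_u^w)^2=\sum_i\frac{w}{F}\|u_i-\mu_u^w\|^2$ is a quadratic polynomial in $u$, so $1+(\sigma_u^w)^2\geq 1$ everywhere; since $t\mapsto 1/\sqrt{t}$ is smooth on $[1,\infty)\subset(0,\infty)$, the map $u\mapsto (u-\mu_u^w)/\sqrt{1+(\sigma_u^w)^2}$ is smooth, and composing with the affine map $v\mapsto \gamma v+\beta$ shows $\layernorm(\cdot;\gamma,\beta,w)$ is smooth. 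Finally, the single-layer perceptron $x\mapsto B^{(1)}x+B^{(2)}\big(\sigma\bullet(Ax+a)\big)$ is smooth since $\sigma\in C^\infty(\mathbb{R})$ applied coordinatewise is smooth and it is pre- and post-composed with affine maps.

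Assembling these observations, each transformer block is a finite composition and sum of the smooth maps above, hence smooth, and a transformer is a finite composition of transformer blocks with a terminal affine layer and the linear $\vectorize$ operator, hence smooth. I expect the only genuinely non-routine point — everything else is bookkeeping about closure of $C^\infty$ under composition — to be checking that the two denominators arising inside a block are bounded away from zero: this is where one must use positivity of the exponentials in the softmax normalization and, crucially, the explicit ``$1+$'' term inside the square root in the definition of $\layernorm$, which is exactly what precludes division by zero and keeps $u\mapsto 1/\sqrt{1+(\sigma_u^w)^2}$ smooth.
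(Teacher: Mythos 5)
Your proposal is correct and follows essentially the same route as the paper's proof: decompose the transformer into softmax/attention, layer norm, perceptron, and affine pieces, verify each is smooth (with the softmax normalizer and the ``$1+$'' term in the layer-norm denominator being the only points requiring care), and conclude by closure of $C^\infty$ under composition. Your writeup is simply a more detailed version of the paper's one-paragraph argument.
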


\begin{proof}[{Theorem~\ref{prop:TransformerIsSmooth}}]
    The smoothness of $\attention$ follows directly from the smoothness of $\softmax$, which immediately implies smoothness of $\multihead$ since the operators used for its definition are smooth. Furthermore, the $\layernorm$ is smooth due to its smooth and lower-bounded denominator and the activation function $\sigma$ is smooth by definition, therefore we conclude that $\tblock \in \tblockclass$ is smooth for every $\tblockclass$ as in \Cref{defn:TransformerBlock} and we obtain smoothness of $\transformer \in \transformerclass$ as a consequence.
\end{proof}

\subsubsection{The Softmax Function}
\begin{lemma}[Representation of higher-order $\softmax$ derivatives]\label{lem:SoftmaxDerivativeRepresentation}
For $F \in \N$ and
\begin{align*}
    \smax:
            \R^F 
        \to     
            \R^F,
    \quad 
            x 
        \mapsto 
            \left(
                    {\exp(x_i)}
                /
                    {\textstyle\sum_{j = 0}^{F-1} \exp(x_j)}
            \right)_{i=1}^{F}.
\end{align*}
there exists for any multi-index $\alpha \in \N^F$ and $m \in \myset{1}{F}$ indicators $(a_{i, j}^k)_{i, j \in I(\alpha)}^{k \in \myset{1}{\vert \alpha \vert!}} \subseteq \{0, 1\}$ such that 
\begin{align}
    \smax^{(\alpha)} (x_m) = \sum_{k = 1}^{\vert \alpha \vert!} \smax(x_{m}) \prod_{i,j \in I(\alpha)} (a^k_{i,j}  - \smax(x_{j})), \label{id:SoftmaxDerivativeShape}
\end{align}
where $I(\alpha)\eqdef \{(i, j): i = 1, \ldots, F, j = 1, \ldots, \alpha_i\}$.
\end{lemma}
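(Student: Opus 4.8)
The plan is to reduce the whole statement to the single first-order identity
$\partial_{x_\ell}\smax(x)_i = \smax(x)_i\,\bigl(\delta_{i\ell}-\smax(x)_\ell\bigr)$
and then to iterate it, controlling the combinatorial shape of the iterated derivative by induction on $n\eqdef|\alpha|$. Here $\smax^{(\alpha)}(x_m)$ is read as $D^\alpha$ applied to the scalar function $x\mapsto\smax(x)_m$; abbreviate $p_i\eqdef\smax(x)_i=\exp(x_i)/\sum_j\exp(x_j)$. First I would prove the first-order identity by the quotient rule: with $S\eqdef\sum_j\exp(x_j)$ one gets $\partial_{x_\ell}p_i=(\delta_{i\ell}\exp(x_i)\,S-\exp(x_i)\exp(x_\ell))/S^2=\delta_{i\ell}p_i-p_ip_\ell=p_i(\delta_{i\ell}-p_\ell)$; this is already of the asserted form, with one factor $(a-p_\ell)$ and constant $a=\delta_{i\ell}\in\{0,1\}$.

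Then I would run the induction on $n$. The inductive hypothesis is the displayed formula read as $D^\alpha p_m=\sum_{k=1}^{n!}p_m\prod_{r=1}^{n}(a^k_r-p_{j^k_r})$ with $a^k_r\in\{0,1\}$, together with the structural bookkeeping that the $n$ factors of each term are in bijection with $I(\alpha)$ in such a way that a factor indexed by $(i,j)$ carries the argument $p_i$ (so there are exactly $\alpha_i$ factors with argument $p_i$). The base case $n=0$ is $D^0p_m=p_m$ (empty product, $0!=1$ term), and $n=1$ is the first-order identity. By Clairaut's theorem the representation does not depend on the order in which one builds up $\alpha$, so it suffices to pass from $\alpha$ to $\alpha+e_\ell$.

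For the inductive step I would differentiate $D^\alpha p_m$ in $x_\ell$ term by term, applying Leibniz to each $p_m\prod_{r=1}^n(a^k_r-p_{j_r})$ (so $n+1$ summands per term). Hitting the leading $p_m$ yields $p_m(\delta_{m\ell}-p_\ell)\prod_r(a^k_r-p_{j_r})$, still of the prescribed form with $n+1$ factors, the new factor carrying $p_\ell$ with constant $\delta_{m\ell}\in\{0,1\}$. Hitting the $r$-th factor yields $p_m\,\partial_{x_\ell}(a^k_r-p_{j_r})\prod_{r'\ne r}(\cdots)=p_m\,(-p_{j_r})(\delta_{j_r\ell}-p_\ell)\prod_{r'\ne r}(\cdots)$; the one point needing care — which I expect to be the only delicate bookkeeping in the whole proof — is to absorb the sign by writing $-p_{j_r}=(0-p_{j_r})$, so that this term is again $p_m$ times a product of $n+1$ factors each of the form $(a-p_\bullet)$ with $a\in\{0,1\}$ (here $0$ and $\delta_{j_r\ell}$), the old argument $p_{j_r}$ being retained and the freshly created factor carrying $p_\ell$. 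Summing, each of the $n!$ terms produces $1+n$ terms, giving $n!\,(n+1)=(n+1)!$ in total, which matches $|\alpha+e_\ell|!$; and because every differentiation in $x_\ell$ creates exactly one new factor, that factor carries $p_\ell$, and no existing factor changes its argument, the bijection with $I(\alpha+e_\ell)$ is preserved. Reindexing the resulting $\{0,1\}$-valued constants as $(a^k_{i,j})$ over $I(\alpha)$ finishes the step.

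I do not anticipate a genuine obstacle: the claim is a structural identity and the above is a routine induction whose only two nontrivial points are (i) the exact term count $n!\mapsto(n+1)!$ and (ii) keeping every factor in the form $(a-p_\bullet)$ with $a\in\{0,1\}$ after each differentiation; both are handled by the observations above. Padding with zero-coefficient terms would be available but is unnecessary, since the count coming out of the induction is exactly $|\alpha|!$.
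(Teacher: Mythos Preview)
Your proposal is correct and follows essentially the same approach as the paper: both argue by induction on $|\alpha|$, starting from the first-order identity $\partial_{x_\ell}p_i=p_i(\delta_{i\ell}-p_\ell)$, applying the Leibniz rule term by term, absorbing the sign via $-p_{j_r}=(0-p_{j_r})$, and counting $n!\,(n+1)=(n+1)!$ terms. If anything, your version is slightly more explicit about the bijection with $I(\alpha)$ (in particular, that the factor indexed by $(i,j)$ carries the argument $p_i$), which the paper leaves implicit.
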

\begin{proof}
    For $\vert \alpha \vert = 0$, we have $n \in \myset{1}{F}$ s.t. $\alpha_n = 1$, therfore 
    \begin{align*}
        \smax^{(\alpha)} (x_m) = \frac{\partial \smax}{\partial x_{n}} (x_{m}) = \smax(x_{m})\left( \delta_{m n} - \smax(x_{n})\right),
    \end{align*}
    which is of the form \eqref{id:SoftmaxDerivativeShape}. Now, let $\alpha \in \N^F$ arbitrary, therefore, by defining $\alpha' \in \N^F$ by $\alpha'_i \eqdef \alpha_i$ for $i \neq n$ and $\alpha'_n \eqdef \alpha_n - 1$ (w.l.o.g. $\alpha_n > 0$). We have 
    \begin{align*}
        \smax^{(\alpha)} (x_m) 
        & = \frac{\partial \smax^{(\alpha')}}{\partial x_{n}} (x_{m}) \\
        & = \frac{\partial}{\partial x_{n}} \sum_{k = 1}^{\vert \alpha' \vert!} \smax(x_{m}) \prod_{i,j \in I(\alpha')}(a^{\prime k}_{i,j}  - \smax(x_{j})).
    \end{align*}
    Since for any $k$
    \begin{align*}
        & \frac{\partial}{\partial x_{n}} \smax(x_{m}) \prod_{i,j \in I(\alpha')}(a^{\prime k}_{i,j}  - \smax(x_{j})) \\
        & = \smax(x_{m})\left( \delta_{m n} - \smax(x_{n})\right)  \prod_{i,j \in I(\alpha')} (a^{\prime k}_{i,j}  - \smax(x_{j})) \\ 
        & \quad +  \smax(x_{m}) \sum_{i',j' \in I(\alpha')}  
         - \smax(x_{j'})\left(\delta_{j',n} - \smax(x_{n})\right) 
        \prod_{\substack{i,j \in I(\alpha) \\ (i,j) \neq (i',j')}}(a^{\prime k}_{i,j}  - \smax(x_{j})),
    \end{align*}
    we can define $(a_{i, j}^k)_{i, j \in I(\alpha)}^{k \in \myset{1}{\vert \alpha' \vert + 1}} \subseteq \{0, 1\}$ such that
    \begin{align*}
        \frac{\partial}{\partial x_{n}} \smax(x_{m}) \prod_{i,j \in I(\alpha')}(a^{\prime k}_{i,j}  - \smax(x_{j}))
        = \sum_{k=1}^{\vert \alpha \vert}\smax(x_{m}) \prod_{i,j \in I(\alpha)} (a^{k}_{i,j}  - \smax(x_{j})). 
    \end{align*}
    Since $\vert \alpha \vert ! = \vert \alpha \vert \cdot \vert \alpha' \vert !$, this concludes the proof.
\end{proof}

\begin{lemma}[Bound of higher-order $\softmax$ derivatives]
    With \Cref{not:DerivativeBounds}, it holds for any set $K \in \R^k, k\in \N$ and any $\alpha \in \Order^k_{< \infty}$ that
    \begin{align*}
        C^{\smax}(\alpha) \leq \vert \alpha \vert!.
    \end{align*}
\end{lemma}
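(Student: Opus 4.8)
The plan is to read off the bound directly from the closed-form representation of the softmax derivatives established in \Cref{lem:SoftmaxDerivativeRepresentation}, using only the elementary fact that every coordinate of the softmax takes values in $(0,1)$.

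First I would recall that for each multi-index $\gamma$ with $\gamma \eqorder \alpha$ we have $|\gamma| = |\alpha|$, and \Cref{lem:SoftmaxDerivativeRepresentation} gives, for every $m \in \myset{1}{F}$, indicators $(a_{i,j}^k)_{i,j\in I(\gamma)}^{k\in\myset{1}{|\gamma|!}}\subseteq\{0,1\}$ with
\begin{align*}
    \smax^{(\gamma)}(x_m)
    =
    \sum_{k=1}^{|\gamma|!}\,
        \smax(x_m)
        \prod_{i,j\in I(\gamma)}\bigl(a_{i,j}^k - \smax(x_j)\bigr).
\end{align*}
Next I would observe that, by definition of $\smax$, every entry $\smax(x_j)$ is strictly positive and the entries sum to $1$, hence $\smax(x_j)\in(0,1)$ for all $j$; in particular $\smax(x_m)\leq 1$ and, since $a_{i,j}^k\in\{0,1\}$, each factor satisfies $\bigl|a_{i,j}^k - \smax(x_j)\bigr|\leq 1$. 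Therefore each of the $|\gamma|!$ summands has modulus at most $1$, and the triangle inequality yields $\bigl|\smax^{(\gamma)}(x_m)\bigr|\leq |\gamma|! = |\alpha|!$ uniformly over $x$ (so in particular on $K$), over the coordinate $m$, and over all $\gamma \eqorder \alpha$.

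Finally I would conclude by unwinding \Cref{not:DerivativeBounds}: $C^{\smax}(\alpha) = \max_{m}\max_{\gamma\eqorder\alpha}\Vert D^\gamma \smax_m\Vert_K \leq |\alpha|!$. There is no real obstacle here — the combinatorial heavy lifting was already done in \Cref{lem:SoftmaxDerivativeRepresentation}; the only point requiring (a one-line) verification is that the range of each softmax coordinate lies in $(0,1)$, which makes every factor in the product bounded by $1$ and thereby collapses the estimate to simply counting the $|\alpha|!$ terms.
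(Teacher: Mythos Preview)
Your proposal is correct and is exactly the approach taken in the paper: invoke the representation from \Cref{lem:SoftmaxDerivativeRepresentation}, use that each softmax coordinate lies in $(0,1)$ so every factor in the product is bounded by $1$, and then count the $|\alpha|!$ summands. The paper's own proof is simply the one-line version of what you wrote.
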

\begin{proof}
    This is a direct consequence of the representation in \Cref{lem:SoftmaxDerivativeRepresentation} together with $\Vert \smax \Vert = 1$.
\end{proof}

\subsubsection{The Multi-Head Self-Attention Mechanism}

\begin{lemma}[Bound of Dot product]\label{lem:Dotp}
    In the notation of \Cref{defn:MultiHead} and for $m \in \myset{1}{M}$
    \begin{align*}
        \dotp_m(\,\cdot\,; Q, K): 
                \R^{M\idim} 
            \longrightarrow 
                \R^{M}
            , \quad
                x
            \longmapsto
                \langle
                    Qx_m, Kx_j
                \rangle_{j = 0}^{M} 
    \end{align*}
    we have using \Cref{not:DerivativeBounds}
    \begin{enumerate}
    \item $ C^{\dotp_m}_K(e_1) \leq  2 \idim \kdim \Vert K \Vert C^{Q}C^{K}$, where $C^Q \eqdef \max_{i, i' \in \myset{1}{\kdim}\times \myset{1}{\idim}} \vert Q_{i, i'}\vert$, $C^K$ analogously, and $\Vert K \Vert \eqdef \max_{x \in K} \Vert x \Vert$. Additionally,
    \item  $ C^{\dotp_m}_K( \alpha ) \leq  2 \kdim  C^{Q}C^{K}$, for $\vert \alpha \vert = 2$, and 
    \item $C^{\dotp_m}_K(\alpha) = 0$ for $\vert \alpha \vert > 2$.
    \end{enumerate}
    Since all bounds are not dependent on $m$ we write $C^{\dotp}$ short for $C^{\dotp_m}$.
\end{lemma}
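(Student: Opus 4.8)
The plan is to exploit that every component of $\dotp_m$ is a bilinear — hence degree-at-most-$2$ — polynomial in the entries of $x$, so that all partial derivatives of order $>2$ vanish identically, while the first and second derivatives can be written out explicitly and estimated entrywise.

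First I would record the explicit form. Writing $x=(x_{p,q})_{p\le M,\,q\le\idim}$ with rows $x_p\in\R^{\idim}$, one has
\begin{align*}
    \dotp_m(x)_j
    &=\langle Qx_m,Kx_j\rangle
    =\sum_{l=1}^{\kdim}(Qx_m)_l(Kx_j)_l \\
    &=\sum_{l=1}^{\kdim}\Big(\sum_{p=1}^{\idim}Q_{l,p}x_{m,p}\Big)\Big(\sum_{q=1}^{\idim}K_{l,q}x_{j,q}\Big),
\end{align*}
which displays $\dotp_m(x)_j$ as a polynomial of degree $\le 2$ in the coordinates of $x$. Since $C^{\dotp_m}_K(\alpha)=\max_j\max_{\gamma\sim\alpha}\Vert D^\gamma(\dotp_m)_j\Vert_K$ by \Cref{not:DerivativeBounds}, and any derivative of order $>2$ annihilates a degree-$\le 2$ polynomial, item~(3) is immediate.

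Next I would differentiate once. Using $\partial(Qx_m)_l/\partial x_{a,b}=Q_{l,b}\,\mathbbm{1}_{a=m}$ and $\partial(Kx_j)_l/\partial x_{a,b}=K_{l,b}\,\mathbbm{1}_{a=j}$, the product rule yields
\begin{align*}
    \frac{\partial}{\partial x_{a,b}}\dotp_m(x)_j
    =\mathbbm{1}_{a=m}\sum_{l=1}^{\kdim}Q_{l,b}(Kx_j)_l
    +\mathbbm{1}_{a=j}\sum_{l=1}^{\kdim}(Qx_m)_lK_{l,b}.
\end{align*}
On the domain $K$ one has $\vert x_{j,q}\vert\le\Vert x\Vert\le\Vert K\Vert$, so $\vert(Kx_j)_l\vert\le\idim\,C^K\Vert K\Vert$ and likewise $\vert(Qx_m)_l\vert\le\idim\,C^Q\Vert K\Vert$; summing over the $\kdim$ values of $l$ shows each of the two inner sums is at most $\idim\kdim\Vert K\Vert C^QC^K$. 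Both indicator terms are present simultaneously only when $a=m=j$, so the total bound is $2\idim\kdim\Vert K\Vert C^QC^K$, which is item~(1). Differentiating once more removes the $x$-dependence:
\begin{align*}
    \frac{\partial^2}{\partial x_{a,b}\,\partial x_{a',b'}}\dotp_m(x)_j
    =\mathbbm{1}_{a=m}\mathbbm{1}_{a'=j}\sum_{l=1}^{\kdim}Q_{l,b}K_{l,b'}
    +\mathbbm{1}_{a=j}\mathbbm{1}_{a'=m}\sum_{l=1}^{\kdim}Q_{l,b'}K_{l,b},
\end{align*}
and $\big|\sum_lQ_{l,b}K_{l,b'}\big|\le\kdim\,C^QC^K$; the two products of indicators are simultaneously nonzero only when $a=a'=m=j$, giving the bound $2\kdim C^QC^K$ of item~(2). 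As none of the three estimates involves $m$, writing $C^{\dotp}$ for $C^{\dotp_m}$ is legitimate.

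I do not expect a genuine obstacle: the argument is elementary calculus together with the triangle inequality. The only points needing care are the bookkeeping of the indicator functions so as to extract the factor $2$ in the diagonal case $m=j$, and keeping the overloaded symbol $K$ — the key matrix versus the compact domain $K\subseteq\R^{M\idim}$ — straight throughout.
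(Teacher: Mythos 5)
Your proof is correct and follows essentially the same route as the paper's: explicit differentiation of the bilinear form $\langle Qx_m,Kx_j\rangle$, a case analysis on whether the differentiated coordinate lies in row $m$ or row $j$ (which is where the factor $2$ arises in the diagonal case $m=j$), and entrywise bounds giving $\idim\kdim\Vert K\Vert C^QC^K$ per term for the first derivative and $\kdim C^QC^K$ per term for the second. Your indicator-function bookkeeping and the observation that each component is a degree-$\le 2$ polynomial (which dispatches item (3) immediately, a point the paper leaves implicit) are merely a tidier packaging of the same computation.
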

\begin{proof}%
    \begin{enumerate}[wide]%
    \item  Let $l = (l_1, l_2) \in \myset{1}{M} \times \myset{1}{\idim}$. 
    Assume $l_1 = m$. If $j \neq m$, then
        \begin{align*}
                D^{e_l}\dotp_m(x; Q, K)_j 
            =   
                D^{e_l}
                \sum_{i=1}^{\kdim}
                 (Kx_j)_i
                \sum_{i'=1}^{\idim}
                    Q_{i, i'}(x_m)_{i'}
            =   
                \sum_{i=1}^{\kdim}
                \bigg(\sum_{i'=1}^{\idim}
                    K_{i, i'}(x_j)_{i'}
                    \bigg)
                Q_{i, l_2},
        \end{align*}
    implying
    \begin{align}\label{id:DotPSimple}
            \Vert D^{e_l}\dotp_m(x; Q, K) \Vert 
        \leq 
            \Vert K \Vert
            \sum_{i=1}^{\kdim}
            Q_{i, l_2}
            \sum_{i'=1}^{\idim}
                K_{i, i'}
        \leq 
            \idim \kdim
            \Vert K \Vert
            C^{Q}C^{K}
        .
    \end{align}
    If $j = m$, 
        \begin{align*}
                D^{e_l}\dotp_m(x; Q, K)_j 
            &=   
                D^{e_l}
                \sum_{i=1}^{\kdim}
                \bigg(\sum_{i'=1}^{\idim}
                    K_{i, i'}(x_m)_{i'}
                    \bigg)
                \bigg(\sum_{i'=1}^{\idim}
                    Q_{i, i'}(x_m)_{i'}
                    \bigg) \\ 
            &=   
                \sum_{i=1}^{\kdim}
                    \bigg(
                    K_{i, l_2}
                \sum_{i'=1}^{\idim}
                    Q_{i, i'}(x_m)_{i'}
                +
                    Q_{i, l_2}
                \sum_{i'=1}^{\idim}
                    K_{i, i'}(x_m)_{i'}
                    \bigg)
        \end{align*}
   therefore implying     
    \begin{align*}
            \Vert D^{e_l}\dotp_m(x; Q, K) \Vert 
        \leq 
            2 \idim \kdim
            \Vert K \Vert
            C^{Q}C^{K}
        .
    \end{align*}
    If $l_1 \neq m$ then for $j \neq l_1$, $D^{e_l}\dotp_m(x; Q, K)_j = 0$, for $j = l_1$
    \begin{align*}
                D^{e_l}\dotp_m(x; Q, K)_j 
            =   
                D^{e_l}
                \sum_{i=1}^{\kdim}
                (Qx_m)_i
                \sum_{i'=1}^{\idim}
                K_{i, i'}(x_j)_{i'}
            =   
                \sum_{i=1}^{\kdim}
                \bigg(\sum_{i'=1}^{\idim}
                    Q_{i, i'}(x_m)_{i'}
                    \bigg)
                K_{i, l_2},
        \end{align*}
        and we obtain \eqref{id:DotPSimple} analogously.
    \item If $l_1 = m$ and $j \neq m$
    \begin{align*}
        D^{e_l}\bigg(
            (x_m)_{l_2}
                \sum_{i=1}^{\kdim}
                \bigg(\sum_{i'=1}^{\idim}
                    k_{i, i'}(x_j)_{i'}
                    \bigg)
                q_{i, l_2}
            \bigg) = 0,
    \end{align*}
    implying $\Vert D^{2e_l}\dotp_m(x; Q, K) \Vert \leq 0$, what analogously holds for $l_1 \neq m$. However, for $l_1 = m$ and  $j = m$
    \begin{align*}
        D^{e_l}\bigg(
                \sum_{i=1}^{\kdim}
                    K_{i, l_2}
                \sum_{i'=1}^{\idim}
                    Q_{i, i'}(x_m)_{i'}
                +
                    Q_{i, l_2}
                \sum_{i'=1}^{\idim}
                    K_{i, i'}(x_m)_{i'} \bigg)
         = 
                \sum_{i=1}^{\kdim}
                    K_{i, l_2}
                    Q_{i, l_2}
                +
                    Q_{i, l_2}
                    K_{i, l_2}
    \end{align*}
    we have
    \begin{align*}
            \Vert D^{2e_l}\dotp_m(x; Q, K) \Vert 
        \leq 
            2 
            \kdim 
            C^{Q}C^{K}
        .
    \end{align*}
    \item  Let $l' = (l'_1, l'_2) \in \myset{1}{M} \times \myset{1}{\idim}$. Assume $l_1 = m$, $j \neq m$. If $l'_1 \neq j$, $D^{e_l + e_{l'}}\dotp_m(x; Q, K)_j = 0$. For $l'_1 = j$ follows $D^{e_l + e_{l'}}\dotp_m(x; Q, K)_j =  \sum_{i = 1}^{\kdim} K_{i, l'_2}Q_{i, l_2}$. If $l_1 = m$, $j \neq m$, we have  $D^{e_l + e_{l'}}\dotp_m(x; Q, K)_j = 0$ in the case that $l'_1 \neq m$, and for $l'_1 \neq m$ we obtain
    \begin{align*}
        D^{e_l + e_{l'}}\dotp_m(x; Q, K)_j = 
                \sum_{i=1}^{\kdim}
                    K_{i, l_2}
                    Q_{i, l'_2}
                +
                    Q_{i, l_2}
                    K_{i, l'_2}.
    \end{align*}
    This means, we can use the bound 
    \begin{align*}
            \Vert D^{e_l + e_{l'}}\dotp_m(x; Q, K) \Vert 
        \leq 
            2 
            \kdim 
            C^{Q}C^{K}
        .
    \end{align*}
\end{enumerate}
\end{proof}
\begin{lemma}[Bound of Self-Attention for Derivative Type]\label{lem:Attention}
    Using the notation of  \Cref{not:DerivativeBounds}, \Cref{defn:MultiHead} and \Cref{lem:Dotp}, it holds that
   \begin{align*}
    C_K^{
        \attention
    }(\alpha)
    \leq    
    \idim
    M
    C^V
    \left(
        \Vert K \Vert
        C_K^{
            \smax \circ \dotp
        }(\alpha)
    +
        \sum_{l=1}^{M\idim}
        \alpha_l
        C_K^{
            \smax \circ \dotp
        }(\alpha - e_l)
    \right)
\end{align*} 
where 
\begin{align}
C^{\smax \circ \dotp}_K(\alpha)
    &\leq
    \alpha! \sum_{\beta \in \Order^{M}_{\leq n}}  N(\beta) C^{\smax}_{\dotp[K]}(\beta)
    \sum_{\eta, \zeta \in \calP'(\alpha,\beta)} 
    \prod_{j=1}^{n} \frac{
        C^{\dotp}_K(\order(\zeta^{(j)}))^{\vert \eta^{(j)} \vert}
    }{
        \eta^{(j)}!(\zeta^{(j)}!)^{|\eta^{(j)}|}
    }. \label{id:SmaxDpCompositeBound}
\end{align}
\end{lemma}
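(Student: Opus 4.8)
The plan is to reduce the estimate to two ingredients: a product-rule computation that detaches the affine value map $Vx_j$ from the softmax factor, and a direct application of the composite-function bound of \Cref{thm:DerivativeBoundByType} to the map $\smax\circ\dotp_m$.

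First I would write the $i$-th coordinate of the $m$-th block of $\attention$ as a finite sum, $[\attention(x;Q,K,V)]_{m,i}=\sum_{j}(\smax\circ\dotp_m)(x)_j\,(Vx_j)_i$, where $(Vx_j)_i=\sum_{i'=1}^{\idim}V_{i,i'}(x_j)_{i'}$ is linear in $x$. Applying the multivariate Leibniz rule to each summand and using that $D^{\gamma}(Vx_j)_i=0$ for every multi-index with $|\gamma|\ge 2$, only the terms $\gamma=0$ and $\gamma=e_l$ survive, so
\begin{align*}
    D^\alpha\big[(\smax\circ\dotp_m)(x)_j\,(Vx_j)_i\big]
    &= D^\alpha(\smax\circ\dotp_m)(x)_j\,(Vx_j)_i \\
    &\quad+\sum_{l=1}^{M\idim}\alpha_l\,D^{\alpha-e_l}(\smax\circ\dotp_m)(x)_j\;D^{e_l}(Vx_j)_i .
\end{align*}
I would then bound, uniformly on $K$, $|(Vx_j)_i|\le \idim\,C^V\,\Vert K\Vert$ (using $\Vert x_j\Vert\le\Vert K\Vert$) and $|D^{e_l}(Vx_j)_i|\le C^V$, noting that for a fixed $l=(l_1,l_2)$ the derivative $D^{e_l}(Vx_j)_i$ vanishes unless $j=l_1$, so that the inner sum over positions collapses to a single term. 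Replacing $|D^{\gamma}(\smax\circ\dotp_m)_j|$ by $C^{\smax\circ\dotp}_K(\gamma)$, summing over the finitely many positions $j$, and crudely majorising the coefficient of the second sum by $\idim M C^V$ gives
$C^{\attention}_K(\alpha)\le \idim M C^V\big(\Vert K\Vert\,C^{\smax\circ\dotp}_K(\alpha)+\sum_{l=1}^{M\idim}\alpha_l\,C^{\smax\circ\dotp}_K(\alpha-e_l)\big)$; a short check shows the right-hand side is invariant under permutations of $\alpha$, so it legitimately bounds the maximum over all derivative types $\gamma\eqorder\alpha$ in the definition of $C^{\attention}_K(\alpha)$.

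For the second displayed inequality I would simply invoke \Cref{thm:DerivativeBoundByType} with inner map $g=\dotp_m:\R^{M\idim}\to\R^{M}$ and outer map $f=\smax:\R^{M}\to\R^{M}$, so that $h=\smax\circ\dotp_m$, the number of inner coordinates equals $M$, and the input dimension equals $M\idim$. This yields verbatim the stated bound \eqref{id:SmaxDpCompositeBound} with $C^f=C^{\smax}$ and $C^g=C^{\dotp}$, both of which are already controlled by \Cref{lem:Dotp} and by the bound on the higher-order $\smax$ derivatives established above.

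The main obstacle will be the bookkeeping in the Leibniz step: tracking which position $j$ each differentiation acts on, confirming that the linear factor's first derivative is a single matrix entry (so the $j$-sum reduces to one term and no extra factor of $M$ sneaks in there), and verifying that the resulting $\alpha$-dependent expression is symmetric under reordering of the multi-index so that it really bounds $C^{\attention}_K(\alpha)$. The remaining estimates are routine triangle-inequality bounds using $\Vert x_j\Vert\le\Vert K\Vert$ on the compact set $K$.
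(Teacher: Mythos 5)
Your proposal is correct and follows essentially the same route as the paper's proof: both decompose each coordinate of $\attention$ into a sum of products of $(\smax\circ\dotp_m)_j$ with the linear value term, apply the Leibniz rule exploiting that the value factor is affine (so only the zeroth and first derivatives survive), bound the surviving factors by $C^V\Vert K\Vert$ and $C^V$ on the compact set, and obtain \eqref{id:SmaxDpCompositeBound} by invoking \Cref{thm:DerivativeBoundByType} for the composite $\smax\circ\dotp$. Your bookkeeping (the vanishing of $D^{e_l}(Vx_j)_i$ unless $j=l_1$, and the permutation-invariance check) is if anything slightly more careful than the paper's.
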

\begin{proof}
    Fix $\alpha \in \N^k$, and note that
    \begin{align*}
        \Vert
            D^{\alpha}
            \attention(
                x; Q, K, V 
            ) 
        \Vert
        \leq
        \max_{m\in \myset{1}{M}}
        \max_{i\in \myset{0}{\vdim}}
        \Vert
            D^{\alpha}
            \attention(
                x; Q, K, V 
            )_{m,i}
        \Vert
    \end{align*}
    and
    \begin{align*}
         \Vert
        D^{\alpha}
        \attention(&
            x; Q, K, V 
        )_{m,i}
        \Vert \\
        &\leq
        \sum_{j=1}^M
        \sum_{i'=0}^{\idim}
        \Vert
            D^{\alpha}
            \smax \circ \dotp(x; Q, K)_j
            V_{i, i'}(x_j)_{i'}
        \Vert \\
        &\leq
        \idim M
        \max_{j\in \myset{1}{M}}
        \max_{i'\in \myset{0}{\idim}}
        \Vert
            D^{\alpha}
            \smax \circ \dotp(x; Q, K)_j
            V_{i, i'}(x_j)_{i'}
        \Vert.
    \end{align*}
    Due to the extended Leibnitz rule \cite{Hardy06}, we have
    \begin{multline*}
        \Vert
            D^{\alpha}
            \smax \circ \dotp(x; Q, K)_j
            V_{i, i'}(x_j)_{i'}
        \Vert
        \\ \leq
        \Vert
            D^{\alpha}
            \smax \circ \dotp(x; Q, K)_j
            V_{i, i'}(x_j)_{i'}
        \Vert
        +
        \sum_{l=1}^{M\idim}
        V_{i, i'}
        \alpha_l
        \Vert
            D^{\alpha - e_l}
            \smax \circ \dotp(x; Q, K)_j
        \Vert.
    \end{multline*}
    \Cref{id:SmaxDpCompositeBound} follows directly from \Cref{thm:DerivativeBoundByType}.
\end{proof}

\begin{corollary}[Bound of Self-Attention for Derivative Level]\label{cor:Attention}
    Using the setting of \Cref{lem:Attention}, for $n \in \N$,
   \begin{align}
    C_K^{
        \attention
    }(n)
    \leq    
    \idim
    M
    C^V
    C_K^{
        \smax \circ \dotp
    }(\cleq n)
    \left(
        \Vert K \Vert
    +
        n \idim M
    \right)
    \label{id:AttenstionBoundByLevel}
\end{align} 
where 
\begin{align}
C^{\smax \circ \dotp}_K(\cleq n)
    &\leq
        C^{\smax}_{\dotp[K]}(\cleq n)
        C^{\dotp}_K({\cleq n})^{n}
        \Big[ \frac{ 2 nM}{e\ln n} ( 1 + o(1) ) \Big]^{n}.
    \label{id:SmaxDpCompositeBoundII}
\end{align}
\end{corollary}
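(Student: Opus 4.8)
The plan is to read off both displays of \Cref{cor:Attention} from the type-indexed estimate of \Cref{lem:Attention} by coarsening the per-multi-index bounds to derivative-level bounds, and to control the composite term $C^{\smax\circ\dotp}_K$ by feeding $\smax\circ\dotp$ into the level-specific composition estimate \Cref{cor:DerivativeBoundByLevel} (which itself packages \Cref{thm:DerivativeBoundByLevel} together with the Bell-type count of \Cref{lem:mdFDBestimate}).

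\textbf{The attention level bound \eqref{id:AttenstionBoundByLevel}.} First I would fix an ordered multi-index $\alpha\in\Order^{M\idim}_{n}$ and apply \Cref{lem:Attention}:
\begin{align*}
    C_K^{\attention}(\alpha)
    \leq
    \idim M\, C^V
    \Big(
        \Vert K\Vert\, C_K^{\smax\circ\dotp}(\alpha)
        +
        \sum_{l=1}^{M\idim}\alpha_l\, C_K^{\smax\circ\dotp}(\alpha-e_l)
    \Big).
\end{align*}
Since $\vert\alpha\vert = n$ and, for every $l$ with $\alpha_l\geq 1$, $\vert\alpha-e_l\vert = n-1\leq n$ (while the summands with $\alpha_l=0$ vanish), the monotonicity of the level bounds built into \Cref{not:DerivativeBounds} gives $C_K^{\smax\circ\dotp}(\alpha)\leq C_K^{\smax\circ\dotp}(\cleq n)$ and $C_K^{\smax\circ\dotp}(\alpha-e_l)\leq C_K^{\smax\circ\dotp}(\cleq n)$. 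Factoring out this common term and bounding $\sum_{l=1}^{M\idim}\alpha_l = \vert\alpha\vert = n \leq n\idim M$ yields
\begin{align*}
    C_K^{\attention}(\alpha)
    \leq
    \idim M\, C^V\, C_K^{\smax\circ\dotp}(\cleq n)\,\big(\Vert K\Vert + n\idim M\big);
\end{align*}
taking the maximum over $\alpha\in\Order^{M\idim}_{n}$ on the left-hand side then gives \eqref{id:AttenstionBoundByLevel}.

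\textbf{The composite level bound \eqref{id:SmaxDpCompositeBoundII}.} Here the composition is $\smax\circ\dotp$ with inner map $\dotp(\,\cdot\,;Q,K):\R^{M\idim}\to\R^{M}$, so the arity handed to the composition lemma is $m=M$ (the output dimension of $\dotp$, not the input dimension $M\idim$ nor the number of heads). Applying \Cref{cor:DerivativeBoundByLevel} with outer map $\smax$ (whose derivative bounds are taken on $\dotp[K]$) and inner map $\dotp$ (bounds on $K$) yields, for $n\geq 2$,
\begin{align*}
    C^{\smax\circ\dotp}_K(\cleq n)
    \leq
    C^{\smax}_{\dotp[K]}(\cleq n)\, C^{\dotp}_K(\cleq n)^{\,n}\,
    \Big[\frac{2nM}{e\ln n}(1+o(1))\Big]^{n},
\end{align*}
which is precisely \eqref{id:SmaxDpCompositeBoundII}. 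The inner factor $C^{\dotp}_K(\cleq n)$ is finite because, by \Cref{lem:Dotp}, $\dotp$ is a degree-two polynomial in $x$, so all of its derivatives of order $\geq 3$ vanish and $C^{\dotp}_K(\cleq n)$ is merely the larger of the level-$1$ and level-$2$ dot-product bounds. Combining the two displays above is the claim of \Cref{cor:Attention}.

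\textbf{Main obstacle.} No step is analytically delicate; the argument is a direct substitution of \Cref{cor:DerivativeBoundByLevel} into \Cref{lem:Attention} followed by a maximum over multi-indices of a fixed order. The only points that require care are bookkeeping ones: identifying $m=M$ as the arity fed into the composition lemma; keeping the (intentionally loose) estimate $\sum_l\alpha_l\leq n\idim M$ aligned with the form stated in the corollary; restricting to $n\geq 2$ so that the $1/\ln n$ inside the Bell-type factor is meaningful; and verifying that the hypothesis under which the displayed --- rather than the maximum-over-levels --- branch of \Cref{cor:DerivativeBoundByLevel} applies (namely, that the relevant derivative-level bound is at least $1$) holds in the parameter regime of interest, falling back to the general branch otherwise.
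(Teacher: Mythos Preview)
Your proof is correct and follows exactly the approach the paper takes: the paper's own proof is a two-line appeal, stating that \eqref{id:AttenstionBoundByLevel} follows directly from \Cref{lem:Attention} and that \eqref{id:SmaxDpCompositeBoundII} is a consequence of \Cref{cor:DerivativeBoundByLevel}. You have simply unpacked those appeals, correctly identifying the arity $m=M$ for the composition lemma and the coarsening $\sum_l\alpha_l=n\leq n\idim M$.
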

\begin{proof}
    \Cref{id:AttenstionBoundByLevel} follows directly from \Cref{lem:Attention}; and \eqref{id:SmaxDpCompositeBoundII} is a consequence of \Cref{cor:DerivativeBoundByLevel}.
\end{proof}

\begin{corollary}[Bound of Multi-head Self-Attention]\label{cor:MultiHead}
    In the notation of \Cref{defn:MultiHead}, \Cref{thm:DerivativeBoundByType} and \Cref{lem:Dotp} it holds that 
    \begin{align*}
        C^{\multihead}_K(\alpha)
        &\leq
        \alpha! 
        \vdim 
        C^{W}
        C^{\attention}_K(\alpha)
    \end{align*}
    where 
    \begin{align*}
        C^{\attention}_K \eqdef  \max_{h \in \myset{1}{H}}C^{\attention(\, \cdot\,; Q^{(h)}, K^{(h)}, V^{(h)})}_K , \quad C^{W} \eqdef \max_{h \in \myset{1}{H}} W^{(h)}.
    \end{align*}
In particular, we have the following order estimate
\begin{align*}
        C^{\multihead}_K(\le n)
    \in
        \mathcal{O}\biggl(
            M^2
            \Vert K \Vert
            \|W\|
            \|V\|
                (c_{\idim,\kdim}\Vert K \Vert \|Q\|\|K\|)^{n}
            \,\,
                n^2
                 \Big(\frac{n}{e}\Big)^{2n}
                C_n^n
        \biggr)
    .
\end{align*}
\end{corollary}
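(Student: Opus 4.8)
The plan is to read off the pointwise bound from the affine–composition estimate \Cref{cor:DerivativeBoundAffine} applied to the linear read‑out matrices $W^{(h)}$, and then convert this into the displayed order estimate by substituting the self‑attention bounds of \Cref{cor:Attention} and collecting the resulting factorials via Stirling's formula. For the pointwise bound, fix an ordered multi‑index $\alpha$ and output coordinates $m\in\myset{1}{M}$, $i\in\myset{1}{\idim}$. By \Cref{defn:MultiHead} one has $\multihead(x;Q,K,V,W)_{m,i}=\sum_{h=1}^H\sum_{i'=1}^{\vdim}W^{(h)}_{i,i'}\,\attention(x;Q^{(h)},K^{(h)},V^{(h)})_{m,i'}$, so each summand over $h$ is the composition of the affine‑linear map $y\mapsto\sum_{i'=1}^{\vdim}W^{(h)}_{i,i'}y_{i'}$ on $\R^{\vdim}$ (whose matrix has entries bounded by $C^{W}$) with the attention output $g^{(h)}\eqdef\big(\attention(\,\cdot\,;Q^{(h)},K^{(h)},V^{(h)})_{m,i'}\big)_{i'=1}^{\vdim}$. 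Applying \Cref{cor:DerivativeBoundAffine} with $\vdim$ intermediate variables to each of these, bounding $C^{g^{(h)}}_K(\alpha)\le C^{\attention}_K(\alpha)$, and taking the maximum over the (fixed number of) heads recorded in the definition of $C^{\attention}_K$, we obtain $C^{\multihead}_K(\alpha)\le\alpha!\,\vdim\,C^{W}\,C^{\attention}_K(\alpha)$.

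Next, to get the order estimate I would restrict to $|\alpha|\le n$ and invoke \Cref{cor:Attention}, which gives $C^{\attention}_K(\le n)\le\idim\,M\,C^{V}\,C^{\smax\circ\dotp}_K(\le n)\,\big(\Vert K\Vert+n\,\idim\,M\big)$ together with $C^{\smax\circ\dotp}_K(\le n)\le C^{\smax}_{\dotp[K]}(\le n)\,C^{\dotp}_K(\le n)^{n}\,\big[\tfrac{2nM}{e\ln n}(1+o(1))\big]^{n}$. Into this I would substitute the two already‑proved ingredients: the softmax estimate $C^{\smax}_{\dotp[K]}(\le n)\le n!$ (valid on any set, since $\Vert\smax\Vert=1$), and the dot‑product estimate of \Cref{lem:Dotp}, which yields the $n$‑independent bound $C^{\dotp}_K(\le n)\le 2\,\idim\,\kdim\,\Vert K\Vert\,C^{Q}C^{K}$ — the decisive point being that $D^\gamma\dotp$ vanishes identically for $|\gamma|>2$, so this factor cannot grow with $n$. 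Writing $c_{\idim,\kdim}\eqdef 2\idim\kdim$ and $c_n\eqdef\tfrac{2n}{e\ln n}(1+o(1))$, this produces $C^{\smax\circ\dotp}_K(\le n)\le n!\,(c_{\idim,\kdim}\Vert K\Vert\,C^{Q}C^{K})^{n}\,M^{n}\,c_n^{\,n}$ up to a $(1+o(1))^n$ factor.

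It then remains to assemble the estimate: multiplying the previous bound by the prefactor $\alpha!\,\vdim\,C^{W}$, using $\alpha!\le n!$ and $\Vert K\Vert+n\idim M\le n\idim M\max\{1,\Vert K\Vert\}$, and collecting the two factorials with Stirling, $(n!)^2=\mathcal{O}\big(n\,(n/e)^{2n}\big)$, the dimension‑dependent prefactors ($\idim,\vdim,\kdim,M$) and the parameter bounds ($C^{Q},C^{K},C^{V},C^{W}$) bundle into $\Vert K\Vert,\ \|W\|,\ \|V\|,\ \|Q\|\|K\|$ and $c_{\idim,\kdim}$, while the three factorial/combinatorial pieces become $n^{2}(n/e)^{2n}C_n^{\,n}$, which is exactly the displayed $\mathcal{O}$‑term. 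The main obstacle is the middle step: the inner composition $\smax\circ\dotp$ is governed by the multivariate Faà di Bruno combinatorics of \Cref{thm:FaaDiBruno}, whose combinatorial sum can only be controlled through the asymptotic of \cite{KO2022} imported via \Cref{lem:mdFDBestimate}/\Cref{cor:DerivativeBoundByLevel}; getting the exponent right (power $n$, with base of order $nM/\ln n$) and then matching it against Stirling's estimate for the two factorials so that everything collapses to the clean $n^{2}(n/e)^{2n}C_n^{\,n}$ is where the bookkeeping is delicate, and the finite support of the derivatives of $\dotp$ is precisely what keeps the base of the geometric factor equal to the parameter‑norm product rather than growing with $n$.
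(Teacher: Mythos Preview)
Your proposal is correct and follows essentially the same route as the paper's proof: the pointwise bound comes from \Cref{cor:DerivativeBoundAffine} applied to the linear aggregation by the $W^{(h)}$'s, and the order estimate is obtained by substituting the level bounds of \Cref{cor:Attention} (themselves built from the softmax factorial bound and the $\dotp$ estimates of \Cref{lem:Dotp}) and then collapsing the two factorials via Stirling. Your emphasis on why the base of the geometric factor stays $n$-independent---namely that $D^\gamma\dotp$ vanishes for $|\gamma|>2$---is exactly the mechanism the paper is using, just made more explicit.
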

\begin{proof}
    From \Cref{cor:DerivativeBoundAffine} and \Cref{lem:Attention} we directly obtain
   \begin{equation}
    \label{eq:precise_bound_MHG}
\begin{aligned}
        C^{\multihead}_K(\alpha)
    \leq & 
         n! 
            \vdim 
            C^{W}
            \idim
        M
        C^V
        n!
            (2 \idim \kdim \Vert K \Vert C^{Q}C^{K})^{n}
\\
    & \times 
        \left(
            \Vert K \Vert
        +
            n \idim M
        \right)
            \Big[ \frac{ 2 nM}{e\ln n} ( 1 + o(1) ) \Big]^{n}
    .
\end{aligned}
\end{equation}
   Applying Stirling's approximation, we have that 
   \begin{equation}
    \label{eq:orderbound}
        C^{\multihead}_K(\alpha)
    \in
        \mathcal{O}\biggl(
            M^2
            \Vert K \Vert
            \|W\|
            \|V\|
                (c_{\idim,\kdim}\Vert K \Vert \|Q\|\|K\|)^{n}
            \,\,
                n^2
                 \Big(\frac{n}{e}\Big)^{2n}
                C_n^n
        \biggr)
,
   \end{equation}
where $C_n\eqdef \frac{ 2 n M}{e\ln n} ( 1 + o(1) )$ and $c_{\idim,\kdim}\eqdef 2 \idim \kdim$.
\end{proof}

\subsubsection{The Activation Functions}\label{sec:Activation}
\begin{lemma}[Derivatives of $\softplus$]\label{lem:Softplus}
    For 
    \begin{align*}
        \softplus: \R \to \R, \quad x \mapsto \ln (1 + \exp(\cdot))
    \end{align*}
    it holds 
    \begin{align*}
        \softplus^{(1)}(x) = \sigmoid(x) \eqdef 1 / (1 + \exp(-x))
    \end{align*}
    and for $n \in \N$
    \begin{align*}
        \softplus^{(n + 1)}(x) = \sigmoid^{(n)}(x) = \sum_{k=0}^n(-1)^{n+k}k!S_{n,k}\sigmoid(x)(1-\sigmoid(x))^k,
    \end{align*}
    where $S_{n,k}$ are the Stirling numbers of the second kind, $S_{n,k} \eqdef \frac{1}{k!} \sum_{j=0}^{k} (-1)^{k-j} \binom{k}{j} j^n$.
\end{lemma}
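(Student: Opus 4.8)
The plan is to establish the first-derivative identity by a direct computation and then prove the closed form for $\sigmoid^{(n)}$ by induction on $n$, the engine being that $\sigmoid$ solves the logistic ODE $\sigmoid' = \sigmoid(1-\sigmoid)$ together with the recurrence defining the Stirling numbers of the second kind.

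First, since $\softplus(x) = \ln(1+e^x)$, the chain rule gives $\softplus'(x) = e^x/(1+e^x) = 1/(1+e^{-x}) = \sigmoid(x)$, which is the first assertion. Hence it suffices to show that for every $n \in \N$,
\[
    \sigmoid^{(n)}(x) = \sum_{k=0}^n (-1)^{n+k}\,k!\,S_{n,k}\,\sigmoid(x)\big(1-\sigmoid(x)\big)^k .
\]
Next I would isolate the two elementary facts that drive the induction. Writing $s \eqdef \sigmoid(x)$, one has $\tfrac{ds}{dx} = s(1-s)$, so each $\sigmoid^{(n)}$ is a polynomial $P_n(s)$ in $s$, and differentiating once more in $x$ gives the recursion $P_{n+1}(s) = s(1-s)\,P_n'(s)$, where $'$ denotes $\tfrac{d}{ds}$. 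The second fact is the algebraic identity
\[
    s(1-s)\,\frac{d}{ds}\big[\,s(1-s)^k\,\big] = (k+1)\,s(1-s)^{k+1} - k\,s(1-s)^k ,
\]
obtained by computing $\tfrac{d}{ds}[s(1-s)^k] = (1-s)^{k-1}\big(1-(k+1)s\big)$, multiplying by $s(1-s)$, and reducing the spurious $s^2$ factor via $s^2 = s - s(1-s)$.

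With these in hand the induction is routine. The base case $n=0$ reads $P_0(s) = s$ and agrees with the formula since $S_{0,0}=1$. For the inductive step, apply $P_{n+1}(s) = s(1-s)P_n'(s)$ to the inductive hypothesis, invoke the displayed identity term by term, re-index the shifted sum (using $j\,(j-1)! = j!$), and collect the coefficient of $s(1-s)^k$; it equals $(-1)^{n+1+k}\,k!\,\big(k\,S_{n,k} + S_{n,k-1}\big)$, which by the Stirling recurrence $S_{n+1,k} = k\,S_{n,k} + S_{n,k-1}$ (with the boundary conventions $S_{n,-1} = S_{n,n+1} = 0$) equals $(-1)^{n+1+k}\,k!\,S_{n+1,k}$. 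This is precisely the claimed formula at level $n+1$, completing the induction, and reinterpreting $P_{n}(\sigmoid(x)) = \sigmoid^{(n)}(x) = \softplus^{(n+1)}(x)$ yields the lemma.

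The only delicate point, and the main (though still elementary) obstacle, is the sign and factorial bookkeeping in the inductive step: carrying the factor $(-1)^{n+k}$ correctly through the reindexing $k\mapsto k+1$ of the shifted sum and matching the factorials so that the two contributions combine exactly into the Stirling recurrence. Everything else is a one-line computation.
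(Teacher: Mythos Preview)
Your proof is correct, but it takes a genuinely different route from the paper. The paper writes $\sigmoid(x)=1/f(x)$ with $f(x)=1+e^{-x}$, applies Fa\`a di Bruno's formula to the composite $h\circ f$ with $h(u)=1/u$, identifies the resulting Bell polynomials as $B_{n,k}(f)=(-1)^n S_{n,k}e^{-kx}$, and reads off the closed form directly. By contrast, you bypass the Fa\`a di Bruno/Bell machinery entirely: you exploit the logistic ODE $\sigmoid'=\sigmoid(1-\sigmoid)$ to obtain the polynomial recursion $P_{n+1}(s)=s(1-s)P_n'(s)$, compute the action of this operator on the basis $s(1-s)^k$, and close the induction via the Stirling recurrence $S_{n+1,k}=kS_{n,k}+S_{n,k-1}$. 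Your approach is more elementary and self-contained, needing nothing beyond the Stirling recurrence and a one-line derivative identity; the paper's approach is shorter once the Bell-polynomial formalism is in place and ties naturally into the Fa\`a di Bruno framework used throughout the rest of the appendix.
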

\begin{proof}
We start with Fa\`a di Bruno's formula,
\begin{align*}
    \frac{d^n}{dt^n}\sigmoid(x)=\frac{d^n}{dx^n}\frac{1}{f(x)}=\sum_{k=0}^n(-1)^kk!f^{-(k+1)}(x)B_{n,k}(f(x)),
\end{align*}
where \(f(x) \eqdef 1 + \exp(-x)\) and \(B_{n,k}(f(x))\) denotes the Bell polynomials evaluated on $f(x)$. Next, we know the \(k\)-th derivative of \(f(x)\) is given by
\begin{align*}
   \frac{d^k}{dt^k}f(x)=(1-k)_k+ke^{-x}.
\end{align*}
Now, using the definition of the Bell polynomials \(B_{n,k}(f(t))\), we have
\begin{align*}
    B_{n,k}(f(x))=(-1)^nS_{n,k}e^{-kx},
\end{align*}
where \(S_{n,k}\) represents the Stirling numbers of the second kind. Substituting the expression for \(B_{n,k}(f(x))\) into the derivative of \(\sigmoid(x)\), we obtain
\begin{align*}
    \frac{d^n}{dx^n}\sigmoid(x)=\sum_{k=0}^n(-1)^{n+k}k!S_{n,k}\sigmoid(x)(1-\sigmoid(x))^k.
\end{align*}
\end{proof}

\begin{corollary}\label{cor:Softplus}
In the setting of \Cref{lem:Softplus}, for $n \in \N$,
    \begin{align*}
        C^{\softplus}(n) \leq         \sum_{k=0}^n 
        \frac{k^k k!S_{n,k}}{(k+1)^{k+1}}.
    \end{align*}
\end{corollary}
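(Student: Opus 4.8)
The starting point is the explicit formula for the derivatives of $\sigmoid$ from Lemma~\ref{lem:Softplus}: for every $x\in\R$ and every $n\in\N$,
\[
\softplus^{(n+1)}(x)=\sigmoid^{(n)}(x)=\sum_{k=0}^{n}(-1)^{n+k}\,k!\,S_{n,k}\,\sigmoid(x)\bigl(1-\sigmoid(x)\bigr)^{k},
\]
where the $S_{n,k}$ are the (nonnegative) Stirling numbers of the second kind. Since $\sigmoid$ and each of its derivatives are bounded on all of $\R$, no restriction to a compactum plays a role here, and so bounding $C^{\softplus}(n)$ reduces, via this formula, to estimating $\sup_{x\in\R}\bigl|\sigmoid^{(n)}(x)\bigr|$. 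After the substitution $p=\sigmoid(x)\in(0,1)$, this equals $\sup_{p\in(0,1)}\bigl|\sum_{k=0}^{n}(-1)^{n+k}k!\,S_{n,k}\,p(1-p)^{k}\bigr|$.

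From here the argument is elementary. First I would discard the signs via the triangle inequality, using $k!\,S_{n,k}\ge 0$, to obtain the pointwise estimate $\bigl|\sigmoid^{(n)}(x)\bigr|\le\sum_{k=0}^{n}k!\,S_{n,k}\,p(1-p)^{k}$ with $p=\sigmoid(x)$. Next I would bound each summand by the maximum of $\varphi_k(t)\eqdef t(1-t)^{k}$ over $t\in[0,1]$. Since $\varphi_k$ vanishes at both endpoints and is positive on $(0,1)$, this maximum is attained at the unique interior critical point; as
\[
\varphi_k'(t)=(1-t)^{k}-k\,t\,(1-t)^{k-1}=(1-t)^{k-1}\bigl(1-(k+1)t\bigr),
\]
that point is $t=\tfrac1{k+1}$, where $\varphi_k\bigl(\tfrac1{k+1}\bigr)=\tfrac1{k+1}\bigl(\tfrac{k}{k+1}\bigr)^{k}=\tfrac{k^{k}}{(k+1)^{k+1}}$ (for $k=0$ this reads $\max_{t\in[0,1]}t=1=0^{0}/1^{1}$ under the convention $0^{0}=1$). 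Plugging these termwise maxima back in and taking the supremum over $x\in\R$ yields $C^{\softplus}(n)\le\sum_{k=0}^{n}k!\,S_{n,k}\,\tfrac{k^{k}}{(k+1)^{k+1}}=\sum_{k=0}^{n}\tfrac{k^{k}\,k!\,S_{n,k}}{(k+1)^{k+1}}$, which is exactly the claimed bound.

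I do not expect any genuine obstacle: the proof is a single calculus optimization together with the triangle inequality. The only points worth a word of care are that the Stirling numbers $S_{n,k}$ are nonnegative (so that the triangle inequality delivers the stated unsigned sum), that the degenerate term $k=0$ is handled with $0^{0}=1$, and that the resulting bound is deliberately not tight — the termwise maxima are attained at different values of $p=\sigmoid(x)$ for different $k$, so one would have to retain the alternating signs to do better, but the clean closed form above is all that the subsequent order estimates require.
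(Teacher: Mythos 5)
Your proposal is correct and is essentially the paper's own proof: the paper likewise reduces the bound to maximizing $t(1-t)^k$ over $[0,1]$, finds the critical point $t=1/(k+1)$ and the value $k^k/(k+1)^{k+1}$, and (implicitly) applies the triangle inequality to the signed representation from Lemma~\ref{lem:Softplus}. Your write-up just makes explicit the steps (nonnegativity of $S_{n,k}$, the $0^0=1$ convention) that the paper leaves unstated.
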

\begin{proof}
    For $k \in \N$ and $x \in [0,1]$, we have
    \begin{align*}
        f^k(x) \eqdef x(1-x)^k, \quad 
        (f^k)'(x) = (1-(k+1)x)(1-x)^{k-1};
    \end{align*}
    which amounts to $(f^k)'(x) = 0$ at $1/(k+1)$, i.e. 
    \begin{align*}
        \max_{x \in [0,1]}f(x) = \frac{1}{k+1}\left(
            \frac{k}{k+1}
        \right)^k
        = 
            \frac{k^k}{(k+1)^{k+1}}
    .
    \end{align*}
\end{proof}

\begin{lemma}[Derivatives of $\operatorname{GELU}$]
\label{lem:GELU}
For 
    \begin{align*}
        \operatorname{GELU}: \R \to \R, \quad x \mapsto x \Phi(x) , 
    \end{align*}
it holds 
\begin{align}
       & \operatorname{GELU}'(x) = \Phi(x) + x\phi(x), \notag \\ 
        & \operatorname{GELU}^{(n)}(x) = n \varphi^{(n-2)}(x) + x \phi^{(n-1)}(x), \quad n\geq 2 ,  
        \label{GELUn}
    \end{align}
with $\phi(x)=\frac{1}{\sqrt{2\pi}}e^{-x^2/2}$, $\Phi(x)=\int_{-\infty}^x \phi(u) du$. 
The $n$-th derivative of $\phi(x)$ is given by 
\begin{align*}
 \frac{d^n}{d x^n} \phi(x) = \frac{1}{\sqrt{2\pi}} e^{-x^2/2} \Big[ \sum_{k=0}^{\lfloor n / 2 \rfloor} \binom{n}{2k} 2^{k} 
  \frac{\Gamma(\frac{2k+1}{2})}{\Gamma(\frac{1}{2})} x^{n-2k} \Big] . \notag  
\end{align*}
\end{lemma}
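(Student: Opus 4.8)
The plan is to establish the three claimed identities in turn, each by an elementary argument. First I would handle $\operatorname{GELU}'$ directly: since $\operatorname{GELU}(x)=x\,\Phi(x)$ and $\Phi'=\phi$ by the fundamental theorem of calculus, the product rule gives $\operatorname{GELU}'(x)=\Phi(x)+x\,\phi(x)$. For the $n$-th derivative with $n\ge 2$ I would apply the Leibniz rule to the product $x\cdot\Phi(x)$; because the factor $x$ has vanishing derivatives of order $\ge 2$, the Leibniz expansion collapses to exactly two surviving terms, $\operatorname{GELU}^{(n)}(x)=x\,\Phi^{(n)}(x)+n\,\Phi^{(n-1)}(x)$. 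Using $\Phi^{(k)}=\phi^{(k-1)}$ for $k\ge 1$ turns this into $\operatorname{GELU}^{(n)}(x)=x\,\phi^{(n-1)}(x)+n\,\phi^{(n-2)}(x)$, which is \eqref{GELUn}. (One can equivalently run a one-line induction on $n$ starting from the $n=1$ identity.)

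Second, for the derivatives of $\phi(x)=\tfrac{1}{\sqrt{2\pi}}e^{-x^2/2}$ I would prove by induction on $n$ that $\phi^{(n)}(x)=(-1)^n H_n(x)\,\phi(x)$, where the polynomials $H_n$ are defined by $H_0\equiv 1$ and the recursion $H_{n+1}(x)=x\,H_n(x)-H_n'(x)$: the base case is $\phi'(x)=-x\,\phi(x)$, and the inductive step is the product rule applied to $(-1)^nH_n(x)\phi(x)$ together with $\phi'=-x\phi$. It is then classical (Rodrigues' formula) that $H_n(x)=n!\sum_{k=0}^{\lfloor n/2\rfloor}\frac{(-1)^k}{k!\,(n-2k)!\,2^k}\,x^{n-2k}$, which one checks satisfies the same initial condition and recursion; substituting yields $\phi^{(n)}(x)=\tfrac{1}{\sqrt{2\pi}}e^{-x^2/2}\cdot(-1)^n\sum_{k=0}^{\lfloor n/2\rfloor}\frac{(-1)^k n!}{k!\,(n-2k)!\,2^k}\,x^{n-2k}$. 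Converting the coefficient into the stated Gamma form uses only the half-integer values $\Gamma(\tfrac12)=\sqrt\pi$ and $\Gamma\!\big(\tfrac{2k+1}{2}\big)=\tfrac{(2k-1)!!}{2^k}\sqrt\pi$, so that $2^k\,\Gamma(\tfrac{2k+1}{2})/\Gamma(\tfrac12)=(2k-1)!!=\tfrac{(2k)!}{2^k k!}$ and hence $\binom{n}{2k}\,2^k\,\Gamma(\tfrac{2k+1}{2})/\Gamma(\tfrac12)=\tfrac{n!}{(n-2k)!\,2^k\,k!}$, which matches the Hermite coefficient up to the sign $(-1)^k$. Since only the magnitude of $\phi^{(n)}$ is needed for the subsequent $C^s$-bound (the Corollary on $\operatorname{GELU}$ bounds), I would absorb $(-1)^k$ together with the global $(-1)^n$ into that magnitude statement, noting that the signs can be retained verbatim if an exact formula is desired.

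The only genuinely fiddly step is this last bookkeeping — identifying the combinatorial/Gamma expression $\binom{n}{2k}\,2^k\,\Gamma(\tfrac{2k+1}{2})/\Gamma(\tfrac12)$ with the Hermite coefficient $\tfrac{n!}{k!\,(n-2k)!\,2^k}$ — which is routine once the double-factorial identity for half-integer Gamma values is in hand. Everything else reduces to the Leibniz rule exploiting that $x\mapsto x$ is affine, and to the standard Hermite/Rodrigues recursion; there is no analytic subtlety since $\phi$, $\Phi$, and hence $\operatorname{GELU}$ are entire-smooth and all interchanges of differentiation are justified termwise.
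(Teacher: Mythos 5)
Your proposal is correct, and it is more self-contained than the paper's own argument. The paper disposes of \eqref{GELUn} with an unspecified induction and outsources the representation of $\phi^{(n)}$ entirely to a citation (\cite{Oliveira2012RepresentationOT}); you instead get \eqref{GELUn} in one line from the Leibniz rule applied to $x\cdot\Phi(x)$ (the two-term collapse because $x\mapsto x$ is affine is exactly the content of the paper's induction, packaged more transparently), and you derive the Gaussian-derivative formula from the probabilists' Hermite recursion $H_{n+1}=xH_n-H_n'$ plus the Rodrigues-type expansion, rather than citing it. The Gamma-to-double-factorial bookkeeping $2^k\,\Gamma(\tfrac{2k+1}{2})/\Gamma(\tfrac12)=(2k-1)!!=\tfrac{(2k)!}{2^k k!}$ is verified correctly. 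Your observation about signs is a genuine catch: as literally printed, the lemma's formula for $\phi^{(n)}$ omits the alternating factor $(-1)^k$ (and the overall $(-1)^n$), as one sees already at $n=1$ where the displayed sum gives $x\phi(x)$ while $\phi'(x)=-x\phi(x)$. Since the formula is only consumed through absolute values in Corollary~\ref{cor:GELU}, your reading of it as a magnitude identity is the right repair, and your derivation supplies the exact signed version if one wants it.
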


\begin{proof} 
By induction we can show that \eqref{GELUn} holds. 
And the representation of the $n$-th derivative of $\phi(x)$ follows from~\cite{Oliveira2012RepresentationOT}. 
\end{proof} 

\begin{corollary}\label{cor:GELU}
    For $n \geq 2$, it holds in the setting of \Cref{lem:GELU}
    \begin{align*}
        C^{\operatorname{GELU}}(n) \leq \frac{1}{\sqrt{2\pi}\Gamma(\frac{1}{2})} \left( n a_{n-2} b_{n-2} + a_{n-1} c_{n-1}
        \right),
    \end{align*}   
    where 
    \begin{align*}
        a_n \eqdef \sum_{k=0}^{\lfloor n / 2 \rfloor} \binom{n}{2k} 2^{k} 
        \Gamma\left(\frac{2k+1}{2}\right), \quad 
        b_n \eqdef \max_{x \in \R} e^{-\frac{x^2}{2}} \sum_{k=0}^{\lfloor n / 2 \rfloor} x^{n-k} , \quad 
        c_n \eqdef \max_{x \in \R} e^{-\frac{x^2}{2}} x \sum_{k=0}^{\lfloor n / 2 \rfloor} x^{n-k}.
    \end{align*}
\end{corollary}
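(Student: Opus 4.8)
The plan is to derive everything from the explicit derivative formulae in \Cref{lem:GELU}. Since $\operatorname{GELU}\colon\R\to\R$ is one-dimensional, $C^{\operatorname{GELU}}(n)=\sup_{x\in\R}\lvert\operatorname{GELU}^{(n)}(x)\rvert$, and \Cref{lem:GELU} gives, for $n\ge 2$, the identity $\operatorname{GELU}^{(n)}(x)=n\,\varphi^{(n-2)}(x)+x\,\varphi^{(n-1)}(x)$. A single application of the triangle inequality therefore reduces the problem to
\[
    C^{\operatorname{GELU}}(n)\le n\sup_{x\in\R}\bigl\lvert\varphi^{(n-2)}(x)\bigr\rvert+\sup_{x\in\R}\lvert x\rvert\,\bigl\lvert\varphi^{(n-1)}(x)\bigr\rvert,
\]
so it suffices to bound the two remaining suprema in terms of $a_{n-2},b_{n-2}$ and $a_{n-1},c_{n-1}$ respectively.

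For the core estimate I would substitute the closed form of $\varphi^{(m)}$ recorded in \Cref{lem:GELU}, namely $\varphi^{(m)}(x)=\frac{1}{\sqrt{2\pi}}e^{-x^2/2}\sum_{k=0}^{\lfloor m/2\rfloor}\binom{m}{2k}2^{k}\frac{\Gamma((2k+1)/2)}{\Gamma(1/2)}x^{m-2k}$, take absolute values, and pull the constant $\frac{1}{\sqrt{2\pi}\,\Gamma(1/2)}$ out front. The $x$-independent coefficient block $a_m=\sum_{k}\binom{m}{2k}2^{k}\Gamma((2k+1)/2)$ can then be separated from the monomials using the elementary inequality $\sum_k c_k d_k\le\bigl(\sum_k c_k\bigr)\bigl(\sum_k d_k\bigr)$ for nonnegative reals (keeping only the diagonal terms on the right), which leaves a Gaussian-weighted sum of monomials with unit coefficients; taking the supremum over $\R$ of this remainder is, by definition, $b_m$ (and, after carrying along the extra factor $\lvert x\rvert$ in the second term, $c_{n-1}$). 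Setting $m=n-2$ in the first term and $m=n-1$ in the second and combining gives exactly
\[
    C^{\operatorname{GELU}}(n)\le\frac{1}{\sqrt{2\pi}\,\Gamma(1/2)}\bigl(n\,a_{n-2}\,b_{n-2}+a_{n-1}\,c_{n-1}\bigr).
\]

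I expect the only point needing care to be the bookkeeping of exponents — keeping track of exactly which monomials $x^{m-2k}$ appear in the Hermite-type expansion so that they are correctly absorbed into the envelopes $b_m$, $c_m$ — together with checking that the maxima defining $b_m$ and $c_m$ are in fact attained and finite; this last point is immediate because $e^{-x^2/2}$ decays faster than any polynomial, so each map $x\mapsto e^{-x^2/2}\sum_k x^{m-k}$ is continuous, bounded, and vanishes at $\pm\infty$. No step is deep: the statement follows from \Cref{lem:GELU} and these elementary estimates, and the same template (explicit $n$-th derivative formula, triangle inequality, factor out the leading constant and coefficient sum, bound the Gaussian-weighted remainder) is the one used for \Cref{cor:Softplus}.
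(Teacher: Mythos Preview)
The paper states this corollary without proof, so there is nothing to compare against; your outline is exactly the natural argument and is how one would prove the ``intended'' statement.

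There is, however, one point you flag as mere bookkeeping that is in fact a real mismatch. After taking absolute values in the Hermite-type expansion from \Cref{lem:GELU} and applying $\sum_k c_k d_k\le(\sum_k c_k)(\sum_k d_k)$, what remains is
\[
e^{-x^2/2}\sum_{k=0}^{\lfloor m/2\rfloor}|x|^{\,m-2k},
\]
with exponents $m,m-2,m-4,\ldots$. But $b_m$ and $c_m$ as \emph{stated} in the corollary carry exponents $m-k$, i.e.\ $m,m-1,\ldots,\lceil m/2\rceil$, and no absolute value. These two families of monomials do not coincide, and for $|x|<1$ one has $|x|^{m-2k}>|x|^{m-k}$, so the desired inequality $e^{-x^2/2}\sum_k|x|^{m-2k}\le b_m$ does not follow from your separation step alone. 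This is not a bookkeeping issue you can absorb; either the corollary has a typo (the exponents in $b_n,c_n$ should read $n-2k$, which would make your argument go through verbatim), or an additional comparison between the two Gaussian-weighted polynomial envelopes is required, and you have not supplied one.

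A second, minor point: the formula for $\phi^{(n)}$ recorded in \Cref{lem:GELU} is already missing the alternating signs of the Hermite coefficients (for $n=2$ it gives $x^2+1$ rather than $x^2-1$), so it should be read as an identity up to signs, i.e.\ as a bound on $|\phi^{(n)}|$; your use of the triangle inequality handles this correctly, but it is worth saying explicitly.
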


\begin{lemma}[Derivatives of $\tanh$]\label{lem:tanh}

For $\tanh: \mathbb{R} \mapsto \mathbb{R}$, the $n$-th derivatives have the representation  
\begin{align} 
& \frac{d^n}{d x^n} \tanh{x} = C_n(\tanh{x}) ,\notag \\ 
& C_n(z) = (-2)^n (z+1) \sum_{k=0}^n \frac{k!}{2^k} \binom{n}{k} (z-1)^k , \quad n\geq 1 . \notag 
\end{align} 
\end{lemma}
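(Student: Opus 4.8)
The plan is to prove the representation $\frac{d^n}{dx^n}\tanh x = C_n(\tanh x)$ by induction on $n$, exploiting the algebraic differential identity $\tanh'(x) = 1 - \tanh^2(x)$. Writing $y = \tanh x$, this identity says that each application of $\frac{d}{dx}$ to a polynomial in $y$ amounts to applying the first-order operator $L \eqdef (1-y^2)\frac{d}{dy}$. Hence, defining polynomials recursively by $C_0(z) \eqdef z$ and $C_{n+1} \eqdef L C_n = (1-z^2)\,C_n'$, a one-line induction gives $\frac{d^n}{dx^n}\tanh x = C_n(\tanh x)$ for all $n\ge 0$: the base case $n=0$ is trivial, and $\frac{d^{n+1}}{dx^{n+1}}\tanh x = \frac{d}{dx}\big(C_n(\tanh x)\big) = (1-\tanh^2 x)\,C_n'(\tanh x) = C_{n+1}(\tanh x)$, so in particular $C_1(z) = 1-z^2$. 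This reduces the lemma to the purely algebraic claim that the displayed explicit polynomial coincides with the polynomial defined by this recursion.

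The second, and main, step is to verify that closed form by a separate induction, checking that the displayed expression satisfies $C_{n+1} = (1-z^2)C_n'$ with the correct value at $n=1$. Concretely, I would differentiate the displayed sum in $z$, multiply through by $1-z^2$, re-expand in terms of the relevant shifted variable, and re-collect terms; the coefficients regroup precisely because the combinatorial factors occurring in $C_n$ obey a two-term recurrence — the same Stirling-number recurrence $S_{n+1,k} = k\,S_{n,k} + S_{n,k-1}$ that drives the proof of Lemma~\ref{lem:Softplus}. This bookkeeping — tracking the leading factor, the differentiation, and the re-indexing so that the $k$ and $k+1$ contributions recombine into the degree-$(n+1)$ expression — is the delicate part, and the step I expect to be the main obstacle.

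A cleaner route, which I would in fact take as the primary argument, is to reduce everything to Lemma~\ref{lem:Softplus}. From $\tanh x = \frac{e^{2x}-1}{e^{2x}+1} = 1 - 2\,\sigmoid(-2x)$ one obtains, for $n\ge 1$, $\frac{d^n}{dx^n}\tanh x = (-2)^{n+1}\,\sigmoid^{(n)}(-2x)$ by the chain rule. Substituting the formula $\sigmoid^{(n)}(u) = \sum_{k=0}^n (-1)^{n+k}k!\,S_{n,k}\,\sigmoid(u)\,(1-\sigmoid(u))^k$ from Lemma~\ref{lem:Softplus}, together with $\sigmoid(-2x) = \tfrac{1}{2}(1-\tanh x)$ and $1-\sigmoid(-2x) = \tfrac{1}{2}(1+\tanh x)$, collapses the powers of $2$ and the signs and exhibits $\frac{d^n}{dx^n}\tanh x$ directly as a polynomial in $\tanh x$ with coefficients built from $k!\,S_{n,k}$ — i.e. of exactly the claimed shape. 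This avoids the combinatorial re-indexing entirely; I would present it in full and keep the self-contained ODE-induction as a remark.
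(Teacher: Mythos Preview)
The paper's own proof is simply a citation to \cite{Boyadzhiev2007}, so your proposal already supplies more than the paper does. Both of your strategies are sound: the recursion $C_{n+1}=(1-z^2)\,C_n'$ with $C_0(z)=z$ is the standard derivative-polynomial setup, and your reduction to Lemma~\ref{lem:Softplus} is an elegant shortcut that stays entirely within the paper's toolbox. You are also right to read the symbol in the displayed formula as the Stirling number $S_{n,k}$ rather than a binomial coefficient---with literal binomials the formula already fails at $n=1$, since $-2(z+1)\bigl[1+\tfrac{1}{2}(z-1)\bigr]=-(z+1)^2\neq 1-z^2$, whereas with $S_{1,0}=0$, $S_{1,1}=1$ one gets $1-z^2$ as required.

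One small wrinkle in your primary route: starting from $\tanh x = 1 - 2\,\sigmoid(-2x)$ and substituting yields
\[
(-2)^n(z-1)\sum_{k=0}^n(-1)^{n+k}\frac{k!}{2^k}S_{n,k}(z+1)^k,
\]
which is equal to the displayed $C_n$ as a polynomial (both are the unique polynomial in $z=\tanh x$ representing $\tanh^{(n)}$) but is not \emph{syntactically} the claimed sum; you would still owe a parity argument. If instead you use the twin identity $\tanh x = 2\,\sigmoid(2x)-1$, so that $\sigmoid(2x)=\tfrac{1+z}{2}$ and $1-\sigmoid(2x)=\tfrac{1-z}{2}$, the same substitution lands directly on $(-2)^n(z+1)\sum_k\tfrac{k!}{2^k}S_{n,k}(z-1)^k$ with no residual sign-juggling.
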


\begin{proof} 
See~\cite{Boyadzhiev2007}. 
\end{proof} 
\begin{corollary}\label{cor:tanh}
    In the setting of \Cref{lem:tanh}, for $n \in \N$, $C^{\tanh}(n) = \max_{z \in [-1, 1]} C_n(z)$.
\end{corollary}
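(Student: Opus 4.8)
The plan is to turn the computation of $C^{\tanh}(n)$ into an elementary extremal problem for the univariate polynomial $C_n$ on a compact interval, by means of the substitution $z=\tanh x$ supplied by \Cref{lem:tanh}.

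First I would unwind the definition of $C^{\tanh}(n)$. Since $\tanh$ is a scalar function of a single real variable and the relevant compact set is all of $\R$, the quantity $C^{\tanh}(n)$ from \Cref{not:DerivativeBounds} collapses to $C^{\tanh}(n)=\sup_{x\in\R}\,|\tanh^{(n)}(x)|$: in one dimension there is a unique multi-index of order $n$, and $\Vert\cdot\Vert_{K}$ with $K=\R$ is just the supremum of the absolute value over $\R$.

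Next I would invoke \Cref{lem:tanh}, which gives $\tanh^{(n)}(x)=C_n(\tanh x)$ for every $x\in\R$ and every $n\ge 1$, so that $C^{\tanh}(n)=\sup_{x\in\R}\,|C_n(\tanh x)|$. Because $\tanh:\R\to\R$ is strictly increasing with $\lim_{x\to\pm\infty}\tanh x=\pm 1$, its image is precisely the open interval $(-1,1)$, whence $C^{\tanh}(n)=\sup_{z\in(-1,1)}|C_n(z)|$. Finally, $C_n$ is a polynomial, hence continuous on the compact interval $[-1,1]$, on which $(-1,1)$ is dense; therefore $\sup_{z\in(-1,1)}|C_n(z)|=\max_{z\in[-1,1]}|C_n(z)|$, which is exactly the asserted right-hand side (the absolute value being implicit in the norm convention of \Cref{not:DerivativeBounds}).

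I expect essentially no obstacle here: the argument is a routine change of variables combined with the known closed form for $\tanh^{(n)}$. The single step deserving a line of justification is the passage from the supremum over the open interval $(-1,1)$ — the true range of $\tanh$ — to the maximum over its closure $[-1,1]$, which follows from continuity of the polynomial $C_n$ together with density of $(-1,1)$ in $[-1,1]$ (equivalently, from compactness of $[-1,1]$ and the extreme value theorem).
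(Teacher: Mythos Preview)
Your proposal is correct and follows the natural route implicit in the paper, which states the corollary without proof as an immediate consequence of \Cref{lem:tanh}. The substitution $z=\tanh x$ together with continuity of the polynomial $C_n$ on the closure $[-1,1]$ is exactly the intended argument.
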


\begin{lemma}[Derivatives of $\operatorname{SWISH}$]\label{lem:SWISH}
For 
    \begin{align*}
        \operatorname{SWISH}: \R \to \R, \quad x \mapsto \frac{x}{1+e^{-x}} 
    \end{align*}
    it holds for $n\geq 1$ 
    \begin{align} 
    \frac{d^n}{d x^n} \operatorname{SWISH}(x) 
    = n \sum_{k=1}^{n} (-1)^{k-1} (k-1)! S_{n, k} \sigmoid^k(x) 
    + x \sum_{k=1}^{n+1} (-1)^{k-1} (k-1)! S_{n+1, k} \sigmoid^k(x) , \notag  
    \end{align} 
where $S_{n,k}$ are Stirling numbers of the second kind, i.e., 
\begin{align} 
 S_{n,k} = \frac{1}{k!} \sum_{j=0}^k (-1)^j \binom{k}{j} (k-j)^n . \notag
\end{align} 
\end{lemma}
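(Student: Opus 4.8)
The plan is to reduce the statement to a clean closed form for the derivatives of the logistic sigmoid $\sigmoid(x)=1/(1+e^{-x})$ and then apply the Leibniz rule. Since $\operatorname{SWISH}(x)=x\,\sigmoid(x)$ and $x\mapsto x$ has vanishing derivatives of order $\ge 2$, the general Leibniz rule gives, for every $n\ge 1$,
\begin{align*}
    \frac{d^n}{dx^n}\operatorname{SWISH}(x) = x\,\sigmoid^{(n)}(x) + n\,\sigmoid^{(n-1)}(x).
\end{align*}
Hence it suffices to establish the auxiliary identity
\begin{align*}
    \sigmoid^{(m)}(x) = \sum_{k=1}^{m+1} (-1)^{k-1}(k-1)!\,S_{m+1,k}\,\sigmoid^{k}(x), \qquad m\ge 0,
\end{align*}
because substituting $m=n-1$ and $m=n$ into the Leibniz expansion reproduces exactly the two sums in the statement. (This auxiliary identity is equivalent, after expanding the binomial powers $(1-\sigmoid)^k$, to the representation recorded in \Cref{lem:Softplus}; proving it directly is shorter, so I would do that.)

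I would prove the auxiliary identity by induction on $m$. For $m=0$ it reads $\sigmoid = S_{1,1}\,\sigmoid = \sigmoid$. For the inductive step I differentiate the claimed formula and use $\sigmoid' = \sigmoid(1-\sigmoid) = \sigmoid-\sigmoid^2$:
\begin{align*}
    \sigmoid^{(m+1)}(x)
    &= \sum_{k=1}^{m+1}(-1)^{k-1}k!\,S_{m+1,k}\,\sigmoid^{k-1}(x)\big(\sigmoid(x)-\sigmoid^2(x)\big)\\
    &= \sum_{k=1}^{m+1}(-1)^{k-1}k!\,S_{m+1,k}\big(\sigmoid^{k}(x)-\sigmoid^{k+1}(x)\big).
\end{align*}
Collecting the coefficient of $\sigmoid^{j}(x)$ — it receives $(-1)^{j-1}j!\,S_{m+1,j}$ from the $\sigmoid^{k}$ terms (for $1\le j\le m+1$) and $(-1)^{j-1}(j-1)!\,S_{m+1,j-1}$ from the $-\sigmoid^{k+1}$ terms (for $2\le j\le m+2$) — and factoring out $(j-1)!$, the bracket becomes $j\,S_{m+1,j}+S_{m+1,j-1}$, which equals $S_{m+2,j}$ by the Stirling recurrence $S_{n+1,k}=k\,S_{n,k}+S_{n,k-1}$. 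This is precisely the claimed formula with $m$ replaced by $m+1$, closing the induction. Assembling the two pieces then yields the stated expression for $\operatorname{SWISH}^{(n)}$, and the closed form $S_{n,k}=\tfrac{1}{k!}\sum_{j=0}^{k}(-1)^{j}\binom{k}{j}(k-j)^{n}$ is the classical identity requiring no separate argument.

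I do not expect a genuine obstacle here: the only delicate point is the bookkeeping of the boundary indices $k=1$ and $k=m+1$ after the shift $k\mapsto k+1$ in the second sum — one must check that the coefficients of $\sigmoid^{1}$ and $\sigmoid^{m+2}$ are read off correctly, each getting exactly one of the two contributions — together with the correct sign tracking ($-(-1)^{j-2}=(-1)^{j-1}$) and the single invocation of the Stirling recurrence. Everything else is routine differentiation and reindexing.
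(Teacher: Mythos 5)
Your proof is correct and follows the same overall structure as the paper's: both split $\operatorname{SWISH}^{(n)}=x\,\sigmoid^{(n)}+n\,\sigmoid^{(n-1)}$ (you via Leibniz, the paper via a one-line induction) and then invoke the representation $\sigmoid^{(m)}=\sum_{k=1}^{m+1}(-1)^{k-1}(k-1)!\,S_{m+1,k}\,\sigmoid^{k}$. The one genuine difference is that the paper imports this sigmoid-derivative identity as a citation (Theorem~2 of the reference \texttt{MINAI1993845}), whereas you prove it from scratch by induction using $\sigmoid'=\sigmoid-\sigmoid^{2}$ and the Stirling recurrence $S_{n+1,k}=kS_{n,k}+S_{n,k-1}$; your bookkeeping at the boundary indices ($j=1$ picks up only the $kS_{n,k}$ term since $S_{m+1,0}=0$, and $j=m+2$ only the $S_{n,k-1}$ term) is right, so the induction closes. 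Your version buys a self-contained argument and makes explicit that the identity also covers $m=0$ (needed for the $n\,\sigmoid^{(n-1)}$ term when $n=1$), a case the cited statement technically only asserts for $m\ge 1$; the paper's version is shorter at the cost of an external dependency.
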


\begin{proof} 
By induction, we can show that 
\begin{align} 
\frac{d^n}{d x^n} \operatorname{SWISH}(x) = n \sigmoid^{(n-1)}(x) + x \sigmoid^{(n)}(x) , \quad n \geq 1 . \notag 
\end{align}  
By~\cite[Theorem 2]{MINAI1993845}, the derivatives of the sigmod function can be represented as  
\begin{align}
\sigmoid^{(n)}(x) = \sum_{k=1}^{n+1} (-1)^{k-1} (k-1)! S_{n+1, k} \sigmoid^k(x) , \ n\geq 1 . \notag 
\end{align} 
Combining the above two equations, we obtain the general form of the $n$-th derivative of the SWISH function.  
\end{proof}

\subsubsection{The Layer Norm}
\begin{lemma}[Bound of the Layer Norm for Derivative Type]\label{lem:LayerNorm}
    Fix $k \in \N$, $\beta\in \mathbb{R}^k$, $\gamma \in \mathbb{R}$, and $w\in [0,1]$. For the layer norm, given by 
    \begin{gather*}
        \layernorm: \R^k \rightarrow \R^k,\quad  x \mapsto  
        \gamma\, 
        f(x)
        g\circ\Sigma(x) + \beta; 
    \\
        f: \R^k \rightarrow \R^k,\quad  x \mapsto  x-M(x);
    \qquad
        g: \R \rightarrow \R, \quad  u \mapsto  \frac{1}{\sqrt{1 + u}};
    \\
        M: \R^k \rightarrow \R, \quad  x \mapsto  \frac{w}{k} \sum_{i=1}^k\,  x_i;
    \qquad
        \Sigma: \R^k \rightarrow \R, \quad  x \mapsto  \frac{w}{k}\, \sum_{i=1}^k\,  (x_i-M(x))^2;
    \end{gather*}
    holds for a compact symmetric set $K$ (using \Cref{not:DerivativeBounds})
    \begin{multline*}
        C^{\layernorm}_K(\alpha) \leq  
        \alpha! \gamma
        \sum_{m = 1}^{n=1} \frac{(2m+1)!! }{2^{2m}}
        \bigg(
        \sum_{\substack{\alpha' \leq \alpha\\ \vert \alpha' \vert = n-1}} 
            \sum_{\eta, \zeta \in \calP'(\alpha', m)} 
            \prod_{j=1}^{n} \frac{
                C^\Sigma_K(\order(\zeta^{(j)}))^{\vert \eta^{(j)} \vert}
            }{
                \eta^{(j)}!(\zeta^{(j)}!)^{|\eta^{(j)}|}
            }
    \\ 
    +
            \sum_{\eta, \zeta \in \calP'(\alpha, m)} 
            \prod_{j=1}^{n} \frac{
                C^\Sigma_K(\order(\zeta^{(j)}))^{\vert \eta^{(j)} \vert}
            }{
                \eta^{(j)}!(\zeta^{(j)}!)^{|\eta^{(j)}|}
            }
        \bigg),
    \end{multline*}
    where $C^\Sigma_K(\alpha) = 2w \Vert K \Vert$ for $\vert \alpha \vert = 1$, $C^\Sigma_K(\alpha) = 2w$ for $\vert \alpha \vert = 2$, and $C^\Sigma_K(\alpha) = 0$ otherwise.
\end{lemma}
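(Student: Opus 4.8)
Fix a multi-index $\alpha$ and set $n\eqdef|\alpha|\ge 1$. The strategy is to strip off the outer affine data, apply the multivariate Leibniz rule to the remaining product, and then feed the composite $g\circ\Sigma$ into the type-indexed Fa\`a di Bruno estimate of \Cref{thm:DerivativeBoundByType}, exploiting that $f$ is affine and $\Sigma$ is a quadratic polynomial. Concretely, write $\layernorm(x)_i=\gamma\, f_i(x)\,(g\circ\Sigma)(x)+\beta_i$, so that for $n\ge 1$ the constant $\beta_i$ drops out and $\gamma$ factors through: $D^\alpha\layernorm(\cdot)_i=\gamma\,D^\alpha\big(f_i\cdot(g\circ\Sigma)\big)$. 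Applying the multivariate Leibniz rule and using that $f_i(x)=x_i-M(x)$ is affine-linear — hence $D^\gamma f_i=0$ once $|\gamma|\ge 2$, $D^{e_l}f_i=\delta_{il}-w/k$ is a constant of modulus $\le 1$, and $\|f_i\|_K$ is bounded by an explicit constant in $\|K\|$ — reduces the Leibniz sum to the terms $\gamma=0$ and $\gamma=e_l$, giving
\[
C^{\layernorm}_K(\alpha)\;\le\;\gamma\Big(\,\|f\|_K\, C^{g\circ\Sigma}_K(\alpha)\;+\;\sum_{l:\,\alpha_l\ge 1}\alpha_l\, C^{g\circ\Sigma}_K(\alpha-e_l)\,\Big),
\]
where $\|f\|_K\eqdef\max_i\|f_i\|_K$. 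Thus the whole problem is pushed onto bounding $C^{g\circ\Sigma}_K$ at $\alpha$ and at the order-$(n-1)$ indices $\alpha-e_l$.

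Since $g(v)=(1+v)^{-1/2}$ is one-dimensional, \Cref{thm:DerivativeBoundByType} applied with outer function $g$, inner function $\Sigma$, and intermediate dimension $1$ (so every $\beta$ is a single integer $m$ and $N(\beta)=1$) yields
\[
C^{g\circ\Sigma}_K(\alpha)\;\le\;\alpha!\sum_{m=1}^{n}C^{g}_{\Sigma[K]}(m)\sum_{\eta,\zeta\in\calP'(\alpha,m)}\prod_{j=1}^{n}\frac{C^\Sigma_K(\order(\zeta^{(j)}))^{|\eta^{(j)}|}}{\eta^{(j)}!\,(\zeta^{(j)}!)^{|\eta^{(j)}|}}.
\]
Two elementary computations then feed into this. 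First, the closed form $g^{(m)}(v)=(-1)^m\tfrac{(2m-1)!!}{2^m}(1+v)^{-(2m+1)/2}$ together with $\Sigma(x)\ge 0$ on $K$ yields the explicit bound on $C^g_{\Sigma[K]}(m)$ recorded in the statement. Second, $\Sigma$ is a degree-two polynomial, so $D^\gamma\Sigma\equiv 0$ for $|\gamma|\ge 3$, while a direct differentiation of $\Sigma(x)=\tfrac wk\sum_i(x_i-M(x))^2$ gives $\|D^{e_l}\Sigma\|_K\le 2w\|K\|$ and $\|D^\gamma\Sigma\|_K\le 2w$ for $|\gamma|=2$; that is, $C^\Sigma_K$ takes the values $2w\|K\|$, $2w$, $0$ at orders $1$, $2$, $\ge 3$, exactly as claimed.

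It remains to reassemble. Substituting the $C^{g\circ\Sigma}_K$-bounds into the Leibniz estimate, the order-$(n-1)$ contribution is handled with the identity $\alpha_l\,(\alpha-e_l)!=\alpha!$, which turns $\sum_{l:\alpha_l\ge 1}\alpha_l\, C^{g\circ\Sigma}_K(\alpha-e_l)$ into $\alpha!\sum_{\alpha'\le\alpha,\,|\alpha'|=n-1}(\cdots)$, the outer index $\alpha'$ ranging precisely over the $\alpha-e_l$. Collecting the common prefactor $\alpha!\,\gamma\sum_m C^g_{\Sigma[K]}(m)$ then produces exactly the two-sum expression of the statement, the $m=n$ slice of the $|\alpha'|=n-1$ sum being vacuous.

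The analytic inputs — the closed form for $g^{(m)}$ and the first/second derivative bounds on $\Sigma$ — are routine. The step that requires care, and which I expect to be the main obstacle, is the combinatorial bookkeeping in the reassembly: aligning the multinomial coefficients produced by the Leibniz rule with the $\alpha!$ normalisation and the $\calP'$-sums coming from \Cref{thm:DerivativeBoundByType}, and verifying that the affine reduction of $f$ contributes a sum over $\{\alpha'\le\alpha:|\alpha'|=n-1\}$ rather than over all sub-multi-indices, so that no spurious multiplicities survive.
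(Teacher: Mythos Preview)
Your proposal is correct and follows essentially the same route as the paper's proof: both strip off the affine data, apply the multivariate Leibniz rule to $f\cdot(g\circ\Sigma)$ and use the affinity of $f$ to reduce the sum to the $|\gamma|\le 1$ terms, then invoke \Cref{thm:DerivativeBoundByType} for $C^{g\circ\Sigma}_K$ together with the explicit derivative computations for $g$ and $\Sigma$. The reassembly via $\alpha_l(\alpha-e_l)!=\alpha!$ that you highlight is exactly how the paper passes from the Leibniz form $\sum_{|\beta|=1}\tfrac{\alpha!}{(\alpha-\beta)!}C^{g\circ\Sigma}_K(\alpha-\beta)$ to the $\{\alpha'\le\alpha:|\alpha'|=n-1\}$ sum in the statement.
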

\begin{proof}
Note that 
\begin{align*} 
    g^{(n)}(x) = (-1)^n \frac{(2n+1)!}{n!2^{2n}} (1+x)^{-\frac{1}{2}-n},
\end{align*}
implying $C^g_K(n) \leq (2n+1)!! 2^{-2n}$, $!!$ denoting the double factorial. We have further $C^{f}_K(\alpha) \leq \mathbbm{1}_{\vert \alpha \vert = 1}$ and a direct computation yields
\begin{align*}
    C^\Sigma_K(\alpha) \leq \begin{cases}
        2w\Vert K \Vert 
        &
        \text{ for } \vert \alpha \vert = 1
        \\
        2w
        &
        \text{ for } \vert \alpha \vert = 2
        \\
        0
        &
        \text{ else. } 
    \end{cases}
\end{align*}
By \Cref{thm:DerivativeBoundByType}, 
\begin{align*}
    C^{g\circ \Sigma}_K(\alpha)
    &\leq
    \alpha! \sum_{m = 1}^n C^g_{\Sigma[K]}(m)
    \sum_{\eta, \zeta \in \calP'(\alpha, m)} 
    \prod_{j=1}^{n} \frac{
        C^\Sigma_K(\order(\zeta^{(j)}))^{\vert \eta^{(j)} \vert}
    }{
        \eta^{(j)}!(\zeta^{(j)}!)^{|\eta^{(j)}|}
    }.
\end{align*}
According to the general multivariate Leibnitz rule, it holds that 
\begin{align*}
D^{\alpha}(f \cdot (g \circ \Sigma)) 
= \sum_{\beta \leq \alpha}  \frac{\alpha!}{\beta!(\alpha - \beta)!} D^{\beta}f \cdot D^{\alpha - \beta}(g\circ \Sigma) 
\end{align*}
which implies
\begin{align*}
        C^{\layernorm}_K (\alpha)
    \leq  
        C^{g\circ \Sigma}_K(\alpha) \Vert K \Vert
        +
        \sum_{\beta \leq \alpha, \vert \beta \vert = 1}  \frac{\alpha!}{(\alpha - \beta)!} 
            C^{g\circ \Sigma}_K(\alpha - \beta). 
\end{align*}
\end{proof}

\begin{corollary}[Bound of the Layer Norm for Derivative Level]\label{cor:LayerNorm}
    In the setting of \Cref{lem:LayerNorm}, it holds that
    \begin{align*}
        C^{\layernorm}_K(\cleq n) \leq 2w \Vert K \Vert(2n+1)!! 2^{-2n} (\Vert K \Vert + kn) 
        \Big[ \frac{ 2 n}{e\ln n} ( 1 + o(1) ) \Big]^{n}.
    \end{align*}
Furthermore, we have the asymptotic estimate 
\begin{align*}
        C^{g\circ \Sigma}_K(\cleq n)
    \in 
        \mathcal{O}\Big(
            w \|K\|
            \,
            n^{1/2}
            \big(
                    \frac{
                        n^{5/2}
                    }{
                        e^{3/4}\ln(n)
                    }
                \,
                (
                    1 + o(1)
                )
            \big)^n
        \Big)
.
\end{align*}
\end{corollary}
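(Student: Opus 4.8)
The plan is to specialise the level-specific composite bound of \Cref{cor:DerivativeBoundByLevel} to the compositional structure of the layer norm that was already exposed in the proof of \Cref{lem:LayerNorm}, and then to obtain the asymptotic estimate by a Stirling approximation. Recall from there that $\layernorm=\gamma\,f\cdot(g\circ\Sigma)+\beta$ with $f(x)=x-M(x)$ affine and $g(u)=(1+u)^{-1/2}$; I would suppress the scalar $\gamma$ below, which in the setting considered is bounded. The multivariate Leibniz rule, combined with $D^{\delta}f\equiv 0$ for $|\delta|\geq 2$, $\|D^{e_j}f\|_K\leq 1$, and $\|f\|_K\leq\|K\|$, gives for $|\alpha|=n$
\begin{align*}
    C^{\layernorm}_K(\alpha)
    \;\leq\;
    \|K\|\,C^{g\circ\Sigma}_K(\alpha)
    +\sum_{\delta\leq\alpha,\;|\delta|=1}\frac{\alpha!}{(\alpha-\delta)!}\,C^{g\circ\Sigma}_K(\alpha-\delta),
\end{align*}
exactly as recorded at the end of that proof. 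Taking the supremum over $|\alpha|\leq n$, using $\tfrac{\alpha!}{(\alpha-\delta)!}=\alpha_j\leq n$ for $\delta=e_j$, and noting that $\alpha$ has at most $k$ nonzero coordinates, the whole task reduces to an estimate for the composite quantity:
\begin{align*}
    C^{\layernorm}_K(\cleq n)\;\leq\;(\|K\|+kn)\,C^{g\circ\Sigma}_K(\cleq n).
\end{align*}

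To bound $C^{g\circ\Sigma}_K(\cleq n)$ I would apply \Cref{cor:DerivativeBoundByLevel} with outer map $g$ and inner map $\Sigma\colon\R^k\to\R$, so that the inner dimension is $m=1$, which (reading the outer/inner roles as in \Cref{thm:FaaDiBruno}) yields
\begin{align*}
    C^{g\circ\Sigma}_K(\cleq n)
    \;\leq\;
    \max_{1\leq n'\leq n}\,C^{g}_{\Sigma[K]}(n')\,\bigl(C^{\Sigma}_K(\cleq n)\bigr)^{n'}\,
    \Bigl[\tfrac{2n}{e\ln n}(1+o(1))\Bigr]^{n}.
\end{align*}
Both inputs were computed while proving \Cref{lem:LayerNorm}: as $\Sigma$ is quadratic, $C^{\Sigma}_K(\alpha)=0$ for $|\alpha|>2$ and $C^{\Sigma}_K(\cleq n)\leq 2w\|K\|$ (the order-one bound dominating the order-two bound $2w$ once $\|K\|\geq 1$), and from $g^{(n)}(u)=(-1)^n\tfrac{(2n+1)!}{n!\,2^{2n}}(1+u)^{-1/2-n}$ with $\Sigma\geq 0$ on $K$ one gets $C^{g}_{\Sigma[K]}(n')\leq (2n'+1)!!\,2^{-2n'}\leq (2n+1)!!\,2^{-2n}$. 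Plugging these in and keeping the dominant $n'$ gives the first displayed bound of the corollary; the ``furthermore'' estimate then comes from a routine Stirling approximation of $(2n+1)!!=(2n+1)!/(2^n n!)$, after which the powers of $n$ and of $e$ emerging from the double factorial, from the factor $[2n/(e\ln n)]^n$, and from the prefactor $w\|K\|(\|K\|+kn)$ are collected into the stated $\mathcal{O}$-form.

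The step I expect to be the real obstacle is the conversion of the \emph{type}-indexed estimate of \Cref{lem:LayerNorm} --- whose combinatorial weights run over $\calP'(\alpha,\beta)$ and already incorporate the vanishing of $C^{\Sigma}_K$ outside orders one and two --- into a clean \emph{level}-indexed bound, and in particular the accounting of the powers of $\|K\|$: a naive treatment of the factor $\bigl(C^{\Sigma}_K(\cleq n)\bigr)^{n'}$ would produce up to $\|K\|^{n}$, so one must use the structure of the surviving terms to keep the $\|K\|$-dependence low. The Leibniz reduction and the final Stirling pass are, by contrast, straightforward, and smoothness of the layer norm together with the elementary derivative bounds for $g$ and $\Sigma$ are already in hand from \Cref{lem:LayerNorm}.
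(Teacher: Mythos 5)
Your proposal follows the paper's proof of this corollary essentially verbatim: the same Leibniz reduction to $(\|K\|+kn)\,C^{g\circ\Sigma}_K(\cleq n)$, the same application of \Cref{cor:DerivativeBoundByLevel} with outer map $g$ and inner map $\Sigma$, and the same double-factorial Stirling asymptotic at the end. The one step you flag as the "real obstacle" --- reducing the factor $\bigl(C^{\Sigma}_K(\cleq n)\bigr)^{n'}$, which naively contributes $(2w\|K\|)^{n'}$, to the single factor $2w\|K\|$ appearing in the stated bound --- is precisely the step the paper's own proof does not justify: it passes from $C^g_{\Sigma[K]}(\cleq n)\,C^\Sigma_K(\cleq n)^{n}\bigl[\tfrac{2n}{e\ln n}(1+o(1))\bigr]^n$ directly to $2w\|K\|(2n+1)!!\,2^{-2n}\bigl[\tfrac{2n}{e\ln n}(1+o(1))\bigr]^n$, silently dropping the exponent $n$. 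As written that inequality only holds when $2w\max\{\|K\|,1\}\le 1$; otherwise a factor of order $(2w\max\{\|K\|,1\})^{n}$ should survive (and propagate into the asymptotic estimate). So your derivation matches the paper's, and the concern you raise is a genuine gap in the statement/proof rather than something you overlooked.
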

\begin{proof}
    Analogue to the proof of \Cref{lem:LayerNorm},
\begin{align*}
        C^{\layernorm}_K (\cleq n)
    \leq  
        \Vert K \Vert C^{g\circ \Sigma}_K(\cleq n) 
        +
        k  n 
            C^{g\circ \Sigma}_K(\cleq n-1) 
    \leq  
        (\Vert K \Vert + kn) C^{g\circ \Sigma}_K(\cleq n), 
\end{align*}
where we can use \Cref{cor:DerivativeBoundByLevel} to bound
\begin{align}
\nonumber
    C^{g\circ \Sigma}_K(\cleq n)
    &\leq
    C^g_{\Sigma[K]}(\cleq n)
        C^\Sigma_K({\cleq n})^{n}
        \Big[ \frac{ 2 n}{e\ln n} ( 1 + o(1) ) \Big]^{n}
    \\
\label{eq:reducemeplease}
    &\leq 2w \Vert K \Vert(2n+1)!! 2^{-2n}
        \Big[ \frac{ 2 n}{e\ln n} ( 1 + o(1) ) \Big]^{n}.
\end{align}
Since $2n+1$ is odd, for each $n\in \mathbb{N}_+$, then sterling approximation \textit{for double factorial} yields the asymptotic
\begin{equation}
\label{eq:stirling_redux}
        (2n+1)!! 
    \in 
        \mathcal{O}\Big(
            \sqrt{2 n}\left(\frac{n}{e}\right)^{n/2}
        \Big)
.
\end{equation}
Merging~\eqref{eq:stirling_redux} with the right-hand side of~\eqref{eq:reducemeplease} yields
\begin{align*}
        C^{g\circ \Sigma}_K(\cleq n)
    \in 
        \mathcal{O}\Big(
            w \|K\|
            \,
            n^{1/2}
            \big(
                    \frac{
                        n^{5/2}
                    }{
                        e^{3/4}\ln(n)
                    }
                \,
                (
                    1 + o(1)
                )
            \big)^n
        \Big)
.
\end{align*}
\end{proof}

\subsubsection{The Multilayer Perceptron (Feedforward Neural Network) with Skip Connection}
\begin{definition}[Single-Layer Feedforward Neural Network with Skip Connection]\label{defn:ff}
Fix a non-affine activation function $\sigma\in C^{\infty}(\R)$ and dimensions $\idim, \ldim, \odim \in \N$.
A feedforward neural network is a map $\neuraln:\mathbb{R}^{\idim}\to \mathbb{R}^{\odim}$ represented for each $x\in \mathbb{R}^{\idim}$ by
\begin{align}\label{eq:FF}
        \neuraln (x) 
    & \eqdef  
        B^{(1)} x + B^{(2)} \big(\sigma\bullet (A\, x + a)\big)
\end{align}
for
$A\in \mathbb{R}^{\ldim \times \idim}$, $a\in \mathbb{R}^{\ldim}$, 
$B^{(1)} \in \mathbb{R}^{\odim \times \idim}$, and $B^{(2)} \in \mathbb{R}^{\odim \times \ldim}$.
\end{definition}
\begin{lemma}[Bound of Neural Networks for Derivative Type]\label{lem:FeedForward}
    In the notation of \Cref{not:DerivativeBounds}, \Cref{lem:Dotp}, and \Cref{defn:ff}, it holds that
    \begin{align*}
        C^{\neuraln}_K(\alpha)
        &\leq
            C^{B^{(1)}}\mathbbm{1}_{\vert \alpha \vert = 1}
        +
            \ldim (\alpha!)^2 C^{B^{(2)}}
            \sum_{m=1}^n  C^\sigma_{{h}[K]}(m)
        \cdot
        (C^{A})^m
        \sum_{\eta, \zeta \in \calP'(\alpha,m)} 
        \prod_{j=1}^{n} \frac{
            \mathbbm{1}_{\vert \zeta^{(j)} \vert \leq 1}
        }{
            \eta^{(j)}!
        },
    \end{align*}
    where $h[K]$ is defined as the image of $h(x) \eqdef Ax + a$ on $K$. 
\end{lemma}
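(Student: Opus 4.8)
The plan is to decompose $\neuraln$ into its linear skip connection and its nonlinear branch, dispatch the former directly, handle the latter with the type‑indexed Fa\`a di Bruno bound of \Cref{thm:DerivativeBoundByType}, and then exploit that the pre‑activation map is affine to collapse the combinatorial sum into the stated form. Write $n\eqdef|\alpha|\geq 1$ and $h(x)\eqdef Ax+a\in\R^{\ldim}$, so that $\neuraln(x)=B^{(1)}x+B^{(2)}\big(\sigma\bullet h(x)\big)$. Applying the triangle inequality coordinatewise, $C^{\neuraln}_K(\alpha)$ is bounded by the sum of the corresponding quantities for the two maps $x\mapsto B^{(1)}x$ and $x\mapsto B^{(2)}(\sigma\bullet h(x))$, so it suffices to bound each.

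For the linear term, every $\gamma\sim\alpha$ satisfies $D^{\gamma}(B^{(1)}x)_i=0$ when $|\gamma|\geq 2$ and $D^{e_l}(B^{(1)}x)_i=B^{(1)}_{i,l}$ when $\gamma=e_l$; hence its contribution to $C^{\neuraln}_K(\alpha)$ is at most $C^{B^{(1)}}\mathbbm{1}_{|\alpha|=1}$, which is the first summand. For the nonlinear term I would write its $i$‑th coordinate as $\sum_{p=1}^{\ldim}B^{(2)}_{i,p}\,\sigma\big(h_p(x)\big)$, where $h_p(x)\eqdef(Ax+a)_p$ is scalar‑valued and affine, differentiate, apply the triangle inequality, and pull out $\ldim\,C^{B^{(2)}}$; this reduces the problem to bounding $C^{\sigma\circ h_p}_K(\alpha)$ for each $p$. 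Applying \Cref{thm:DerivativeBoundByType} to $\sigma\circ h_p$ — with outer function $\sigma:\R\to\R$, so the intermediate dimension (the ``$m$'' of that theorem) is $1$, and inner function $h_p:\R^{\idim}\to\R$ — and using that $\Order^1_{\leq n}$ may be identified with $\{1,\dots,n\}$ with $N(\cdot)\equiv 1$, together with $C^{\sigma}_{h_p[K]}(m)\leq C^{\sigma}_{h[K]}(m)$, gives
\[
    C^{\sigma\circ h_p}_K(\alpha)
    \leq
    \alpha!\sum_{m=1}^{n} C^{\sigma}_{h[K]}(m)
    \sum_{\eta,\zeta\in\calP'(\alpha,m)}
    \prod_{j=1}^{n}\frac{C^{h_p}_K(\order(\zeta^{(j)}))^{|\eta^{(j)}|}}{\eta^{(j)}!\,(\zeta^{(j)}!)^{|\eta^{(j)}|}}.
\]

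Next I would exploit that $h_p$ is affine: all of its partials of order $\geq 2$ vanish, so $C^{h_p}_K(\gamma)=0$ for $|\gamma|\geq 2$ while $C^{h_p}_K(e_l)=|A_{p,l}|\leq C^{A}$. Consequently only those $(\eta,\zeta)\in\calP'(\alpha,m)$ with $|\zeta^{(j)}|\leq 1$ for every $j$ survive; on these $\zeta^{(j)}!=1$, and the constraint $\sum_j\eta^{(j)}=\beta$ (with $|\beta|=m$) forces $\sum_j|\eta^{(j)}|=m$, whence $\prod_j C^{h_p}_K(\order(\zeta^{(j)}))^{|\eta^{(j)}|}\leq (C^A)^m$. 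Substituting, taking the maximum and then summing over $p=1,\dots,\ldim$, multiplying by $\ldim\,C^{B^{(2)}}$, and adding the linear contribution $C^{B^{(1)}}\mathbbm{1}_{|\alpha|=1}$ produces the claimed inequality (with a single factor $\alpha!$; retaining the stated $(\alpha!)^2$ only loosens the bound). The one place requiring genuine care — and the main obstacle — is the bookkeeping with the index set $\calP'(\alpha,\beta)$ of the multivariate Fa\`a di Bruno formula: one must check that the affineness of $h$ simultaneously annihilates every term containing some $\zeta^{(j)}$ with $|\zeta^{(j)}|\geq 2$ and pins the total exponent of $C^{A}$ to exactly $m$, so that the surviving product is precisely $\prod_{j=1}^{n}\mathbbm{1}_{|\zeta^{(j)}|\leq 1}/\eta^{(j)}!$ times $(C^A)^m$. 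This is routine but must be carried out carefully.
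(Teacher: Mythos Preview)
Your proposal is correct and follows essentially the same route as the paper: decompose $\neuraln$ into its affine skip connection and the nonlinear branch, apply \Cref{thm:DerivativeBoundByType} to each $\sigma\circ h_p$, and use the affineness of $h_p$ to kill all $\zeta^{(j)}$ with $|\zeta^{(j)}|\geq 2$. The only difference is that the paper bundles the outer map $B^{(2)}$ via \Cref{cor:DerivativeBoundAffine} (which contributes an extra $\alpha!$), whereas you pull out $\ldim\,C^{B^{(2)}}$ by a direct triangle inequality; as you observe, this yields a single $\alpha!$ rather than $(\alpha!)^2$, so your argument is in fact slightly sharper than the stated lemma.
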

\begin{proof}
    Write $\neuraln(x) =  B^{(1)} x + B^{(2)}((g_i(x))_{i=1}^{\ldim})$, where for $i \in \myset{1}{\ldim}$
    \begin{align*}
        g_i(x) \eqdef \sigma((Ax + a)_i).
    \end{align*}
    If we define $h_i(x) \eqdef h(x)_i$, we follow with \Cref{thm:DerivativeBoundByType}
    \begin{align*}
    C^{g_i}_K(\alpha)
        &\leq
        \alpha! \sum_{m=1}^n  C^\sigma_{{h}[K]}(m)
        \cdot
        (C^{A})^m
        \sum_{\eta, \zeta \in \calP'(\alpha,m)} 
        \prod_{j=1}^{n} \frac{
            \mathbbm{1}_{\vert \zeta^{(j)} \vert \leq 1}
        }{
            \eta^{(j)}!
        }
        \eqdef
    C^{g}_K(\alpha)
        ,
    \end{align*}
    and due to the component wise application of the activation function it holds that \begin{align*}
        \Vert D^\alpha \max_{i \in \myset{1}{\ldim}} g_i(x) \Vert_K = \max_{i \in \myset{1}{\ldim}} C^{g_i}_K(\alpha) \leq C^{g}_K(\alpha).
    \end{align*}
    Using \Cref{cor:DerivativeBoundAffine}, we obtain
    \begin{align*}
        C^{\neuraln}_K(\alpha)
        &\leq
            C^{B^{(1)}}\mathbbm{1}_{\vert \alpha \vert = 1}
        +
            \ldim \alpha! C^{B^{(2)}}
            C^g_K(\alpha).
    \end{align*}
\end{proof}

\begin{corollary}[Bound of Neural Networks for Derivative Level]\label{cor:FeedForward}
    In the setting of \Cref{lem:FeedForward},
    \begin{align*}
        C^{\neuraln}_K(\cleq n)
        &\leq
            C^{B^{(1)}}
        +
            \ldim n! C^{B^{(2)}}
        C^\sigma_{{h}[K]}({\cleq n})
        (C^{A})^{n}
        \Big[ \frac{ 2 n}{e\ln n} ( 1 + o(1) ) \Big]^{n}.
    \end{align*}
If, moreover, $K=[-M_1,M_2]^{\idim}$ then 
\begin{align*}
        C^{\neuraln}_K(\cleq n)
    \in 
        \mathcal{O}
        \Big(
                \|B^{(1)}\|_{\infty}
            +
                \|B^{(2)}\|_{\infty}
                \|A\|_{\infty}^n
                \|\sigma\|_{n:
                        \operatorname{Ball}(a,\sqrt{\idim |M_1+M_2|})
                    }
                \operatorname{Width}(\neuraln)
                \,
                n^{1/2}\big(\frac{n}{e}\big)^n
                C_n^n
        \Big)
\end{align*}
\end{corollary}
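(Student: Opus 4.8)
The plan is to derive the first inequality by feeding the type-specific bound of \Cref{lem:FeedForward} into the level-aggregation estimate of \Cref{cor:DerivativeBoundByLevel}, and then to pass to the displayed $\mathcal{O}$-estimate by inserting Stirling's approximation together with a crude localization of the image of the affine pre-activation map.

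First, recall from (the proof of) \Cref{lem:FeedForward} that $\neuraln(x)=B^{(1)}x+B^{(2)}\big((\sigma\circ h_i(x))_{i=1}^{\ldim}\big)$ with $h(x)\eqdef Ax+a$ affine, so that $C^{\neuraln}_K(\alpha)\leq C^{B^{(1)}}\mathbbm{1}_{\vert\alpha\vert=1}+\ldim\,\alpha!\,C^{B^{(2)}}\,C^{g}_K(\alpha)$, where $g_i\eqdef\sigma\circ h_i$ acts componentwise. Since $h_i$ is affine, $C^{h_i}_K(\cleq n)\leq C^{A}$ (its first-order derivatives are entries of $A$, all higher-order ones vanish), and because $\sigma$ is applied entrywise the inner dimension in \Cref{thm:FaaDiBruno} is $m=1$. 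Applying \Cref{cor:DerivativeBoundByLevel} to each $g_i$ — equivalently, \Cref{lem:mdFDBestimate} with the $(\zeta^{(j)}!)$-factors trivialized by $\vert\zeta^{(j)}\vert\leq1$ — gives $C^{g}_K(\cleq n)\leq C^{\sigma}_{h[K]}(\cleq n)\,(C^{A})^{n}\,\big[\tfrac{2n}{e\ln n}(1+o(1))\big]^{n}$; multiplying by $\ldim\,C^{B^{(2)}}$, absorbing $\alpha!\leq n!$ for $\vert\alpha\vert\leq n$, and adding the order-one term $C^{B^{(1)}}$ yields the first displayed bound.

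For the asymptotic estimate, fix $K=[-M_1,M_2]^{\idim}$. Every $x\in K$ satisfies $\Vert x\Vert\leq\sqrt{\idim}\max\{M_1,M_2\}\leq\sqrt{\idim\vert M_1+M_2\vert}$, so, after absorbing the operator norm of $A$ into the standing weight bound, $h[K]\subseteq\operatorname{Ball}\big(a,\sqrt{\idim\vert M_1+M_2\vert}\big)$ and hence $C^{\sigma}_{h[K]}(\cleq n)\leq\Vert\sigma\Vert_{n:\operatorname{Ball}(a,\sqrt{\idim\vert M_1+M_2\vert})}$. Substituting this into the first bound, writing $C_n\eqdef\tfrac{2n}{e\ln n}(1+o(1))$, invoking Stirling's approximation $n!\in\mathcal{O}\big(n^{1/2}(n/e)^{n}\big)$, identifying $\ldim=\operatorname{Width}(\neuraln)$, and replacing the entrywise constants $C^{A},C^{B^{(1)}},C^{B^{(2)}}$ by the larger quantities $\Vert A\Vert_{\infty},\Vert B^{(1)}\Vert_{\infty},\Vert B^{(2)}\Vert_{\infty}$ up to $n$-independent dimensional factors (which the $\mathcal{O}$ swallows), one collects precisely $\mathcal{O}\big(\Vert B^{(1)}\Vert_{\infty}+\Vert B^{(2)}\Vert_{\infty}\Vert A\Vert_{\infty}^{n}\Vert\sigma\Vert_{n:\operatorname{Ball}(a,\sqrt{\idim\vert M_1+M_2\vert})}\operatorname{Width}(\neuraln)\,n^{1/2}(n/e)^{n}C_n^{n}\big)$.

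Each individual step is routine given the earlier results (\Cref{cor:DerivativeBoundAffine}, \Cref{cor:DerivativeBoundByLevel}, \Cref{lem:mdFDBestimate}). The main obstacle is the bookkeeping: confirming that the entrywise action of $\sigma$ really collapses the Fa\`a di Bruno inner dimension to $m=1$ (so that no $\binom{n+m-1}{m-1}$-type multiplicities survive), pinning $h[K]$ down tightly enough to land in the claimed ball, and carefully tracking which of the dimension factors ($\idim$, $\ldim$, and the passage from $C^{A}$ to $\Vert A\Vert_{\infty}$, etc.) are $\mathcal{O}(1)$ in $n$ and may therefore be absorbed, as opposed to those — such as $\operatorname{Width}(\neuraln)$ and the $n$-dependent factors — that must remain displayed.
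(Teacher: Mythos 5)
Your proposal is correct and follows essentially the same route as the paper: the first inequality is obtained exactly as in the paper by running the argument of \Cref{lem:FeedForward} with \Cref{cor:DerivativeBoundByLevel} in place of the type-specific bound, and the order estimate is then obtained via Stirling's approximation, the inclusion $h[K]\subseteq\operatorname{Ball}(a,\sqrt{\idim\vert M_1+M_2\vert})$, and absorption of the entrywise constants into $\|\cdot\|_{\infty}$ norms. The only (shared) looseness is in the ball-inclusion step, where the dependence on $A$ is swept into the weight bounds just as in the paper's own proof, so no genuine gap is introduced.
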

\begin{proof}
Arguing analogously to the proof of~\Cref{lem:FeedForward}, barring the usage of \Cref{cor:DerivativeBoundByLevel}, we 
obtain the estimate
\begin{align}
\label{eq:pre_order_estimate}
    C^{\neuraln}_K(\cleq n)
    &\leq
        C^{B^{(1)}}
    +
        \ldim n! C^{B^{(2)}}
    C^\sigma_{{h}[K]}({\cleq n})
    (C^{A})^{n}
    \Big[ \frac{ 2 n}{e\ln n} ( 1 + o(1) ) \Big]^{n}
.
\end{align}
Let $C_n\eqdef\frac{ 2 n}{e\ln n} ( 1 + o(1) ) $
Using Stirling's approximation and the definition of the component-wise $\|\cdot\|_{\infty}$ norm of a matrix,~\eqref{eq:pre_order_estimate} becomes
\begin{align}
    C^{\neuraln}_K(\cleq n)
    &
    \in \mathcal{O}
    \Big(
            \|B^{(1)}\|_{\infty}
        +
            \|B^{(2)}\|_{\infty}
            \|A\|_{\infty}^n
            C^\sigma_{{h}[K]}({\cleq n})
            \ldim 
            \,
            n^{1/2}\big(\frac{n}{e}\big)^n
            C_n^n
    \Big)
.
\end{align}
If, there is some $M_1,M_2\le 0$, such that $K=[0,\beta]^d$ then using the estimate between the $\|\cdot\|_2$ and $\|\cdot\|_{\infty}$ norms on $\mathbb{R}^{\idim}$ and the linearity of $A$ we estimate 
\begin{align*}
    C^\sigma_{{h}[K]}({\cleq n}) 
\le 
    C^\sigma_{
        \operatorname{Ball}(a,\sqrt{\idim |M_1+M_2|})
    }({\cleq n}) 
\le 
    \|\sigma\|_{n:
        \operatorname{Ball}(a,\sqrt{\idim |M_1+M_2|})
    }
.
\end{align*}
Upon $\operatorname{Width}(\neuraln)\eqdef \max\{\idim,\odim,\ldim\}$, the estimate~\eqref{eq:pre_order_estimate} implies that $C^{\neuraln}_K(\cleq n)$ is of the order of
\begin{align}
\label{eq:pre_order_estimate__Kisabox}
    \mathcal{O}
    \Big(
    &
            \|B^{(1)}\|_{\infty}
        +
            \|B^{(2)}\|_{\infty}
            \|A\|_{\infty}^n
            \|\sigma\|_{n:
                    [-\|a\|_{\infty}-\sqrt{\idim |M_1+M_2|},
                    \|a\|_{\infty}+\sqrt{\idim |M_1+M_2|}]
                }
    \\
    \nonumber
    & \times 
            \operatorname{Width}(\neuraln)
            \,
            n^{1/2}\big(\frac{n}{e}\big)^n
            C_n^n
    \Big)
.
\end{align}
\end{proof}

\subsection{Step 2 (B) - Transformers}
\label{s:Proofs__TransformerCsNorms}
We may now merge the computations in Subsection~\ref{sec:RegularityTransfomerBuildingBlocks}, with the Fa`{a} di Bruno-type from Section~\ref{s:Compositions_and__FaadiBruno} to uniformly bound the $C^s$-norms of the relevant class transformer networks.  Our results are derived in two verions: the first is of ``derivative type'' (which is much smaller and more precise but consequentially more complicated) and the second is in ``derivative level'' form (cruder but simpler but also looser).   

\begin{theorem}[By Derivative Type]\label{thm:TransformerBlockSobolevBound} 
    Let $K$ be a compact set, $\tblock$ a transformer block as in \Cref{defn:TransformerBlock}, and $\alpha \in \Order^{M\idim}_n, n \in \N$. Then,
    \begin{align*}
        C^{\tblock}_K(\alpha) &\leq
        \alpha! \sum_{\beta \in \Order^{\odim}_{\leq n}}  N(\beta) C^{\layernorm}_{K\tagA}(\beta)
        \sum_{\eta, \zeta \in \calP'(\alpha,\beta)} 
        \prod_{j=1}^{n} \frac{
            C\tagA_K(\order(\zeta^{(j)}))^{\vert \eta^{(j)} \vert}
        }{
            \eta^{(j)}!(\zeta^{(j)}!)^{|\eta^{(j)}|}
        }, 
    \end{align*}
    where for all $\gamma \in \Order^{M\idim}_{\leq n}$:
    \begin{align*}
        C\tagA_K(\gamma) & \eqdef 
        \gamma! \sum_{\beta \in \Order^{\idim}_{\leq n}}  N(\beta) C^{\neuraln}_{K\tagB}(\beta)
        \sum_{\eta, \zeta \in \calP'(\gamma,\beta)} 
        \prod_{j=1}^{n} \frac{
            C\tagB_K(\order(\zeta^{(j)}))^{\vert \eta^{(j)} \vert}
        }{
            \eta^{(j)}!(\zeta^{(j)}!)^{|\eta^{(j)}|}
        },
    \\
        C\tagB_K(\gamma) & \eqdef
        \gamma! \sum_{\beta \in \Order^{\idim}_{\leq n}}  N(\beta) C^{\layernorm}_{K\tagC}(\beta)
        \sum_{\eta, \zeta \in \calP'(\gamma,\beta)} 
        \prod_{j=1}^{n} \frac{
            C\tagC_K(\order(\zeta^{(j)}))^{\vert \eta^{(j)} \vert}
        }{
            \eta^{(j)}!(\zeta^{(j)}!)^{|\eta^{(j)}|}
        },
    \\
        C\tagC_K(\gamma) & \eqdef \mathbbm{1}_{\vert \gamma \vert = 1} + C^{\multihead}_K(\gamma).
    \end{align*}
    In the above, $K\tagC = \bigcup_{m = 0}^M\multihead_m[K]$, $K\tagB = \layernorm[K\tagC]$, and $K\tagA = \neuraln[K\tagB]$.
    
    For respective multi-indices, a bound for $C^{\layernorm}_{K\tagA}, C^{\layernorm}_{K\tagC}$ is given by \Cref{lem:LayerNorm}, $C^{\neuraln}_{K\tagB}$ is bounded in \Cref{lem:FeedForward}, and a bound for $C^{\multihead}_{K}$ is given in \Cref{cor:MultiHead}.
\end{theorem}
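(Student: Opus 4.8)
The plan is to recognize the transformer block as a four-fold composition of elementary maps and then apply the composition estimate \Cref{thm:DerivativeBoundByType} three times, peeling off the outermost map at each stage. Smoothness of $\tblock$ (\Cref{prop:TransformerIsSmooth}) guarantees that every derivative below exists. Comparing \Cref{defn:TransformerBlock} with \Cref{defn:ff}, one sees that, acting row-wise where appropriate,
\[
    \tblock = \layernorm_2 \circ \neuraln \circ \layernorm_1 \circ (\mathrm{id} + \multihead),
\]
where $\layernorm_1$ and $\layernorm_2$ are the layer norms with parameters $(\gamma_1,\beta_1,w_1)$ and $(\gamma_2,\beta_2,w_2)$, and $\neuraln$ is the single-layer perceptron with skip connection of \Cref{defn:ff} with the matrices $A,a,B^{(1)},B^{(2)}$ appearing in \Cref{defn:TransformerBlock}. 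The intermediate sets are exactly those named in the statement: $K\tagC$ is the image produced by the inner $(\mathrm{id}+\multihead)$ step, $K\tagB = \layernorm_1[K\tagC]$, and $K\tagA = \neuraln[K\tagB]$, each compact since $K$ is and $\multihead,\layernorm_1,\neuraln$ are continuous.

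First I would apply \Cref{thm:DerivativeBoundByType} to $\tblock = \layernorm_2 \circ G_3$ with $G_3 \eqdef \neuraln \circ \layernorm_1 \circ (\mathrm{id}+\multihead)$, taking $f = \layernorm_2$ defined on $K\tagA = G_3[K]$ and $g = G_3$; this produces the top display of the statement once $C\tagA_K(\gamma) \eqdef C^{G_3}_K(\gamma)$. Applying \Cref{thm:DerivativeBoundByType} again to $G_3 = \neuraln \circ G_2$ with $G_2 \eqdef \layernorm_1 \circ (\mathrm{id}+\multihead)$, $f = \neuraln$ on $K\tagB = G_2[K]$, and $g = G_2$, rewrites $C\tagA_K$ in the claimed nested form with $C\tagB_K(\gamma) \eqdef C^{G_2}_K(\gamma)$. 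A third application to $G_2 = \layernorm_1 \circ (\mathrm{id}+\multihead)$, with $f = \layernorm_1$ on $K\tagC$ and $g = \mathrm{id}+\multihead$, yields the formula for $C\tagB_K$ in terms of $C\tagC_K(\gamma) \eqdef C^{\mathrm{id}+\multihead}_K(\gamma)$. Finally, since $D^\gamma(\mathrm{id}+\multihead) = D^\gamma \mathrm{id} + D^\gamma \multihead$, and the identity map satisfies $\|D^\gamma \mathrm{id}\| \le 1$ when $|\gamma| = 1$ and $D^\gamma \mathrm{id} = 0$ when $|\gamma| \ge 2$, the triangle inequality gives $C\tagC_K(\gamma) \le \mathbbm{1}_{|\gamma| = 1} + C^{\multihead}_K(\gamma)$, which is the last line of the statement. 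The pointwise bounds on $C^{\layernorm}_{K\tagA}$, $C^{\layernorm}_{K\tagC}$, $C^{\neuraln}_{K\tagB}$, and $C^{\multihead}_K$ are then quoted directly from \Cref{lem:LayerNorm}, \Cref{lem:FeedForward}, and \Cref{cor:MultiHead}.

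The only delicate point — and the step I expect to be the main obstacle — is reconciling the row-wise structure of $\layernorm_1,\neuraln,\layernorm_2$ (each depending on a single length-$\idim$ or length-$\odim$ row) with the facts that \Cref{thm:DerivativeBoundByType} is stated for maps on a single copy of $\R^k$ and that $\tblock$ is differentiated in all $M\idim$ input coordinates. Since $\mathrm{id}+\multihead$ is the only stage coupling distinct rows, I would observe that for a row-wise map $\Psi(y)_m = \psi(y_m)$ post-composed with $G$ one has $D^\alpha(\Psi\circ G)_{m,i} = D^\alpha(\psi_i \circ G_m)$, where $G_m$ is the $m$-th output row of $G$; hence $\Psi\circ G$ reduces, row by row, to an honest composition whose intermediate dimension is the per-row dimension of $\psi$, and $C^{\Psi\circ G}_K(\alpha) \le \max_{m,i} C^{\psi_i\circ G_m}_K(\alpha)$ with $C^{G_m}_K \le C^G_K$. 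Threading this remark through each of the three applications of \Cref{thm:DerivativeBoundByType} is what legitimizes writing $\beta \in \Order^{\odim}_{\le n}$ and $\beta \in \Order^{\idim}_{\le n}$ (rather than $\Order^{M\odim}_{\le n}$ and $\Order^{M\idim}_{\le n}$) in the nested bounds; once it is in place, the three peeling steps and the triangle-inequality bound on $C\tagC_K$ are routine, and the theorem follows.
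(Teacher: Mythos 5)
Your proposal is correct and follows exactly the route the paper takes: the paper's own proof is the single sentence ``direct consequence of \Cref{thm:DerivativeBoundByType},'' i.e.\ three nested applications of the derivative-type composition bound to $\layernorm_2\circ\neuraln\circ\layernorm_1\circ(\mathrm{id}+\multihead)$, with the indicator term for $C\tagC_K$ coming from the skip connection precisely as you argue. Your discussion of the row-wise reduction (justifying $\Order^{\idim}_{\le n}$ and $\Order^{\odim}_{\le n}$ rather than $\Order^{M\idim}_{\le n}$ and $\Order^{M\odim}_{\le n}$) supplies a detail the paper leaves implicit, and it is handled correctly.
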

\begin{proof}
    This is a direct consequence of \Cref{thm:DerivativeBoundByType}.
\end{proof}

\begin{theorem}[By Derivative Level]\label{thm:TransformerBlockSobolevBoundLevel} 
    Let $K$ be a compact set, $\tblock$ a transformer block as in \Cref{defn:TransformerBlock}, and $n \in \N$. Then,
    \begin{multline*}
        C^{\tblock}_K(\cleq n) \leq
        C^{\layernorm}_{K\tagA}({\cleq n})
        \left(
            \odim
            C^{\neuraln}_{K\tagB}({\cleq n})
        \right)^n
        \left(
            \idim^2
            C^{\layernorm}_{K\tagC}({\cleq n})
        \right)^{n^2}
        \\
        \cdot
        \left(
             1 + C^{\multihead}_K(\cleq n)
         \right)^{n^3}
        \Big[ \frac{ 2 n}{e\ln n} ( 1 + o(1) ) \Big]^{n + n^2 + n^3}
    \end{multline*}
    where, $K\tagC = \bigcup_{m = 0}^M\multihead_m[K]$, $K\tagB = \layernorm[K\tagC]$, and $K\tagA = \neuraln[K\tagB]$.
    
    A bound for $C^{\layernorm}_{K\tagA}, C^{\layernorm}_{K\tagC}$ is given by \Cref{cor:LayerNorm}, $C^{\neuraln}_{K\tagB}$ is bounded in \Cref{cor:FeedForward}, and a bound for $C^{\multihead}_{K}$ is given in \Cref{cor:MultiHead}.
\end{theorem}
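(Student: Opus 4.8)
The plan is to realize the transformer block as an explicit left-nested composition of the building blocks already estimated in \Cref{sec:RegularityTransfomerBuildingBlocks}, and then to invoke the level-wise composition bound \Cref{cor:DerivativeBoundByLevel} three times, once per composition, carrying the compact set forward through each layer. First I would read off from \Cref{defn:TransformerBlock} that $\tblock$ equals the composition $\layernorm_{2}\circ\neuraln\circ\layernorm_{1}\circ\Phi_0$, where $\Phi_0\eqdef\mathrm{id}+\multihead:\R^{M\times\idim}\to\R^{M\times\idim}$ is multi-head attention with its additive skip connection, $\layernorm_1,\layernorm_2$ are the two weighted layer norms, and $\neuraln$ is the single-layer perceptron with skip connection of \Cref{defn:ff}, the latter three acting row-wise. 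Accordingly I set the nested forward images $K\tagC\eqdef\bigcup_{m=0}^M\multihead_m[K]$ (which contains $\Phi_0[K]$ and is where $\layernorm_1$ acts), $K\tagB\eqdef\layernorm[K\tagC]$ (where $\neuraln$ acts) and $K\tagA\eqdef\neuraln[K\tagB]$ (where $\layernorm_2$ acts). By \Cref{prop:TransformerIsSmooth} all four maps are smooth, so the multivariate Fa\`{a} di Bruno bound applies at every stage; moreover the identity part of the skip connection forces $C^{\Phi_0}_K(e_1)\ge 1$, whence $C^{\Phi_0}_K(\cleq n)\le 1+C^{\multihead}_K(\cleq n)$, the innermost factor of the target bound.

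Next I would run the recursion. Put $\Psi\eqdef\neuraln\circ\layernorm_1\circ\Phi_0$. Applying the sharp form of \Cref{cor:DerivativeBoundByLevel} to $\tblock=\layernorm_2\circ\Psi$, with intermediate dimension $\odim$, gives
\begin{align*}
    C^{\tblock}_K(\cleq n)
    \le
    C^{\layernorm}_{K\tagA}(\cleq n)\,
    \bigl(C^{\Psi}_K(\cleq n)\bigr)^{n}\,
    \Bigl[\tfrac{2\odim\,n}{e\ln n}(1+o(1))\Bigr]^{n} .
\end{align*}
I then apply \Cref{cor:DerivativeBoundByLevel} again to $\neuraln\circ(\layernorm_1\circ\Phi_0)$ to bound $C^{\Psi}_K(\cleq n)$ (intermediate dimension $\idim$), and a third time to $\layernorm_1\circ\Phi_0$ (intermediate dimension controlled by $\idim$, using $C^{\Phi_0}_K(\cleq n)\le 1+C^{\multihead}_K(\cleq n)$). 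Substituting the third estimate into the second and the second into the first, each substitution raises the enclosed right-hand side to the $n$-th power, which is exactly what produces the nested exponents $1,n,n^2,n^3$ on the four factors $C^{\layernorm}_{K\tagA}$, $C^{\neuraln}_{K\tagB}$, $C^{\layernorm}_{K\tagC}$ and $1+C^{\multihead}_K$, while the three Fa\`{a} di Bruno brackets contribute the various powers of $\odim$, $\idim$ and $\tfrac{2n}{e\ln n}(1+o(1))$. A final, deliberately lossy, simplification — using $\tfrac{2n}{e\ln n}(1+o(1))\ge 1$ to merge the three bracket powers into $\bigl[\tfrac{2n}{e\ln n}(1+o(1))\bigr]^{n+n^2+n^3}$ and regrouping the dimensional factors as $\odim$ and $\idim^2$ inside the corresponding powers — puts the estimate into the stated form. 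It then remains only to substitute the explicit building-block bounds: \Cref{cor:LayerNorm} for $C^{\layernorm}_{K\tagA}$ and $C^{\layernorm}_{K\tagC}$, \Cref{cor:FeedForward} for $C^{\neuraln}_{K\tagB}$, and \Cref{cor:MultiHead} for $C^{\multihead}_K$, each on the appropriate forward image; the case $n=1$, excluded by \Cref{cor:DerivativeBoundByLevel}, is handled separately by a direct Lipschitz estimate for the composition.

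The hard part is the combinatorial bookkeeping rather than any new idea. One has to keep precise track of which compact set each building-block constant is evaluated on, since the forward images $K\tagC,K\tagB,K\tagA$ are themselves only controlled through the image-dependent estimates of \Cref{cor:LayerNorm,cor:FeedForward,cor:MultiHead}; of the row-wise versus full-vector reading of the layer norm, which sits between the row-mixing attention block and the row-wise perceptron and is what makes the intermediate-dimension factors in the Fa\`{a} di Bruno brackets (and any residual dependence on $M$) delicate; and of the normalization hypothesis $C(\cleq n)\ge 1$ required by the sharp form of \Cref{cor:DerivativeBoundByLevel}, which one maintains by replacing an offending constant with $\max\{1,\cdot\}$, a harmless weakening. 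Once these are settled, the bound follows from the three mechanical applications of the composition lemma described above.
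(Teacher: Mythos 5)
Your proposal follows essentially the same route as the paper's proof: three nested applications of the level-wise composition bound (Corollary \ref{cor:DerivativeBoundByLevel}) to $\layernorm_2\circ\neuraln\circ\layernorm_1\circ(\mathrm{id}+\multihead)$, with the skip connection absorbed into the factor $1+C^{\multihead}_K(\cleq n)$ and each substitution raising the inner bound to the $n$-th power to produce the exponents $1,n,n^2,n^3$. Your additional care about the forward images, the $n=1$ case, and the normalization hypothesis of the corollary goes beyond what the paper records, but the argument is the same.
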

\begin{proof}
    \Cref{cor:DerivativeBoundByLevel} yields
    \begin{align*}
        C^{\tblock}_K(\cleq n) &\leq
        C^{\layernorm}_{K\tagA}({\cleq n})
        C\tagA_K({\cleq n})^{n}
        \Big[ \frac{ 2\odim n}{e\ln n} ( 1 + o(1) ) \Big]^{n}.
    \end{align*}
    where 
    \begin{align*}
        C\tagA_K(\cleq n) & \eqdef 
        C^{\neuraln}_{K\tagB}({\cleq n})
        C\tagB_K({\cleq n})^{n}
        \Big[ \frac{ 2\idim n}{e\ln n} ( 1 + o(1) ) \Big]^{n},
    \\
        C\tagB_K(\cleq n) & \eqdef
        C^{\layernorm}_{K\tagC}({\cleq n})
        C\tagC_K({\cleq n})^{n}
        \Big[ \frac{ 2\idim n}{e\ln n} ( 1 + o(1) ) \Big]^{n},
    \\
        C\tagC_K(\cleq n) & \eqdef 1 + C^{\multihead}_K(\cleq n),
    \end{align*}
    which concludes the proof.
\end{proof}

\begin{theorem}[$C^s$-Norm Bound of Transformers]
\label{prop:transformerSobolevBound}
    Fix $n,L,H,C,D,d,M \in \N_+$ for a transformer class $\transformerclass$. 
    For any $\transformer \in \transformerclass$, 
    any compact $K_0 \subset \R^{M \times D}$, and any $\alpha \in \N^{M \times D}$, $\vert \alpha \vert \eqdef n$ we have
    \begin{align}\label{id:TransformerSobolevBound1}
        C^{\transformer}_{K_{0}}(\alpha)
        &\leq
        \odim^LM\alpha!
        \cdot
        C^A 
        \cdot
        C^L(\alpha),
    \end{align}
    where $C^{1}(\alpha) \eqdef C^{\tblock_1}_{K_{0}}(\alpha)$ and for $l \in \myset{2}{L}$, 
    \begin{align}\label{id:TransformerSobolevBound2}
        C^{l}(\alpha) \eqdef 
        &\leq
        \alpha! \sum_{\beta \in \Order^{\Tilde{d}_l}_{\leq n}}  N(\beta)C^{\tblock_l}_{K_{l-1}}(\beta)
        \sum_{\eta, \zeta \in \calP'(\order(\alpha),\beta)} 
        \prod_{j=1}^{n} \frac{
            C^l(\order(\zeta^{(j)}))^{\vert \eta^{(j)} \vert}
        }{
            \eta^{(j)}!(\zeta^{(j)}!)^{|\eta^{(j)}|}
        }
    \end{align}
    where $K_l \eqdef \tblock_l[K_{l-1}]$,  $\Tilde{d}_{l} \eqdef M_{l}\idim^{l}$, and a bound for $C^{\tblock_l}_{K_{l-1}}(\beta)$ is given by \Cref{thm:TransformerBlockSobolevBound}, only depending on the transformer block class $\tblockclass_l$.
\end{theorem}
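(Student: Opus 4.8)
The plan is to unwrap the transformer from the outside in, following the three structural layers of \Cref{defn:Transformer}: the terminal affine map $x\mapsto Ax+b$, the reshape $\vectorize_{1+M,\odim^L}$, and the iterated composition $\tblock_L\circ\dots\circ\tblock_1$. First I would note that every $\transformer\in\transformerclass$ is smooth by \Cref{prop:TransformerIsSmooth}, so all the multi-index derivatives below are well defined, and that the propagated sets $K_l\eqdef\tblock_l[K_{l-1}]$ are compact, being continuous images of compacta; this is what makes the block bounds $C^{\tblock_l}_{K_{l-1}}(\cdot)$ supplied by \Cref{thm:TransformerBlockSobolevBound} legitimate.

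Second, I would dispatch the reshape and the affine layer together. Since the reshape operator is a linear bijection that merely permutes coordinates, the composite $\tilde A\eqdef A\circ\vectorize_{1+M,\odim^L}$ is again an affine-linear map whose representing matrix is a reindexing of $A$, hence has the same maximal entry $C^{\tilde A}(e_1)=C^A$; the constant $b$ is irrelevant to derivatives of order $\geq1$. Writing $\transformer=\tilde A\circ(\tblock_L\circ\dots\circ\tblock_1)+b$ and applying the affine case \Cref{cor:DerivativeBoundAffine}, with the composed inner map having $m$ output coordinates ($m=M\odim^L$), yields
\begin{align*}
    C^{\transformer}_{K_0}(\alpha)\leq m\,\alpha!\,C^{\tilde A}(e_1)\,C^{\tblock_L\circ\dots\circ\tblock_1}_{K_0}(\alpha)=\odim^LM\,\alpha!\,C^A\,C^L(\alpha),
\end{align*}
where I set $C^L(\alpha)\eqdef C^{\tblock_L\circ\dots\circ\tblock_1}_{K_0}(\alpha)$. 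This is precisely \eqref{id:TransformerSobolevBound1}.

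Third, I would prove the recursion \eqref{id:TransformerSobolevBound2} for $C^l(\alpha)$ by induction on the number of blocks. The base case $C^1(\alpha)=C^{\tblock_1}_{K_0}(\alpha)$ is immediate, and \Cref{thm:TransformerBlockSobolevBound} gives it an explicit bound depending only on the block class $\tblockclass_1$. For the inductive step, write the $l$-fold composition as $\tblock_l\circ g_{l-1}$ with $g_{l-1}\eqdef\tblock_{l-1}\circ\dots\circ\tblock_1\colon\R^{M\times D}\to\R^{M\times\odim^{l-1}}$, and recall $\odim^{l-1}=\idim^l$, so that after flattening the outer map has $\tilde d_l=M\idim^l$ input coordinates. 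Applying \Cref{thm:DerivativeBoundByType} with $f=\tblock_l$ — whose $\beta$-derivatives on $g_{l-1}[K_0]=K_{l-1}$ are $C^{\tblock_l}_{K_{l-1}}(\beta)$, bounded via \Cref{thm:TransformerBlockSobolevBound} — and with $g=g_{l-1}$ — whose $\order(\zeta^{(j)})$-derivatives on $K_0$ are controlled by the inductive hypothesis — and collecting factors reproduces the displayed recursion; the sum over $\beta\in\Order^{\tilde d_l}_{\leq n}$ weighted by $N(\beta)$ arises, exactly as in \Cref{thm:DerivativeBoundByType}, from passing from ordered to general multi-indices for the outer map.

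The routine but error-prone part — and the step I would be most careful about — is the bookkeeping of dimensions and domains along the composition: the input space of each block is the $\vectorize$-flattened $\R^{M\idim^l}$ (so the multi-index space over which $\beta,\zeta$ range genuinely changes with $l$), while the ambient input of the whole composition stays $\R^{M\times D}$; and the derivative bounds for $\tblock_l$ must be read on the \emph{propagated} compactum $K_{l-1}=\tblock_{l-1}[\cdots\tblock_1[K_0]\cdots]$, not on $K_0$. No new inequality is needed beyond \Cref{cor:DerivativeBoundAffine} and \Cref{thm:DerivativeBoundByType}; the entire content is in instantiating them correctly, layer by layer, and threading the compact sets through.
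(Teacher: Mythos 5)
Your proposal is correct and follows essentially the same route as the paper: the paper's own proof is a one-line appeal to exactly the two results you invoke, with \Cref{cor:DerivativeBoundAffine} handling the terminal affine layer (giving \eqref{id:TransformerSobolevBound1}, with $m=M\odim^L$) and \Cref{thm:DerivativeBoundByType} applied block-by-block giving the recursion \eqref{id:TransformerSobolevBound2}. Your write-up merely makes explicit the bookkeeping (absorbing the reshape into the affine map, propagating the compacta $K_l$, and the induction on $l$) that the paper leaves implicit.
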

\begin{proof}[Proof of {Theorem~\ref{prop:transformerSobolevBound}}]
    The bounds \eqref{id:TransformerSobolevBound1} are a direct consequence of \Cref{thm:DerivativeBoundByType} and \eqref{id:TransformerSobolevBound2} follows directly from \Cref{cor:DerivativeBoundAffine}.
\end{proof}

\subsection{Step 2 (C) - Merging the \texorpdfstring{$C^s$-Norm Bounds}{Regularity Estimates} for Transformers with the Loss Function}
\label{s:Proofs__MergingCsNorms}

In this section, we consider the following generalization of the class in Definition~\ref{def:sGrowthRate}.
{As before, each result holds for input dimensions $d$ just as much as any other input dimension, e.g.\ $Md$, with the only change being relabeling $d\gets Md$.  Therefore, for notational minimality, we chose to label the input dimension $d$ and not $dM$.}
\begin{definition}[Smoothness Growth Rate]
\label{def:sGrowthRate_v2}
Let $d,D\in \mathbb{R}$.  A smooth function $g:\mathbb{R}^d\to \mathbb{R}^D$ is said to belong to the class $C^{\infty}_{poly:C,r}(\mathbb{R}^d,\mathbb{R}^D)$ (resp.\ $C^{\infty}_{exp:C,r}(\mathbb{R}^d,\mathbb{R}^D)$) if there exist $C,r\ge 0$ such that: for each $s\in \mathbb{N}_+$ 
\begin{enumerate}
    \item[(i)] \textbf{Polynomial Growth -  $C^{\infty}_{poly:C,r}(\mathbb{R}^d,\mathbb{R}^D)$:} $\|g\|_{C^s} \le C\, s^{r}$,
    \item[(ii)] \textbf{Exponential Growth -  $C^{\infty}_{exp:C,r}(\mathbb{R}^d,\mathbb{R}^D)$:} $\|g\|_{C^s} \le C\, e^{s\,r}$,
\end{enumerate}
\end{definition}

The next lemma will help us relate the $C^s$-regularity of a model, a target function, and a loss function to their composition and product.  
We use it to relate the $C^s$-regularity of a transformed model $\mathcal{T}:\mathbb{R}^d\to \mathbb{R}^D$, the target function $f^{\star}:\mathbb{R}^d\to \mathbb{R}^D$, and the loss function $\ell:\mathbb{R}^{2D}\to \mathbb{R}$ to their composition
\begin{equation}
\begin{aligned}
\ell_{\mathcal{T}}:  \mathbb{R}^{d}& \to \mathbb{R} \\
x & \mapsto \ell\big(\mathcal{T}(x),f^{\star}(x)\big)
.
\end{aligned}
\end{equation}
One we computed have the $C^s$-regularity of $\ell_{\mathcal{T}}$, we can apply a concentration of measure-type argument based on an optimal transport-type duality, as in~\cite{amit2022integral,hou2023instance,benitez2023out,kratsios2024tighter}, to obtain our generalization bounds.  A key technical point where our analysis largely deviates from the mentioned derivations, is that we are not relying on any i.i.d.\ assumptions.

More generally, the next lemma allows us to bound the size of $\|\ell(\hat{f},f^{\star})\|_{C^s}$ using bounds on $C^s$ norms of $\transformerclass$ computed in Theorem~\ref{prop:transformerSobolevBound}, the target function $f^{\star}$, and on the loss function $\ell$.  Naturally, to use this result, we must assume a given level of regularity of the target function, as in Definition~\ref{def:sGrowthRate}.
\begin{lemma}[$C^s$-Norm of loss of between two functions]
\label{lem:control_loss__SSE}
Let $d,D,s\in \mathbb{N}_+$, $f_1,f_2:\mathbb{R}^d\to \mathbb{R}^D$ be of class $C^{s}$ and $\ell:\mathbb{R}^{2D}\to\mathbb{R}$ be smooth.  If there are constants $C_1,C_2, \widetilde{C}_1,\dots,\widetilde{C}_s\geq 0$ such that:
$\|f_i\|_{C^s} \le C_i$ for $i=1,2$ and for $j=1,\dots,s$ we have
$\|\ell\|_{C^j} \le \widetilde{C}_j$ then for all $s>0$ large it satisfies 
\begin{align} 
 \label{eq:called_bound} 
  \big\| \ell( f_1, f_2 ) \big\|_{C^s}  
 = 
 \begin{cases} 
  \mathcal{O} \big[ \big( \frac{ 2D s }{e\ln s } (1+o(1)) \big)^s  \big] , \, & \mbox{if $\max_{1\leq k } \widetilde{C}_k  (C_1 C_2 )^k$ is bounded} , \\ 
  \\ 
  \mathcal{O} \big[  \widetilde{C}_{s}   
   \big( C_1 C_2 \frac{ 2D s }{e\ln s } (1+o(1)) \big)^s  \big] , \, & \mbox{if $\max_{1\leq k } \widetilde{C}_k  (C_1 C_2 )^k$ is unbounded} .   
 \end{cases} 
\end{align} 
Particularly, if $\ell \in C^{\infty}_{poly:C,r}(\mathbb{R}^{2D},\mathbb{R})$,  
                    i.e., $\|\ell\|_{C^j}\le C\, j^{r}$ , then 
                    \begin{align} 
                    \big\| \ell( f_1, f_2 ) \big\|_{C^s} = 
\mathcal{O} \big[  C s^r \big( C_1 C_2 \frac{ 2D s }{e\ln s } (1+o(1)) \big)^s  \big] ;  
\label{eq:called_bound__polynomial_case}
                    \end{align}
if $\ell \in C^{\infty}_{exp:C,r}(\mathbb{R}^{2D},\mathbb{R})$, i.e., 
            $\|\ell\|_{C^j} \le C\,e^{j\,r}$, then 
            \begin{align} 
            \big\| \ell( f_1, f_2 ) \big\|_{C^s} = 
\mathcal{O} \big[  C e^{s r} \big( C_1 C_2 \frac{ 2D s }{e\ln s } (1+o(1)) \big)^s  \big] . 
\label{eq:called_bound__exponential_case}
            \end{align} 

\end{lemma}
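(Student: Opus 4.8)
The plan is to view $\ell(f_1,f_2)$ as a single composition and feed it into the level-wise Fa\`a di Bruno estimate already established in \Cref{cor:DerivativeBoundByLevel}. I would set $g \eqdef (f_1,f_2)\colon \mathbb{R}^d \to \mathbb{R}^{2D}$ and $h \eqdef \ell\circ g = \ell(f_1,f_2)$, so that $h$ is a composition whose \emph{inner} map $g$ has intermediate dimension $2D$ and whose \emph{outer} map is $\ell$. After enlarging the constants if necessary one may assume $C_1,C_2,\widetilde C_1,\dots,\widetilde C_s\ge 1$. Since the components of $g$ are exactly those of $f_1$ and $f_2$, every order-$\le s$ partial derivative of $g$ is an order-$\le s$ partial derivative of $f_1$ or of $f_2$, so in the notation of \Cref{not:DerivativeBounds} one gets $C^g_K(\cleq s)\le\max\{C_1,C_2\}\le C_1C_2$. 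A short remark on \Cref{defn:CsNorm} — the Lipschitz constant of $\partial^{n'-1}\ell$ equals the supremum of the $n'$-th order partials of $\ell$, so $\|\ell\|_{C^{n'}}$ dominates every $n'$-th order partial of $\ell$ — gives $C^\ell_{g[K]}(n')\le\|\ell\|_{C^{n'}}\le\widetilde C_{n'}$ for $1\le n'\le s$, and likewise $\|f_i\|_{C^s}$ dominates $C^{f_i}_K(\cleq s)$. All supremum norms here are global; since every relevant quantity is uniformly bounded on $\mathbb{R}^d$ by hypothesis, one may read $C^{\cdot}_K$ with $K=\mathbb{R}^d$, or exhaust $\mathbb{R}^d$ by compacta and pass to the limit.

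Next I would apply \Cref{cor:DerivativeBoundByLevel} to $h=\ell\circ g$ — outer function $\ell\colon\mathbb{R}^{2D}\to\mathbb{R}$, inner function $g$ — and use \Cref{lem:mdFDBestimate} to evaluate the resulting generic Bell-type sum, obtaining for all large $s$
\begin{align*}
    \big\|\ell(f_1,f_2)\big\|_{C^s}
    \;\le\;
    \Big(\max_{1\le n'\le s}C^\ell_{g[K]}(n')\big(C^g_K(\cleq s)\big)^{n'}\Big)\Big(\tfrac{2Ds}{e\ln s}(1+o(1))\Big)^{s}
    \;\le\;
    \Big(\max_{1\le n'\le s}\widetilde C_{n'}(C_1C_2)^{n'}\Big)\Big(\tfrac{2Ds}{e\ln s}(1+o(1))\Big)^{s},
\end{align*}
where the factor $\big(\tfrac{2Ds}{e\ln s}(1+o(1))\big)^{s}$ is the value of the generic combinatorial sum of \Cref{lem:mdFDBestimate} (the intermediate dimension of $\ell$ entering the constant), which itself rests on \cite[Theorem~2.1]{KO2022}.

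It then remains to simplify the prefactor $P_s\eqdef\max_{1\le n'\le s}\widetilde C_{n'}(C_1C_2)^{n'}$. If $(\widetilde C_k(C_1C_2)^k)_k$ is bounded, say by $A$, then $P_s\le A$ for all $s$, and the bound is $\mathcal{O}\big((\tfrac{2Ds}{e\ln s}(1+o(1)))^s\big)$ — the first alternative of \eqref{eq:called_bound}. If it is unbounded, then for the rates of interest it is eventually non-decreasing — for $\widetilde C_k=Ck^r$ the ratio of consecutive terms tends to $C_1C_2\ge1$, and for $\widetilde C_k=Ce^{kr}$ the terms form a geometric sequence of ratio $e^rC_1C_2\ge1$ — so $P_s=\widetilde C_s(C_1C_2)^s$ for all large $s$, and the bound becomes $\mathcal{O}\big(\widetilde C_s(C_1C_2\tfrac{2Ds}{e\ln s}(1+o(1)))^s\big)$, the second alternative of \eqref{eq:called_bound}. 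Substituting $\widetilde C_j=Cj^r$ and $\widetilde C_j=Ce^{jr}$ into this last expression gives \eqref{eq:called_bound__polynomial_case} and \eqref{eq:called_bound__exponential_case}.

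The genuinely delicate combinatorics — the $\big(\tfrac{2s}{e\ln s}\big)^s$-type growth of the multivariate Fa\`a di Bruno / Bell-polynomial sum — is already packaged in \Cref{lem:mdFDBestimate}, so the remaining work is largely bookkeeping. The point I expect to require care is reconciling the $C^s$-norm of \Cref{defn:CsNorm} with the derivative-bound notation $C^{\cdot}_K(\cleq s)$ demanded by \Cref{cor:DerivativeBoundByLevel}, and controlling how the maximum over derivative orders $n'$ of $\ell$ couples with the powers $(C_1C_2)^{n'}$: the dichotomy in \eqref{eq:called_bound} is exactly the assertion that $P_s$ is either uniformly bounded or dominated by its top term, and verifying the ``eventually non-decreasing'' property in the unbounded case (elementary for the polynomial and exponential rates) is what legitimizes $P_s=\widetilde C_s(C_1C_2)^s$.
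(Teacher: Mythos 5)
Your proposal is correct and follows essentially the same route as the paper's proof: both reduce the problem to the level-wise Fa\`a di Bruno bound (\Cref{cor:DerivativeBoundByLevel}) combined with the generic combinatorial estimate of \Cref{lem:mdFDBestimate} applied to the composition $\ell\circ(f_1,f_2)$ with intermediate dimension $2D$, and then run the same dichotomy on the prefactor $\max_k \widetilde C_k (C_1C_2)^k$. The only (cosmetic) difference is that the paper treats the sup-norm and Lipschitz parts of the $C^s$-norm as two separate terms (the latter picking up the shifted index $\widetilde C_{k+1}$), whereas you absorb the Lipschitz seminorm into one application by passing to one higher derivative order; your explicit check that the prefactor sequence is eventually attained at its top term in the polynomial and exponential cases is, if anything, slightly more careful than the paper's.
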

Lemma~\ref{lem:control_loss__SSE} allows us to obtain a bound on the term $\sup_{\hat{f}\in C^s_R(\mathbb{R}^d)}\,\|\ell(\hat{f},f^{\star})\|_{C^s}$ in~\eqref{eq:risk_bound__decomposition2}, using Theorem~\ref{prop:transformerSobolevBound} and our assumptions on $\ell$ and on $f^{\star}$.  

\begin{proof} [{Proof of Lemma~\ref{lem:control_loss__SSE}}]
We first derive the general bound; which we then specialize to the case where the growth rate of $\ell$ is known. 
We first observe that
\begin{align} 
 \big\| \ell( f_1, f_2 ) \big\|_{C^s} 
  = 
  &
    \underbrace{
        \max_{k=1, \cdots, s-1} \max_{\alpha \in \{1, \cdots, d\}^k} \| D^\alpha \ell(f_1(x), f_2(x)) \|_\infty 
    }_{\term{t:A}}
\notag 
  \\
  + 
  &
  \underbrace{
    \max_{\alpha \in \{1, \cdots, d\}^{s-1} } \operatorname{Lip}\big(  D^\alpha \ell(f_1(x), f_2(x))  \big) 
    }_{\term{t:B}}
.
 \notag 
\end{align} 
\paragraph{{General Case - Term~\Cref{t:A}}:}
By \Cref{cor:DerivativeBoundByLevel}, we have
\allowdisplaybreaks 
\begin{align} 
\Big\| (D^\alpha \ell)(f_1(x), f_2(x)) \Big\|_\infty 
 \leq & 
 \big[ \max_{1\leq k \leq s-1} \widetilde{C}_{k}  (C_1 C_2 )^{ k } \big]  \cdot 
   \mathcal{O} \Big[ \Big( \frac{ 2D k }{e\ln k } ( 1+o(1) ) \Big)^{k} \Big]   , 
\label{eq:bound_t:A}
\end{align} 

From~\eqref{eq:bound_t:A} we have for all large $s>0$ that   
\begin{align} 
\begin{aligned}
&
 \max_{k=1, \cdots, s-1} \max_{\alpha \in \{1, \cdots, d\}^k} \| D^\alpha \ell(f_1(x), f_2(x)) \|_\infty 
 \\
 & = 
 \begin{cases} 
  \mathcal{O} \big[ \big( \frac{ 2D s }{e\ln s } (1+o(1)) \big)^s  \big] , \, & \mbox{if $\max_{1\leq k } \widetilde{C}_k  (C_1 C_2 )^k$ is bounded} , \\ 
  \\ 
  \mathcal{O}\Big[
    \widetilde{C}_s   
   \Big( C_1 C_2 \frac{ 2D s }{e\ln s } (1+o(1)) \Big)^s \Big]   , \, & \mbox{if $\max_{1\leq k } \widetilde{C}_k  (C_1 C_2 )^k$ is unbounded} .   
 \end{cases} 
\end{aligned}
\notag 
\end{align}

\paragraph{{General Case - Term~\Cref{t:B}}:}
For each $\alpha\in \{1, \cdots, d\}^{s-1}$, by the multivariate Fa\`a di Bruno formula, we have    
\begin{align}
        D^\alpha \ell(f_1(x), f_2(x)) 
        =
        \sum_{1 \leq |\beta| \leq s-1} (D^\beta \ell)(f_1(x), f_2(x)) 
        \sum_{\eta, \zeta \in \calP(\alpha,\beta)} \alpha! 
        \prod_{j=1}^{s-1} \frac{
            [D^{\zeta^{(j)}}(f_1(x), f_2(x))]^{\eta^{(j)}}
        }{
            \eta^{(j)}!(\zeta^{(j)}!)^{|\eta^{(j)}|}
        }. 
        \notag 
    \end{align} 
The Lipschitz constants of the derivatives satisfy 
\allowdisplaybreaks
\begin{align}
    &  \operatorname{Lip} \big(  D^\alpha \ell(f_1(x), f_2(x)) \big)  \notag \\ 
    = & 
        \sum_{1 \leq |\beta| \leq s-1} \operatorname{Lip} \big( (D^\beta \ell)(f_1(x), f_2(x)) \big)   
        \sum_{\eta, \zeta \in \calP(\alpha,\beta)} \alpha! 
        \prod_{j=1}^{s-1} \frac{
           \operatorname{Lip}\big( [D^{\zeta^{(j)}}(f_1(x), f_2(x))]^{\eta^{(j)}} \big) 
        }{
            \eta^{(j)}!(\zeta^{(j)}!)^{|\eta^{(j)}|}
        } 
        \notag \\ 
    \leq  & 
        \sum_{1 \leq |\beta| \leq s-1} \widetilde{C}_{|\beta|+1}   
        \sum_{\eta, \zeta \in \calP(\alpha,\beta)} \alpha ! 
        \prod_{j=1}^{s-1} \frac{
           (C_1 C_2)^{|\eta(j)|}  
        }{
            \eta^{(j)}!(\zeta^{(j)}!)^{|\eta^{(j)}|}
        } \notag \\ 
   = & 
        \sum_{1 \leq |\beta| \leq s-1} \widetilde{C}_{|\beta|+1}   (C_1 C_2)^{|\beta |}
        \sum_{\eta, \zeta \in \calP(\alpha,\beta)} \alpha ! 
        \prod_{j=1}^{s-1} \frac{ 1 }{
            \eta^{(j)}!(\zeta^{(j)}!)^{|\eta^{(j)}|}
        } \notag \\ 
    \leq & 
    \big[ \max_{1\leq k \leq s-1} \widetilde{C}_{k+1}   (C_1 C_2)^{k} \big]  
    \sum_{1 \leq |\beta| \leq s-1} 
        \sum_{\eta, \zeta \in \calP(\alpha,\beta)} \alpha ! 
        \prod_{j=1}^{s-1} \frac{ 1 }{
            \eta^{(j)}!(\zeta^{(j)}!)^{|\eta^{(j)}|}
        }  \notag  \\ 
    = & 
 \big[ \max_{1\leq k \leq s-1} \widetilde{C}_{k+1}  (C_1 C_2 )^{ k } \big]  \cdot 
  \mathcal{O} \Big[ \Big( \frac{ 2D s }{e\ln s} (1+o(1)) \Big)^s  \Big] , 
\label{eq:bound_t:B}
    \end{align} 
where the last equality is due to Lemma~\ref{lem:mdFDBestimate}. 

From~\eqref{eq:bound_t:B} we have for all $s>0$ large that 
\begin{align} 
\begin{aligned}
  &\max_{\alpha \in \{1, \cdots, d\}^{s-1} } \operatorname{Lip}\big(  D^\alpha \ell(f_1(x), f_2(x))  \big) 
\\
 & = 
 \begin{cases} 
  \mathcal{O} \big[ \big( \frac{ 2D s }{e\ln s } (1+o(1)) \big)^s  \big] , \, & \mbox{if $\max_{1\leq k } \widetilde{C}_k  (C_1 C_2 )^k$ is bounded} , \\ 
  \\ 
  \mathcal{O} \big[  \widetilde{C}_{s}   
   \big( C_1 C_2 \frac{ 2D s }{e\ln s } (1+o(1)) \big)^s  \big] , \, & \mbox{if $\max_{1\leq k } \widetilde{C}_k  (C_1 C_2 )^k$ is unbounded} .   
 \end{cases} 
\end{aligned}
\notag 
\end{align}



\paragraph{Completing the General Case:}
Combining our estimates for terms~\Cref{t:A} and~\Cref{t:B} respectively obtained in~\eqref{eq:bound_t:A} and~\eqref{eq:bound_t:B},
we obtain an upper-bound for $\| \ell( f_1, f_2 ) \|_{C^s}$ via 
\begin{align} 
  \big\| \ell( f_1, f_2 ) \big\|_{C^s}  
 = 
 \begin{cases} 
  \mathcal{O} \big[ \big( \frac{ 2D s }{e\ln s } (1+o(1)) \big)^s  \big] , \, & \mbox{if $\max_{1\leq k } \widetilde{C}_k  (C_1 C_2 )^k$ is bounded} , \\ 
  \\ 
  \mathcal{O} \big[  \widetilde{C}_{s}   
   \big( C_1 C_2 \frac{ 2D s }{e\ln s } (1+o(1)) \big)^s  \big] , \, & \mbox{if $\max_{1\leq k } \widetilde{C}_k  (C_1 C_2 )^k$ is unbounded} .   
 \end{cases} \notag 
\end{align}

\paragraph{Special Cases of Interest:}
In particular, if $\ell$ belongs either to $C^{\infty}_{poly:C,r}(\mathbb{R}^d,\mathbb{R}^D)$ or to $C^{\infty}_{exp:C,r}(\mathbb{R}^d,\mathbb{R}^D)$, , as in Definition~\eqref{def:sGrowthRate}, then: there exists constants $C_{\ell},r_{\ell}>0$ s.t.\ for each $j=1,\dots,s$ we have
\begin{enumerate}
    \item[(i)] \textbf{Polynomial Growth -  $C^{\infty}_{poly:C,r}(\mathbb{R}^{2D},\mathbb{R})$ Case:} 
    \begin{align*}
            \|\ell\|_{C^j}
        \le
            C\, j^{r}
        \eqdef 
            \tilde{C}_j
    ,
    \end{align*}
    \item[(ii)] \textbf{Exponential Growth -  $C^{\infty}_{exp:C,r}(\mathbb{R}^{2D},\mathbb{R})$ Case:} 
    \begin{align*}
            \|\ell\|_{C^j} 
        \le 
            C\,e^{j\,r}
        \eqdef 
            \tilde{C}_j
    .
    \end{align*}
\end{enumerate}
Consequentially, in cases (i) and (ii), the bound in~\eqref{eq:called_bound} respectively becomes
\begin{enumerate}
    \item[(i)] \textbf{Polynomial Growth -  $C^{\infty}_{poly:C,r}(\mathbb{R}^{2D},\mathbb{R})$ Case:} 
\begin{align}
 \big\| \ell( f_1, f_2 ) \big\|_{C^s} 
  \le & 
    \mathcal{O} \big[  C s^r   
   \big( C_1 C_2 \frac{ 2D s }{e\ln s } (1+o(1)) \big)^s  \big] , 
\notag 
\end{align}
    \item[(ii)] \textbf{Exponential Growth -  $C^{\infty}_{exp:C,r}(\mathbb{R}^{2D},\mathbb{R})$ Case:} 
\begin{align}
 \big\| \ell( f_1, f_2 ) \big\|_{C^s} 
  \le & 
    \mathcal{O} \big[  C e^{s r}   
   \big( C_1 C_2 \frac{ 2D s }{e\ln s } (1+o(1)) \big)^s  \big] . 
\notag 
\end{align}
\end{enumerate}

\end{proof}

\subsection{Step 3 - Combining Steps 1 and 2 and Completing The Proof of \texorpdfstring{Theorem~\ref{thrm:main}}{Our Main Theorem}}
\label{s:Proof__ss:Main}

We are now ready to complete the proof of our main result, namely Theorem~\ref{thrm:main}.  
Before doing so, we state a more technical and general version, which we instead prove and which directly implies the simpler version found in the main body of our manuscript.

We operate under the following more general, but more technical set of assumptions than those considered in the main body of our text (in Setting~\ref{setting}).
\begin{setting}[Generalized Setting]
\label{setting_gensetting}
Let $D,d, L,H,*C^{\prime},C^A,C^b\in \mathbb{N}_+$, set $M\eqdef 0$, and $C \eqdef (*C^{\prime}$ $,C^A,C^b)$, $r_f,r_{\ell},C_f,C_{\ell} \ge 0$.
Suppose that Assumptions~\ref{ass:MC} and~\ref{ass:Exp_Moment} hold.

Fix a target function $f^{\star}:\mathbb{R}^d\to \mathbb{R}^D$ and a loss function $\ell:\mathbb{R}^D\times \mathbb{R}^D\to \mathbb{R}$.  Assume either that:
\begin{enumerate}
    \item[(i)] \textbf{Polynomial Growth:} $f^{\star}\in C^{\infty}_{poly:C_f,r_f}(\mathbb{R}^d,\mathbb{R}^D)$ and $\ell\in C^{\infty}_{poly:C_{\ell},r_{\ell}}(\mathbb{R}^{2D},\mathbb{R})$,
    \item[(ii)] \textbf{Exponential Growth:} $f^{\star}\in C^{\infty}_{exp:C_f,r_f}(\mathbb{R}^d,\mathbb{R}^D)$ and $\ell\in C^{\infty}_{exp:C_{\ell},r_{\ell}}(\mathbb{R}^{2D},\mathbb{R})$
    ,
    \item[(iii)] \textbf{No Growth:} There is a constant $\bar{C}\ge 0$ such that for all $s>0$ we have $\|f^{\star}\|_{C^s},\|\ell\|_{C^s} \le \bar{C}$.
\end{enumerate}
\end{setting}

\begin{example}[Example of Generalized Setting (iii)]
\label{ex:ubderivatives}
For every $d\in \mathbb{R}^d$, the function $f:\mathbb{R}^d\ni x\mapsto \cos\bullet x = \big(\cos(x_i)\big)_{i=1}^d$ satisfies $
\|\frac{\partial^s}{\partial x_i^s} f\|_{\infty} \le 1
$ for each $s\in \mathbb{N}$ and each $i=1,\dots,d$.  Thus, it is an example of a function satisfying Assumption~\ref{setting_gensetting}.
\end{example}

\begin{table}[ht!]
    \centering
	\ra{1.3}
    \caption{Bounds on the terms in defining the constant $C_{\ell,\transformerclass,K,s}$, in Theorem~\ref{thrm:main__technicalversion}, for a single attention block. }
\begin{tabular}{@{}cl@{}}
\cmidrule[0.3ex](){1-2}
Term & Bound $(\mathcal{O})$
\\    
\midrule
$c_{\ell,f^{\star}}$ & $
            C_f^s
            \,
            s^{r_{\ell} + 2s^2} 
            C_s^s 
            $
\\
$\layernorm$
 & 
 $   
    s^{(1+s)/2}
    C_s^s
 $
\\ 
 $\neuraln$ & {\footnotesize $
                \|B^{(1)}\|
            +
                \|B^{(2)}\|
                \|A\|^s
                \|\sigma\|_{s:
                        \big[\pm \|a\|_{\infty}\pm\sqrt{\idim}\big]
                    }
                \operatorname{Width}(\neuraln)
                \,
                \tilde{C}_s^s
        $
        }
\\
 $\multihead$  & 
$
    \|W\|
    \|V\|
        (\Tilde{d} \|Q\|\|K\|)^{s}
    \,\,
        \Big(
            s^2
             \big(\frac{s}{e}\big)^{2s}
            C_s^s
        \Big)
$
\\
\bottomrule
\end{tabular}
\caption*{Here $C_s\eqdef \frac{ 2 s}{e\ln s} ( 1 + o(1) )$, $\tilde{C}_s\eqdef s^{1/2}\big(\frac{n}{e}\big)^sC_s^s$, $c_d\eqdef 2\max
\{\idim, \kdim, \vdim, \ldim, \odim\}$, $\operatorname{Width}(\neuraln)$ is the width of the neural network $\neuraln$, where $\|\cdot\|$ denotes the componentwise max matrix/vector norm.}
\label{tab:constants}
\end{table}

We are now ready to prove our main theorem, which is a combination of \Cref{thrm:main,thrm:main_b}.

\begin{theorem}[Pathwise Generalization Bounds for Transformers]
\label{thrm:main__technicalversion}
In Setting~\ref{setting_gensetting}, there is a $\kappa\in(0,1)$, depending only on $X_{\cdot}$, and a $t_0\in \mathbb{N}_0$ such that: for each $t_0\le N \le t \le \infty$ and $\delta \in (0,1]$ the following holds with probability at-least $1-\delta$
\begin{align*}
\resizebox{1\linewidth}{!}{$
        \sup_{\mathcal{T}\in 
            \transformerclass
        }
        \,
            \big|
                \mathcal{R}_{\max\{t,N\}}(\mathcal{T})-\mathcal{R}^{(N)}(\mathcal{T})
            \big|
    \lesssim
        \sum_{s=1}^{\infty}\,
            I_{N\in [\tau_{s},\tau_{s+1})}
            \,
            C_{\ell,\transformerclass,K,s-1}
        \biggl(
                I_{t<\infty}\,
                \kappa^t\,
            +
                \frac{
                    \sqrt{2\ln(1/\delta)}
                }{
                    N^{1/2}
                }
                +\,
                \operatorname{rate}_s(N)
    \biggr)
$}
\end{align*}
where $\operatorname{rate}_s(N)$ is defined in~\eqref{eq:rate_function__definition}, the constant 
$
C_{\ell,\transformerclass,K,s} \eqdef \sup_{\transformer \in \transformerclass} \Vert \ell(\transformer, f^*) \Vert_{C^s}, 
$ is of order 
\begin{align*}
\resizebox{0.95\linewidth}{!}{$
    \mathcal{O}\Big(
        \underbrace{
            C^{\ell,f^{\star}}
            \vphantom{\biggl(\biggr)^{s^4}}
        }_{\text{Loss \& Target}}
        \,
        \underbrace{
            C^{\layernorm}_{K\tagA}({\cleq s})^s
            C^{\layernorm}_{K\tagC}({\cleq s})^{s^3}
            \vphantom{\biggl(\biggr)^{s^4}}
        }_{\text{Layernorms}}
        \underbrace{
            C^{\neuraln}_{K\tagB}({\cleq s})^{s^2}
            \vphantom{\biggl(\biggr)^{s^4}}
        }_{\text{Perceptron}}
        \,
            \underbrace{
                \biggl(
                    1 + C^{\multihead}_K(\cleq s)
                \biggr)^{s^4}
                \vphantom{\biggl(\biggr)^{s^4}}
             }_{\text{Multihead Attention}}
        \underbrace{
            D^{s^2}\,d^{2s^3}
            \vphantom{\biggl(\biggr)^{s^4}}
        }_{\text{dimensions}}
        \,
        \underbrace{
             c_s
             ^{s^s + s^3 + s^4}
             \vphantom{\biggl(\biggr)^{s^4}}
        }_{\text{Generic: $s$-th order Derivative}}
    \Big)
$}
\end{align*}
with terms according to Table~\ref{tab:constants} and the \textit{transition phases} $(\tau_s)_{s=0}^{\infty}$ are given iteratively by $\tau_0\eqdef 0$ and for each $s\in \mathbb{N}_+$
\begin{align*}
\resizebox{1\hsize}{!}{$
    \tau_s
\eqdef 
    \inf\,
    \biggl\{ 
        t\ge \tau_{s-1}
    :\,
            C_{\ell,\transformerclass,K,s}
            ( \kappa^t + \operatorname{rate}_s(N)+
            \frac{\sqrt{\log(1/\delta)}}{\sqrt{N}}
            ) 
        \le 
            C_{\ell,\transformerclass,K,s-1}
            ( \kappa^t + \operatorname{rate}_{s-1}(N)+
            \frac{\sqrt{\log(1/\delta)}}{\sqrt{N}}
            )
    \biggr\}
.
$}
\end{align*}
Furthermore, $c\eqdef 1-\kappa$, $c_2\eqdef c^{s/d}$, $\kappa^{\infty}\eqdef \lim\limits_{t\to \infty}\, \kappa^t= 0$, and $\lesssim$ hides an absolute constant.
\end{theorem}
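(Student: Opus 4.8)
The plan is to assemble Theorem~\ref{thrm:main__technicalversion} (and hence the main-body Theorems~\ref{thrm:main} and~\ref{thrm:main_b}) by merging three ingredients that have already been developed in the paper: (1) the reduction of the uniform risk gap over $\transformerclass$ to a concentration-of-measure problem for the smooth integral probability metric $d_s$, carried out in Proposition~\ref{prop:Concentation} (Step~1); (2) the explicit $C^s$-norm bounds for transformer building blocks and their compositions, namely Lemmata~\ref{lem:Attention}, \ref{lem:LayerNorm}, \ref{lem:FeedForward}, their corollaries, Theorems~\ref{thm:TransformerBlockSobolevBound}--\ref{thm:TransformerBlockSobolevBoundLevel} and~\ref{prop:transformerSobolevBound} (Step~2); and (3) the loss-composition bound of Lemma~\ref{lem:control_loss__SSE}, which converts a bound on $\sup_{\transformer}\|\transformer\|_{C^s}$ together with the growth assumptions on $f^\star$ and $\ell$ into a bound on $C_{\ell,\transformerclass,K,s}=\sup_{\transformer\in\transformerclass}\|\ell(\transformer,f^\star)\|_{C^s}$ (Step~2(C)). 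The final synthesis (Step~3) then optimizes over the derivative order $s$ to produce the family of simultaneous bounds indexed by the transition phases $\tau_s$.

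First I would fix $s\in\mathbb{N}_+$ and instantiate Proposition~\ref{prop:Concentation} with $R=C_{\ell,\transformerclass,K,s}$: since every $\transformer\in\transformerclass$ is smooth (Proposition~\ref{prop:TransformerIsSmooth}) and $f^\star,\ell$ are smooth, the composite $x\mapsto\ell(\transformer(x),f^\star(x))$ lies in $\mathcal{C}^s_R(\mathbb{R}^d)$ with $R$ as above (here using the relabeling $d\gets Md$ noted in the text, and Assumption~\ref{ass:Comp_Support} to restrict to a compact $K$ via Lemma~\ref{lem:res_ext}). This yields, with probability at least $1-\delta$, the bound
\begin{align*}
    \sup_{\transformer\in\transformerclass}\big|\mathcal{R}_{\max\{t,N\}}(\transformer)-\mathcal{R}^{(N)}(\transformer)\big|
    \lesssim
    C_{\ell,\transformerclass,K,s}\Big(I_{t<\infty}\kappa^t+\tfrac{\sqrt{2\ln(1/\delta)}}{N^{1/2}}+\operatorname{rate}_s(N)\Big),
\end{align*}
for all $N\ge t_0$ and all $t\ge N$; the three phase cases of $\operatorname{rate}_s(N)$ come directly from the three cases in Proposition~\ref{prop:Concentation} after the relabeling $d\gets Md$. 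Next I would bound $C_{\ell,\transformerclass,K,s}$: apply Lemma~\ref{lem:control_loss__SSE} with $f_1=\transformer$, $f_2=f^\star$, $C_1=\sup_{\transformer\in\transformerclass}\|\transformer\|_{C^s}$ (bounded via Theorem~\ref{prop:transformerSobolevBound}, whose constituent pieces are bounded by Corollaries~\ref{cor:MultiHead}, \ref{cor:LayerNorm}, \ref{cor:FeedForward} and Theorem~\ref{thm:TransformerBlockSobolevBoundLevel}), $C_2=C_f s^{r_f}$ in the polynomial-growth case (resp.\ the exponential or bounded variant), and $\widetilde{C}_j=C_\ell j^{r_\ell}$ (resp.\ its variant). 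Chasing the exponents through Theorem~\ref{thm:TransformerBlockSobolevBoundLevel}---where the transformer block contributes nested factors of orders $s$, $s^2$, $s^3$ from the three layers of composition ($\layernorm\circ\neuraln\circ\layernorm\circ(\mathrm{id}+\multihead)$), and the outer loss composition contributes one more $(\cdot)^{\cleq s}$ power, raising the $\multihead$ exponent to $s^4$---together with the generic Faà di Bruno factor $c_s=\tfrac{2s}{e\ln s}(1+o(1))$ from Lemma~\ref{lem:mdFDBestimate} applied at each composition level, gives exactly the displayed order estimate for $C_{\ell,\transformerclass,K,s}$ and the tabulated per-term bounds in Table~\ref{tab:constants}.

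Finally (Step~3) I would glue the countably many per-$s$ bounds together. For each $s$ the right-hand side $C_{\ell,\transformerclass,K,s}(\kappa^t+\operatorname{rate}_s(N)+\sqrt{\log(1/\delta)/N})$ is valid simultaneously (on the same probability-$(1-\delta)$ event, by a single application of Proposition~\ref{prop:Concentation} with the union over the countable family of the $R$'s absorbed into the constant, or more cleanly by noting the bound for order $s$ dominates any fixed comparison once restricted appropriately); since $C_{\ell,\transformerclass,K,s-1}\le C_{\ell,\transformerclass,K,s}$ and $\operatorname{rate}_s(N)<\operatorname{rate}_{s-1}(N)$, the transition times $\tau_s\eqdef\inf\{t\ge\tau_{s-1}:C_{\ell,\transformerclass,K,s}(\cdots)\le C_{\ell,\transformerclass,K,s-1}(\cdots)\}$ are well-defined and nondecreasing, and for $N\in[\tau_s,\tau_{s+1})$ the order-$s$ bound (written with constant $C_{\ell,\transformerclass,K,s-1}$) is the tightest; summing against the indicators $I_{N\in[\tau_s,\tau_{s+1})}$ produces the stated form. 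The last observation $\lim_{s\to\infty}\operatorname{rate}_s(N)=1/(c\sqrt N)$ and the i.i.d.\ polylog removal follow from the cited optimal Wasserstein-empirical-measure rates. \textbf{The main obstacle} I expect is the bookkeeping in Step~2(C): tracking how each of the three nested composition layers inside a transformer block multiplies the derivative-order exponent (turning $s$ into $s^2$, $s^3$, and with the outer loss wrapper $s^4$) while simultaneously threading the Faà di Bruno growth factor $c_s$ and the dimension factors $D^{s^2},d^{2s^3}$ through Corollary~\ref{cor:DerivativeBoundByLevel} at each level, so that the composite order estimate matches the claimed expression exactly rather than merely up to some unquantified exponent of $s$; the concentration half (Step~1) is essentially a direct citation of Proposition~\ref{prop:Concentation}, and the phase-transition sorting (Step~3) is elementary once the constants are in hand.
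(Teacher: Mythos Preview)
Your proposal is correct and follows essentially the same approach as the paper's proof: fix $s$, apply Proposition~\ref{prop:Concentation} with $R=C_{\ell,\transformerclass,K,s}$, then bound this constant via Lemma~\ref{lem:control_loss__SSE} combined with the transformer $C^s$-bounds of Theorem~\ref{thm:TransformerBlockSobolevBoundLevel} (and its building-block corollaries), and finally select the optimal $s$ for the given $N$ via the transition phases. The only minor streamlining in the paper is that, since for a given $N$ exactly one indicator $I_{N\in[\tau_s,\tau_{s+1})}$ is nonzero, there is no need for a union-bound argument over the countable family---one simply picks the unique relevant $s$ and applies Proposition~\ref{prop:Concentation} once.
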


\begin{proof}[{Proof of Theorem~\ref{thrm:main}}]
Since $N$ is given, we may pick $s\in \mathbb{N}_+$ to ensure that $N\in [\tau_s,\tau_{s+1})$; where these are defined as in the statement of Theorem~\ref{thrm:main__technicalversion}.  

Since we are in Setting~\ref{setting}, then $\ell \in C^{\infty}_{poly:C_{\ell},r_{\ell}}(\mathbb{R}^{2D},\mathbb{R})$ (resp.\ $\ell \in C^{\infty}_{exp:C_{\ell},r_{\ell}}(\mathbb{R}^{2D},\mathbb{R})$) and $f^{\star}:\mathbb{R}^d\to\mathbb{R}^D$ is smooth.  
Therefore, Lemma~\ref{lem:control_loss__SSE} implies that there is an absolute constant $c_{\operatorname{abs}}>0$ such that for any transformer network $\transformer\in \transformerclass$, the following bound holds
\begin{enumerate}
    \item[(i)] \textbf{No Growth Case:} Using~\eqref{eq:called_bound} we find that
\begin{align}
\label{eq:FCsBound__NoGrowth}
    \big\| \ell( \transformer, f^{\star} ) \big\|_{C^s} 
\le & 
    c_{\operatorname{abs}}\,
        \big(\frac{ 2D s }{e\ln s } (1+o(1)) \big)^s  
        \,
        \|\transformer\|_{C^s}^s
\end{align}
    \item[(ii)] \textbf{Polynomial Growth Case -  $\ell \in C^{\infty}_{poly:C_{\ell},r_{\ell}}(\mathbb{R}^{2D},\mathbb{R})$ Case:} 
\begin{align}
\label{eq:FCsBound__PolynomialCase}
    \big\| \ell( \transformer, f^{\star} ) \big\|_{C^s} 
\le & 
    c_{\operatorname{abs}}\,
            s^{r_{\ell}}   
            \big(\frac{ 2D s }{e\ln s } (1+o(1)) \big)^s  
            \,
            \|f^{\star}\|_{C^s}^s
            \,
            \|\transformer\|_{C^s}^s
\end{align}
    \item[(iii)] \textbf{Exponential Growth -  $C^{\infty}_{exp:C_{\ell},r_{\ell}}(\mathbb{R}^{2D},\mathbb{R})$ Case:} 
\begin{align}
\label{eq:FCsBound__ExponentialCase}
    \big\| \ell( \transformer, f^{\star} ) \big\|_{C^s} 
\le & 
    c_{\operatorname{abs}}
    \,
            e^{s \, r_{\ell}}   
            \big(\frac{ 2D s }{e\ln s } ( 1+o(1) ) \big)^s  
            \,
            \|f^{\star}\|_{C^s}^s
            \,
            \|\transformer\|_{C^s}^s
.
\end{align}
\end{enumerate}
Since we have assumed that $f^{\star} \in C^{\infty}_{poly:C_f,r_f}(\mathbb{R}^d,\mathbb{R}^{D})$ (resp.\ $C^{\infty}_{exp:C_f,r_f}(\mathbb{R}^d,\mathbb{R}^{D})$ or the ``no growth condition'' in Setting~\ref{setting_gensetting} (iii)) then the bounds in~\eqref{eq:FCsBound__NoGrowth},~\eqref{eq:FCsBound__PolynomialCase}, and~\eqref{eq:FCsBound__ExponentialCase}, respectively, imply that
\begin{enumerate}
    \item[(i)] \textbf{No Growth Case:} 
    \begin{align}
    \label{eq:FCsBound__NoGrowth__v2}
        \big\| \ell( \transformer, f^{\star} ) \big\|_{C^s} 
    \le & 
        c_{\operatorname{abs}}\,
            \big(\frac{ 2D s }{e\ln s } (1+o(1)) \big)^s  
            \,
            C^{\transformerclass}_{K}(s)^s
    \end{align}
    \item[(ii)] \textbf{Polynomial Growth Case -  $\ell \in C^{\infty}_{poly:C_{\ell},r_{\ell}}(\mathbb{R}^{2D},\mathbb{R})$ Case:} 
\begin{align}
\label{eq:FCsBound__PolynomialCase__v2}
    \big\| \ell( \transformer, f^{\star} ) \big\|_{C^s} 
\le & 
    c_{\operatorname{abs}}\,
            s^{r_{\ell} + 2s^2} 
            \Big(\frac{ C_f \, 2D  }{e\ln s } (1 + o(1)) \Big)^s  
            \,
            C^{\transformerclass}_{K}(s)^s
\end{align}
    \item[(iii)] \textbf{Exponential Growth -  $C^{\infty}_{exp:C_{\ell},r_{\ell}}(\mathbb{R}^{2D},\mathbb{R})$ Case:} 
\begin{align}
\label{eq:FCsBound__ExponentialCase__v2}
    \big\| \ell( \transformer, f^{\star} ) \big\|_{C^s} 
\le & 
    c_{\operatorname{abs}}\,
            e^{s \, r_{\ell} + s^2 r_f} \Big(\frac{ 2D \, s }{ e \ln s} (1+o(1) )\Big)^s
            \,
            C_f^s  
            \,
            C^{\transformerclass}_{K_{0}}(s)^s
,
\end{align}
\end{enumerate}
where we have used the definition of the constant $C^{\transformerclass}_{K}(s)$ as a uniform upper bound of $\sup_{\transformer \in \transformerclass}$.  
Using Theorem~\ref{thm:TransformerBlockSobolevBound} for the ``derivative type estimate'' (resp.\ref{thm:TransformerBlockSobolevBoundLevel} for the ``derivative level estimate'') concludes the implies yields a uniform upper bound (of ``derivative type'' or ``derivative level'' respectively) on $C^{\transformerclass}_{K_{0}}(s)$, i.e.\ independent of the particular transformer instance $\mathcal{T}\in \transformerclass$.  In either case, we respectively define $R>0$ to be the right-hand side of~\eqref{eq:FCsBound__PolynomialCase__v2} or~\eqref{eq:FCsBound__ExponentialCase__v2} depending on the respective assumptions made on $\ell$ and on $f^{\star}$.

The conclusion now follows upon applying Proposition~\ref{prop:Concentation} due to the inequality in~\eqref{eq:risk_bound__decomposition}.    
\end{proof}

\section{Example of Additive Noise Using Stochastic Calculus}
\label{a:stoch_example}

In this appendix, we briefly discuss why the seemingly \textit{realizable} learning setting which we have placed ourselves in, i.e.~$Y_n=f^{\star}(X_n)$, does not preclude additive noise.  Our illustration considers the class of following Markov processes.
\begin{assumption}[Structure on $X_{\cdot}$]
\label{ass:Markov_Structure_X}
Let $g:\mathbb{R}^d\to [0,1]^d$ be a twice continuously differentiable function.  Let $W_{\cdot}\eqdef (W_t)_{t\ge 0}$ be $d$-dimensional Brownian motion and, for each $n\in \mathbb{N}$, define
\begin{align*}
    X_n \eqdef g(W_n).
\end{align*}
\end{assumption}
By construction, the boundedness of the change of variables-type function $g$ in Assumption~\ref{ass:Markov_Structure_X}, implies that the process $X_{\cdot}=(X_n)_{n\in \mathbb{N}}$ is bounded (and can easily be seen to be Markovian since Brownian motion has the strong Markov property).  However, we can say more, indeed under Assumption~\ref{ass:Markov_Structure_X}, the It\^{o} Lemma (see e.g.~\citep[Theorem 14.2.4]{CohenElliotBook_2015}) implies that $X_n$ is given as the following stochastic differential equation (SDE) evaluated at integer times $n\in \mathbb{N}$
\begin{align}
\label{eq:SDE}
X_n  
&=
    g(0) 
+ 
    \int_0^n\,
        \mu_s
  \, ds
+ 
    \int_0^n\,
        \sigma_t^{\top} \, dW_s
\end{align}
where $\mu_{\cdot}=(\mu_t)_{t\ge 0}$ and $\sigma_{\cdot}=(\sigma_t)_{t\ge 0}$ are given by
\begin{align*}
\mu_t  \eqdef \frac{1}{2} \operatorname{tr} \big( H(g)(W_s) \big)  
\mbox{ and }
\sigma_t  \eqdef \nabla g(W_t)
\end{align*}
and $H(g)$ is the Hessian of $g$ and $\operatorname{tr}$ is the trace of a matrix.
\begin{example}
\label{ex:simple_example}
Set $d=1$ and $g(x)=(\sin(x)+1)/2$.  Then, for each $n\in \mathbb{N}$ we have 
\begin{align*}
X_n = \int_0^n -\sin(W_s)/4 ds + \int_0^t \cos(W_s)/2 \,dW_s.
\end{align*}
\end{example}
In particular, the expression~\eqref{eq:SDE} shows that the input process $X_{\cdot}$ is also defined for all intermediate times between non-negative integer times; i.e.~for each $t\ge 0$ the process
\begin{align}
\label{eq:SDE_cnttimeext}
    X_t
&=
    g(0) 
+ 
    \int_0^t\,
        \mu_s
  \, ds
+ 
    \int_0^t\,
        \sigma_t^{\top} \, dW_s
\end{align}
is well-defined and coincides with $X_n$ whenever $t=n\in \mathbb{N}$.  
We may, therefore, also consider the ``continuous-time extension'' $Y_{\cdot}\eqdef (Y_t)_{t\ge 0}$ of the target process defined for all intermediate times using~\eqref{eq:SDE_cnttimeext} by
\begin{align*}
        Y_t
    \eqdef 
        f^{\star}(X_t)
    .
\end{align*}
Note that $Y_t$ coincides with the target process on non-negative integer times, as defined in our main text, by definition.  

The convenience of these continuous-time extensions, of the discrete versions considered in our main text, is that now $Y_{\cdot}$ is the transformation of a continuous-time (It\^{o}) process of satisfying the SDE~\eqref{eq:SDE_cnttimeext} by a smooth function\footnote{Note that $f^{\star}$ was assumed to be smooth in our main result (Theorem~\ref{thrm:main}).}, namely $f^{\star}$.  Therefore, we may again apply the It\^{o} Lemma (again see e.g.~\citep[Theorem 14.2.4]{CohenElliotBook_2015}) this time to the process $X_{\cdot}$ to obtain the desired signal and noise decomposition of the target process $Y_{\cdot}$ (both in discrete and continuous time).  Doing so yields the following  decomposition
\begin{equation}
\label{eq:signal_noise_decomposition}
\begin{aligned}
    Y_t 
= &
    \underbrace{
        f^{\star}(X_0)
        +
        \int_0^t
        \,
            \Big(
                \left (\nabla f^{\star}(X_s) \right)^{\top}
                \mu_t 
                + 
                \frac{1}{2} 
                \operatorname{tr} \big(
                    \sigma_s^{\top} 
                    H(f^{\star})(X_s)
                    \sigma_s 
                \big)
            \Big) 
        \, 
        ds
    }_{\text{Signal (Target)}}
    \\
    & + 
    \underbrace{
    \int_0^t
    \,
        \left (\nabla f^{\star} \right)^{\top} \sigma_s
    \, dW_s
    }_{\text{Additive Noise}}
.
\end{aligned}
\end{equation}
This shows that even if it a priori seemed that we are in the \textit{realizable PAC setting} due to the structural assumption that $Y_n=f^{\star}(X_n)$ made when defining the target process, we are actually in the standard setting where the target data $(Y_n)_{n=0}^{\infty}$ can be written as a signal plus an additive noise term.  Indeed, when $X_{\cdot}$ is simply a transformation of a Brownian motion by a bounded $C^2$-function, as in Assumption~\ref{ass:Markov_Structure_X}, then Assumption~\ref{ass:Comp_Support} held and $Y_n$ admitted the signal-noise decomposition in~\eqref{eq:signal_noise_decomposition}.




\end{document}